\author[M.~Caprio]{Michele Caprio, Souradeep Dutta, Kuk Jin Jang, Vivian Lin, Radoslav Ivanov, Oleg Sokolsky, Insup Lee}
\address{Dept. of Computer Science, The University of Manchester, Kilburn Building, Oxford Road, Manchester, UK M13 9PL}
\email{michele.caprio@manchester.ac.uk}
\address{Dept. of Computer Science, University of British Columbia, ICICS/CS Building 201-2366 Main Mall, Vancouver, BC Canada V6T 1Z4}
\email{souradeep@ece.ubc.ca}
\address{PRECISE Center, Dept. of Computer and Information Science,
University of Pennsylvania, 3330 Walnut Street, Philadelphia, PA 19104}
\email{\{jangkj,vilin,sokolsky,lee\}@seas.upenn.edu}
\address{Dept. of Computer Science, Rensselaer Polytechnic Institute, 110 Eighth Street, Troy, NY 12180}
\email{ivanor@rpi.edu}
\keywords{Bayesian deep learning; imprecise probabilities; credal sets; epistemic and aleatory uncertainties; uncertainty quantification; machine learning robustness}
\subjclass[2010]{Primary: 68T37; Secondary: 68T05, 68W25}
\title{Credal Bayesian Deep Learning}
\newcommand{\vertiii}[1]{{\left\vert\kern-0.25ex\left\vert\kern-0.25ex\left\vert #1 
    \right\vert\kern-0.25ex\right\vert\kern-0.25ex\right\vert}}
    \newcommand{\indep}{\perp \!\!\! \perp}
   \def\MR#1{}
\def\algbackskip{\hskip-\ALG@thistlm}
\theoremstyle{definition} 
\let\olddefi\defi
\renewcommand{\defi}{\olddefi\normalfont}
\let\oldrmk\rmk
\renewcommand{\rmk}{\oldrmk\normalfont}
\DeclareMathOperator*{\argmax}{arg\,max}
\DeclareMathOperator*{\argmin}{arg\,min}
\newtheorem{theorem}{Theorem}
\newtheorem{lemma}[theorem]{Lemma}
\newtheorem{proposition}[theorem]{Proposition}
\newtheorem{definition}[theorem]{Definition}
\newtheorem{remark}[theorem]{Remark}
\providecommand{\MR}[1]{}
\providecommand{\MR}{\relax\ifhmode\unskip\space\fi MR }
\providecommand{\href}[2]{#2}
\begin{document}

\begin{abstract}
Uncertainty quantification and robustness to distribution shifts are important goals in machine learning and artificial intelligence. Although Bayesian Neural Networks (BNNs) allow for uncertainty in the predictions to be assessed, different sources of predictive uncertainty cannot be distinguished properly. We present Credal Bayesian Deep Learning (CBDL). 
Heuristically, CBDL allows to train an (uncountably) infinite ensemble of BNNs, using only finitely many elements. This is possible thanks to prior and likelihood finitely generated credal sets (FGCSs), a concept from the imprecise probability literature. Intuitively, convex combinations of a finite collection of prior-likelihood pairs are able to represent infinitely many such pairs. After training, CBDL outputs a set of posteriors on the parameters of the neural network. At inference time, such posterior set is used to derive a set of predictive distributions that is in turn utilized to distinguish between (predictive) aleatoric and epistemic uncertainties, and to quantify them. The predictive set also produces either (i) a collection of outputs enjoying desirable probabilistic guarantees, or (ii) the single output that is deemed the best, that is, the one having the highest predictive lower probability -- another imprecise-probabilistic concept. CBDL is more robust than single BNNs to prior and likelihood misspecification, and to distribution shift. We show that CBDL is better at quantifying and disentangling different types of (predictive) uncertainties than single BNNs and ensemble of BNNs.
In addition, we apply CBDL to two case studies to demonstrate its downstream tasks capabilities: one, for motion prediction in autonomous driving scenarios, and two, to model blood glucose and insulin dynamics for artificial pancreas control. We show that CBDL performs better when compared to an ensemble of BNNs baseline.
\end{abstract}

\maketitle
\thispagestyle{empty}

\section{Introduction}\label{intro}
One of the greatest virtues an individual can have is arguably being aware of their own ignorance, and acting cautiously as a consequence. Similarly, an autonomous system using neural networks (NNs) would greatly benefit from understanding the probabilistic properties of the NN's output (for example, its robustness to distribution shift), in order to incorporate them into any further decision-making. This paper collocates on the path of giving 
a machine such a desirable quality.

In the last few years, there has been a proliferation of work on calibrating (classification) NNs, in order to estimate the confidence in their outputs \cite{guo2017calibration} or to produce conformal sets  that are guaranteed to contain the true label, in a probably approximately correct (PAC) sense \cite{park2020pac}. While such methods are a promising first step, they require a calibration set (in addition to the original training set) and cannot be directly used on Out-Of-Distribution data without further examples.

Bayesian Neural Networks (BNNs) offer one approach to overcome the above limitations. The Bayesian paradigm provides a rigorous framework to analyze and train uncertainty-aware neural networks, and more generally to support the development of learning algorithms \cite{jospin}. In addition, it overcomes some of the drawbacks of deep learning models, namely that they are prone to overfitting, which adversely affects their generalization capabilities, and that they tend to be overconfident about their predictions when they provide a confidence interval. BNNs, though, are trained using a single prior, which may still suffer from miscalibration and robustness issues \cite{importance_of_prior}.

In this work we introduce Credal Bayesian Deep Learning (CBDL), a procedure that draws on concepts from the imprecise probability (IP) literature \cite{augustin,decooman,walley}. Unlike other techniques in the fields of artificial intelligence (AI) and machine learning (ML) involving imprecise probabilities -- that typically only focus on classification problems -- CBDL can be used for both classification and regression. It captures the ambiguity the designer faces when selecting which prior to choose for the parameters of a neural network, and which likelihood distribution to choose for the training data at hand. 

CBDL can be thought of as a NN trained using prior and likelihood finitely generated credal sets (FGCSs), $\mathcal{P}_\text{prior}$ and $\mathcal{P}_\text{lik}$, respectively.\footnote{Intuitively, the larger these credal sets, the higher prior and likelihood ambiguity the user faces.} They are convex sets of probability measures having finitely many extreme elements (the elements that cannot be written as a convex combination of one another), $\text{ex}\mathcal{P}_\text{prior}$ and $\text{ex}\mathcal{P}_\text{lik}$, respectively (see also Remark \ref{rem_fin_1}). A very simple example of a prior FGCS $\mathcal{P}_\text{prior}$ is the collection of all the convex combinations of two one-dimensional Normal distributions $\mathcal{N}(\mu_1,\sigma_1^2)$ and $\mathcal{N}(\mu_2,\sigma_2^2)$, i.e., 
$$\mathcal{P}_\text{prior}=\{P:P=\beta \mathcal{N}(\mu_1,\sigma_1^2) + (1-\beta) \mathcal{N}(\mu_2,\sigma_2^2) \text{, for all } \beta \in [0,1]\}.$$
Consequently, in this toy example we have that $\text{ex}\mathcal{P}_\text{prior}=\{\mathcal{N}(\mu_1,\sigma_1^2), \mathcal{N}(\mu_2,\sigma_2^2)\}$. FGCSs are further examined in section \ref{ip_background}.

Given the use of finitely generated credal sets, CBDL can also be seen as a non-condensed (uncountably) infinite ensemble of BNNs -- each BNN corresponding to a pair $(P,L)$ of prior $P$ from the prior FGCS $\mathcal{P}_\text{prior}$ and likelihood $L$ from the likelihood FGCS $\mathcal{P}_\text{lik}$. Such infinite ensemble, though, is carried out using only finitely many elements, that is, only pairs $(P^\text{ex},L^\text{ex})$ of components of the sets of extreme elements $\text{ex}\mathcal{P}_\text{prior}$ and $\text{ex}\mathcal{P}_\text{lik}$. This because every element $P$ of $\mathcal{P}_\text{prior}$ can be obtained from a convex combinations of the elements of $\text{ex}\mathcal{P}_\text{prior}$, and similarly for the likelihood FGCS. After training, CBDL produces a posterior FGCS $\mathcal{P}_\text{post}$ on the parameters of the neural network. At inference time, $\mathcal{P}_\text{post}$ is used to derive a predictive  FGCS $\mathcal{P}_\text{pred}$, that is, given a new input, a set of plausible distributions over the space of outputs.\footnote{As we shall see in section \ref{proc}, since computing the posteriors and the predictive distributions is oftentimes an intractable problem, we approximate the elements of $\mathcal{P}_\text{post}$ and of $\mathcal{P}_\text{pred}$ using Variational Inference (VI). As a consequence, we denote the VI-approximated posterior and predictive credal sets as $\breve{\mathcal{P}}_\text{post}$ and $\hat{\mathcal{P}}_\text{pred}$, respectively.}
In turn, $\mathcal{P}_\text{pred}$ generates a set of outputs -- or a single output, depending on the user's needs -- that enjoys desirable probabilistic guarantees, even when the elements of credal set $\mathcal{P}_\text{pred}$ are approximated, e.g. using variational inference. CBDL also gives a way of quantifying and disentangling different types of uncertainties within $\mathcal{P}_\text{pred}$.

We use a credal set approach to overcome some of the drawbacks of single BNNs. In particular, CBDL allows to counter the criticism to the practice in (standard) Bayesian statistics of (i) using a single, arbitrary prior to represent the initial state of ignorance of the agent, (ii) using non-informative priors to model ignorance, and (iii) using a single, arbitrary likelihood to represent the agent's knowledge about the sampling model.\footnote{Criticisms (i) and (iii) are also pointed out in \cite[Section 2.2]{cuzzo}.} 
As a consequence, credal sets make the analysis more robust to prior and likelihood misspecification. In addition, they make it possible to 
quantify and distinguish between epistemic and aleatoric uncertainties (EU and AU, respectively). This is desirable in light of several areas of recent ML research, such as Bayesian deep learning \cite{depeweg2018decomposition,kendall2017uncertainties}, adversarial example detection \cite{smith2018understanding}, and data augmentation in Bayesian classification \cite{kapoor2022uncertainty}.

AU refers to the uncertainty that is inherent to the data generating process; as such, it is {irreducible}. Think, for example, of a coin toss. No matter how many times the coin is tossed, the stochastic variability of the experiment cannot be eliminated. 
EU, instead, refers to the lack of knowledge about the data generating process; as such, it is {reducible}. It can be lessened on the basis of additional data. For example, after only a few tosses, we are unable to gauge whether a coin is biased or not, but if we repeat the experiment long enough, this type of uncertainty vanishes. We note in passing that predictive EU cannot be adequately captured using a single BNN \cite{eyke,hole}. One reason for this is that selecting a unique prior and a unique likelihood implicitly assumes perfect knowledge around the true prior and the true data generating process. Another, more subtle one, is explored in the last paragraph of section \ref{bnn_back}.
Methods that disentangle between the two types of (predictive) uncertainties that are based on a single distribution are ad-hoc, but are not theoretically well-justified.
EU can be typically reduced by retraining the model using an augmented training set \cite{vivian} (e.g. via semantic preserving transformations \cite{ramneet}, Puzzle Mix \cite{kim}, etc.). On the other hand, since AU is irreducible, there is an increasing need for ML techniques that are able to detect and flag its excess, so that the user can ``proceed with caution''.


\begin{remark}
    EU should not be confused with the concept of  \textit{epistemic probability} \cite{definetti1,definetti2,walley}. In the subjective probability literature, epistemic probability can be captured by a single distribution. Its best definition can be found in \cite[Sections 1.3.2 and 2.11.2]{walley}. There, the author specifies how epistemic probabilities model logical or psychological degrees of partial belief of the agent. We remark, though, how de Finetti and Walley work with finitely additive probabilities, while in this paper we use countably additive probabilities.
\end{remark}

The motivation for working with credal sets is threefold: (i) it allows to be robust against prior and likelihood misspecification; (ii) unlike when using a single probability distribution, it permits to represent ignorance in the sense of lack of knowledge; (iii) it allows to quantify and disentangle between EU and AU. A more in-depth discussion can be found in Appendix \ref{motivation}. In addition, we point out that despite a hierarchical Bayesian model (HBM) approach seems to be a viable alternative to one based on credal sets, these latter are better justified philosophically, and do not suffer from the same theoretical shortcomings of HBM procedures \cite{bernardo, eyke, jeffreys,walley}. A more detailed explanation can be found in Appendix \ref{use_credal_sets}. 
Let us mention that we only compare CBDL against Bayesian techniques because a comparison against non-Bayesian ones is either difficult to justify, or unfair. Indeed, the initial knowledge component, captured by the prior distribution (or the set of priors, in our case), has no direct comparisons with other approaches. Take conformal prediction \cite{algo_shafer,conformal_tutorial,candes,rina} for instance. It is a model-free method, which means that no prior knowledge around the experiment at hand is required. The only choice the user makes is which non-conformity score to use. How would we compare this approach with the choice of different priors, which conveys the idea that the user knows that a true distribution over the space of neural network parameters exists, but it is not fully known? In addition, despite the ubiquitous claim that conformal prediction (CP) is an uncertainty {\em quantification} tool, it is actually not. CP is an uncertainty {\em representation} tool. Indeed, CP {\em represents} uncertainty via the conformal prediction region (on the other hand, in this paper we represent uncertainty via credal sets). It does not quantify it: there is no real value attached to any kind of predictive uncertainty (aleatoric or epistemic). Some claim that the diameter of the conformal prediction region quantifies the uncertainty, but even in that case, it is unable to distinguish between AU and EU. Indeed, the diameter is a positive function of both: it increases as both increase, and hence it cannot be used to distinguish between the two.




We summarize our contributions next: 
(1) We present CBDL, and develop the theoretical tools and the algorithm required to use it in practice.
(2) We show that a CBDL approach is more robust than single BNNs to prior and likelihood misspecification, and to distribution shifts. We also explain how, during inference, the credal set of posteriors $\mathcal{P}_\text{post}$ on the network parameters obtained during training is used to derive a credal set of predictive distributions $\mathcal{P}_\text{pred}$, and in turn a set of outcomes -- or a single outcome -- that enjoys probabilistic guarantees.\footnote{Once again, as we shall see in section \ref{proc}, we approximate the elements of the posterior and the predictive credal sets using Variational Inference (VI).} (3) We show how CBDL is better at quantifying and disentangling predictive AU and EU than single BNNs and ensemble of BNNs.\footnote{We note in passing how recently \cite{unc-quant-im-net} show that (finite) deep ensembles are the current state-of-the-art methods for quantifying and disentangling different types of uncertainties on ImageNet. Our experiments show that CBDL improves on (finite) ensembles of BNNs, both in uncertainty quantification and disentanglement, and in downstream tasks performances. Together with the findings in \cite{unc-quant-im-net}, this is an additional argument in favor of the effectiveness of our methodology.}
We also apply CBDL to two safety critical systems to demonstrate its downstream tasks capabilities. One, motion prediction for autonomous driving, and two, the human insulin and blood glucose dynamics for artificial pancreas control. We demonstrate improvements in both these settings with respect to ensemble of BNNs methods, the reason being that better uncertainty quantification leads to a positive impact on the decisions made using them.

Before moving on, let us point out that while CBDL pays a computational price coming from the use of credal sets, it is able to quantify both (predictive) EU and AU, and to do so in a principled manner, unlike single and ensembles of BNNs. In addition, it requires less stringent assumptions on the nature of the prior and likelihood ambiguity faced by the agent than other imprecise-probabilities-based techniques, as explained in Appendices \ref{use_credal_sets} and \ref{references-2}. We also stress that if the user prioritizes computational efficiency, they should use non-Bayesian methods, as they are faster to implement than Bayesian techniques. In safety-critical situations instead -- where using uncertainty-informed methods is crucial for quantifying the types of uncertainties, but also for the outcome of the analysis at hand -- CBDL is a natural choice.


\textbf{Structure of the Paper.} Section \ref{back} presents the needed preliminary concepts, followed by section \ref{theory} that introduces and discusses the CBDL algorithm, together with its theoretical properties. We present our experimental results in section \ref{sec:experiments}, and we examine the related work in section \ref{related_work}. Section \ref{concl} concludes our work. In the appendices, we give further theoretical and philosophical arguments and we prove our claims.

\textbf{Notation Summary.} In this paper, we make extensive use of different notation and acronyms. We summarize them in Table \ref{tab_acro} to help the reader navigate through them.

\begin{table}[]
\begin{center}
\begin{tabular}{c|c}
\textbf{Notation and Acronyms}                & \textbf{Meaning}                                                                                                                     \\ \hline
FGCS                                             & Finitely Generated Credal Set                                                                                                        \\ \hline
CBDL                                             & Credal Bayesian Deep Learning                                                                                                        \\ \hline
Upper case latin letters, such as $P$ and $L$    & Probability measures, e.g. prior or likelihood                                                                                       \\ \hline
Lower case latin letters, such as $p$ and $\ell$ & The pdf/pmf associated to probability measures                                                                                       \\ \hline
$\Pi$                                            & A finite collection of probability measures                                                                                          \\ \hline
$\text{Conv}(\cdot)$                             & The convex hull operator                                                                                                             \\ \hline
$\Pi^\prime$                                     & \begin{tabular}[c]{@{}c@{}}The convex hull of $\Pi$, i.e.\\ $\Pi^\prime=\text{Conv}(\Pi)$\end{tabular}                               \\ \hline
$\mathcal{P}_\text{prior}$                       & A prior FGCS                                                                                                                         \\ \hline
$\mathcal{P}_\text{lik}$                         & A likelihood FGCS                                                                                                                    \\ \hline
$\mathcal{P}_\text{post}$                        & A posterior FGCS                                                                                                                     \\ \hline
VI                                               & Variational Inference                                                                                                                \\ \hline
$\breve{\mathcal{P}}_\text{post}$                & \begin{tabular}[c]{@{}c@{}}A posterior FGCS whose extreme elements are\\ the VI approximations of the ``correct'' ones\end{tabular}  \\ \hline
$\mathcal{P}_\text{pred}$                        & A predictive FGCS                                                                                                                    \\ \hline
$\hat{\mathcal{P}}_\text{pred}$                  & \begin{tabular}[c]{@{}c@{}}A predictive FGCS whose extreme elements are\\ the VI approximations of the ``correct'' ones\end{tabular} \\ \hline
$\text{ex}\mathcal{P}$                           & The extreme elements of an FGCS $\mathcal{P}$                                                                                        \\ \hline
BNN                                              & Bayesian Neural Network                                                                                                              \\ \hline
$D=D_\mathbf{x}\times D_\mathbf{y}$                              & \begin{tabular}[c]{@{}c@{}}Training set;  $D_\mathbf{x}$ is the set of training inputs,\\ and $D_\mathbf{y}$ is the set of training outputs\end{tabular}
\\ \hline
EBNN                                             & Ensemble of Bayesian Neural Networks                                                                                                 \\ \hline
$x \mapsto \Phi(x)$                              & Function representing the neural network architecture                                                                                \\ \hline
$\underline{P}$, $\overline{P}$                  & Lower and upper probabilities, respectively                                                                                          \\ \hline
HDR                                              & Highest Density Region                                                                                                               \\ \hline
IHDR                                             & Imprecise Highest Density Region                                                                                                     \\ \hline
$IR_\alpha(\mathcal{P})$                         & \begin{tabular}[c]{@{}c@{}}Imprecise Highest Density Region of level $\alpha$\\ associated with FGCS $\mathcal{P}$\end{tabular}      \\ \hline
$H(P)$                                           & Entropy of probability measure $P$                                                                                                   \\ \hline
$\underline{H}(P)$, $\overline{H}(P)$            & Upper and lower entropy associated with a credal set                                                                                 \\ \hline
$\text{EU}(\mathcal{P})$                         & Epistemic Uncertainty encoded in an FGCS $\mathcal{P}$                                                                               \\ \hline
$\text{AU}(\mathcal{P})$                         & Aleatoric Uncertainty encoded in an FGCS $\mathcal{P}$                                                                               \\ \hline
$\text{TU}(\mathcal{P})$                         & Total Uncertainty encoded in an FGCS $\mathcal{P}$                                                                                   \\ \hline
$\# A$                                           & Cardinality of a generic set $A$                                                                                                     \\ \hline
$\mathsf{post}$, $\mathsf{pred}$                 & \begin{tabular}[c]{@{}c@{}}The act of computing a posterior and a posterior\\ predictive distributions, respectively\end{tabular}    \\ \hline
BMA                                              & Bayesian Model Averaging                                                                                                             \\ \hline
\end{tabular}
\end{center}
\caption{\small Notation and acronyms that we use throughout the paper, together with their meaning.}
\label{tab_acro}
\end{table}

\section{Background and Preliminaries}\label{back}

In this section, we present the background notions that are needed to understand our main results. In section \ref{bnn_back} we introduce Bayesian Neural Networks. Section \ref{ip_background} discusses (finitely generated) credal sets, upper and lower probabilities, and imprecise highest density regions. Section \ref{uncertain} introduces the concepts of upper and lower entropy, which are used to quantify and disentangle AU and EU. The reader familiar with these concepts can skip to section \ref{theory}.

\subsection{Bayesian Neural Networks}\label{bnn_back} 
In line with the recent survey on BNNs by \cite{jospin}, Bayes' theorem can be stated as $ P(H\mid D)=[P(D\mid H)P(H)]/P(D)=P(D,H)/\int P(D,H^\prime)\text{d}H^\prime$, where $H$ is a hypothesis about which the agent holds some prior beliefs, and $D$ is the data the agent uses to update their initial opinion. Probability distribution $P(D\mid H)$ represents how likely it is to observe data $D$ if hypothesis $H$ were to be true, and is called \textit{likelihood}, while probability distribution $P(H)$ represents the agent's initial opinion around the plausibility of hypothesis $H$, and is called \textit{prior}. 
The \textit{evidence} available is encoded in $P(D)=\int P(D,H^\prime)\text{d}H^\prime$, while \textit{posterior} probability $P(H\mid D)$ represents the agent's updated opinion. Using Bayes' theorem to train a predictor can be understood as learning from data $D$: the Bayesian paradigm offers an established way of quantifying uncertainty in deep learning models.

BNNs are stochastic artificial neural networks (ANNs) trained using a Bayesian approach \cite{concentr,goan,jospin,lampi,titte,wang2}. The goal of ANNs is to represent an arbitrary function $y=\Phi(x)$. 
Let $\theta$ represent the parameters of the network, and call $\Theta$ the space $\theta$ belongs to.
Stochastic neural networks are a type of ANN built by introducing stochastic components to the network. This is achieved by giving the network either a stochastic activation or stochastic weights to simulate multiple possible models with their associated probability distribution. This can be summarized as $\theta \sim p(\theta)$, $y=\Phi_\theta(x)+\varepsilon$, where $\Phi$ depends on $\theta$ to highlight the stochastic nature of the neural network, $p$ is the density of a probability measure $P$ on $\Theta$,\footnote{We can write $p$ as the Radon-Nikodym derivative of $P$ with respect to some $\sigma$-finite dominating measure $\mu$, that is, $p=\text{d}P/\text{d}\mu$.} 
and $\varepsilon$ represents random noise to account for the fact that function $\Phi_\theta$ is just an approximation. The connection between the BNN notation and the general one of the Bayes' theorem is explained in the next two paragraphs.

To design a BNN, the first step is to choose a deep neural network \textit{architecture}, that is, functional model $\Phi_\theta$. Then, the agent specifies the \textit{stochastic model}, that is, a prior distribution over the possible model parametrization $p(\theta)$, and a prior confidence in the predictive power of the model $p(y\mid x,\theta)$. 
Given the usual assumption that multiple data points from the training set are independent, the product $\prod_{(x,y)\in D} p(y\mid x,\theta)$ represents the \textit{likelihood} of outputs $y\in D_\mathbf{y}$ given inputs $x\in D_\mathbf{x}$ and parameter $\theta$, where (a) $D=D_\mathbf{x}\times D_\mathbf{y}$ is the training set; (b) $D_\mathbf{x}=\{x_i\}_{i=1}^n$ is the collection of training inputs, which is a subset of the space $\mathcal{X}$ of inputs; (c) $D_\mathbf{y}=\{y_i\}_{i=1}^n$ is the collection of training outputs, which is a subset of the space $\mathcal{Y}$ of outputs. 

The model parametrization can be considered to be hypothesis $H$. Following \cite{jospin}, we assume independence between model parameters $\theta$ and training inputs $D_\mathbf{x}$, in formulas $D_\mathbf{x} \indep \theta$. Hence, Bayes' theorem can be rewritten as 
$$p(\theta \mid D)=\frac{p(D_\mathbf{y}\mid D_\mathbf{x},\theta)p(\theta)}{\int_\Theta p(D_\mathbf{y}\mid D_\mathbf{x}, \theta^\prime) p(\theta^\prime)\text{d}\theta^\prime}\propto p(D_\mathbf{y}\mid D_\mathbf{x},\theta)p(\theta).$$
Notice that the equality comes from having assumed $D_\mathbf{x} \indep \theta$. Posterior density $p(\theta \mid D)$ is typically high dimensional and highly nonconvex \cite{izmailov,jospin}, so computing it and sampling from it is a difficult task. The first issue is tackled using Variational Inference (VI) procedures, while Markov Chain Monte Carlo (MCMC) methods address the second challenge. Both are reviewed -- in the context of machine learning -- in \cite[Section V]{jospin}, where the authors also inspect their limitations. BNNs can be used for both  regression and classification \cite[Section II]{jospin}; besides having a solid theoretical justification, there are practical benefits from using BNNs, as presented in \cite[Section III]{jospin}.

Suppose now we collect a new input $\tilde x$ of interest, and that we want to predict its correct output $\tilde y$. Then, the posterior distribution $p(\theta \mid D)$ over the network parameters comes to the rescue, as it allows us to compute the so-called \textit{posterior predictive distribution},

$$p(\tilde y \mid \tilde x , D)=\int_\Theta p(\tilde y \mid \tilde x, \theta) p(\theta \mid D) \text{d}\theta = \mathbb{E}_{\theta \sim p(\theta \mid D)} [p(\tilde y \mid \tilde x, \theta)],$$
where $p(\tilde y \mid \tilde x, \theta)$ is the model distribution we specified before. Posterior predictive $p(\tilde y \mid \tilde x , D)$ tells us ``how likely'' output $\tilde y$ is to be the ``correct one'' for input $\tilde x$, given the knowledge encapsulated in the data $D$ we collected, which enters the computation via the posterior probability $p(\theta \mid D)$. Let us add a remark here: oftentimes, scholars claim that, in a Bayesian setting, the distribution on the parameters $\theta$ captures the epistemic uncertainty (EU) faced by the agent. This is a somehow agreeable premise, akin to a second-order distribution reasoning. If we accept this assertion, though, we see how EU at the predictive level is not quantifiable any more, since it gets washed away by taking the expectation $\mathbb{E}_{\theta \sim p(\theta \mid D)} [\cdot ]$. As we shall see later, CBDL allows to overcome this conceptual shortcoming.


\subsection{Imprecise Probabilities}\label{ip_background}
As CBDL is rooted in the theory of imprecise probabilities (IPs), in this section we give a gentle introduction to the IP concepts we will use throughout the paper.

CBDL is based on the \textit{Bayesian sensitivity analysis} (BSA) approach to IPs, that in turn is grounded in the \textit{dogma of ideal precision}  (DIP) \cite{berger}, \cite[Section 5.9]{walley}. The DIP posits that in any problem there is an \textit{ideal probability model} which is precise, but which may not be precisely known. We call this condition \textit{ambiguity} \cite{ellsberg,gilboa2}. 

Facing ambiguity can be represented mathematically by a set $\mathcal{P}_\text{prior}$ of priors and a set $\mathcal{P}_\text{lik}$ of likelihoods that seem ``plausible'' or ``fit'' to express the agent's beliefs on the parameters of interest and their knowledge of the data generating process (DGP). Generally speaking, the farther apart the ``boundary elements'' of the sets (i.e., their infimum and supremum), the higher the agent's ambiguity. Of course, if $\mathcal{P}_\text{prior}$ and $\mathcal{P}_\text{lik}$ are singletons we go back to the usual Bayesian paradigm. 

A procedure based on sets $\mathcal{P}_\text{prior}$ and $\mathcal{P}_\text{lik}$ yields results that are more robust to prior and likelihood misspecification than a regular Bayesian method. 
In the presence of prior ignorance and indecisiveness about the sampling model, it is better to give answers in the form of intervals or sets, rather than arbitrarily select a prior and a likelihood, and then update. Sets $\mathcal{P}_\text{prior}$ and $\mathcal{P}_\text{lik}$ allow to represent  \textit{indecision}, thus leading to less informative but more robust and valid conclusions. 




\begin{remark}\label{rem_fin_1}
Throughout the paper, we denote by $\Pi=\{P_1,\ldots,P_k\}$, $k\in\mathbb{N}$, a finite set of probabilities on a generic space $\Omega$, such that for all $j\in\{1,\ldots,k\}$, $P_j$ cannot be written as a convex combination of the other $k-1$ components of $\Pi$. 
We denote by $\Pi^\prime$ its convex hull $\Pi^\prime \equiv \text{Conv}(\Pi)$, i.e., the set of probabilities $Q$ on $\Omega$ that can be written as $Q(A)=\sum_{j=1}^k \beta_j P_j(A)$, for all $A\subset\Omega$, where the $\beta_j$'s are elements of $[0,1]$ that sum up to $1$. In the literature, it is referred to as a \textit{Finitely Generated Credal Set} (FGCS, \cite{levi2,cozman}). Notice then that the extreme elements of $\Pi^\prime$ correspond to the elements of $\Pi$, in formulas $\text{ex}\Pi^\prime=\Pi$. Simple graphical representations of finitely generated credal sets are given in Figures \ref{fig:credal2} and \ref{fig:credal}.
\end{remark}
\begin{figure}[h]
\centering
\includegraphics[width=.5\textwidth]{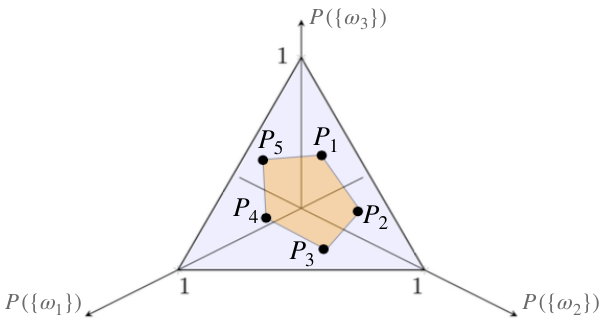}
\caption{\small Suppose we are in a $3$-class classification setting, so $\Omega=\{\omega_1,\omega_2,\omega_3\}$. Then, any probability measure $P$ on $\Omega$ can be seen as a probability vector. For example, suppose $P(\{\omega_1\})=0.6$, $P(\{\omega_2\})=0.3$, and $P(\{\omega_3\})=0.1$. We have that $P\equiv (0.6,0.3,0.1)^\top$. Since its elements are positive and sum up to $1$, probability vector $P$ belongs to the unit simplex, the purple triangle in the figure. Then, we can specify $\Pi=\{P_1,\ldots,P_5\}$, and obtain as a consequence that $\Pi^\prime=\text{Conv}(\Pi)$ is the orange pentagon. It is a convex polygon with finitely many extreme elements, and it is the geometric representation of a finitely generated credal set.}
\label{fig:credal2}
\centering
\end{figure}

Let us now introduce the concepts of \textit{lower} and \textit{upper probabilities}. The lower probability $\underline{P}$ associated with $\Pi$ is given by $\underline{P}(A)=\inf_{P \in \Pi} P(A)$, for all $A \subset\Omega$.
The upper probability $\overline{P}$ associated with $\Pi$ is defined as the conjugate to $\underline{P}$, that is, $\overline{P}(A):=1-\underline{P}(A^c)=\sup_{P \in \Pi} P(A)$,\footnote{Here, $A^c := \Omega \setminus A$.} for all $A \subset\Omega$. These definitions hold even if $\Pi$ is not finite. Then, we have the following important result.  
\begin{proposition}\label{extrema_gen_lem}
$\overline{P}$ is the upper probability for $\Pi$ if and only if it is also the upper probability for $\Pi^\prime$. That is, $\overline{P}(A)=\sup_{P\in\Pi}P(A)=\sup_{P^\prime\in\Pi^\prime}P^\prime(A)$, for all $A\subset\Omega$. The same holds for the lower probability.
\end{proposition}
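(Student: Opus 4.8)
The plan is to collapse the biconditional into a single chain of equalities. By definition the upper probability of a family of measures is exactly its pointwise supremum, so once I establish that $\sup_{P\in\Pi}P(A)=\sup_{P'\in\Pi'}P'(A)$ for every $A\subset\Omega$, both directions of the ``if and only if'' follow at once: $\overline{P}$ equals one supremum if and only if it equals the other. Hence the entire content of the proposition lies in this equality of suprema, which I would prove by a double inequality.

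First I would record the trivial inclusion $\Pi\subseteq\Pi'$: in the notation of Remark \ref{rem_fin_1}, every $P_j\in\Pi$ is the convex combination of the generators that places weight $\alpha_j=1$ on $P_j$ and $\alpha_i=0$ for $i\neq j$, so $P_j\in\Pi'=\text{Conv}\,\Pi$. Taking suprema over the smaller index set then gives $\sup_{P\in\Pi}P(A)\le \sup_{P'\in\Pi'}P'(A)$ for all $A\subset\Omega$, which is one half of the desired identity.

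For the reverse inequality I would fix $A\subset\Omega$ and an arbitrary $Q\in\Pi'$. By the definition of the finitely generated credal set, $Q(A)=\sum_{j=1}^k\alpha_j P_j(A)$ with $\alpha_j\in[0,1]$ and $\sum_{j=1}^k\alpha_j=1$. The key step is to bound each $P_j(A)$ above by $\sup_{P\in\Pi}P(A)$ and exploit that the weights are nonnegative and sum to one:
$$Q(A)=\sum_{j=1}^k\alpha_j P_j(A)\le \Big(\sum_{j=1}^k\alpha_j\Big)\sup_{P\in\Pi}P(A)=\sup_{P\in\Pi}P(A).$$
Since this holds for every $Q\in\Pi'$, taking the supremum over $Q$ yields $\sup_{P'\in\Pi'}P'(A)\le\sup_{P\in\Pi}P(A)$; combining with the first inequality gives the equality, and the proposition follows.

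I do not anticipate a genuine obstacle: the argument is essentially the elementary fact that the linear evaluation functional $P\mapsto P(A)$ attains its supremum over a convex hull at an extreme point, specialized to the finitely generated setting where $\text{ex}\,\Pi'=\Pi$. The only point demanding a little care is the supremum manipulation in the displayed bound, which relies crucially on $\alpha_j\ge 0$ and $\sum_j\alpha_j=1$. Finally, I would note that the same reasoning extends verbatim when $\Pi$ is not finite, since any $Q\in\text{Conv}\,\Pi$ is still a finite convex combination of elements of $\Pi$; this shows the result holds in the full generality in which $\underline{P}$ and $\overline{P}$ were defined.
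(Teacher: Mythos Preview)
Your proposal is correct and follows essentially the same approach as the paper: both use the inclusion $\Pi\subset\Pi'$ for one inequality and the convex-combination bound $\sum_j\alpha_j P_j(A)\le\sum_j\alpha_j\,\overline{P}(A)=\overline{P}(A)$ for the other. Your presentation is slightly cleaner in that you reduce the biconditional to the single identity $\sup_{P\in\Pi}P(A)=\sup_{P'\in\Pi'}P'(A)$ up front, whereas the paper treats the two implications separately, but the mathematical content is identical.
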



A version of Proposition \ref{extrema_gen_lem} was proven in \cite{dantzig}, while a variant for finitely additive probability measures can be found in \cite[Section 3.6]{walley}. A simple graphical representation of upper and lower probabilities for a set $A$ is given in Figure \ref{fig:credal}.

We now use lower probability $\underline{P}$ to define the \textit{$\alpha$-level Imprecise Highest Density Region} (IHDR), for some $\alpha\in [0,1]$.

\begin{definition}\label{ihdr-def}
    \cite[Section 2]{coolen} Let $\alpha$ be any value in $[0,1]$. Then, set $IR_\alpha(\Pi^\prime) \subset \Omega$ is called a $(1-\alpha)$-Imprecise Highest Density Region (IHDR) if 
    \begin{enumerate}
        \item $\underline{P}[\{\omega\in IR_\alpha(\Pi^\prime)\}]\geq 1-\alpha$;
        \item $\int_{IR_\alpha(\Pi^\prime)} \text{d}\omega$ is a minimum. If $\Omega$ is at most countable, we replace $\int_{IR_\alpha(\Pi^\prime)} \text{d}\omega$ with $\#IR_\alpha(\Pi^\prime)$, where $\#$ denotes the cardinality operator.
    \end{enumerate}
\end{definition}

\begin{figure}[h]
\centering
\includegraphics[width=.7\textwidth]{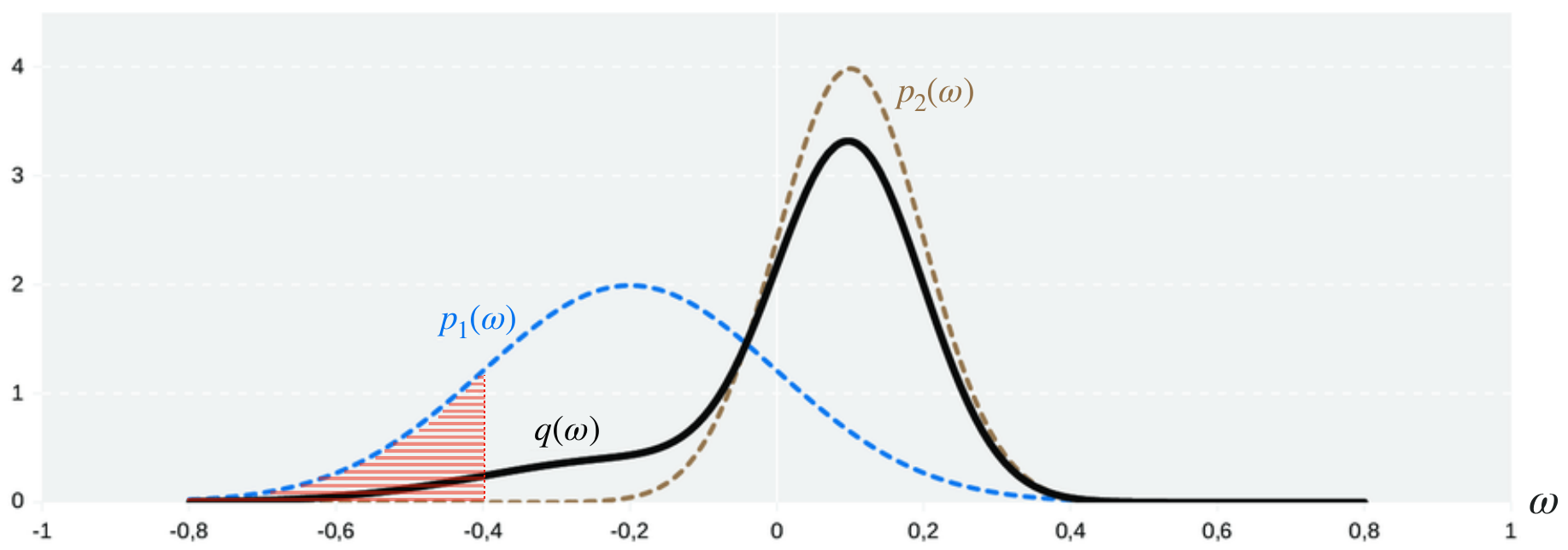}
\caption{\small In this figure, a replica of \cite[Figure 1]{flint}, $\Pi=\{P_1,P_2\}$, where $P_1$ and $P_2$ are two Normal distributions whose probability density functions (pdf's) $p_1$ and $p_2$ are given by the dashed blue and brown curves, respectively. Their convex hull is $\Pi^\prime=\text{Conv}(\Pi)=\{Q : Q=\beta P_1 + (1-\beta)P_2 \text{, for all } \beta \in [0,1]\}$. The pdf $q$ of an element $Q$ of $\Pi^\prime$ is depicted by a solid black curve. 
In addition, let $A=[-0.8,-0.4]$. Then, $\underline{P}(A)=\int_{-0.8}^{-0.4} p_2(\omega) \text{d}\omega \approx 0$, while $\overline{P}(A)$ is given by the red shaded area under $p_1$, that is, $\overline{P}(A)=\int_{-0.8}^{-0.4} p_1(\omega) \text{d}\omega$.}
\label{fig:credal}
\centering
\end{figure}

Definition \ref{ihdr-def} holds also if $\Pi=\text{ex}\Pi^\prime$ is not finite. Notice that condition 2 is needed so that $IR_\alpha(\Pi^\prime)$ is the subset of $\Omega$ having the lowest possible cardinality, which still satisfies condition 1. By the definition of lower probability, 
Definition \ref{ihdr-def} implies that ${P}^\prime[ \{\omega\in IR_\alpha(\Pi^\prime)\}]\geq 1-\alpha$, for all ${P}^\prime\in\Pi^\prime$. Here lies the appeal of the IHDR concept. Let us give a simple example, borrowed from \cite{cdec}. Suppose $\Omega=\{\omega_1,\ldots,\omega_5\}$, $\Pi=\{P_1,P_2,P_3\}$, and $\alpha=0.1$. The numerical values for $P_s(\{\omega_j\})$ are given in Table \ref{tab_ex}, for all $s\in\{1,2,3\}$ and all $j\in\{1,\ldots,5\}$. Then, from Proposition \ref{extrema_gen_lem} and Definition \ref{ihdr-def}, we have that $IR_\alpha(\Pi^\prime)=\{\omega_1,\omega_2,\omega_3\}$.

\begin{table}[h!]
\centering
\begin{tabular}{l|lllll}
      & $\omega_1$ & $\omega_2$  & $\omega_3$  & $\omega_4$   & $\omega_5$   \\ \hline
$P_1$ & $0.7$ & $0.25$ & $0.03$ & $0.01$  & $0.01$  \\
$P_2$ & $0.6$ & $0.2$  & $0.1$  & $0.05$  & $0.05$  \\
$P_3$ & $0.5$ & $0.3$  & $0.15$ & $0.025$ & $0.025$
\end{tabular}
\caption{\small Numerical values for our example. It is easy to see that the smallest subset of $\Omega$ that is assigned a probability of at least $0.9$ by all the elements of $\Pi$ is $\{\omega_1,\omega_2,\omega_3\}$.}
\label{tab_ex}
\end{table}

An operative way of building the IHDR is to consider the union of the (precise) Highest Density Regions (HDRs) of the elements of $\Pi=\text{ex}\Pi^\prime$.\footnote{Here, by ``operative'' we mean that this procedure to compute the IHDR is easy to carry out in practice.} Let us define them formally.

\begin{definition}\label{hdr-def}
    \cite[Section 1]{coolen} Pick any $P_j\in\Pi$, $j\in\{1,\ldots,k\}$. Let $\alpha$ be any value in $[0,1]$. Then, set $R_\alpha(P_j) \subset \Omega$ is called a $(1-\alpha)$-Highest Density Region (HDR) for $P_j$ if 
    $${P}_j[\{\omega\in R_\alpha(P_j)\}]\geq 1-\alpha \quad \text{ and } \quad \int_{R_\alpha(P_j)} \text{d}\omega \text{ is a minimum.}$$
If $\Omega$ is at most countable, we replace $\int_{R_\alpha(P_j)} \text{d}\omega$ with $\# R_\alpha(P_j)$. Equivalently \cite{hyndman}, 
$$R_\alpha(P_j)=\{\omega\in \Omega : p_j(\omega) \geq p_j^\alpha\} \subset \Omega,$$
where $p_j$ is the pdf or the probability mass function (pmf) of $P_j$, and $p_j^\alpha$ is a constant value. In particular, it is the largest constant such that $P_j [ \{\omega\in R_\alpha(P_j)\} ] \geq 1-\alpha$.
\end{definition}

In dimension $1$, $R_\alpha(P_j)$ can be interpreted as the smallest collection of elements of $\Omega$ (interval or union of intervals) to which distribution $P_j$ assigns probability of at least $1-\alpha$. 
As we can see, HDRs are a Bayesian counterpart of confidence intervals.\footnote{Hence standard choices for the value of $\alpha$ are $0.1$, $0.05$, $0.01$.} We give a simple visual example in Figure \ref{fig:hdr_ex}.

\begin{figure}[ht]
    \centering
    \includegraphics[width =.6\textwidth]{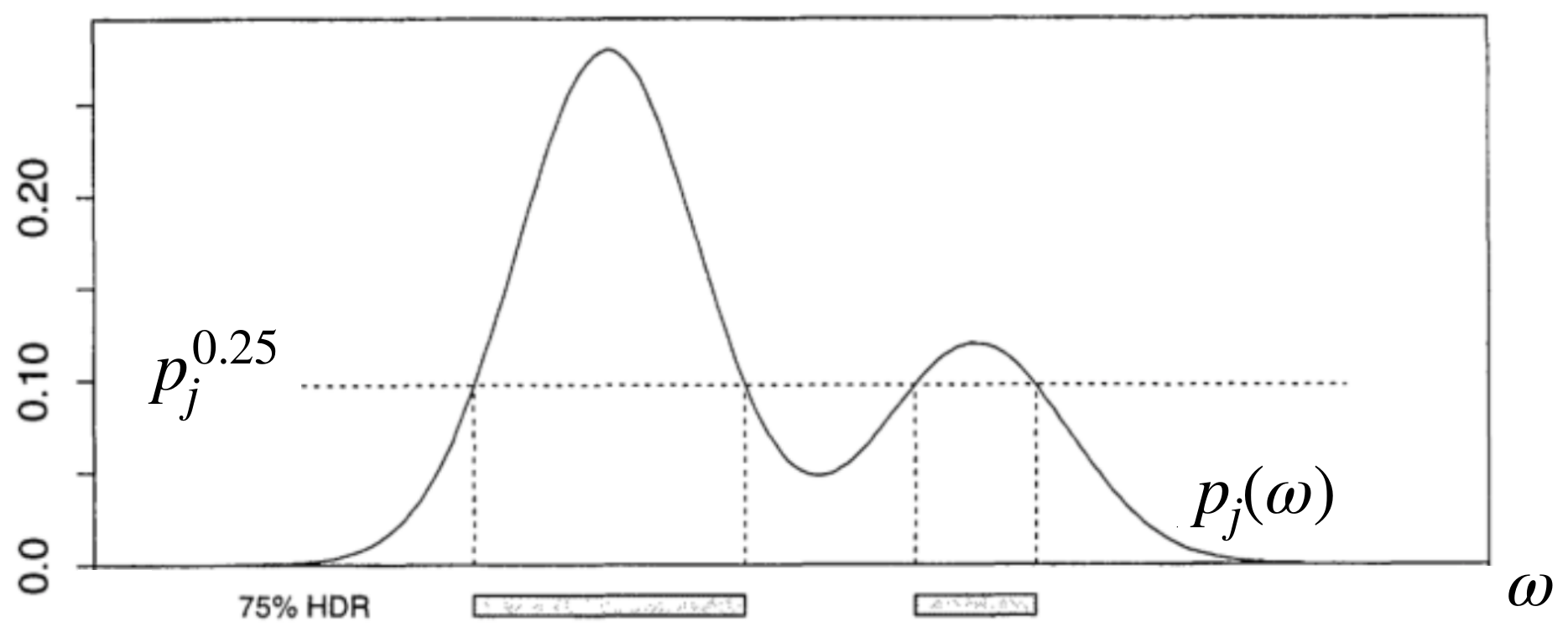}
    \caption{\small The $0.25$-HDR from a Normal Mixture density. This picture is a replica of \cite[Figure 1]{hyndman}. The geometric representation of ``$75\%$ probability according to ${P}_j$'' is the area between the pdf curve $p_j(\omega)$ and the horizontal bar corresponding to ${p}_j^{0.25}$. A higher probability coverage (according to $P_j$) would correspond to a lower constant, so $p_j^{\alpha}<p_j^{0.25}$, for all $\alpha < 0.25$. In the limit, we recover $100\%$ coverage at $p_j^0=0$.}
    \label{fig:hdr_ex}
\end{figure}

As we mentioned earlier, an operative way of obtaining IHDR $IR_\alpha(\Pi^\prime)$ is by putting $IR_\alpha(\Pi^\prime)=\cup_{j=1}^k R_\alpha(P_j)$. Thanks to Proposition \ref{extrema_gen_lem}, by taking the union of the HDRs, we ensure that all the probability measures in the credal set $\Pi^\prime=\text{Conv}(\Pi)$ assign probability of at least $1-\alpha$ to the event $\{\omega\in IR_\alpha(\Pi^\prime)\}$. In turn, this implies that $\underline{{P}}[\{\omega\in IR_\alpha(\Pi^\prime)\}]=\min_{j\in\{1,\ldots,k\}}P_j[\{\omega\in IR_\alpha(\Pi^\prime)\}] \geq 1-\alpha$. We also have that the difference between the upper and lower probabilities of $\{\omega\in IR_\alpha(\Pi^\prime)\}$ is bounded by $\alpha$. To see this, notice that $\overline{P}[\{\omega\in IR_\alpha(\Pi^\prime)\}]\leq 1$, so $\overline{P}[\{\omega\in IR_\alpha(\Pi^\prime)\}]-\underline{P}[\{\omega\in IR_\alpha(\Pi^\prime)\}] \leq \alpha$. 

\subsection{Quantifying and Disentangling Aleatoric and Epistemic Uncertainties}\label{uncertain}


Recall that, given a probability measure $P$ on a generic space $\Omega$, the (Shannon) entropy of $P$ is defined as $H(P):=\mathbb{E}[-\log p]=-\int_\Omega \log[p(\omega)] P(\text{d}\omega)$ if $\Omega$ is uncountable, where $p$ denotes the pdf of $P$. 
If $\Omega$ is at most countable, we have that $H(P)=-\sum_{\omega\in\Omega} P(\{\omega\}) \log[P(\{\omega\})]$. 
As pointed out by \cite{dubois,eyke}, the entropy primarily captures the shape of the distribution, namely its ``peakedness'' or non-uniformity, and hence informs about the predictability of the outcome of a random experiment: the higher its value, the lower the predictability. 
Then, we can define the imprecise versions of the Shannon entropy as proposed by \cite{abellan,eyke}, $\overline{H}(P):=\sup_{P\in\Pi^\prime}H(P)$ and $\underline{H}(P):=\inf_{P\in\Pi^\prime}H(P)$, called the \textit{upper} and \textit{lower Shannon entropy}, respectively.\footnote{In Appendix \ref{bounds_entropy}, we provide bounds to the values of upper and lower entropy.} Notice that these definitions hold for all sets of probabilities, not just for (finitely generated) credal sets. The upper entropy is a measure of total uncertainty since it represents the minimum level of predictability associated with the elements of $\Pi^\prime$. In \cite{abellan,eyke}, the authors posit that it can be decomposed as a sum of aleatoric and epistemic uncertainties, and that this latter can be specified as the difference between upper and lower entropy, thus obtaining
\begin{equation*}
    \underbrace{\overline{H}(P^\prime)}_{\text{TU}(\Pi^\prime)}=\underbrace{\underline{H}(P^\prime)}_{\text{AU}(\Pi^\prime)} +  \underbrace{\left[\overline{H}(P^\prime)-\underline{H}(P^\prime)\right]}_{\text{EU}(\Pi^\prime)},
\end{equation*}
where $\text{TU}(\Pi^\prime)$ denotes the total uncertainty associated with set $\Pi^\prime$, $\text{AU}(\Pi^\prime)$ is the AU associated with $\Pi^\prime$, and $\text{EU}(\Pi^\prime)$ represents the EU associated with $\Pi^\prime$. As we can see, if $\Pi^\prime$ is a singleton, then $\text{TU}(\Pi^\prime)=\text{AU}(\Pi^\prime)$, and $\text{EU}(\Pi^\prime)=0$. This captures the idea that, in general, a single distribution can only gauge aleatoric uncertainty.\footnote{It can gauge EU, though, if it is a second-order distribution, or if it is the result of an ensemble of probabilities.}
We have the following proposition. 

\begin{proposition}\label{gen_entr_lem}
Let $\Pi,\Pi^\prime$ be sets of probability measures as the ones considered in Remark \ref{rem_fin_1}. Then, $\sup_{P\in\Pi}H(P)=\overline{H}(P) \leq \overline{H}(P^\prime) = \sup_{P^\prime\in\Pi^\prime}H(P^\prime)$ and $\inf_{P\in\Pi}H(P)=\underline{H}(P) = \underline{H}(P^\prime) = \inf_{P^\prime\in\Pi^\prime}H(P^\prime)$.
\end{proposition}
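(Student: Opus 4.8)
The plan is to reduce the claim to the elementary monotonicity of the supremum under set inclusion. The two outer equalities $\sup_{P\in\Pi}H(P)=\overline{H}(P)$ and $\overline{H}(P^\prime)=\sup_{P\in\Pi^\prime}H(P)$ are nothing more than the definition of the upper Shannon entropy applied to the sets $\Pi$ and $\Pi^\prime$ respectively, so no work is needed there.

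For the middle inequality I would invoke the single fact recorded in Remark \ref{rem_fin_1}: since $\Pi^\prime=\text{Conv}\Pi$ and $\text{ex}\Pi^\prime=\Pi$, every element of $\Pi$ is in particular an element of $\Pi^\prime$, i.e. $\Pi\subseteq\Pi^\prime$. Taking the supremum of the real-valued functional $H$ over the smaller set can only yield a value no larger than the supremum over the larger set, so $\sup_{P\in\Pi}H(P)\le\sup_{P\in\Pi^\prime}H(P)$, which is exactly $\overline{H}(P)\le\overline{H}(P^\prime)$.

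There is no genuine obstacle to the argument; the only thing worth flagging is \emph{why} the inequality is not an equality, in contrast with the upper-probability statement of Proposition \ref{extrema_gen_lem}. There, the map $P\mapsto P(A)$ is affine in $P$, so its supremum over a polytope is attained at an extreme point and the sup over $\Pi$ coincides with the sup over $\Pi^\prime$. The entropy functional $H$, by contrast, is (strictly) concave in $P$: for a mixture $Q=\sum_j\alpha_j P_j$ one has $H(Q)\ge\sum_j\alpha_j H(P_j)$. Hence the maximiser of $H$ over $\Pi^\prime$ may sit in the relative interior of the credal set $\Pi^\prime$ rather than at one of its vertices, and $\overline{H}(P^\prime)$ can strictly exceed $\max_j H(P_j)=\overline{H}(P)$. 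This is precisely the content that makes the proposition worth stating, and it explains why only the inequality, and not the equality, survives the passage from $\Pi$ to its convex hull.
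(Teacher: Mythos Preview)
Your proof is correct and matches the paper's own argument exactly: the paper's proof is the single sentence ``The result comes immediately from $\Pi^\prime$ being a superset of $\Pi$,'' which is precisely your monotonicity-of-the-supremum observation. Your closing paragraph on the concavity of $H$ (and why, unlike Proposition~\ref{extrema_gen_lem}, equality need not hold) is accurate and a useful piece of commentary, though it goes beyond what the paper records.
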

Proposition \ref{gen_entr_lem} tells us that the upper entropy of the extreme elements in $\Pi=\text{ex}\Pi^\prime$ is a lower bound for the upper entropy of the whole credal set $\Pi^\prime$, and that the lower entropy of the extreme elements in $\Pi$ is equivalent to the lower entropy of the whole credal set $\Pi^\prime$. These facts imply that $\text{AU}(\Pi^\prime)=\text{AU}(\Pi)$, and that $\text{EU}(\Pi^\prime)\geq\text{EU}(\Pi)$. In addition, as a consequence of \cite[Theorem III.1]{smieja}, we have that $\text{EU}(\Pi^\prime)\leq\text{EU}(\Pi) + \log(\#\Pi)$. In turn, we have that $\text{EU}(\Pi^\prime)\in [\text{EU}(\Pi),\text{EU}(\Pi) + \log(\#\Pi)]$.

We briefly mention that other uncertainty measures based on credal sets are also available (see  \cite{andrey,hofman2024quantifying,eyke} or Appendix \ref{bounds_entropy} for a few examples) and they can be used in place of upper and lower entropy\footnote{Which, although easy to compute, can sometimes suffer from shortcomings coming from lacking the monotonicity property. That is, there are credal sets that, although nested into each other, have the same upper and/or lower entropy \cite{eyke}.} to quantify EU and AU within our credal region $\mathcal{P}$, as long as the measure chosen for the total uncertainty is bounded.

Let us also add a small remark. Although EU can be reduced with an increasing amount of data, it is very seldom the case that it goes to zero when a finite amount of data is collected. When that happens, it means that the initial uncertainty was very low in the first place. Typically, EU goes to zero only asymptotically, as a finite amount of data is almost never enough to overcome initial uncertainty \cite{walley,lisa}.

\section{Our Procedure and Its Properties}\label{theory}
This is the main portion of the paper. In section \ref{proc}, we present and discuss the CBDL algorithm. Its theoretical properties are derived in section \ref{cbdl-theory}.

\subsection{CBDL algorithm}\label{proc}
Recall that $D=D_\mathbf{x}\times D_\mathbf{y}$ denotes the training set, where $D_\mathbf{x}=\{x_i\}_{i=1}^n \subset \mathcal{X}$ is the collection of training inputs, $D_\mathbf{y}=\{y_i\}_{i=1}^n \subset \mathcal{Y}$ is the collection of training outputs, and $\mathcal{X}$ and $\mathcal{Y}$ denote the input and output spaces, respectively. We then denote by $P$ a generic prior on the parameters $\theta \in\Theta$ of a BNN having pdf $p$, and by $L\equiv L_{x,\theta}$ a generic likelihood on the space $\mathcal{Y}$ of outputs having pdf $\ell\equiv\ell_{x,\theta}$. The act of computing the posterior from prior $P$ and likelihood $L$ using a BNN is designated by $\mathsf{post}[P,L]$, and the act of deriving the posterior predictive from posterior $P(\cdot \mid D)$ and likelihood $L$ is designated by $\mathsf{pred}[P(\cdot \mid D),L]$. The CBDL procedure is presented in Algorithm \ref{algo-1}, and it is discussed in the following paragraphs.

\begin{algorithm}
\caption{\small Credal Bayesian Deep Learning (CBDL) -- Training and Inference}\label{algo-1}
\begin{algorithmic}
\item 
\textit{During Training}
\item
\textbf{Step 1} Specify $K$ priors  $\text{ex}\mathcal{P}_\text{prior}=\{P^\text{ex}_k\}_{k=1}^K$ 
\item 
\textbf{Step 2} Specify $S$ likelihoods  $\text{ex}\mathcal{P}_\text{lik}=\{L^\text{ex}_s\}_{s=1}^S$
\item 
\textbf{Step 3} Compute $P_{k,s}(\cdot \mid D)=\mathsf{post}[P_k^\text{ex},L_s^\text{ex}]$, for all $k$ and all $s$ \Comment{\% Approximated via VI by $\breve{P}_{k,s}$\%}
\item 
\textit{During Inference}
\flushleft
\hspace*{\algorithmicindent} \textbf{Inputs:} New input $\tilde{x}\in\mathcal{X}$\\
\hspace*{\algorithmicindent} \textbf{Parameters:} Confidence parameter $\alpha \in [0,1]$
\\
\hspace*{\algorithmicindent} \textbf{Outputs:} Predictive Aleatoric and Epistemic Uncertainties, $\alpha$-level IHDR
\item
\textbf{Step 4} Compute ${P}^\text{pred}_{k,s}=\mathsf{pred}[\breve{P}_{k,s},L^\text{ex}_s]$, for all $k$ and all $s$ \Comment{\% Approximated via VI by $\hat{P}^\text{pred}_{k,s}$\%}
\item 
\textbf{Step 5} Compute and return $\text{AU}(\hat{\mathcal{P}}_\text{pred})$ and the bounds for $\text{EU}(\hat{\mathcal{P}}_\text{pred})$ 
\item 
\textbf{Step 6} Compute and return the $(1-\alpha)$-IHDR $IR_\alpha(\hat{\mathcal{P}}_\text{pred})$
\end{algorithmic}
\end{algorithm}

During training, in \textbf{Step 1} the user specifies $K$ priors on the parameters of the neural network, that constitute the extrema $\text{ex}\mathcal{P}_\text{prior}$ of the prior FGCS $\mathcal{P}_\text{prior}$. Similarly, in \textbf{Step 2} they elicit $S$ likelihoods capturing the possible architectures of the neural network, that correspond to the extrema $\text{ex}\mathcal{P}_\text{lik}$ of the likelihood FGCS $\mathcal{P}_\text{lik}$. 

\newpage
Let us give an example. Outlined in \cite[Sections IV-B and IV-C1]{jospin}, for classification, the standard process for BNNs involves
\begin{itemize}
    \item A Normal prior with zero mean and diagonal covariance $\sigma^2 I$ on the coefficients of the network, that is, $p(\theta)=\mathcal{N}(0,\sigma^2I)$. In the context of CBDL, we could specify, e.g.  $\text{ex}\mathcal{P}_\text{prior}=\{P : p(\theta)=\mathcal{N}(\mu,\sigma^2I) \text{, } \mu\in\{\mu_-,\mathbf{0},\mu_+\} \text{, } \sigma^2\in\{3,7\}\}$. That is, the extreme elements of the prior credal set are five independent Normals having different levels of ``fatness'' of the tails, and centered at a vector $\mu_+$ having positive entries, a vector $\mu_-$ having negative entries, and a vector $\mathbf{0}$ having entries equal to $0$. They capture the ideas of positive bias, negative bias, and no bias of the coefficients, respectively. This is done to hedge against possible prior misspecification. 
    \item A Categorical likelihood, $p(y\mid x,\theta)=\text{Cat}(\Phi_\theta(x))$, whose parameter is given by the output of a functional model $\Phi_\theta$. In the context of CBDL, we could specify the set of extreme elements of the likelihood credal set as $\text{ex}\mathcal{P}_\text{lik}=\{L : \ell_{x,\theta}(y)=\text{Cat}(\Phi_{s,\theta}(x)), s\in\{1,\ldots,S\}\}$. Specifying set $\text{ex}\mathcal{P}_\text{lik}$, then, corresponds to eliciting a finite number $S$ of possible (parametrized) architectures for the neural network $\Phi_{s,\theta}$, $s\in\{1,\ldots,S\}$, and obtain, as a consequence, $S$ categorical distributions $\{\text{Cat}(\Phi_{s,\theta}(x))\}_{s =1}^S$. This captures the ambiguity around the true data generating process faced by the agent, and allows them to hedge against likelihood misspecification.
\end{itemize}

More in general, we can use the priors and likelihoods that better fit the type of analysis we are performing.\footnote{Choosing $2$ to $5$ priors and likelihoods is usually enough to safely hedge against prior and likelihood misspecification.} For example, for the choice of the priors we refer to \cite{fortuin}, where the authors study the problem of selecting the right type of prior for BNNs.

\textbf{Step 3} performs an element-wise application of Bayes' rule for all the elements of $\text{ex}\mathcal{P}_\text{prior}$ and $\text{ex}\mathcal{P}_\text{lik}$. Each posterior is approximated using the Variational Inference (VI) method \cite[Section V]{jospin}. That is, we project every posterior $P_{k,s}(\cdot \mid D)$ onto a set $\mathbb{S}$ of ``well-behaved'' distributions (e.g. Normals) using the KL divergence. In formulas, $\breve{P}_{k,s}=\argmin_{Q\in\mathbb{S}} KL[Q \| P_{k,s}(\cdot \mid D)]$. By ``well-behaved'', we mean that they have to satisfy the conditions in \cite[Sections 2 and 3]{var_inf_cvg}.\footnote{We assume that the conditions on the priors and the likelihoods given in \cite{var_inf_cvg} are satisfied.} This ensures that as the sample size goes to infinity, the approximated posteriors converge to the true data generating process.  We also point out how despite using a VI approximation for the exact posteriors, as we shall see in the next section, the credal set of approximated posteriors is closer to the ``oracle'' posterior $P^o(\cdot \mid D)$ than any of its elements taken individually. As a consequence, working with credal sets leads to VI posterior approximations that are better than the ones resulting from a single BNN, or an ensemble of BNNs, where several BNNs are combined into one.

\begin{remark}
    Although highly unlikely in practice, it is theoretically possible that the VI approximation of the (finite) set $\{P_{k,s}(\cdot \mid D)\}_{k,s}$ of posteriors is a singleton, see Figure \ref{fig:proj_wrong}. While the conditions in \cite{var_inf_cvg} guarantee that asymptotically the approximated posteriors coincide with the true data generating process, this typically does not happen with finite-dimensional datasets. As a consequence, obtaining a singleton when projecting  $\{P_{k,s}(\cdot \mid D)\}_{k,s}$ onto $\mathbb{S}$ may result in an underestimation of the uncertainties faced by the user. In that case, we either consider a different set  -- whose elements still satisfy the conditions in \\ \cite[Sections 2 and 3]{var_inf_cvg} -- on which to project $\{P_{k,s}(\cdot \mid D)\}_{k,s}$ according to the KL divergence, or we use a different ``projection operator'', that is, a divergence different from the KL. For example, Rényi and $\chi^2$ divergences, or Hellinger and total variation metrics are suggested by \cite{var_inf_cvg}. Alternatively, we can consider a different approximation strategy altogether, for instance the Laplace approximation \cite{laplace_approx}. 
\end{remark}

\begin{figure}[h]
\centering
\includegraphics[width=.7\textwidth]{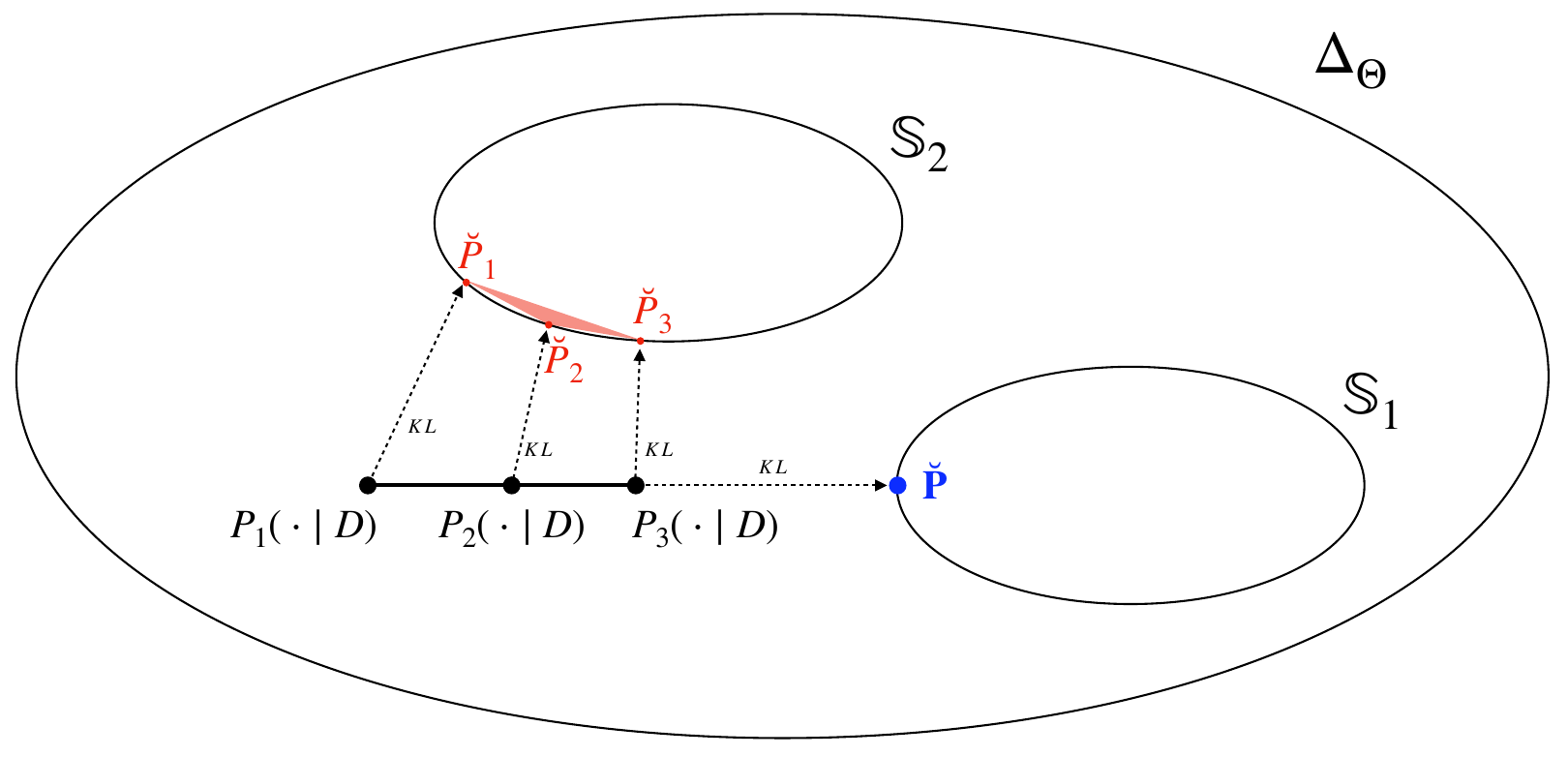}
\caption{\small Let $\Delta_\Theta$ denote the space of probability measures on $\Theta$. Suppose that in the analysis at hand we specified three priors and only one likelihood, so $S=1$ and we can drop the $s$ index. Let $\{P_k(\cdot \mid D)\}_{k=1}^3$ be the collection of exact posteriors, so that the black segment represents the exact posterior FGCS. Then, if we project the elements of $\{P_k(\cdot \mid D)\}_{k=1}^3$ onto $\mathbb{S}_1$ via the KL divergence, we obtain the same distribution $\breve{\mathbf{P}}$. This is detrimental to the analysis because such an approximation underestimates the (posterior) epistemic (and possibly also aleatoric) uncertainty faced by the agent. Then, the user could specify a different set $\mathbb{S}_2$ of ``well-behaved'' distributions onto which project the elements of $\{P_k(\cdot \mid D)\}_{k=1}^3$. In the figure, we see that they are projected onto $\mathbb{S}_2$ via the KL divergence to obtain $\breve{P}_1$, $\breve{P}_2$, and $\breve{P}_3$. The convex hull of these latter, captured by the red shaded triangle, represents the variational approximation of the exact posterior FGCS.}
\label{fig:proj_wrong}
\centering
\end{figure}

After \textbf{Step 3}, we obtain a finite set $\{\breve{P}_{k,s}\}_{k,s}$ of VI approximation of the posteriors on the network parameters, whose cardinality is $K \times S$. Its convex hull constitutes $\breve{\mathcal{P}}_\text{post}$, that is, the VI-approximated posterior FGCS. We assume that $\text{ex}\breve{\mathcal{P}}_\text{post}=\{\breve{P}_{k,s}\}_{k,s}$. This is an assumption because it may well be that -- due to the approximation procedure -- some of the elements of $\{\breve{P}_{k,s}\}_{k,s}$ are not independent of one another. We defer to future work the design of a procedure that finds the elements of $\{\breve{P}_{k,s}\}_{k,s}$ that cannot be written as a convex combination of one another.

Being a combinatorial task, \textbf{Step 3} is a computational bottleneck of Algorithm \ref{algo-1}. We have to calculate $K\times S$ VI approximations to as many posteriors, but this allows us to forego any additional assumptions on the nature of the lower and upper probabilities that are oftentimes required by other imprecise-probabilities-based techniques.\footnote{If we are willing to make such assumptions, Theorem \ref{main1} in Appendix \ref{new_bayes} shows how to compute the upper posterior using only upper prior and upper likelihood.} 
Clearly, CBDL is simplified if either $\mathcal{P}_\text{prior}$ or $\mathcal{P}_\text{lik}$ are singletons. Notice that in the case that $\mathcal{P}_\text{prior}$ and $\mathcal{P}_\text{lik}$ are both not singletons, for all $A\subset\Theta$ the interval $[\underline{\breve{P}}(A),\overline{\breve{P}}(A)]$ is wider than the case when one or the other is a singleton. In the limiting case where both are singletons, we retrieve the usual Bayesian updating, so the interval shrinks down to a point.

Before moving to inference time, let us remark a difference between Bayesian Model Averaging (BMA) and CBDL. In BMA, the user specifies a distribution on the models. Translated in the notation we use in this work, this means having a discrete distribution $Q$ over the $K\times S$ prior-likelihood combinations, that is, over the elements of $\text{ex}\breve{\mathcal{P}}_\text{post}$. Such $Q$ is then used to select an element $\breve{P}^\star$ from the (VI-approximated) posterior FGCS $\breve{\mathcal{P}}_\text{post}$ as
\begin{equation}\label{breve-p}
    \breve{\mathcal{P}}_\text{post}\ni \breve{P}^\star=\sum_{k,s} Q(\{\breve{P}_{k,s}\}) \breve{P}_{k,s},
\end{equation}
where $Q(\{\breve{P}_{k,s}\})\in [0,1]$ for all $k$ and all $s$, and $\sum_{k,s} Q(\{\breve{P}_{k,s}\})=1$ \cite{admixture}. For example, if $Q$ is the discrete uniform distribution on $\{1,\ldots, K \times S\}$, then $\breve{P}^\star$ is the so-called \textit{center of gravity} of the credal set $\breve{\mathcal{P}}_\text{post}$ \cite[Section 3.2]{miranda}. Instead, CBDL does not select a unique distribution from $\breve{\mathcal{P}}_\text{post}$. Rather, distributions are kept separate so to be able to derive a predictive FGCS in \textbf{Step 4} of Algorithm \ref{algo-1}, that is in turn used to quantify and disentangle predictive uncertainties, and to compute the predictive IHDR, that is, a collection of outputs having a high probability of being the correct ones for a new input $\tilde x$.

During inference, a new input $\tilde x$ is provided. In \textbf{Step 4}, every element of $\text{ex}\breve{\mathcal{P}}_\text{post}$ is used to derive a predictive distribution ${P}^\text{pred}_{k,s}=\mathsf{pred}[\breve{P}_{k,s},L^\text{ex}_s]$ on the output space $\mathcal{Y}$. In particular, for every $k$ and every $s$, the pdf $p^\text{pred}_{k,s}$ of ${P}^\text{pred}_{k,s}$ is obtained as
\begin{align*}
    p^\text{pred}_{k,s}(\tilde{y} \mid \tilde{x}, x_1,y_1,\ldots,x_n,y_n)&=\int_\Theta \ell^\text{ex}_s(\tilde{y}\mid  \tilde{x},\theta) \cdot {p}_{k,s}(\theta\mid x_1,y_1,\ldots,x_n,y_n) \text{d}\theta\\
    &\approx\int_\Theta \ell^\text{ex}_s(\tilde{y}\mid  \tilde{x},\theta) \cdot \breve{p}_{k,s}(\theta) \text{d}\theta,
\end{align*}
where $\ell^\text{ex}_s$ is the pdf of likelihood $L^\text{ex}_s$, ${p}_{k,s}$ is the pdf of the true posterior $P_{k,s}(\cdot \mid D)$, $\breve{p}_{k,s}$ is the pdf of the VI-approximated posterior $\breve{P}_{k,s}$, and $\tilde{y}$ is the  output associated to the new input $\tilde x$ (see Appendix \ref{post_pred_app} for more details). Each predictive distribution ${P}^\text{pred}_{k,s}$ is approximated to $\hat{P}^\text{pred}_{k,s}$ (e.g. using Normals) via Variational Inference. The convex hull of the collection $\{\hat{P}^\text{pred}_{k,s}\}_{k,s}$ having cardinality $K \times S$ constitutes the VI-approximated predictive FGCS $\hat{\mathcal{P}}_\text{pred}$. Similarly to what we did for $\text{ex}\breve{\mathcal{P}}_\text{post}$, we assume that $\text{ex}\hat{\mathcal{P}}_\text{pred}=\{\hat{P}^\text{pred}_{k,s}\}_{k,s}$.

In \textbf{Step 5}, building on the results in section \ref{uncertain}, and in particular on Proposition \ref{gen_entr_lem}, we compute and return $\text{AU}(\hat{\mathcal{P}}_\text{pred})$ and the bounds for $\text{EU}(\hat{\mathcal{P}}_\text{pred})$. In particular, we have
\begin{equation}\label{approx-au-cbdl}
    \text{AU}({\mathcal{P}}_\text{pred})\approx \text{AU}(\hat{\mathcal{P}}_\text{pred})=\underline{H}(\hat{P}^\text{pred})
\end{equation}
and
\begin{equation}\label{approx-eu-cbdl}
    \text{EU}({\mathcal{P}}_\text{pred})\approx \text{EU}(\hat{\mathcal{P}}_\text{pred}) \in [\overline{H}(\hat{P}^\text{pred})-\underline{H}(\hat{P}^\text{pred}),\overline{H}(\hat{P}^\text{pred})-\underline{H}(\hat{P}^\text{pred}) + \log(K\times S)],
\end{equation}
where (i) $\underline{H}(\hat{P}^\text{pred})=\min_{k,s} H(\hat{P}^\text{pred}_{k,s})$; (ii) $\overline{H}(\hat{P}^\text{pred})=\max_{k,s} H(\hat{P}^\text{pred}_{k,s})$; and (iii) ${\mathcal{P}}_\text{pred}$ is the ``true'' predictive FGCS, that is, the one we would have obtained had we been able to compute the exact posteriors in \textbf{Step 3} and the exact predictive distributions in \textbf{Step 4}. Let us point out a salient feature of CBDL. The AU and the (bounds for) the EU associated with the VI-approximated predictive FGCS $\hat{\mathcal{P}}_\text{pred}$ embed uncertainty comparable to an uncountably infinite ensemble of BNNs, i.e., an ensemble of BNNs of cardinality $\aleph_1$, despite the simple and intuitive mathematics over the finite set $\text{ex}\hat{\mathcal{P}}_\text{pred}$. They are not merely pessimistic results on the uncertainties associated with a finite ensemble of BNNs.

We observe that we compute the AU and EU associated with the (VI-approximated) predictive credal set $\hat{\mathcal{P}}_\text{pred}$, rather than those related to the (VI-approximated) posterior credal set $\breve{\mathcal{P}}_\text{post}$. We do so because we are ultimately interested in reporting the uncertainty around the predicted outputs given a new input in the problem at hand, more than the uncertainty on the parameters of the NN. 

Before commenting on the next step, let us pause here and add a remark. 
While a ``bad choice'' of priors and likelihoods may lead to large values of upper and lower entropy -- $\overline{H}(\hat{P}^\text{pred})$ and $\underline{H}(\hat{P}^\text{pred})$, respectively -- this is not a risk confined to our procedure. Poor modeling choices are an unavoidable risk in model-based techniques. This gave rise to the famous adage by George Box ``essentially, all models are wrong but some are useful'' \cite{box}.\footnote{Curiously, a similar motivation was brought forward by Pseudo-Dionysius the Areopagite in favor of the use of sacred images in the Christian tradition \cite{migne}. While they do not capture the essence of God, these defective approximations help the believer's thought to elevate.} We maintain that our method is indeed useful, since it overcomes some of the shortcomings of traditional Bayesian techniques -- as explained in Appendices \ref{motivation} and \ref{use_credal_sets}. As for ``regular'' Bayesian methods, though, for our approach too the designer will need to make ``plausible'' choices for priors and likelihoods. A further positive aspect of working with credal sets is that they are able to ``self-regulate''. That is, in the case of prior-likelihood conflict -- which happens e.g. if the prior set is ill-specified \cite{conflict} -- the posterior credal set, and in turn the predictive credal set too, will be wider. This will be reflected in the uncertainty measures which will register an excess of posterior, and in turn predictive, epistemic uncertainties. 


Finally, in \textbf{Step 6}, we compute and return the $\alpha$-level Imprecise Highest Density Region $IR_\alpha(\hat{\mathcal{P}}_\text{pred})$ for the VI-approximated predictive FGCS $\hat{\mathcal{P}}_\text{pred}$, which approximates the IHDR for the ``true'' predictive FGCS ${\mathcal{P}}_\text{pred}$. It is the smallest subset of $\mathcal{Y}$ such that $\hat{P}^\text{pred}[\{\tilde{y} \in IR_\alpha(\hat{\mathcal{P}}_\text{pred})\}] \geq 1-\alpha$, for all $\hat{P}^\text{pred} \in \hat{\mathcal{P}}_\text{pred}$, and for some $\alpha\in [0,1]$. It can be interpreted as the smallest collection of outputs $\tilde{y}$ that have a high probability of being the correct ones for the new input $\tilde x$, according to all the distributions in $\hat{\mathcal{P}}_\text{pred}$. Or equivalently, we can say that the correct output for the new input $\tilde x$ belongs to IHDR $IR_\alpha(\hat{\mathcal{P}}_\text{pred})$ with lower probability of at least $1-\alpha$, a probabilistic guarantee for the set of outputs generated by our procedure. Notice also that the size of $IR_\alpha(\hat{\mathcal{P}}_\text{pred})$ is an increasing function of both (predictive) AU and EU. As a consequence, it is related, but it is not equal, to the (predictive) AU the agent faces. If we want to avoid to perform the procedure only to discover that $IR_\alpha(\hat{\mathcal{P}}_\text{pred})$ is ``too large'', then we can add an ``AU check'' after \textbf{Step 5}. This, together with computing $IR_\alpha(\hat{\mathcal{P}}_\text{pred})$ in a classification setting, is explored in Appendices \ref{alea_check} and \ref{ihdr_class}.

As we have seen in section \ref{ip_background}, we find $IR_\alpha(\hat{\mathcal{P}}_\text{pred})$ by taking the union of the $K\times S$ HDRs $R_\alpha(\hat{P}^\text{pred}_{k,s})$ of the extreme elements $\text{ex}\hat{\mathcal{P}}_\text{pred}$ of $\hat{\mathcal{P}}_\text{pred}$. The HDR of a well-known distribution (for instance, a Normal) can be routinely obtained in $\mathsf{R}$, e.g. using package $\mathsf{HDInterval}$ \cite{hdr_computation}.

We conclude this section with two remarks. First, we point out how CBDL does not depend on the method used to approximate the posterior and the predictive distributions: Variational Inference can be substituted by other approaches. CBDL can also be easily adapted to other TU, AU, and EU measures, as long as the measure chosen for the total uncertainty is bounded. Second, \textbf{Step 6} can be effortlessly modified so that CBDL produces the collection of outputs having the highest lower density. This is in line with the Naive Credal Classifier theory \cite{cozman3,zaffalon}. In this case, we forego control on the accuracy level $1-\alpha$ of the output region produced by CBDL. 
It is implemented as follows. In the modified version of \textbf{Step 6}, we compute 
$$\argmax_{\tilde{y} \in \mathcal{Y}} \min_{k,s} \hat{p}^\text{pred}_{k,s}(\tilde{y}),$$
where $\hat{p}^\text{pred}_{k,s}$ is the pdf of the VI-approximated predictive distribution $\hat{P}^\text{pred}_{k,s}$. If such $\argmax$ is not a singleton, and the user is set on CBDL outputting a unique value $\tilde{y}$ for the new input $\tilde{x}$, they can select one element uniformly at random from the $\argmax$.

\subsection{Theoretical Properties of CBDL}\label{cbdl-theory}

Working with credal sets makes CBDL more robust to distribution misspecification and shifts than single BNNs. 
To see this, we present the following general result, and then we apply it to our case. 

Let $\Pi^\prime$ be an FGCS as in Remark \ref{rem_fin_1}, and consider a probability measure $\Psi$ such that $\Psi \not\in \Pi^\prime$.
\begin{proposition}\label{dist_lemma}
Call $d$ any metric and $\text{div}$ any divergence on the space of probability measures of interest. Let $d(\Pi^\prime,\Psi):=\inf_{P^\prime\in\Pi^\prime} d(P^\prime,\Psi)$ and $\text{div}(\Pi^\prime\|\Psi):=\inf_{P^\prime\in\Pi^\prime} \text{div}(P^\prime \|\Psi)$. Then, for all $P^\prime\in\Pi^\prime$, $d(\Pi^\prime,\Psi) \leq d(P^\prime,\Psi)$ and $\text{div}(\Pi^\prime,\Psi) \leq \text{div}( P^\prime\|\Psi)$.
\end{proposition}
Proposition \ref{dist_lemma} holds if $\Pi^\prime$ is any set of probabilities, not just an FGCS.\footnote{More in general, Proposition \ref{dist_lemma} holds for any type of function $f$, not just metrics or divergences, because Proposition \ref{dist_lemma} essentially only applies the definition of the infimum.} In Appendix \ref{dist_diff_dim}, we show that the above result still holds if the elements of $\Pi^\prime$ and $\Psi$ are defined on Euclidean spaces having different dimensions \cite{lim2,caprio_ref}. 

Let us now apply Proposition \ref{dist_lemma} to CBDL. 
Suppose that, when designing a single BNN, an agent chooses likelihood $\mathbf{L}$, while when implementing CBDL, they specify in \textbf{Step 2} a finite set of likelihoods $\text{ex}\mathcal{P}_\text{lik}=\{L_s^\text{ex}\}_{s=1}^S$, $S\geq 2$, and then let the induced credal set $\mathcal{P}_\text{lik}=\text{Conv}(\text{ex}\mathcal{P}_\text{lik})$ represent their uncertainty around the sampling model. Assume that $\mathbf{L}\in \text{ex}\mathcal{P}_\text{lik}$. This means that when designing the single BNN, the agent chooses arbitrarily which of the elements of $\text{ex}\mathcal{P}_\text{lik}$ to use. Suppose also that the ``oracle'' data generating process $L^o$ is different from $\mathbf{L}$, $L^o \neq \mathbf{L}$, so that we are actually in the presence of likelihood misspecification. Then, we have two cases. (1) If the true sampling model $L^o$ belongs to $\mathcal{P}_\text{lik}$, then the distance -- measured via a metric or a divergence -- between $\mathcal{P}_\text{lik}$ and $L^o$ is $0$, while that between $\mathbf{L}$ and $L^o$ is positive. (2) If $L^o\not\in\mathcal{P}_\text{lik}$, then the distance between $\mathcal{P}_\text{lik}$ and $L^o$ is no larger than the distance between $\mathbf{L}$ and $L^o$, no matter (i) which metric or distance we use (Proposition \ref{dist_lemma}), (ii) whether or not $L^o$ and the elements of $\mathcal{P}_\text{lik}$ are defined on the same Euclidean space (Appendix \ref{dist_diff_dim}, Lemma \ref{dist_lemma3}). A visual representation is given in Figure \ref{fig:1_main}. A similar argument holds if part of the training dataset $D$ is generated by one distribution $L^o_1$, and the remaining part is generated by $L^o_2$, a situation that we can categorize as distribution shift. Even if -- in the case of a single BNN -- the user were able to elicit exactly $\mathbf{L}=L^o_1$, the likelihood would still be misspecified for the data that come from $L^o_2$. This shortcoming is overcome by a credal set approach, in which either both oracle distributions $L^o_1$ and $L^o_2$ belong to the credal set $\mathcal{P}_\text{lik}$, or the distance between $\mathcal{P}_\text{lik}$ and either of them is no larger than that between the single chosen one $\mathbf{L}$, and $L^o_1$ and $L^o_2$.

\begin{figure}[h]
\centering
\includegraphics[width=.5\textwidth]{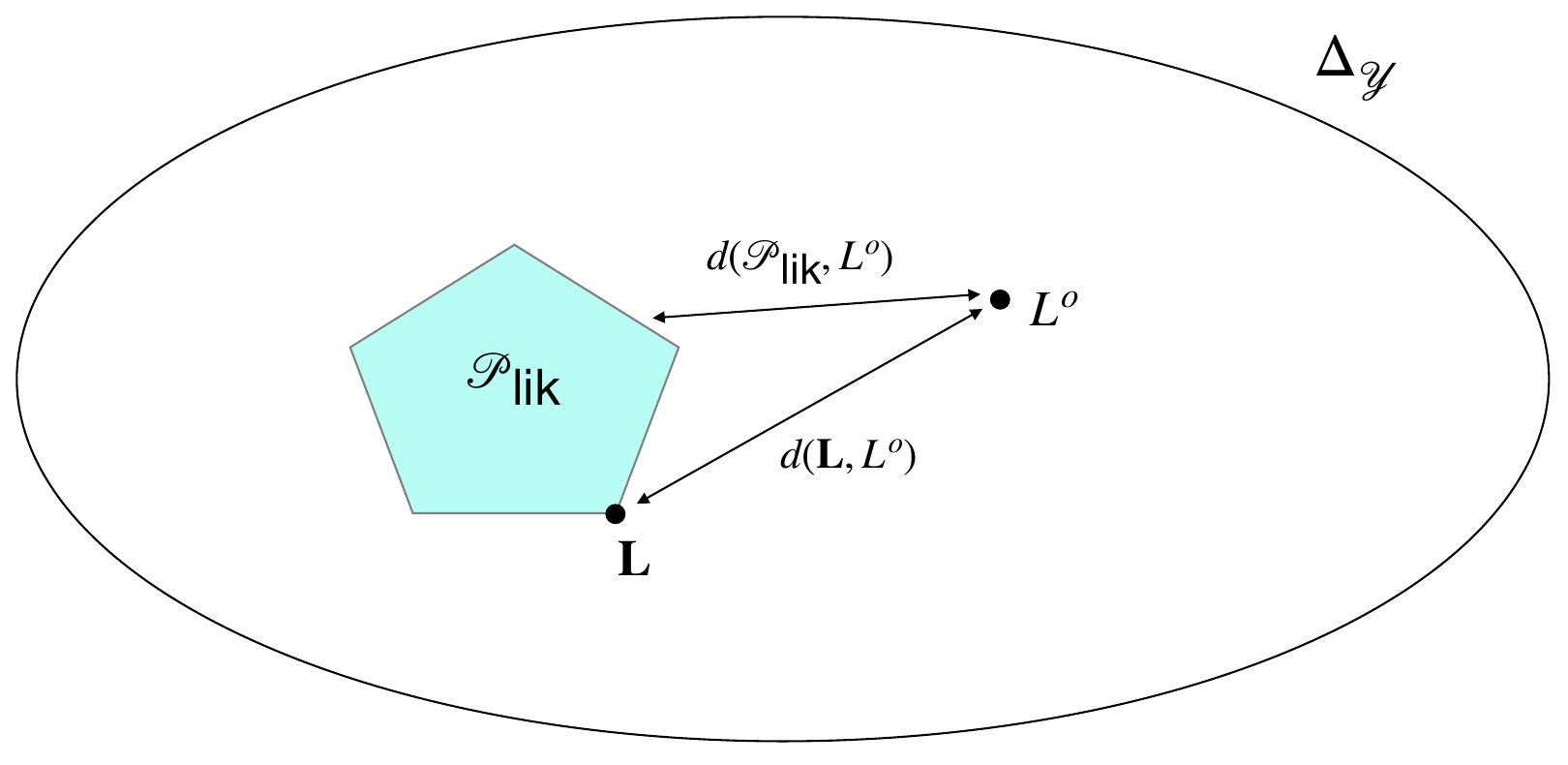}
\caption{\small CBDL is more robust to distribution shifts than single BNNs. Here $\mathcal{P}_\text{lik}$ is the convex hull of five plausible likelihoods, and $d$ denotes a generic metric on the space $\Delta_\mathcal{Y}$ of probabilities on $\mathcal{Y}$. We see how $d(\mathcal{P}_\text{lik},L^o)<d(\mathbf{L},L^o)$; if we replace metric $d$ by a generic divergence $\text{div}$, the inequality would still hold.}
\label{fig:1_main}
\centering
\end{figure}



Let us add a remark here. Assume that the ``oracle'' prior $P^o$ is in the prior credal set $\mathcal{P}_\text{prior}$ and that the ``oracle'' likelihood $L^o$ is in the likelihood credal set $\mathcal{P}_\text{lik}$. Then, it is immediate to see that the ``oracle'' posterior $P^o(\cdot \mid D)$ belongs to the posterior credal set $\mathcal{P}_\text{post}$. Naturally, this does not imply that the posterior credal set collapses to $P^o(\cdot \mid D)$. In general, it is unlikely that a finite amount of data is able to completely annihilate all the epistemic uncertainty faced by the agent. What may happen is that if the training set is large enough, $\mathcal{P}_\text{post}$ may be inscribed in a ball of small radius around  $P^o(\cdot \mid D)$. This does not mean that we suffer from under-confidence due to larger-than-necessary epistemic uncertainty. Rather, the relative epistemic uncertainty (measured by the difference between prior and posterior uncertainty, divided by the prior uncertainty) drops significantly. In addition, working with sets of prior and likelihoods allows us to hedge against prior and likelihood misspecification, a consequence of Proposition \ref{dist_lemma}.

\section{Experiments}\label{sec:experiments}

From the previous sections, it is clear that CBDL improves on the uncertainty quantification capabilities of single BNNs.\footnote{This includes empirical Bayes methodologies \cite{emp-bay}.} CBDL allows a better quantification of predictive AU because of its robustness to misspecification stemming from Proposition \ref{dist_lemma}. In a sense, the predictive AU quantified by a single BNN is a function of the choices of prior and likelihood made by the user. In addition, as we pointed out before (e.g. in the last paragraph of section \ref{bnn_back}), predictive EU cannot be 
obtained in a theoretically principled manner from a single BNN \cite{eyke,hole}.

Here, we show that these theoretical arguments are also backed by empirical evidence.
In particular, in section \ref{experiments2} we compare CBDL to the method proposed in \cite{krishnan_improving_2020}, in which a single Bayesian Neural Network's output is utilized to estimate both predictive epistemic and aleatoric uncertainties. We also show that CBDL improves on the ensemble of BNNs (EBNN) proposed in \cite{cobb}, that we treat as an ablation that is a naive extension of \cite{krishnan_improving_2020} to multiple BNNs. 
In section \ref{exp-down}, we  analyze the downstream task performance of CBDL. 
We show that it performs better than an ensemble of BNNs (EBNN) \cite{cobb}. To demonstrate the utility of our method, we study the behavior of certain safety-critical settings under distribution shifts and its ramifications. 
One, for motion prediction in autonomous driving scenarios, and two, to model blood glucose and insulin dynamics for artificial pancreas control.

\subsection{(Predictive) Uncertainty Quantification}\label{experiments2}


Distribution shifts can introduce uncertainties in a system, which in turn can render the predictions meaningless. This can be due to naturally occurring corruptions, as introduced in \cite{hendrycks} for image datasets. The authors introduced $18$ different noise types, which can be varied across $5$ different severity levels, ranging from low
(severity = 1), to medium (severity = 2, 3), and high (severity = 4, 5). The intuition is that in the current context, increasing the noise severity should generally result in higher uncertainty. We evaluate our CBDL method on 4 standard image datasets, CIFAR-10 \cite{cifar10}, SVHN \cite{svhn}, Fashion-MNIST \cite{fashion-mnist}, and MNIST \cite{mnist}. We use a slightly different set of perturbations than those introduced in \cite{mnist-c} for gray-scale images like MNIST and Fashion-MNIST.
Additionally, we perform cross domain testing 
for each dataset, where we expect the predictive uncertainties to be higher. We implement and train a Resnet-20 Bayesian Neural Network model inside the library Bayesian-torch \cite{bnn-library}. For each dataset, we train $4$ different networks initialized with different seeds on the prior and with the same architecture. This corresponds to eliciting a prior FGCS $\mathcal{P}_\text{prior}$ such that $\text{ex}\mathcal{P}_\text{prior}=\{P^\text{ex}_1,\ldots,P^\text{ex}_4\}$, so that $K$ in \textbf{Step 1} of Algorithm \ref{algo-1} is equal to $4$, and a likelihood FGCS that is a singleton,  $\mathcal{P}_\text{lik}=\text{ex}\mathcal{P}_\text{lik}=\{L\}$, so that $S$ in \textbf{Step 2} of Algorithm \ref{algo-1} is equal to $1$. 
We use a learning-rate of $0.001$, batch-size of $128$, and train the networks using Mean-Field Variational Inference for $200$ epochs. The inference is carried out by performing multiple forward passes through parameters drawn from the posterior distribution. We used $20$ Monte-Carlo samples in the experiments.  

\textbf{Baselines.} In \cite{krishnan_improving_2020}, for a single BNN output, the predictive distribution is obtained through multiple stochastic forward passes on the network while sampling from the weight posteriors using Monte Carlo estimators.
In their work, they define the overall entropy and the entropy of the predictive distribution as the \textit{predictive entropy} \cite[(C.2)]{krishnan_improving_2020}.
This quantity captures a combination of predictive aleatoric and epistemic uncertainties.
Additionally, they define the \textit{mutual information} between weight posterior and predictive distribution as the predictive epistemic uncertainty. 
Finally, the predictive aleatoric uncertainty is defined as the \textit{expected entropy}. 
The predictive epistemic and aleatoric uncertainties sum to the predictive entropy. In our experiments, we use these definitions to compute the respective quantities.

In \cite{cobb}, the authors 
consider different BNNs, but instead of keeping them separate and use them to build a predictive credal set, they average them out. Similar to  theirs, we elicit the following procedure, that we call ensemble of BNNs (EBNN). 

Consider $R\in\mathbb{N}_{\geq 2}$ different BNNs, and compute the posterior distribution on the parameters. They induce $R$ predictive distributions on the output space $\mathcal{Y}$, each having mean $\mu_r$ and variance $\sigma^2_r$, $r\in\{1,\ldots,R\}$. We call \textit{EBNN distribution} $P_\text{ens}$ a Normal 
having mean $\mu_\text{ens}=1/R \sum_{r=1}^R \mu_r$ and covariance matrix $\sigma^2_\text{ens}I$, where $\sigma^2_\text{ens}=1/R \sum_{r=1}^R \sigma^2_r + 1/(R-1) \sum_{r=1}^R (\mu_r-\mu_\text{ens})^2$. In section \ref{exp-down}, we use the $\alpha$-level HDR $R_\alpha(P_\text{ens})$ associated with $P_\text{ens}$ as a baseline for the IHDR $IR_\alpha(\hat{\mathcal{P}}_\text{pred})$ computed at \textbf{Step 6} of Algorithm \ref{algo-1}.


Following \cite{cobb}, for EBNN we posit that $1/R \sum_{r=1}^R \sigma^2_r$ captures the (predictive) aleatoric uncertainty associated with $P_\text{ens}$, and $ 1/(R-1) \sum_{r=1}^R (\mu_r-\mu_\text{ens})^2$ captures the (predictive) epistemic uncertainty associated with $P_\text{ens}$.\footnote{Notice that in this case we retain the assumption that the EBNN distribution $P_\text{ens}$ on $\mathcal{Y}$ has mean $\mu_\text{ens}$ and covariance matrix $\sigma^2_\text{ens} I$, but we do not require that it is a Normal. That is because in the four image datasets that we consider, the output space  $\mathcal{Y}$ is finite.} 
For CBDL, we look at the value of $\text{AU}(\hat{\mathcal{P}}_\text{pred})=\underline{H}(\hat{P}^\text{pred})$ from \eqref{approx-au-cbdl}, and at the lower bound $\overline{H}(\hat{P}^\text{pred})-\underline{H}(\hat{P}^\text{pred})$ for $\text{EU}(\hat{\mathcal{P}}_\text{pred})$ from \eqref{approx-eu-cbdl}. It is enough to focus on the lower bound because the upper bound is given by $\overline{H}(\hat{P}^\text{pred})-\underline{H}(\hat{P}^\text{pred}) + \log(4)$, and $\log(4)\approx 0.6$ is a fixed value. Hence, the trend of both lower and upper bounds for $\text{EU}(\hat{\mathcal{P}}_\text{pred})$ as the severity of corruption changes is the same.

\textbf{Summary of our Results.} Two scenarios are evaluated: an In-Distribution (ID) scenario with corruption by noises of increasing severity, and an Out-Of-Distribution (OOD) assessment. The following main points summarize our findings:
\begin{enumerate}
    \item (ID Evaluation) For both our proposed method and that of \cite{krishnan_improving_2020}, for increasing severities of noise corruption, predictive epistemic and aleatoric uncertainties increase.  
    In comparison, the baseline EBNN method demonstrates a counterintuitive result, where the predictive aleatoric uncertainty decreases as the severity of corruption increases.
    The table of values can be found in Appendix \ref{sec:id_ood_eval_all}.
    \item (ID Evaluation) To assess the utility of our proposed method in downstream tasks, we analyze the accuracy versus rejection rate for all methods. 
    Table \ref{tab:acc_vs_rej} summarizes the results. Across MNIST-C and Fashion-MNIST-C, CBDL exhibits a higher average accuracy across the range of rejection rates. 
    For the remaining datasets, it demonstrates results comparable to the baseline (see section \ref{sec:id_eval_acc_vs_rej} for evaluation details)
    \item (OOD Evaluation) To evaluate OOD detection, models trained on a particular dataset are tested on the other three datasets with the goal of analyzing the behavior of the predictive uncertainties. 
    Tables \ref{tab:ood_eval_summary} and \ref{tab:ood_auroc_eval_summary} summarize the results. 
    Across all the evaluated datasets, CBDL had the most consistent behavior of exhibiting a large increase in predictive AU, relative to the increase in predictive EU when tested on an OOD dataset.
    In comparison, EBNN shows a decrease in predictive AU, and the single BNN shows inconsistent behavior depending on the dataset.
\end{enumerate}

\begin{table}[h]
\scriptsize
\begin{tabular}{|c||c|c|c||c|c|c||c|c|c||c|c|c|}
\hline
         & \multicolumn{3}{c||}{CIFAR-10C}    & \multicolumn{3}{c||}{MNIST-C} & \multicolumn{3}{c||}{Fashion MNIST-C} & \multicolumn{3}{c|}{SVHN} \\
\hline
Severity & BNN   & Ensemble & CBDL             & BNN   & Ensemble & CBDL  & BNN   & Ensemble & CBDL     & BNN  & Ensemble  & CBDL  \\
\hline 
1        & 0.754 & 0.983    & 0.982           & 0.645 & 0.919   & 0.929  &  0.525 &   0.730  & 0.748  & 0.698 &  0.913 & 0.909 \\
2        & 0.761 & 0.947    & 0.946           & 0.680 & 0.879   & 0.896   &  0.537 &  0.655  & 0.641  & 0.703 &  0.843 & 0.835 \\
3        & 0.749 & 0.906    & 0.907            & 0.617 & 0.804   & 0.841  &  0.442 &  0.498   & 0.529  & 0.669 &  0.764  & 0.758 \\
4        & 0.733 & 0.838    & 0.836           & 0.639 & 0.713   & 0.737  &  0.332 &  0.421  & 0.489 & 0.608 &  0.586  & 0.588 \\
5        & 0.661 & 0.734    & 0.734           & 0.404 & 0.495   & 0.539  &  0.300 &  0.351  & 0.391 & 0.455 &  0.378  & 0.385 \\
\hline
Mean     & 0.732 & \textbf{0.882} & 0.881& 0.597 &  0.762  & \textbf{0.788} & 0.433 & 0.531 & \textbf{0.560}  & 0.627 &  \textbf{0.697} & 0.695 \\
\hline
\end{tabular}

\caption{\small In-Distribution evaluation on noise corruptions of increasing severity. Average area under accuracy vs rejection rate curve. Higher numbers indicate overall better accuracy over a range of rejection rates. }
\label{tab:acc_vs_rej}
\end{table}

\begin{table*}[h]
\scriptsize
\begin{center}


\begin{tabular}{ | p{0.8cm} | p{1.7cm} | p{0.55cm} | p{0.55cm} | p{0.55cm}| p{0.55cm} | p{0.55cm}| p{0.55cm} || p{0.55cm} | p{0.55cm} | p{0.55cm} | p{0.55cm} | p{0.55cm} | p{0.55cm} || p{0.55cm} | p{0.55cm} | p{0.55cm} | p{0.55cm} | p{0.55cm} | p{0.55cm} |}

\noalign{\vspace{1ex}}
 \cline{3-20}

\cline{3-20}
     \multicolumn{1}{c}{} &\multicolumn{1}{c}{} & \multicolumn{6}{|c|| }{CBDL} &  \multicolumn{6}{|c|}{Ensemble} 
     &  \multicolumn{6}{|c|}{BNN}
     \\
     \cline{3-20}
 
  \multicolumn{1}{c}{} & \multicolumn{1}{c}{} & \multicolumn{3}{|c|}{Epistemic} & \multicolumn{3}{|c||}{Aleatoric} & \multicolumn{3}{|c|}{Epistemic} & \multicolumn{3}{|c||}{Aleatoric} 
  & \multicolumn{3}{|c|}{Epistemic} & \multicolumn{3}{|c|}{Aleatoric}
  \\
\hline
\multirow{3}{2.5cm}{CIFAR} & In Dist-Clean & \multicolumn{3}{|c|}{0.102} & \multicolumn{3}{|c||}{0.031}  & \multicolumn{3}{|c|}{0.008} & \multicolumn{3}{|c||}{0.076} 
 & \multicolumn{3}{|c|}{ 0.151} & \multicolumn{3}{|c|}{0.179}  
\\ 
 \cline{2-20}

\multirow{3}{2.5cm}{} & MNIST 
& \multicolumn{3}{|c|}{0.184 (1.801)} & \multicolumn{3}{|c||}{0.142 (4.626)}  
& \multicolumn{3}{|c|}{0.021 (2.723)} & \multicolumn{3}{|c||}{0.044 (0.572)} 
& \multicolumn{3}{|c|}{ 0.445 (2.95)} & \multicolumn{3}{|c|}{0.772 (4.312)}  
\\ 
 \cline{2-20}
& Fashion MNIST 
& \multicolumn{3}{|c|}{0.183 (1.791)} & \multicolumn{3}{|c||}{0.147 (4.808)}  
& \multicolumn{3}{|c|}{0.022 (2.822)} & \multicolumn{3}{|c||}{0.042 (0.557)} 
& \multicolumn{3}{|c|}{ 0.431 (2.861)} & \multicolumn{3}{|c|}{0.811 (4.53)}  
\\ 
\cline{2-20}
& SVHN & \multicolumn{3}{|c|}{0.183 (1.789)} & \multicolumn{3}{|c||}{0.141 (4.606)}  
& \multicolumn{3}{|c|}{0.019 (2.421) } & \multicolumn{3}{|c||}{0.046 (0.608)} 
& \multicolumn{3}{|c|}{0.394 (2.612)} & \multicolumn{3}{|c|}{0.647 (3.616)}  
 \\
 \hline

 \hline\noalign{\vspace{1ex}}
 \cline{3-20}
  \multicolumn{1}{c}{} & \multicolumn{1}{c}{} & \multicolumn{3}{|c|}{Epistemic} & \multicolumn{3}{|c||}{Aleatoric} & \multicolumn{3}{|c|}{Epistemic} & \multicolumn{3}{|c||}{Aleatoric} 
  & \multicolumn{3}{|c|}{Epistemic} & \multicolumn{3}{|c|}{Aleatoric}
  \\
\hline

\multirow{3}{2.5cm}{} & In Dist-Clean 
& \multicolumn{3}{|c|}{0.188} & \multicolumn{3}{|c||}{0.057}  
& \multicolumn{3}{|c|}{0.012} & \multicolumn{3}{|c||}{0.063} 
& \multicolumn{3}{|c|}{ 0.198} & \multicolumn{3}{|c|}{0.071}  
\\ 
 \cline{2-20}

\multirow{3}{2.5cm}{MNIST} & CIFAR 
& \multicolumn{3}{|c|}{0.185 (0.986)} & \multicolumn{3}{|c||}{0.167 (2.94)}  
& \multicolumn{3}{|c|}{0.023 (1.948)} & \multicolumn{3}{|c|}{0.042 (0.67)} 
& \multicolumn{3}{|c|}{0.821 (4.155)} & \multicolumn{3}{|c|}{0.372 (5.124)}
\\ 
 \cline{2-20}
& Fashion MNIST 
& \multicolumn{3}{|c|}{0.162 (0.866)} & \multicolumn{3}{|c||}{0.187 (3.287)}  
& \multicolumn{3}{|c|}{0.019 (1.618)} & \multicolumn{3}{|c|}{0.035 (0.564)} 
& \multicolumn{3}{|c|}{0.919 (4.654)} & \multicolumn{3}{|c|}{0.461 (6.496)}
\\ 
 \cline{2-20}
& SVHN 
& \multicolumn{3}{|c|}{0.190 (1.015)} & \multicolumn{3}{|c||}{0.168 (2.963)}  
& \multicolumn{3}{|c|}{0.022 (1.847)} & \multicolumn{3}{|c|}{0.043 (0.69)} 
& \multicolumn{3}{|c|}{0.923 (4.674)} & \multicolumn{3}{|c|}{0.297 (4.183)} 
\\ 
 \hline

 \hline\noalign{\vspace{1ex}}
 \cline{3-20}
  \multicolumn{1}{c}{} & \multicolumn{1}{c}{} & \multicolumn{3}{|c|}{Epistemic} & \multicolumn{3}{|c||}{Aleatoric} & \multicolumn{3}{|c|}{Epistemic} & \multicolumn{3}{|c||}{Aleatoric} 
  & \multicolumn{3}{|c|}{Epistemic} & \multicolumn{3}{|c|}{Aleatoric}
  \\
  \hline

  \multirow{3}{2.5cm}{Fashion \\ MNIST} & In Dist-Clean 
& \multicolumn{3}{|c|}{0.121} & \multicolumn{3}{|c||}{0.029}  
& \multicolumn{3}{|c|}{0.007} & \multicolumn{3}{|c||}{0.075} 
 & \multicolumn{3}{|c|}{ 0.197} & \multicolumn{3}{|c|}{0.351}  
\\ 
 \cline{2-20}

\multirow{3}{2.5cm}{} & CIFAR 
& \multicolumn{3}{|c|}{0.205 (1.705)} & \multicolumn{3}{|c||}{0.104 (3.561)}  
& \multicolumn{3}{|c|}{0.022 (3.178)} & \multicolumn{3}{|c||}{0.050 (0.667)} 
& \multicolumn{3}{|c|}{0.408 (2.073)} & \multicolumn{3}{|c|}{0.221 (0.629)}
\\ 
 \cline{2-20}
 
& MNIST 
& \multicolumn{3}{|c|}{0.165 (1.367)} & \multicolumn{3}{|c||}{0.182 (6.192)}  
& \multicolumn{3}{|c|}{0.021 (3.069)} & \multicolumn{3}{|c||}{0.033 (0.443)} 
& \multicolumn{3}{|c|}{0.909 (4.618)} & \multicolumn{3}{|c|}{0.328 (0.932)}
\\ 
 \cline{2-20}
 
& SVHN 
& \multicolumn{3}{|c|}{0.218 (1.81)} & \multicolumn{3}{|c||}{0.109 (3.742)}  
& \multicolumn{3}{|c|}{0.026 (3.827)} & \multicolumn{3}{|c||}{0.046 (0.618)} 
& \multicolumn{3}{|c|}{0.598 (3.036)} & \multicolumn{3}{|c|}{0.305 (0.867)}
\\ 
 \hline

  \hline\noalign{\vspace{1ex}}
 \cline{3-20}
  \multicolumn{1}{c}{} & \multicolumn{1}{c}{} & \multicolumn{3}{|c|}{Epistemic} & \multicolumn{3}{|c||}{Aleatoric} & \multicolumn{3}{|c|}{Epistemic} & \multicolumn{3}{|c||}{Aleatoric} 
  & \multicolumn{3}{|c|}{Epistemic} & \multicolumn{3}{|c|}{Aleatoric}
  \\
\hline

\multirow{3}{2.5cm}{SVHN} & In Dist-Clean 
& \multicolumn{3}{|c|}{0.061} & \multicolumn{3}{|c||}{0.01}  
& \multicolumn{3}{|c|}{0.004} & \multicolumn{3}{|c||}{0.083} 
 & \multicolumn{3}{|c|}{ 0.097} & \multicolumn{3}{|c|}{0.107}  
\\ 
 \cline{2-20}

\multirow{3}{2.5cm}{} & CIFAR 
& \multicolumn{3}{|c|}{0.181 (2.992)} & \multicolumn{3}{|c||}{0.121 (11.778)}  
& \multicolumn{3}{|c|}{0.026 (6.029)} & \multicolumn{3}{|c||}{0.040 (0.479)} 
& \multicolumn{3}{|c|}{0.655 (6.765)} & \multicolumn{3}{|c|}{0.716(6.69)} 
\\ 

 \cline{2-20}
& MNIST 
& \multicolumn{3}{|c|}{0.198 (3.272)} & \multicolumn{3}{|c||}{0.063 (6.154)}  
& \multicolumn{3}{|c|}{0.024 (5.495)} & \multicolumn{3}{|c||}{0.056 (0.675)} 
& \multicolumn{3}{|c|}{0.329 (3.393)} & \multicolumn{3}{|c|}{0.336 (3.139)} 
\\ 
 \cline{2-20}
& Fashion MNIST 
& \multicolumn{3}{|c|}{0.199 (3.288)} & \multicolumn{3}{|c||}{0.113 (10.984)}  
& \multicolumn{3}{|c|}{0.026 (6.002)} & \multicolumn{3}{|c||}{0.041 (0.491)} 
& \multicolumn{3}{|c|}{0.103 (6.375)} & \multicolumn{3}{|c|}{0.125 (6.178)} 
\\ 
 \hline

\end{tabular}
\end{center}
\caption{\small Out-Of-Distribution evaluation. Across all datasets, when tested on datasets outside of the training dataset, CBDL demonstrated the most consistent behavior in terms of predictive uncertainty quantification. The quantity in the parenthesis is the magnitude of increase relative to the quantity when tested on the clean dataset.
}
\label{tab:ood_eval_summary}
\end{table*}

\begin{table*}[h]
\scriptsize
\begin{center}


\begin{tabular}{ | p{0.8cm} | p{1.7cm} |  p{0.55cm} | p{0.55cm}| p{0.55cm}|  p{0.55cm} | p{0.55cm}| p{0.55cm} || p{0.55cm} | p{0.55cm} | p{0.55cm}|  p{0.55cm} | p{0.55cm}| p{0.55cm} ||  p{0.55cm} | p{0.55cm} | p{0.55cm} |  p{0.55cm} | p{0.55cm}| p{0.55cm}|}

\noalign{\vspace{1ex}}
 \cline{3-20}

\cline{3-20}
     \multicolumn{1}{c}{} &\multicolumn{1}{c}{} & \multicolumn{6}{|c|| }{CBDL} &  \multicolumn{6}{|c|}{Ensemble} 
     &  \multicolumn{6}{|c|}{BNN}
     \\
     \cline{3-20}
 
  \multicolumn{1}{c}{} & \multicolumn{1}{c}{} 
  &  \multicolumn{3}{|c||}{Epi. AUROC} 
  &  \multicolumn{3}{|c||}{Alea. AUROC} 
  &  \multicolumn{3}{|c||}{Epi. AUROC} 
  & \multicolumn{3}{|c||}{Alea. AUROC} 
  &  \multicolumn{3}{|c||}{Epi. AUROC} 
  & \multicolumn{3}{|c|}{Alea. AUROC}
  \\
\hline

\multirow{3}{2.5cm}{CIFAR} & MNIST 
& \multicolumn{3}{|c||}{\textbf{0.905}}
& \multicolumn{3}{|c||}{0.826}  

& \multicolumn{3}{|c||}{0.805}  
& \multicolumn{3}{|c||}{0.104}

& \multicolumn{3}{|c||}{0.860}
& \multicolumn{3}{|c|}{\textbf{0.929}}  
\\ 
 \cline{2-20}
& Fashion MNIST 
& \multicolumn{3}{|c||}{\textbf{0.920}}
& \multicolumn{3}{|c||}{0.836}

& \multicolumn{3}{|c||}{0.816}
& \multicolumn{3}{|c||}{0.093}

& \multicolumn{3}{|c||}{0.808}
& \multicolumn{3}{|c|}{\textbf{0.937}}  
\\ 
\cline{2-20}
& SVHN
& \multicolumn{3}{|c||}{\textbf{0.851}}
& \multicolumn{3}{|c||}{0.823}

& \multicolumn{3}{|c||}{0.789}
& \multicolumn{3}{|c||}{0.117}

& \multicolumn{3}{|c||}{0.826}
& \multicolumn{3}{|c|}{\textbf{0.909}}  
 \\
 \hline
 \hline
 
 \noalign{\vspace{1ex}}
 \cline{3-20}

\cline{3-20}
     \multicolumn{1}{c}{} &\multicolumn{1}{c}{} & \multicolumn{6}{|c|| }{CBDL} &  \multicolumn{6}{|c|}{Ensemble} 
     &  \multicolumn{6}{|c|}{BNN}
     \\
     \cline{3-20}
 
  \multicolumn{1}{c}{} & \multicolumn{1}{c}{} 
  &  \multicolumn{3}{|c||}{Epi. AUROC} 
  &  \multicolumn{3}{|c||}{Alea. AUROC} 
  &  \multicolumn{3}{|c||}{Epi. AUROC} 
  & \multicolumn{3}{|c||}{Alea. AUROC} 
  &  \multicolumn{3}{|c||}{Epi. AUROC} 
  & \multicolumn{3}{|c|}{Alea. AUROC}
  \\
\hline

\multirow{3}{2.5cm}{MNIST} & CIFAR 
& \multicolumn{3}{|c||}{\textbf{0.930}}
& \multicolumn{3}{|c||}{\textbf{0.912}}

& \multicolumn{3}{|c||}{0.827}
& \multicolumn{3}{|c||}{0.040}

& \multicolumn{3}{|c||}{0.835}
& \multicolumn{3}{|c|}{0.882}  
\\ 
 \cline{2-20}
& Fashion MNIST 
& \multicolumn{3}{|c||}{\textbf{0.956}}  
& \multicolumn{3}{|c||}{0.935}  

& \multicolumn{3}{|c||}{0.804}  
& \multicolumn{3}{|c||}{0.030} 

& \multicolumn{3}{|c||}{0.929}  
& \multicolumn{3}{|c|}{\textbf{0.960}}  
\\ 
\cline{2-20}
& SVHN
& \multicolumn{3}{|c||}{\textbf{0.971}}  
& \multicolumn{3}{|c||}{\textbf{0.926}} 

& \multicolumn{3}{|c||}{0.808} 
& \multicolumn{3}{|c||}{0.033} 

& \multicolumn{3}{|c||}{0.874} 
& \multicolumn{3}{|c|}{0.898}  
 \\
 \hline
 \hline
 
 \noalign{\vspace{1ex}}
 \cline{3-20}

\cline{3-20}
     \multicolumn{1}{c}{} &\multicolumn{1}{c}{} & \multicolumn{6}{|c|| }{CBDL} &  \multicolumn{6}{|c|}{Ensemble} 
     &  \multicolumn{6}{|c|}{BNN}
     \\
     \cline{3-20}
 
  \multicolumn{1}{c}{} & \multicolumn{1}{c}{} 
   &  \multicolumn{3}{|c||}{Epi. AUROC} 
  &  \multicolumn{3}{|c||}{Alea. AUROC} 
  &  \multicolumn{3}{|c||}{Epi. AUROC} 
  & \multicolumn{3}{|c||}{Alea. AUROC} 
  &  \multicolumn{3}{|c||}{Epi. AUROC} 
  & \multicolumn{3}{|c|}{Alea. AUROC}
  \\
\hline

\multirow{3}{2.5cm}{Fashion \\MNIST} & CIFAR 
& \multicolumn{3}{|c||}{\textbf{0.933}}
& \multicolumn{3}{|c||}{0.724}  

& \multicolumn{3}{|c||}{0.862}  
& \multicolumn{3}{|c||}{0.121} 

& \multicolumn{3}{|c||}{0.703}  
& \multicolumn{3}{|c|}{\textbf{0.770}}  
\\ 
 \cline{2-20}
& MNIST 
& \multicolumn{3}{|c||}{0.925} 
& \multicolumn{3}{|c||}{0.881} 

& \multicolumn{3}{|c||}{0.770} 
& \multicolumn{3}{|c||}{0.085} 

& \multicolumn{3}{|c||}{\textbf{0.962}} 
& \multicolumn{3}{|c|}{\textbf{0.933}}  
\\ 
\cline{2-20}
& SVHN
& \multicolumn{3}{|c||}{\textbf{0.972}}
& \multicolumn{3}{|c||}{0.830}  

& \multicolumn{3}{|c||}{0.865}  
& \multicolumn{3}{|c||}{0.066} 

& \multicolumn{3}{|c||}{0.855}  
& \multicolumn{3}{|c|}{\textbf{0.904}}  
 \\
 \hline
 \hline
 
 \noalign{\vspace{1ex}}
 \cline{3-20}

\cline{3-20}
     \multicolumn{1}{c}{} &\multicolumn{1}{c}{} & \multicolumn{6}{|c|| }{CBDL} &  \multicolumn{6}{|c|}{Ensemble} 
     &  \multicolumn{6}{|c|}{BNN}
     \\
     \cline{3-20}
 
  \multicolumn{1}{c}{} & \multicolumn{1}{c}{} 
  &  \multicolumn{3}{|c||}{Epi. AUROC} 
  &  \multicolumn{3}{|c||}{Alea. AUROC} 
  &  \multicolumn{3}{|c||}{Epi. AUROC} 
  & \multicolumn{3}{|c||}{Alea. AUROC} 
  &  \multicolumn{3}{|c||}{Epi. AUROC} 
  & \multicolumn{3}{|c|}{Alea. AUROC}
  \\
\hline

\multirow{3}{2.5cm}{SVHN} & CIFAR 
& \multicolumn{3}{|c||}{\textbf{0.979}}  
& \multicolumn{3}{|c||}{0.941} 

& \multicolumn{3}{|c||}{0.890} 
& \multicolumn{3}{|c||}{0.014} 

& \multicolumn{3}{|c||}{0.964} 
& \multicolumn{3}{|c|}{\textbf{0.952}}  
\\ 
 \cline{2-20}
&  MNIST 
& \multicolumn{3}{|c||}{\textbf{0.781}}  
& \multicolumn{3}{|c||}{0.679}  

& \multicolumn{3}{|c||}{0.777}  
& \multicolumn{3}{|c||}{0.205} 

& \multicolumn{3}{|c||}{0.728}  
& \multicolumn{3}{|c|}{\textbf{0.709}}  
\\ 
\cline{2-20}
& Fashion MNIST
& \multicolumn{3}{|c||}{\textbf{0.967}}  
& \multicolumn{3}{|c||}{0.881} 

& \multicolumn{3}{|c||}{0.899} 
& \multicolumn{3}{|c||}{0.030} 

& \multicolumn{3}{|c||}{0.921} 
& \multicolumn{3}{|c|}{\textbf{0.917}}  
 \\
 \hline
 \hline

\end{tabular}
\end{center}
\caption{\small Out-Of-Distribution evaluation. AUROC for OOD detection is computed using both aleatoric and epistemic predictive uncertainty measures of the different approaches. Each model is trained on the training partition of the dataset and tested on the respective testing set of the OOD dataset. When using predictive AU, a single BNN performs relatively better than CBDL for OOD detection, though CBDL is still comparable and outperforms the single BNN when trained on Fashion-MNIST. When using predictive EU, CBDL outperforms the other approaches on most datasets.
}
\label{tab:ood_auroc_eval_summary}
\end{table*}

\subsubsection{In-distribution Evaluation}
\label{sec:id_eval_acc_vs_rej}
Overall, CBDL demonstrates the most consistent behavior in terms of increasing levels of predictive epistemic and aleatoric uncertainties as the severity of noise corruption increases. 
The full table of results can be found in Appendix \ref{sec:id_ood_eval_all}. 
In our experiments, for CIFAR-10 the single BNN also exhibits similarly consistent behavior. 
In MNIST and Fashion-MNIST, the BNN seems to be more consistent than CBDL, whereas for SVHN it seems to show a more inconsistent behavior. These results allow us to conclude that CBDL is comparable if not better than BNN in terms of In-Distribution behavior.

\textbf{Accuracy vs Rejection Rate.} As the quantities pertaining to different approaches are not comparable, we perform an additional evaluation comparing the accuracy vs the rejection rate of a given method. 
In this task, for a given uncertainty threshold, if the uncertainty quantity exceeds such a threshold, then the prediction is rejected. 
Of those not rejected, the accuracy is computed. 
A range of uncertainty thresholds is tested for each method and plotted. 
Figure \ref{fig:acc_vs_rej-CIFAR10C gaussian_blur_example} depicts an example of the plot. 
There, for lower severities (e.g. 1-2) of Gaussian blur noise, CBDL exhibits better performance and converges to $100\%$ accuracy at a lower rejection rate.
Intuitively, for the same range of rejection rates, a higher curve signifies better performance, as less samples are rejected while achieving a higher accuracy.
However, for higher severities (above 3), the single BNN seems to show better performance. Even so, the overall rejection rate is high to achieve such accuracy. 
The entirety of the results can be found in Appendix \ref{sec:acc_vs_rej_all}.
To quantify the differences, the areas under the curve are computed and averaged for a given noise severity level.
Finally, this quantity is averaged across all severity levels. 
The results are shown in Table \ref{tab:acc_vs_rej}.
As we can see, when EBNN outperforms CBDL in specific instances, the difference is minor, and when the opposite happens, the difference is significant. Over all datasets, CBDL outperforms a single BNN.

\begin{figure}[h!] 
\centering 
\includegraphics[width=\textwidth]{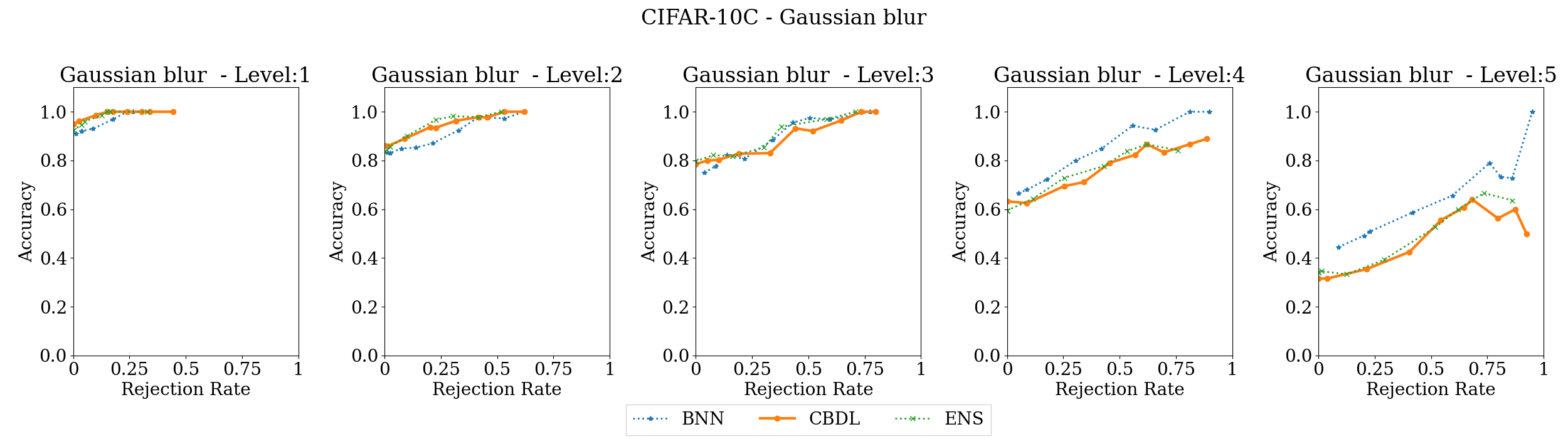}
\caption{\small Accuracy vs Rejection Rate - CIFAR10C Gaussian blur. For lower severities (e.g. 1-2) of Gaussian blur noise, CBDL exhibits better performance and converges to $100\%$ accuracy at a lower rejection rate.
For higher severities (above 3), the single BNN seems to show better performance. Even so, the overall rejection rate is high to achieve such accuracy. The ensemble method (ENS) exhibits comparable performance in this specific instance.
}
\label{fig:acc_vs_rej-CIFAR10C gaussian_blur_example}
\end{figure}

\newpage
\subsubsection{Out-of-distribution Evaluation}
\label{sec:ood_eval}
In this final evaluation, we analyze the behavior of the three approaches (CBDL, EBNN, and BNN) when tested on an OOD dataset.
For a given dataset, for example CIFAR, a model is trained on the dataset, and evaluated on the remaining datasets, i.e. MNIST, SVHN, and Fashion-MNIST. 
The relative increase in the average predictive EU and AU is computed compared to when an approach is tested on the In-Distribution testing set. 
The In-Distribution uncertainty quantities for each dataset are shown in Table \ref{tab:ood_eval_summary}. The relative magnitude of the increase in each quantity is shown in the parenthesis for each respective OOD dataset. 

The baseline EBNN exhibits a counterintuitive behavior, viz. a decrease in predictive AU. Although this behavior is consistent across the different dataset combinations, this would be an undesirable quality for OOD detection scenarios.

The single BNN exhibits inconsistent behavior in terms of the relative increase or decrease in magnitude. For CIFAR-10, the predictive AU increases by a magnitude larger than the predictive EU, while for SVHN the increase is smaller. Moreover, for Fashion-MNIST the 
predictive AU decreases, and for MNIST the behavior is inconsistent.

Only our proposed CBDL method demonstrates a stable, consistent increase in predictive AU, which is greater in magnitude relative to the increase in predictive EU. 
These results demonstrate a clear improvement in performance by our proposed approach.

We also give results pertaining AUROCs for the OOD detection performance, presented in Table \ref{tab:ood_auroc_eval_summary}. Each group of results shows which dataset the models are trained on, and which they are tested on. We report AUROC when using both the predictive EU and AU of each approach. In general, the baseline EBNN method performs the worst in all cases. When using predictive AU, a single BNN performs relatively better than CBDL for OOD detection, though CBDL is still comparable and outperforms it when trained on Fashion-MNIST. When using predictive EU, CBDL clearly outperforms the other approaches on most datasets. This is in line with other works where predictive EU is considered important for OOD detection \cite{kendall2017uncertainties}.

We conjecture that this may be due to the fact that a single BNN is not able to gauge predictive EU properly. Hence, the single BNN flags an instance as OOD when it comes from the ``tail'' of the distribution; this is well-captured by (predictive) aleatoric uncertainty. On the contrary, CBDL is able to gauge predictive EU properly, and hence it is able to capture when OOD happens by looking at the ``disagreement'' between the elements of the predictive credal set, captured by the difference between upper and lower entropy.

\subsection{Downstream Tasks Performance}\label{exp-down}
As we have shown in the previous section, 
CBDL is better than single BNNs and ensemble of BNNs at quantifying and disentangling predictive AU and EU. In this section, we show with two applications -- motion prediction in autonomous driving scenarios, and blood glucose and insulin dynamics
for artificial pancreas control -- that CBDL has better downstream tasks capability than EBNN. 
We do not compare CBDL against belief tracking techniques because these latter require extra assumptions that CBDL do not, as we further expand on in Appendix \ref{other_baselines}.

\subsubsection{Motion Prediction for Autonomous Racing}


In this case study, we demonstrate the utility of CBDL for motion prediction in autonomous driving scenarios. 
An important challenge in autonomous driving is understanding the intent of other agents and predicting their future trajectories to allow for safety-aware planning.
In autonomous racing, where control is pushed to the dynamical limits, accurate and robust predictions are even more essential for outperforming opponent agents while assuring safety.
CBDL provides a straightforward method for quantifying uncertainty and deriving robust prediction regions for anticipating an agent's behavior.  

We use the problem settings in \cite{tumu_physics_2023} to define the problem of obtaining prediction sets for future positions of an autonomous racing agent. 
Our results show that the prediction regions have improved coverage when compared to EBNN. 
These results hold in both In-Distribution and Out-Of-Distribution settings, which are described below.

\paragraph{\textbf{Problem.}}
Let $O^{i}(t,l) \equiv O^{i}= \{\pi^{i}_{t-l}, \ldots,\pi^{i}_{t} \}$ denote the $i$-th trajectory instance of an agent at time $t$, consisting of the observed positions from time $t-l$ up to time $t$. Let then $C^{i}$ be a time-invariant context variable. Let also  $F^{i}(t,h) \equiv F^{i} = \{\pi^{i}_{t+1}, \dots, \pi^{i}_{t+h}\}$ be the collection of the  next $h$ future positions. We wish to obtain a model $M$ that predicts region $\mathcal{R}_{\alpha}$ with probabilistic guarantees. In particular, for EBNN $\mathcal{R}_{\alpha}$ is the $\alpha$-level HDR $R_\mathcal{\alpha}(P_\text{ens})$ of $P_\text{ens}$, so that $P_\text{ens}[F^{i} \in R_\mathcal{\alpha}(P_\text{ens})]\geq 1-\alpha$, while for CBDL $\mathcal{R}_{\alpha}=IR_\alpha(\hat{\mathcal{P}}_\text{pred})$, so that $\underline{\hat{P}}^\text{pred}[F^{i} \in IR_\alpha(\hat{\mathcal{P}}_\text{pred})]\geq 1-\alpha$, or equivalently, ${\hat{P}}^\text{pred}[F^{i} \in IR_\alpha(\hat{\mathcal{P}}_\text{pred})]\geq 1-\alpha$, for all ${\hat{P}}^\text{pred}\in\hat{\mathcal{P}}_\text{pred}$.

The dataset consists of instances of $(O^i, F^i)$ divided into a training set $D_\text{train}$ and a testing set $D_\text{test}$. We train an uncertainty-aware model on $D_\text{train}$ that computes the triplet 
$(F^{i}_l, F^{i}_m, F^{i}_u)=M(O^{i}, C^{i})$
where $F^{i}_l$, $F^{i}_u$, $F^{i}_m$ are the lower, upper, and mean predictions of the future positions, respectively.


The dataset $D_\text{all}$ is created by collecting simulated trajectories of autonomous race cars in the F1Tenth-Gym \cite{okelly_f1tenth_2020}; for details, see \cite{tumu_physics_2023}.
As shown in Figure \ref{fig:f1tenth_gym}, different racing lines were utilized including the center, right, left, and optimal racing line for the Spielberg track. 

\begin{figure}[h!]
	\centering
	\includegraphics[width=.4\columnwidth]{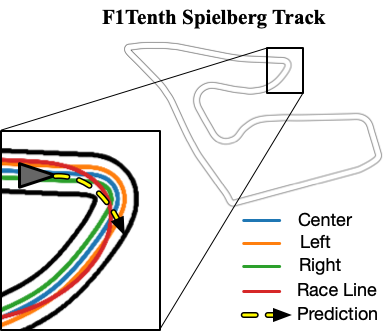}
	\caption{\small Motion Prediction for F1Tenth-Gym Environment \cite{okelly_f1tenth_2020}. Data is collected by simulating various racing lines on the Spielberg Track. }
	\label{fig:f1tenth_gym}
	\centering
\end{figure}

We denote these by $D_\text{center}$, $D_\text{right}$, $D_\text{left}$, and $D_\text{race}$, respectively. Position $\pi$ is a vector $\pi = (a,b,\vartheta, v)^\top$, where $a$ and $b$ are coordinates in a $2$-dimensional Euclidean space, and $\vartheta$ and $v$ are the heading and speed, respectively. In total, the $D_\text{all}$ consists of $34686$ train instances, $4336$ validation instances, and $4336$ test instances.

\paragraph{\textbf{In-Distribution vs Out-Of-Distribution.}} We consider the prediction task to be In-Distribution when $D_\text{train}, D_\text{test} \subset D_\text{all} $.
It is Out-Of-Distribution (OOD) when $D_\text{train} \subset D_\text{center} \cup D_\text{right} \cup D_\text{left}$ and $D_\text{test} \subset D_\text{race}$.

\paragraph{\textbf{Metrics.}} We train the ensemble of BNNs  and the CBDL models, $M_\text{ens}$ and $M_\text{CBDL}$ respectively, using the same architecture and different seeds. As for section \ref{experiments2}, for CBDL this corresponds to having a non-singleton prior FGCS, and a singleton likelihood FGCS. We compare the performance with respect to the test set by computing the single-step coverage, where each prediction time-step is treated independently, and the multi-step coverage, which considers the entire $h$-step prediction.



Figure \ref{fig:dist}.(a) depicts a sample of the In-Distribution evaluation for each of the models. 
For a given trajectory, the red boxes indicate when the prediction region did not cover the actual trajectory at that time-step. Qualitatively, $M_\text{CBDL}$ has less missed time steps when compared to $M_\text{ens}$. Table \ref{tab:six} shows that CBDL performs better in terms of both one-step and multi-step coverage. Similar results can be observed for the OOD scenario. There, all models were trained on racing lines which are predominantly parallel to the track curvature. As a consequence, when the test set consists of instances with higher curvatures, the overall coverage of all models degrades. This can be seen in Figure \ref{fig:dist}.(b), where the prediction of the models (orange) tends to be straight while the actual trajectory is more curved (green). Despite this, the figure and the coverage metrics in Table \ref{tab:six} show how CBDL exhibits a more robust behavior.

	\setlength{\tabcolsep}{3pt}
        \begin{table}
        \begin{center}
	\begin{tabular}{lcccccc}
		\toprule
            \multicolumn{7}{c}{In-Distribution Results} \\ 
            \midrule
            \multicolumn{1}{c}{}           &    \multicolumn{3}{c}{Ensemble} & \multicolumn{3}{c}{CBDL} \\ 
             \cmidrule(lr){2-4}\cmidrule(lr){5-7}
		\multicolumn{1}{c}{$1-\alpha$}       & $0.9$     & $0.95$     & $0.99$    & $0.9$   & $0.95$    & $0.99$  \\ 
            \midrule
		One-step                      & 0.962    & 0.980    & 0.992   & \textbf{0.992}  & \textbf{0.995}   & \textbf{0.997}  \\
		Multi-step                    & 0.638    & 0.826    & 0.937   & \textbf{0.914}  & \textbf{0.948}   & \textbf{0.979} \\    
            \addlinespace
            \toprule
            \multicolumn{7}{c}{Out-of-Distribution Results} \\ 
            \midrule
            \multicolumn{1}{c}{}           &    \multicolumn{3}{c}{Ensemble} & \multicolumn{3}{c}{CBDL} \\ 
             \cmidrule(lr){2-4}\cmidrule(lr){5-7}
            \multicolumn{1}{c}{$1-\alpha$}        & $0.9$     & $0.95$     & \multicolumn{1}{c}{$0.99$}    & $0.9$   &$0.95$    & $0.99$  \\ 
            \midrule
       
            One-step                      & 0.919    & 0.950    & 0.980   & \textbf{0.979}  & \textbf{0.988}   & \textbf{0.995}  \\
            
		Multi-step                    & 0.532    & 0.703    & 0.860   & \textbf{0.825}  & \textbf{0.884}   & \textbf{0.943}  \\    
		\bottomrule
	\end{tabular}
 \end{center}
 \caption{\small F1Tenth coverage results. We report one-step coverage and multi-step coverage across 3 different values of $\alpha$. CBDL exceed coverage of EBNNs in all settings.
}
\label{tab:six}
        \end{table}

\begin{figure}[h!]%
    \centering
    \subfloat[\centering ]{{\includegraphics[width=.48\textwidth]{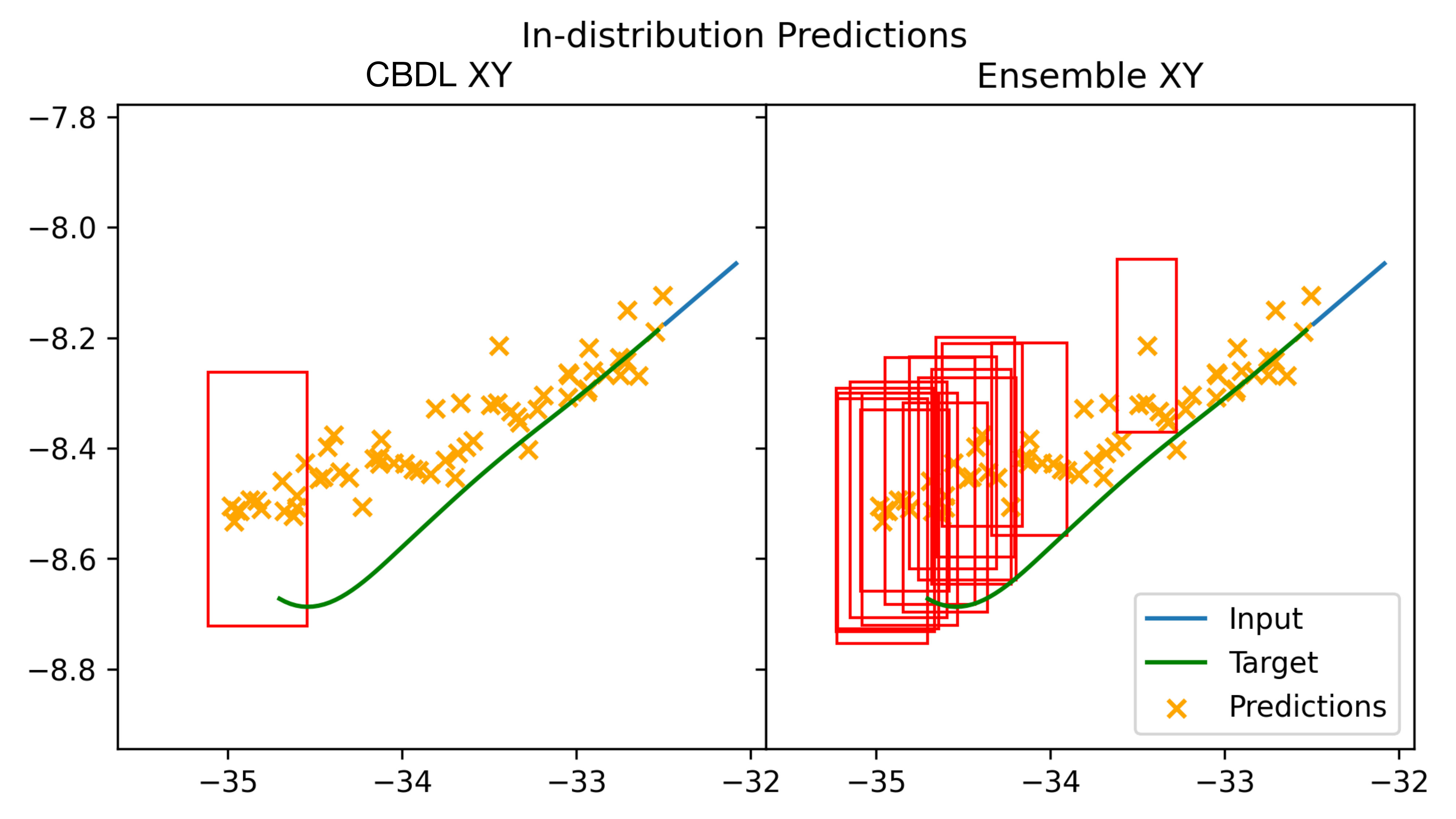} }}%
    \quad
    \subfloat[\centering ]{{\includegraphics[width=.48\textwidth]{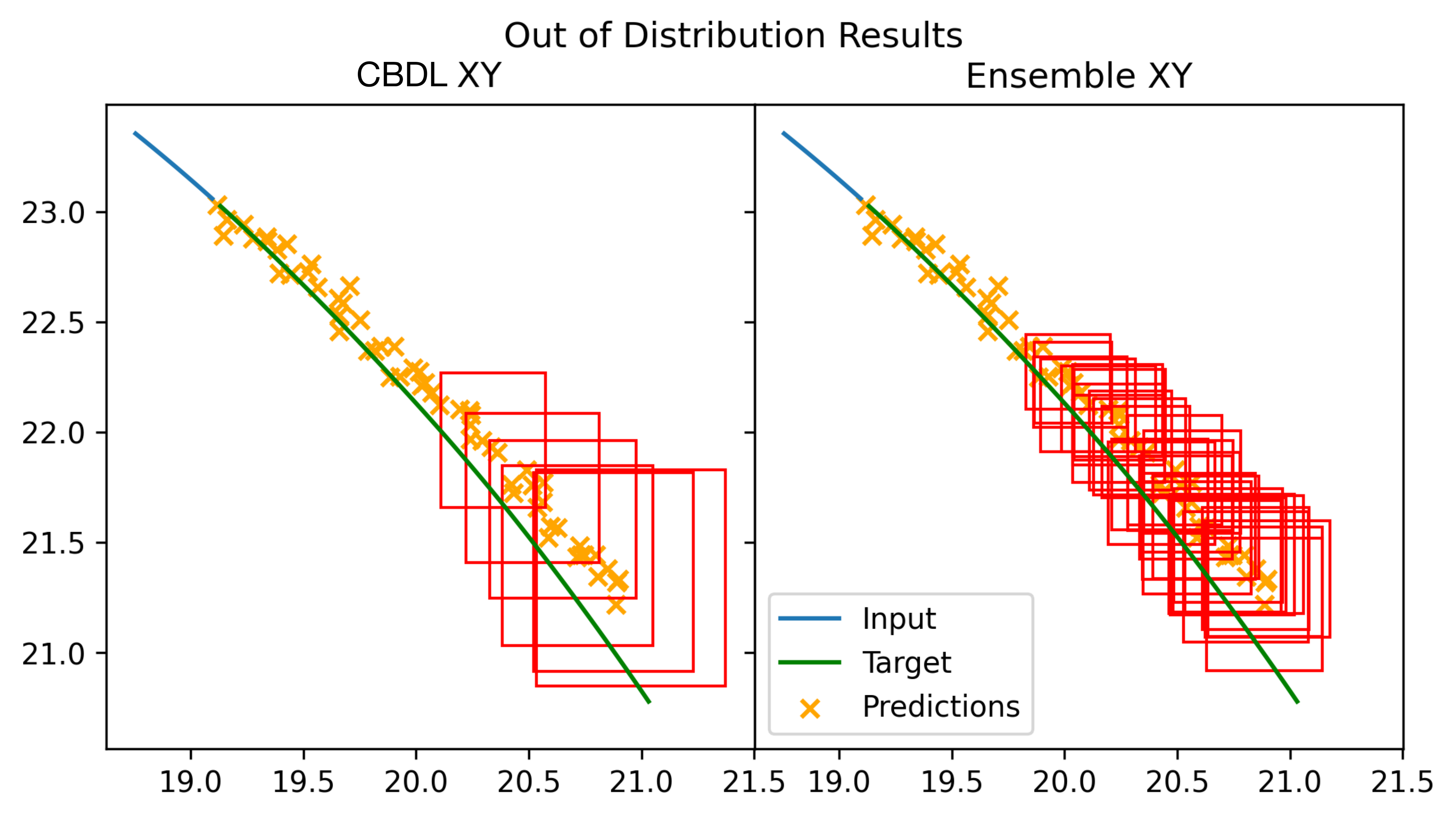} }}%
    \caption{\small In both pictures, the red boxes indicate when the prediction region did not cover the actual trajectory at that time-step. \textbf{Left:} F1Tenth In-Distribution results. Given an input of past observations, CBDL exhibits better coverage of the future target trajectory. Predictions which do not cover the target within the desired $1-\alpha$ level are indicated in red. \textbf{Right:} F1Tenth Out-Of-Distribution (OOD) results. Robust performance is exhibited by CBDL when compared to EBNN in OOD settings.}%
    \label{fig:dist}%
\end{figure}

\subsubsection{Artificial Pancreas Control}\label{pancreas}




\textbf{Overall Setup.} In this next case study we consider the problem of data-driven control of human blood glucose-insulin dynamics, using an artificial pancreas system, see Figure \ref{fig:ap_setup}. 

\begin{figure}[h!]
    \centering
    \includegraphics[width = .5\textwidth]{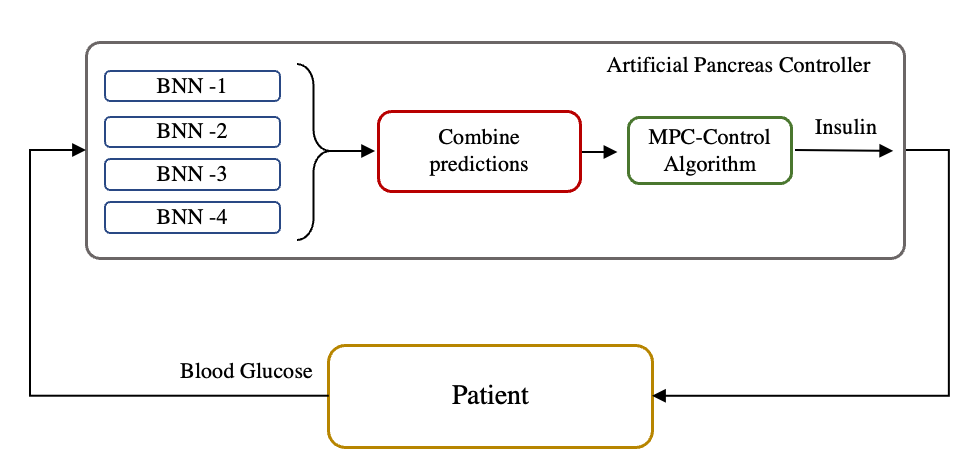}
    \caption{\small The Bayesian Neural Networks predict a future blood glucose value. These individual predictions are combined to get a robust estimate of the true value as an interval. This is used by the Model Predictive Control (MPC) algorithm to recommend insulin dosage for the patient. The patient block in our experiment is simulated using the virtual patient models from the UVa-Padova simulator.   }
    \label{fig:ap_setup}
\end{figure}

External insulin delivery is accomplished by using an insulin pump controlled by the artificial pancreas software, which attempts to regulate the blood-glucose (BG) level of the patient within the euglycemic range of $[70, 180]$ mg/dl   \cite{taisa_pancreas_paper}. Levels below $70$ mg/dl lead to hypoglycemia, which can lead to loss of consciousness, coma or even death. On the other hand, levels above $300$ mg/dl lead to a condition called ketoacidosis, where the body can break down fat due to lack of insulin, and lead to build up of ketones. In order to treat this situation, patients receive external insulin delivery through insulin pumps. Artificial Pancreas (AP) systems can remedy this situation by measuring the blood glucose level, and automatically injecting insulin into the blood stream. Thus, we define the \emph{unsafe regions} of the space as $G(t) \in (-\infty,70) \cup (300,\infty)$, where $G(t)$ is the BG value at time $t$. This is the shaded region in Figure \ref{fig:ap_time_curves}.

\begin{figure}[h!]
    \centering
    \includegraphics[width = .45\textwidth]{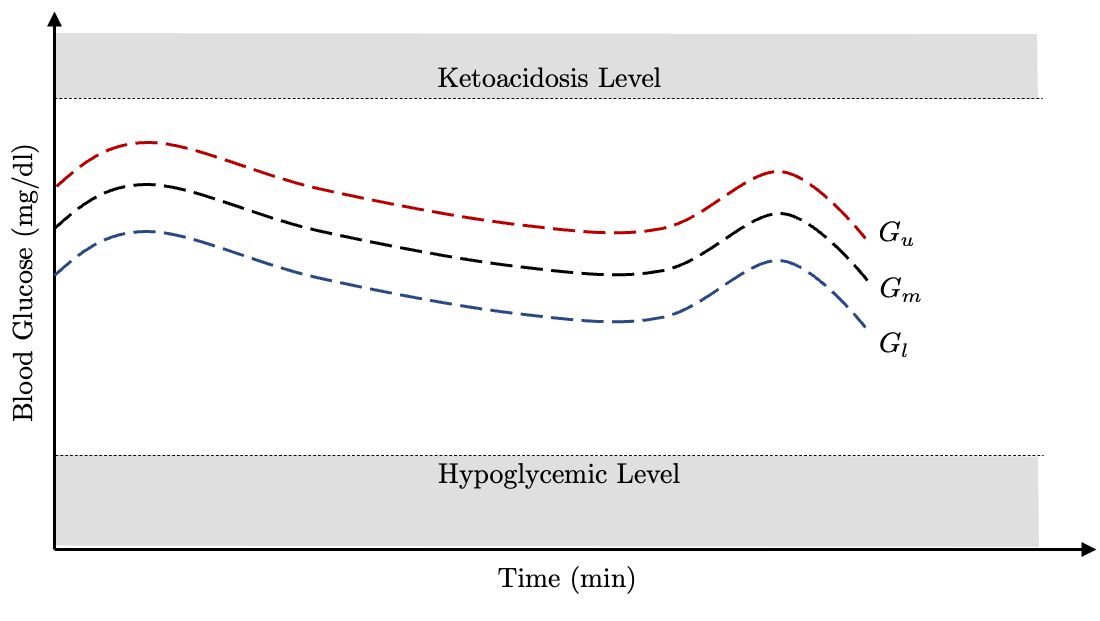}
    \caption{\small  Starting from an initial glucose value, the task of the artificial pancreas controller is to maintain the blood glucose value within safe operating limits using insulin as a mode of control. }
    \label{fig:ap_time_curves}
\end{figure}

\textbf{Neural Network Models and Controller.} Deep Neural Networks are effective in capturing the BG-insulin dynamics for personalized medical devices \cite{taisa_pancreas_paper}. This allows for improved device performance. Even though standard Feedforward Neural Networks can be used, Bayesian Neural Networks (BNNs), and especially a collection of multiple BNNs, offer a better alternative towards uncertainty aware predictions. Here, we test the ramifications of these prediction sets, when used inside an online receding horizon control scheme for insulin delivery. We use the standard MPC control scheme for this purpose, well-known in the literature   \cite{pancreas_paper}. 
More formally, let $G(t)$ and $I(t)\equiv I_t$ be the blood-glucose and insulin values at time $t$, respectively. We denote the finite length trajectory of length $H$ as $\overleftarrow{G}_H (t):= [G(t-H+1), \ldots, G(t)]$, and $\overleftarrow{I}_H (t):= [I(t-H+1), \ldots, I(t)]$. An uncertainty aware model $M$ computes the triplet $ (G_l(t+l), G_m(t+l), G_u(t+l) ) =  M(\overleftarrow{G}_H(t), \overleftarrow{I}_H(t))$, where $G_m$ is the mean prediction output, and $G_l, G_u$ are the lower and upper predictions of the glucose value, respectively. By design, it is true that $G_l \leq G_m 
\leq G_u$. A model predictive control algorithm -- whose cost function we denote by $J$ -- solves $\argmin_{I_0, I_1, \dots, I_{k-1}} \sum_{i=0}^{k-1} J( M (\overleftarrow{G}_H (t + i), \overleftarrow{I}_H (t + i))) $.  

After every time step, the control algorithm picks the first insulin input $I_0$ as the insulin bolus for the patient, and discards the rest. Cost function $J$ takes into account three factors, (i) Distance of the mean prediction level $G_m$ at each time step from a target value of $120$ mg/dl, (ii) Distance of upper  and lower predictions ($G_u$ and $G_l$) from the unsafe regions of the state space $G(t) > 300$ and $G(t) < 70$, and (iii) Total insulin injected $\sum_{t=0}^{k-1} I_t$. 

Starting with some initial glucose value $G(0)$, we measure the performance of the artificial pancreas controller as the fraction of time it spends in the 
unsafe regions, 
$$t_\text{unsafe} = \frac{1}{T}\sum_{t=1}^{T} \mathbbm{1}\left\lbrace{ G(t)\in (-\infty,70) \cup (300,\infty) }\right\rbrace,$$ 
where $\mathbbm{1}\{ \cdot \}$ denotes the indicator function. A lower value is more desirable. We compare EBNN and CBDL as different realizations of the model $M$.

\textbf{Distribution Shift using Meals.} A well known problem with learned models is distribution shift. Bayesian Neural Networks can address this issue by apprising the end user of the increased uncertainty. For regression models of the type described above, this appears as larger prediction intervals $[G_l, G_u]$. The artificial pancreas controller can run into this situation in the following way: the insulin-glucose time series data collected for training the data-driven model $M$ can be without meals, while at test time the patient can have meals. This creates a distribution shift between the training and test time data. Fortunately, the UVa-Padova simulator   \cite{uva-padova}
 allows us to create datasets with and without meal inputs. In this case study, the training data was obtained by randomly initializing the BG value in the range $[120, 190]$, and simulating the patient for $720$ minutes. The controller was executed at $5$ minutes intervals. At test time the patient was supplied meals at specific time intervals (for details, see Appendix \ref{appendix_ap_example}). This creates a significant distribution shift since meals are effectively an unknown variable which can affect the system state. However, from the controller's perspective this is practical, since patients can have unannounced meals.



\textbf{Results and Discussion.}
To capture the difference in performance between EBNN and CBDL, we compute $\text{Perf}_\text{diff}:=(t_\text{unsafe}^\text{EBNN} - t_\text{unsafe}^\text{CBDL})/t_\text{unsafe}^\text{EBNN}$. Both $t_\text{unsafe}^\text{EBNN}$ and $t_\text{unsafe}^\text{CBDL}$ depend on interval $[G_l, G_u]$; for EBNN, this latter corresponds to the $\alpha$-level HDR $R_\alpha(P_\text{ens})$ associated with EBNN distribution $P_\text{ens}$, while for CBDL it corresponds to the IHDR $IR_\alpha(\hat{\mathcal{P}}_\text{pred})$. We consider one case in which CBDL is trained using a credal prior set and only one likelihood (we choose different seeds which initialize the prior distributions but we keep the same architecture for the BNNs), and another case in which we do the opposite (we use the same seed and different architectures). 


We report $\text{Perf}_\text{diff}$, across different choices in Table \ref{tab:pancreas_results}. We observe that for lower values of $\alpha$ (or equivalently, for larger values of $1-\alpha$, e.g. $0.95$ and $0.99$) the gains of CBDL are more pronounced. This means that when larger significance levels need to be ensured, CBDL is to be preferred to ensemble of BNNs. 
As discussed before, the CBDL procedure considers all the infinitely many possible priors that can be expressed as a convex combination of the priors that the user specifies at the beginning of the analysis. The same holds for the likelihoods.  While this results in a more conservative estimate as compared to the EBNN framework, CBDL produces controllers which respect the safety limits better. 
To see that CBDL is more conservative than EBNN, notice that when combining predictive distributions from multiple BNNs,  CBDL combines the predictions via a finitely generated credal set (FGCS) whose extrema are the individual distributions. On the contrary, an EBNN takes an average of the individual distributions to compute the ensemble distribution $P_\text{ens}$. The union of the HDRs of the predictive distributions is more conservative (i.e., broader) than the HDR of the single ensemble distribution. While the approach by EBNN seems like a reasonable choice on the surface, it falls short in capturing the uncertainty necessary for the downstream task. For more details on this case study, see Appendix \ref{appendix_ap_example}.

\begin{table}[h]
\begin{center}
\begin{tabular}{  p{2.5cm}  p{1.5cm}  p{1.7cm}  p{1.7cm} }
\toprule
$1-\alpha$ & $0.9$ & $0.95$  & $0.99$  \\
\midrule
\midrule
Varying Seeds & -5.8\% & \textbf{5.5}\% & \textbf{0.6}\% \\
\hline
Varying \\ Architectures & -6.6\% & \textbf{0.9}\% & \textbf{0.3}\% \\
\bottomrule
\end{tabular}
\end{center}
\caption{\small We report the performance improvements when using IBNNs as compared to EBNNs across $3$ different values of $\alpha$. Row $1$ corresponds to the case where the individual BNNs are trained with different seeds for the prior distribution; and Row $2$ is the case when the BNNs have different architectures.}
\label{tab:pancreas_results}
\end{table}


\section{Related Work}\label{related_work}

 In \cite{corani}, the authors introduce credal classifiers (CCs) as a generalization of classifiers based on Bayesian networks. 
Unlike CCs, CBDL does not require independence assumptions between non-descendant, non-parent variables.
In addition, CBDL avoids NP-hard complexity issues of searching for optimal structure in the space of Bayesian networks \cite{meek}. 
In \cite{cuzzo}, an epistemic convolutional neural network (ECNN) is developed that explicitly models the epistemic uncertainty induced by training data of limited size and quality. 
A clear distinction is that ECNNs measure uncertainty in target-level representations whereas CBDL identifies the uncertainty measure on the output space $\mathcal{Y}$.
Despite the merit of their work, we believe CBDL achieves greater generality, since it is 
able to quantify both aleatoric and epistemic predictive uncertainties, and is applicable to problems beyond classification. For a review of the state of the art concerning the distinction between EU and AU we refer to \cite{eyke} and to \cite{cuzzo}. We also point out how CBDL has been recently used to solve prior-likelihood conflicts in Bayesian statistics \cite{julian}. Further references can be found in Appendix \ref{references-2}.

We also point out how there exist other efficient methods which perform approximate Variational Inference via dropouts in deep neural networks \cite{kendall2017uncertainties, gal2016dropout}. As mentioned at the end of section \ref{proc}, we can easily adapt CBDL to use such dropout approximations in \textbf{Steps 3-4} of Algorithm \ref{algo-1}. Since our contribution is centered around how different predictions can be combined via an FGCS, we used the de-facto standard for performing inference on BNNs, which is based on off-the-shelf VI techniques. In the future, we plan to study the effect on computational complexity and uncertainty quantification capability of a CBDL procedure that approximates posterior and predictive distributions via dropout. 

We do not consider Bayesian Model Averaging (BMA) as a baseline for CBDL for two main reasons. First, BMA applied to deep learning needs to implement full batch Hamiltonian Monte Carlo  in order to get to the true posterior \cite{bma-paper}. Given the number of parameters in modern deep learning architectures -- in the order of millions -- this is realistically possible at an experimental level only to labs with access to industry scale computational resources. In order to be practically relevant, we limit our experiments to the more well-understood realm of Variational Inference on BNNs. In addition, \cite{danger-bma} show the pitfalls of BMA in the context of Bayesian Neural Networks, a further reason not to use a Highest Density Region resulting from BMA as a baseline for CBDL. Such pitfalls are related to the fact that BMA can be seen as a model featuring second-order distributions, i.e. distributions over distributions. In particular, the distribution $Q$ in \eqref{breve-p} is a second order distribution. These types of models have been recently shown to suffer from major pitfalls when used to quantify predictive EU due to their sensitivity to regularization parameters, and to underestimate predictive AU \cite{bengs2022pitfalls,pandey,mira}.

\section{Conclusion}\label{concl}
We presented CBDL, a procedure that can be seen as a non-condensed, uncountably infinite ensemble of BNNs, carried out using only finitely many elements. It allows to distinguish between predictive AU and EU, and to quantify them. We showed how it can be used to specify a set of outputs -- the IHDR -- that enjoys probabilistic guarantees. We showed empirically that it improves on the Bayesian state of the art at gauging predictive AU and EU, and we also demonstrated its downstream tasks capabilities.

We point out how a region that improves on $IR_\alpha(\hat{\mathcal{P}}_\text{pred})$, meaning that it would be tighter, is 
$$IR^\prime_\alpha(\hat{\mathcal{P}}_\text{pred}) = \{y \in \mathcal{Y} : \underline{\hat{p}}^\text{pred}(y) \geq \underline{\hat{p}}^\alpha\},$$
where $\underline{\hat{p}}^\text{pred} := \min_{k,s} {\hat{p}}^\text{pred}_{k,s}$,  
and $\underline{\hat{p}}^\alpha$ is the largest constant such that $\underline{\hat{P}}^\text{pred}[y \in IR^\prime_\alpha(\hat{\mathcal{P}}_\text{pred})] \geq 1 - \alpha$. The problem with $IR^\prime_\alpha$ is that, while the highest density regions $R_\alpha(\hat{P}_{k,s}^\text{pred})$ associated with the predictive distributions in $\text{ex}\hat{\mathcal{P}}_\text{pred}$ can be computed using off-the-shelf tools, calculating $\underline{\hat{p}}^\text{pred}$ and $\underline{\hat{p}}^\alpha$ would have been much more computationally expensive. In addition, it would have required to come up with a new technique to find $\underline{\hat{p}}^\text{pred}$ and $\underline{\hat{p}}^\alpha$. We defer studying this to future work.

We also plan to apply CBDL to continual learning to overcome the curse of dimensionality and to capture an agent's preference over the tasks to perform, similarly to \cite{ibcl}, and to active learning, to be able to sample from the regions of the state space exhibiting the highest epistemic uncertainty, similarly to \cite{inn}.

Furthermore, we intend to relate CBDL to Bayesian Model Selection (BMS) \cite{bms}. This latter suffers from the same problem as ``regular'' Bayesian inference. That is, while it tries to come up with a sophisticate prior that induces shrinkage, it still relies on the ``correctness'' of that prior, i.e. on correctly specifying the prior’s parameters.  In the future, an interesting way of combining CBDL with BMS will be to use a finite number of regularized horseshoe priors, as suggested by \cite[Section 3.2]{bms}, as extreme elements of the prior credal set. 

We also call attention to the fact that CBDL is a model-based approach. The relationship with model-free approaches such as conformal prediction \cite{conformal_tutorial} will be the object of future studies. In particular, we are interested in finding in which cases IHDRs are narrower than conformal regions, and vice versa, and which credal sets give rise to IHDRs enjoying the same probabilistic guarantees as conformal regions.

Finally, we point out how one possible way of easing the burden of the combinatorial task in \textbf{Step 3} of Algorithm \ref{algo-1} is to specify a prior credal set whose size strikes the perfect balance between being ``vague enough'' so that we do not underestimate the EU, and being ``small enough'' so that CBDL is actually implementable. We suspect conjugacy of the priors may play a key role in this endeavor. Because of its centrality, we defer the study of ``optimal prior credal sets'' to future work. We also point out how an evidential approach \cite{amini,charpentier,thierry,thierry2,sensoy} -- at least in classification problems -- could allows us to bypass the bottleneck in Algorithm \ref{algo-1} by merging an imprecise probabilistic approach, with some tricks resulting from smartly choosing  our priors. 

A CBDL-adjacent research question of great interest pertaining ensemble learning, then, is how do single components of an ensemble contribute to the quantification of the ``global predictive EU'' faced by the agent. When the uncertainty captured by an ensemble is distilled into only one distribution, there could indeed be ``uncertainty spills'', like when one pours water in a glass too hastily, and some finishes on the table instead of in the glass. This remains an unexplored venue in the uncertainty quantification community.


\bibliography{tropical}
\bibliographystyle{plain}

\appendix

\section{Why do we need IPs?}\label{motivation}
The main motivations for working with 
credal sets are two. Let $(\Omega,\mathcal{F})$ be the measurable space of interest.
\begin{itemize}
    \item[(i)]  A single probability distribution does not suffice to represent ignorance in the sense of lack of knowledge; this is well documented in the literature, see e.g. \cite{eyke} and references therein. Consider the example of complete ignorance (CI) in the case of a finite state space $\Omega$ \cite[Section 3.3]{eyke}. In standard Bayesian analysis, CI is modeled in terms of the uniform distribution $\text{Unif}(\Omega)$; this is justified by Laplace's ``principle of indifference''. Then, however, it is not possible to distinguish between precise probabilistic knowledge about a random event -- called \textit{prior indifference}; think of the tossing of a fair coin -- and a complete lack of knowledge due to an incomplete description of the experiment -- called \textit{prior ignorance}. Another problem is given by the additive nature of probability distributions. Consider again the example of a uniform distribution. First, let us observe that it is not invariant under reparametrization. In addition, if we model the ignorance about the length $x$ of the side of a cube in $\mathbb{R}^3$ via a uniform measure on the interval $[l,u]\subset\mathbb{R}$, then this does not yield a uniform distribution of $x^3$ on $[l^3,u^3]$, which suggests some degree of informedness about the cube's volume. Finally, as pointed out in \cite{walley}, if we ask a subject -- even an expert -- about their opinion regarding some events, it is much more likely that they will report interval of probabilities rather than single values.
    \item[(ii)] Working with credal sets allows to achieve prior and likelihood \textit{robustness}: realistically large sets $\mathcal{P}_\text{prior}$ of priors and $\mathcal{P}_\text{lik}$ of likelihoods are elicited. Using credal sets, the agent recognizes that prior beliefs and knowledge about the sampling model are limited and imprecise. Combining each pair of functions in $\mathcal{P}_\text{prior}$ and $\mathcal{P}_\text{lik}$ using Bayes' rule, a class of posterior distributions -- reflecting the updated state of uncertainty -- is formed. If the available information is not sufficient to identify a unique posterior distribution, or a set of posteriors whose diameter is small, credal sets allow to represent  \textit{indecision}, thus leading to a less informative but more robust conclusions.\footnote{Here ``diameter'' has to be understood as the distance between upper and lower probability of event $A$, for all $A\in\mathcal{F}$.} 
    \end{itemize}


\section{On the use of credal sets}\label{use_credal_sets}
Let us address a critique raised against the use of credal sets. \cite{lassiter} argues against the use of sets of probabilities to model an agent's prior beliefs and their knowledge of the sampling model, while debating in favor of using hierarchical Bayesian models. As reported in \cite[Secton 4.6.2]{eyke}, the  argument against credal sets that is more cogent for the machine learning literature is that modeling a lack of knowledge in a set-based manner may hamper the possibility of inductive inference, up to a point where learning from empirical data is not possible any more. With this, we mean the following. As \cite{pericchi} points out, the natural candidate for a class of priors to represent complete ignorance is the class $\mathcal{P}_\text{all}$ of all distributions. When this class leads to non-vacuous and useful conclusions, these are quite compelling and uncontroversial. It turns out that the posterior probabilities obtained from this class are vacuous, that is, their lower and upper bounds are $0$ and $1$: no finite sample is enough to annihilate a sufficiently extreme prior belief. There is then a compromise to be made, and this is the compromise of \textit{near-ignorance}. The near-ignorance class should be vacuous a priori in some respects, typically the ones that are the most important for the analysis at hand. This way of proceeding is labeled as arbitrary by \cite{lassiter}, who instead advocates for the use of hierarchical Bayesian procedures. We find this critique not compelling, as during the analysis the job of the agent is to model reality: as pointed out in \cite[Secton 5]{eyke}, statistical inference is not possible without underlying assumptions, and conclusions drawn from data are always conditional on those assumptions. If we were to work every time with the maximum level of generality, we would hardly be able to reach any conclusions. For example, in a statistical analysis we never consider the state $\Omega$ of \textit{apparently possible states} \cite[section 2.1.2]{walley}, that is, the one that contains all the states $\omega$ that are logically consistent with the available information. If we consider a coin toss, we let the state space be $\Omega=\{\text{heads, tails}\}$, certainly not $\Omega=\{$heads, tails, coin landing on its edge, coin braking into pieces on landing, coin disappearing down a crack in the floor$\}$.
The same holds for sets of probabilities: it makes much more sense to work with near-ignorance credal sets than to work with $\mathcal{P}_\text{all}$. A final reason to rebut the point in \cite{lassiter} is that the problems indicated in (i) in section \ref{motivation} that make the use of the uniform prior distribution -- often interpreted as representing epistemic uncertainty in standard Bayesian inference -- at least debatable are inherited by hierarchical Bayesian modeling, as specified in \cite{bernardo, eyke, jeffreys}. Furthermore, a single distribution that is the result of a hierarchical Bayesian procedure is unable to gauge epistemic uncertainty \cite{eyke}.

\subsection{Further notes on credal sets}\label{spec_cr_set}
As pointed out in \cite[Section 3.3]{corani}, there is a way of obtaining credal sets starting from sets of probability intervals; in addition, standard algorithms can compute the extreme elements of a credal set for which a probability interval has been provided \cite{avis}. However, the resulting number of extrema is exponential in the size of the possibility space \cite{tessem}.\footnote{Recall that the possibility space of a random variable is the space of the values it can take on.} 
For this reason 
we prefer to specify prior and likelihood finitely generated credal sets instead. 

The way credal sets behave after conditioning on new available evidence has been recently studied in \cite{constriction}, and a way of using credal sets in the open-world scenario (that is, when the support of the elements of the credal set can become larger as more data become available) is explored in \cite{ext_prob}. Furthermore, the use of credal sets in statistical learning theory has been studied in \cite{clt}, in computer vision in \cite{imsg}, and in optimal transport theory in \cite{opt_trans}.

\section{A further IP concept: the core}
Let again $(\Omega,\mathcal{F})$ be the measurable space of interest. Because of the conjugacy property of upper and lower probabilities, let us focus on upper probabilities only. We say that upper probability $\overline{P}$ is \textit{concave} if $\overline{P}(A\cup B) \leq \overline{P}(A)+\overline{P}(B)-\overline{P}(A\cap B)$, for all $A,B\in\mathcal{F}$. Recall that  $\Delta(\Omega,\mathcal{F})$ denotes the set of all probability measures on $(\Omega,\mathcal{F})$. Upper probability $\overline{P}$ is \textit{compatible} with the convex set \cite{gong}
\begin{align*}
   \text{core}(\overline{P}):&=\{P \in \Delta(\Omega,\mathcal{F}): P(A) \leq \overline{P}(A) \text{, } \forall A \in \mathcal{F}\}\\ &=\{P \in \Delta(\Omega,\mathcal{F}): \underline{P}(A) \leq P(A) \leq \overline{P}(A) \text{, } \forall A \in \mathcal{F}\} \nonumber
\end{align*}
where the second equality is a characterization \cite[Page 3389]{cerreia}. Notice that the core is convex \cite[Section 2.2]{marinacci2}. We assume it is nonempty. Then, it is weak$^\star$-compact as a result of \cite[Proposition 3]{marinacci2}.\footnote{Recall that in the weak$^\star$ topology, a net $(P_\alpha)_{\alpha \in I}$ converges to $P$ if and only if $P_\alpha(A) \rightarrow P(A)$, for all $A \in \mathcal{F}$.}

Since the core is convex, the set $\text{ex}[\text{core}(\overline{P})]$ of extreme points of the core is well defined. It contains all the elements of the core that cannot be written as a convex combination of one another. The following important result is a consequence of \cite[Theorem 3.6.2]{walley}.
\begin{theorem}\label{walley_core}
Suppose $\text{core}(\overline{P})$ is nonempty. Then, the following holds.
\begin{itemize}
\item[(a)] $\text{ex}[\text{core}(\overline{P})] \neq \emptyset$.
\item[(b)] $\text{core}(\overline{P})$ is the closure in the weak$^\star$ topology of the convex hull of $\text{ex}[\text{core}(\overline{P})]$.
\item[(c)] If $\overline{P}(A)=\sup_{P \in \text{core}(\overline{P})} P(A)$, for all $A \in \mathcal{F}$, then $\overline{P}(A)=\sup_{P \in \text{ex}[\text{core}(\overline{P})]} P(A)$, for all $A \in \mathcal{F}$.
\end{itemize}
\end{theorem}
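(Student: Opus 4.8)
The plan is to observe that, under the stated hypotheses, $K:=\text{core}(\overline{P})$ is a nonempty, convex, weak$^\star$-compact subset of the locally convex Hausdorff space $\Delta(\Omega,\mathcal{F})$, and then to run the Krein--Milman theorem (the abstract machinery underlying \cite[Theorem 3.6.2]{walley}) to obtain (a) and (b), followed by a short continuity argument for (c). First I would assemble the topological ingredients. Convexity of $K$ is already recorded in the excerpt, nonemptiness and weak$^\star$-closedness are assumed, and \cite[Proposition 3]{marinacci2} upgrades these to weak$^\star$-compactness. The ambient space $\Delta(\Omega,\mathcal{F})$, equipped with the weak$^\star$ topology defined in the footnote (setwise convergence, i.e. the initial topology generated by the evaluation maps $P\mapsto P(A)$, $A\in\mathcal{F}$), is a locally convex topological vector space; it is Hausdorff because two distinct measures must disagree on some $A\in\mathcal{F}$. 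These are precisely the hypotheses required for Krein--Milman.

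For parts (a) and (b) I would apply Krein--Milman directly: a nonempty compact convex subset of a locally convex Hausdorff space equals the closed convex hull of its extreme points. Applied to $K$, this is exactly statement (b), that $K$ is the weak$^\star$-closure of the convex hull of $\text{ex}[K]$. Statement (a) is then immediate, since $K\neq\emptyset$ while the closed convex hull of the empty set is empty, forcing $\text{ex}[K]\neq\emptyset$.

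For part (c) I would fix $A\in\mathcal{F}$ and study the functional $L_A\colon P\mapsto P(A)$, which is affine (indeed linear) and, by the very definition of the weak$^\star$ topology, weak$^\star$-continuous. The inclusion $\text{ex}[K]\subseteq K$ gives $\sup_{\text{ex}[K]}L_A\le\sup_{K}L_A$ trivially. For the reverse inequality, part (b) expresses any $P\in K$ as a weak$^\star$-limit of convex combinations $\sum_i\lambda_iP_i$ with $P_i\in\text{ex}[K]$; affinity yields $L_A\bigl(\sum_i\lambda_iP_i\bigr)=\sum_i\lambda_iL_A(P_i)\le\sup_{\text{ex}[K]}L_A$, and weak$^\star$-continuity of $L_A$ preserves this bound in the limit, so $L_A(P)\le\sup_{\text{ex}[K]}L_A$ for every $P\in K$. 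Taking the supremum over $K$ gives $\sup_{K}L_A=\sup_{\text{ex}[K]}L_A$, which combined with the hypothesis $\overline{P}(A)=\sup_{P\in K}P(A)$ produces $\overline{P}(A)=\sup_{P\in\text{ex}[K]}P(A)$ for all $A$, as claimed. This last step is a special case of the Bauer maximum principle.

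The main obstacle I anticipate is not any single deep step but the careful verification that the weak$^\star$ topology of setwise convergence genuinely furnishes a locally convex Hausdorff structure in which Krein--Milman is applicable and in which each evaluation map $L_A$ is continuous; once this is pinned down, (a)--(c) follow essentially formally. A secondary point worth stating explicitly is that the compactness needed for Krein--Milman is imported from \cite[Proposition 3]{marinacci2} rather than assumed outright, so I would make sure its hypotheses match the nonemptiness and weak$^\star$-closedness assumed here.
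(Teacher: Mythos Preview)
Your proposal is correct. The paper does not supply its own proof of this theorem; it simply records the statement as ``a consequence of \cite[Theorem 3.6.2]{walley}'' and moves on. Your Krein--Milman argument is precisely the standard machinery underlying that citation, so in substance you have written out the proof the paper defers to Walley for.

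One small technical correction: $\Delta(\Omega,\mathcal{F})$ is the set of probability measures, which is not itself a vector space, so it cannot be the locally convex Hausdorff TVS in which you run Krein--Milman. The fix is routine: embed the core in the vector space of finite signed measures (or, equivalently, the norm dual of $B(\Omega)$, which is the setting both Walley and \cite{marinacci2} work in), equipped with the weak$^\star$ topology; the core is then a nonempty convex weak$^\star$-compact subset of that TVS, and your argument goes through verbatim. Your identification of the evaluation maps $P\mapsto P(A)$ as weak$^\star$-continuous affine functionals, and the Bauer-type step for (c), are fine as written.
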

So in order to define an upper probability $\overline{P}$ that setwise dominates the elements of $\text{core}(\overline{P})$ it is enough to specify the extreme points of the core.

\section{A new Bayes' theorem for IPs}\label{new_bayes}
We present Theorem \ref{main1}, a result that -- although appealing -- does not lend itself well to be applied to the CBDL procedure. An extension of Theorem \ref{main1} is given in \cite{novel_bayes}.

Call $\Theta$ the parameter space of interest and assume it is Polish, that is, the topology for $\Theta$ is complete, separable, and metrizable. This ensures that the set $\Delta(\Theta,\mathcal{B})$ of probability measures on $\Theta$ is Polish as well, where $\mathcal{B}$ denotes the Borel $\sigma$-algebra for $\Theta$. Let $\mathscr{X}$ be the set of all bounded, non-negative, $\mathcal{B}$-measurable functionals on $\Theta$. Call $\mathscr{D}=\mathcal{X}\times \mathcal{Y}$ the sample space endowed with the product $\sigma$-algebra $\mathcal{A}=\mathcal{A}_\mathbf{x} \times \mathcal{A}_\mathbf{y}$, where $\mathcal{A}_\mathbf{x}$ is the $\sigma$-algebra endowed to $\mathcal{X}$ and $\mathcal{A}_\mathbf{y}$ is the $\sigma$-algebra endowed to $\mathcal{Y}$. Let the agent elicit  $\mathcal{L}_\theta:=\{P_\theta\in\Delta (\mathscr{D},\mathcal{A}) : \theta \in\Theta\}$. Assume that each $P_\theta\in\mathcal{L}_\theta$ has density $L(\theta)=p(D\mid \theta)$ with respect to some $\sigma$-finite dominating measure $\nu$ on $(\mathscr{D},\mathcal{A})$; this represents the likelihood function for $\theta$ having observed data $D\subset \mathscr{D}$. We assume for now that $L\in\mathscr{X}$, for all $D\subset \mathscr{D}$.

Let the agent specify a set $\mathcal{P}$ of probabilities on $(\Theta,\mathcal{B})$. Then, compute $\overline{P}$, and consider $\mathcal{P}^{\text{co}}:=\text{core}(\overline{P})$; it represents the agent's initial beliefs.\footnote{Superscript ``co'' stands for convex and core.} We assume that every $P\in\mathcal{P}^{\text{co}}$ has density $p$ with respect to some $\sigma$-finite dominating measure $\mu$ on $(\Theta,\mathcal{B})$, that is, $p=\frac{\text{d}P}{\text{d}\mu}$. We require the agent's beliefs to be represented by the core for two main reasons. The first, mathematical, one is to ensure that the belief set is compatible with the upper probability.
The second, philosophical, one is the following \cite{dipk,ergo.th}. A criticism brought forward by \cite[Section 2.10.4.(c)]{walley} is that, given an upper probability $\overline{P}$, there is no cogent reason for which the agent should choose a specific $P_T$ that is dominated by $\overline{P}$, or -- for that matter -- a collection of ``plausible'' probabilities. Because the core considers all (countably additive) probability measures that are dominated by $\overline{P}$, it is the perfect instrument to reconcile Walley's behavioral and sensitivity analysis interpretations.

Let the agent compute $\overline{P}_\theta$, and consider $\mathcal{L}^{\text{co}}_\theta:=\text{core}(\overline{P}_\theta)$; it represents the set of plausible likelihoods. 
Let 
\begin{equation}\label{L_scr}
    \mathscr{L}:=\left\lbrace{L=\frac{\text{d}P_\theta}{\text{d}\nu} \text{, } P_\theta \in \mathcal{L}_\theta^{\text{co}}}\right\rbrace,
\end{equation}
and denote by $\overline{L}(\theta):=\sup_{L\in\mathscr{L}}L(\theta)$ and by $\underline{L}(\theta):=\inf_{L\in\mathscr{L}}L(\theta)$, for all $\theta\in\Theta$. Call 
\begin{align*}
    \mathcal{P}^{\text{co}}_D:=\Bigg\{ P_D \in \Delta(\Theta,\mathcal{B}) : \frac{\text{d}P_D}{\text{d}\mu}= \text{ } &p(\theta\mid D)=\frac{L(\theta)p(\theta)}{\int_\Theta L(\theta)p(\theta) \text{d}\theta}\text{, } \\
    &p=\frac{\text{d}P}{\text{d}\mu} \text{, } P\in\mathcal{P}^{\text{co}} \text{, } L=\frac{\text{d}P_\theta}{\text{d}\nu} \text{, } P_\theta\in\mathcal{L}_\theta^{\text{co}}\Bigg\}
\end{align*}
the class of posterior probabilities when the prior is in $\mathcal{P}^{\text{co}}$ and the likelihood is in $\mathcal{L}_\theta^{\text{co}}$, and let $\overline{P}_D(A)=\sup_{P_D\in\mathcal{P}^{\text{co}}_D}P_D(A)$, for all $A\in\mathcal{B}$. Then, the following is a generalization of Bayes' theorem in \cite{wasserman}.

\begin{theorem}\label{main1}
Suppose $\mathcal{P}^{\text{co}},\mathcal{L}_\theta^{\text{co}}$ are nonempty. Then for all $A\in\mathcal{B}$,
\begin{equation}\label{ineq_main}
   \overline{P}_D(A) \leq \frac{\sup_{P\in\mathcal{P}^{\text{co}}}\int_\Theta \overline{L}(\theta) \mathbbm{1}_A(\theta) P(\text{d}\theta)}{\mathbf{c}},
\end{equation}
provided that the ratio is well defined. Here, $\mathbf{c}:=\sup_{P\in\mathcal{P}^{\text{co}}}\int_\Theta \overline{L}(\theta) \mathbbm{1}_A(\theta) P(\text{d}\theta)+ \inf_{P\in\mathcal{P}^{\text{co}}}\int_\Theta \underline{L}(\theta) \mathbbm{1}_{A^c}(\theta) P(\text{d}\theta)$, and $\mathbbm{1}_A$ denotes the indicator function for $A\in\mathcal{B}$. In addition, if $\overline{P}$ is concave, then the inequality in \eqref{ineq_main} is an equality for all $A\in\mathcal{B}$.
\end{theorem}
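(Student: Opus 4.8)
The plan is to peel the supremum defining $\overline P_D(A)$ into an optimization over the likelihood followed by one over the prior, bounding each layer by elementary monotonicity to get the inequality \eqref{ineq_main}, and then to recover the reverse bound under concavity from the Choquet representation of $2$-alternating upper probabilities behind $\text{core}(\overline P)$.

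First I would fix an arbitrary element $P_D\in\mathcal P^{\text{co}}_D$, coming from a prior $P\in\mathcal P^{\text{co}}$ of density $p$ and a likelihood $L\in\mathscr L$, and write it as the ratio $P_D(A)=\int_A L\,\mathrm dP\big/\big(\int_A L\,\mathrm dP+\int_{A^c}L\,\mathrm dP\big)$, splitting the normalizing constant over $A$ and $A^c$ (this is legitimate precisely when the denominator is positive, as assumed). Since $\underline L\le L\le\overline L$ pointwise and the map $(x,y)\mapsto x/(x+y)$ is nondecreasing in $x\ge 0$ and nonincreasing in $y\ge 0$, replacing $L$ by $\overline L$ in the $A$-integral and by $\underline L$ in the $A^c$-integral can only increase the ratio. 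This disposes of the likelihood layer and yields $P_D(A)\le \int_A\overline L\,\mathrm dP\big/\big(\int_A\overline L\,\mathrm dP+\int_{A^c}\underline L\,\mathrm dP\big)$ for every admissible $P_D$.

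Then I would handle the prior. Writing $N(P):=\int_\Theta\overline L\,\mathbbm 1_A\,\mathrm dP$ and $D(P):=\int_\Theta\underline L\,\mathbbm 1_{A^c}\,\mathrm dP$, both nonnegative and linear in $P$, the elementary inequality $\sup_P N(P)/(N(P)+D(P))\le \sup_P N(P)\big/\big(\sup_P N(P)+\inf_P D(P)\big)$ --- obtained by first lowering $D(P)$ to $\inf_P D$ (monotonicity in the second slot) and then exploiting monotonicity in the first slot --- gives exactly the right-hand side of \eqref{ineq_main} with the stated constant $\mathbf c$. Taking the supremum over all $P_D\in\mathcal P^{\text{co}}_D$ completes the inequality; note that this step requires no structural hypothesis on $\mathcal P^{\text{co}}$ beyond $\mathbf c>0$, which is why the first assertion holds in full generality.

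For the equality under concavity I would show both relaxations are attained. Assuming $\overline P$ is concave ($2$-alternating), Theorem \ref{walley_core} together with the Choquet representation gives $\sup_{P\in\mathcal P^{\text{co}}}\int f\,\mathrm dP=\int f\,\mathrm d\overline P$ and $\inf_{P\in\mathcal P^{\text{co}}}\int f\,\mathrm dP=\int f\,\mathrm d\underline P$ for bounded measurable $f$; the crucial point is that because $A$ and $A^c$ partition $\Theta$ and mass is conserved, the greedy (level-set) probability that maximizes $N(\cdot)$ over $\text{core}(\overline P)$ simultaneously minimizes $D(\cdot)$, so a single $P^\star\in\mathcal P^{\text{co}}$ realizes both extrema at once; choosing the companion likelihood $L^\star=\overline L\,\mathbbm 1_A+\underline L\,\mathbbm 1_{A^c}$ then makes the likelihood bound tight, so that $P^\star_D(A)$ equals the right-hand side. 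The hard part will be this simultaneous-attainment claim: for a merely monotone $\overline P$ the maximizer of $N$ and the minimizer of $D$ need not coincide, which is exactly why only an inequality survives without concavity, so I would make it rigorous by invoking the Wasserman--Kadane construction for $2$-alternating capacities \cite{wasserman}, whose aligned level-set allocation produces the common optimizer $P^\star$. A secondary subtlety is the tightness of the likelihood layer, which demands $L^\star\in\mathscr L$; I would either verify this from the rectangular structure of $\text{core}(\overline P_\theta)$ or absorb it into a limiting argument along likelihoods in $\mathscr L$ approaching $L^\star$.
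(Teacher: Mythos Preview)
Your proof of the inequality is essentially identical to the paper's: both rewrite $P_D(A)$ as a ratio over $A$ and $A^c$ and exploit the monotonicity of $(x,y)\mapsto x/(x+y)$ to push the likelihood to $\overline L$ on $A$ and $\underline L$ on $A^c$, then take the sup/inf over priors. No difference worth noting there.

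For the equality under concavity, both approaches rest on the same Wasserman--Kadane lemma (the paper's ``Lemma~1'' from \cite{wasserman}) furnishing a single $P^\star\in\mathcal P^{\mathrm{co}}$ that simultaneously maximizes $\int_A L\,\mathrm dP$ and minimizes $\int_{A^c} L\,\mathrm dP$ for all $L$. The difference is in how the likelihood layer is tightened. The paper invokes an additional approximation result (its ``Lemma~4'' from \cite{wasserman}) and runs an $\epsilon$--$\delta$ argument to show the bound is approached arbitrarily closely. You instead go directly: you build $L^\star=\overline L\,\mathbbm 1_A+\underline L\,\mathbbm 1_{A^c}$ and plug in the pair $(P^\star,L^\star)$. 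This is cleaner when it works, and in fact the paper uses exactly your $L^\star$ construction in its proof of the companion Lemma~\ref{lemma_ccv}, arguing that weak$^\star$-compactness of $\mathscr L$ forces $\underline L,\overline L\in\mathscr L$ and hence $L^\star\in\mathscr L$. So your ``rectangular structure'' caveat is the same membership claim the paper makes elsewhere; the paper's $\epsilon$--$\delta$ route in the theorem proof is a way to avoid leaning on it directly, at the cost of some extra machinery. Your heuristic that mass conservation on the partition $A\cup A^c$ forces the simultaneous attainment is the right intuition but not a proof on its own---you correctly defer to \cite{wasserman} for that, which is exactly what the paper does.
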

This result is particularly appealing because, given some assumptions, it allows to perform a (generalized) Bayesian update of a prior upper probability (PUP) by carrying out only one operation, even when the likelihood is ill specified so that a set of likelihoods is needed. We also have the following.

\begin{lemma}\label{lemma_ccv}
Suppose $\mathcal{P}^{\text{co}},\mathcal{L}_\theta^{\text{co}}$ are nonempty. Then, if $\overline{P}$ is concave, we have that $\overline{P}_D$ is concave as well.
\end{lemma}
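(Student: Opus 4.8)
The plan is to reduce the claim to the submodularity of an explicit quotient of Choquet-type set functions, using the equality case of Theorem \ref{main1}. By the definition of concavity I must show that $\overline{P}_D(A\cup B)+\overline{P}_D(A\cap B)\le \overline{P}_D(A)+\overline{P}_D(B)$ for all $A,B\in\mathcal{B}$ (throughout I assume, as in Theorem \ref{main1}, that the relevant ratios are well defined). Since $\overline{P}$ is concave, Theorem \ref{main1} holds with equality, so I may write $\overline{P}_D(A)=u(A)/[u(A)+v(A^c)]$, where $u(A):=\sup_{P\in\mathcal{P}^{\text{co}}}\int_\Theta \overline{L}\,\mathbbm{1}_A\,\text{d}P$ and $v(A):=\inf_{P\in\mathcal{P}^{\text{co}}}\int_\Theta \underline{L}\,\mathbbm{1}_A\,\text{d}P$. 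Because $\mathcal{P}^{\text{co}}=\text{core}(\overline{P})$ and $\overline{P}$ is $2$-alternating, the supremum defining $u$ equals the Choquet integral $\int \overline{L}\,\mathbbm{1}_A\,\text{d}\overline{P}$; dually, the infimum defining $v$ equals the Choquet integral of $\underline{L}\,\mathbbm{1}_A$ against the conjugate lower probability $\underline{P}$.

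Next I would isolate the two modularity building blocks. From the layer-cake representation $u(A)=\int_0^\infty \overline{P}(\{\overline{L}>t\}\cap A)\,\text{d}t$, I observe that for each fixed level set $E_t:=\{\overline{L}>t\}$ the map $A\mapsto \overline{P}(E_t\cap A)$ is itself $2$-alternating: applying the concavity of $\overline{P}$ to $E_t\cap A$ and $E_t\cap B$ gives $\overline{P}(E_t\cap(A\cup B))+\overline{P}(E_t\cap(A\cap B))\le \overline{P}(E_t\cap A)+\overline{P}(E_t\cap B)$. Since nonnegative mixing preserves $2$-alternation, $u$ is submodular. The dual argument, using that $\underline{P}$ is $2$-monotone, shows $v$ is supermodular, whence $A\mapsto v(A^c)$ is supermodular as well. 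I would also record the monotonicity relations $u(A\cap B)\le u(A),u(B)\le u(A\cup B)$ together with $v((A\cup B)^c)\le v(A^c),v(B^c)\le v((A\cap B)^c)$, which will be needed to control the quotient.

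The heart of the proof is then a purely algebraic inequality. Writing $N_\star$ and $M_\star$ for the values of $u(\cdot)$ and $v(\cdot^c)$ at $\star\in\{A,B,A\cup B,A\cap B\}$, the desired submodularity of $\overline{P}_D$ becomes
$$\frac{N_{A\cup B}}{N_{A\cup B}+M_{A\cup B}}+\frac{N_{A\cap B}}{N_{A\cap B}+M_{A\cap B}}\le \frac{N_{A}}{N_{A}+M_{A}}+\frac{N_{B}}{N_{B}+M_{B}},$$
to be deduced from the submodularity $N_{A\cup B}+N_{A\cap B}\le N_A+N_B$, the supermodularity $M_{A\cup B}+M_{A\cap B}\ge M_A+M_B$, and the monotonicity relations above. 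I would verify it by clearing denominators and using that $x\mapsto x/(x+k)$ is increasing and concave, adapting the technique used to prove the analogous single-likelihood preservation result in \cite{wasserman}.

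I expect this last step to be the principal obstacle. Unlike the additive (single-probability) case, where the denominator $u(A)+v(A^c)$ collapses to an $A$-independent normalizing constant and the inequality follows from concavity of $x\mapsto x/(x+k)$, here the denominator is genuinely set-dependent, so the submodular contribution of the numerator and the supermodular contribution of the complemented denominator only partially offset. The care required is to track the coupling between the two envelopes (in particular $\underline{L}\le\overline{L}$ and $\underline{P}\le\overline{P}$) so that the competing modularity terms enter the quotient with the correct sign; establishing this is where the generalization from a single likelihood to the credal likelihood set $\mathcal{L}^{\text{co}}_\theta$ genuinely bites.
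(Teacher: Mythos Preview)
Your route is genuinely different from the paper's. The paper does not attempt the quotient-submodularity inequality at all; instead it reduces the two-likelihood posterior to a single-likelihood one. Concretely, it observes that for each fixed $A$ the expression from Theorem~\ref{main1} coincides with the ordinary generalized Bayes posterior computed with the \emph{single} likelihood $L'=\overline{L}\,\mathbbm{1}_A+\underline{L}\,\mathbbm{1}_{A^c}$, argues (via weak$^\star$-compactness of $\mathcal{L}^{\text{co}}_\theta$, hence of $\mathscr{L}$) that $\underline{L},\overline{L}\in\mathscr{L}$ and therefore $L'\in\mathscr{L}$, and then invokes the result of \cite{walley2,wasserman} that the single-likelihood generalized Bayes update preserves concavity of $\overline{P}$. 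This sidesteps precisely the obstacle you flag.

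That obstacle is real, and your proposal does not resolve it. The displayed inequality does \emph{not} follow from submodularity of $(N_\star)$, supermodularity of $(M_\star)$, and the monotonicity relations alone: the map $(x,k)\mapsto x/(x+k)$ is not jointly concave, so ``clearing denominators and using concavity of $x\mapsto x/(x+k)$'' cannot close the argument once the four denominators differ. What makes Wasserman's single-$L$ argument go through is a structural coupling (his Lemma~1 produces a common optimizing $P$ for the numerator and the complementary infimum), and that coupling is exactly what the paper's reduction to a single $L'$ recovers. Your direct approach would need to import a comparable coupling between the $\overline{L}$- and $\underline{L}$-integrals, not merely the pointwise ordering $\underline{L}\le\overline{L}$; as written, the proposal stops short of supplying it.
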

This lemma is important because it tells us that the generalized Bayesian update of Theorem \ref{main1} preserves concavity, and so it can be applied to successive iterations. If at time $t$ the PUP is concave, then the PUP at time $t+1$ -- that is, the posterior upper probability at time $t$ -- will be concave too. Necessary and sufficient conditions for a generic upper probability to be concave are given in \cite[Section 5]{marinacci2}.

In the future, these results can be generalized to the case in which the elements of $\mathscr{X}$ are unbounded using techniques in \cite{decooman}, and to the case in which the elements of $\mathscr{X}$ are $\mathbb{R}^d$-valued, for some $d\in\mathbb{N}$, since we never used specific properties of $\mathbb{R}$ in our proofs.

Despite being attractive, the generalized Bayesian update of Theorem \ref{main1} hinges upon three assumptions, namely that $\mathcal{P}^{\text{co}}$ and $\mathcal{L}^\text{co}_\theta$ are both cores of an upper probability, that they are nonempty, and that the prior upper probability $\overline{P}$ is concave. As the proverb goes, there is no free lunch. Having to check these assumptions, together with computing a supremum, an infimum, and the integrals in \eqref{ineq_main}, makes Theorem \ref{main1} inadequate to be applied in the context of CBDL.

\section{Bounds on upper and lower entropy}\label{bounds_entropy}
In this section we find an upper bound for $\overline{H}(P)$ and a lower bound for $\underline{H}(P)$ that are extremely interesting. We first need to introduce three new concepts.
\begin{definition}\label{choq_int_and_cvx_lp}
Consider a set $\mathscr{P}$ of probabilities on a generic measurable space $(\Omega,\mathcal{F})$. We say that lower probability $\underline{P}$ is \textit{convex} if $\underline{P}(A\cup B) \geq \underline{P}(A)+\underline{P}(B)-\underline{P}(A\cap B)$, for all $A,B\in\mathcal{F}$. 

Then, let $\mathsf{P}$ be either an upper or a lower probability, and consider a generic bounded measurable function $f$ on $(\Omega,\mathcal{F})$, that is, $f\in B(\Omega)$. We define the \textit{Choquet integral} of $f$ with respect to $\mathsf{P}$ as follows
\begin{align*}
    \int_\Omega f(\omega) \mathsf{P}(\text{d}\omega):=\int_0^\infty \mathsf{P}\left(\{\omega\in\Omega : f(\omega)\geq t\} \right) \text{d}t + \int_{-\infty}^0 \left[\mathsf{P}\left(\{\omega\in\Omega : f(\omega)\geq t\} \right) -\mathsf{P}(\Omega) \right] \text{d}t,
\end{align*}
where the right hand side integrals are (improper) Riemann integrals. If $\mathsf{P}$ is additive, then the Choquet integral reduces to the standard additive integral. 

Finally, if $\Omega$ is uncountable, define for all $\omega\in\Omega$
$$\underline{\pi}(\omega):=\inf_{P\in\mathscr{P}}\frac{\text{d}P}{\text{d}\mu}(\omega) \quad \text{and} \quad \overline{\pi}(\omega):=\sup_{P\in\mathscr{P}}\frac{\text{d}P}{\text{d}\mu}(\omega).$$
We call them \textit{lower} and \textit{upper densities}, respectively.
\end{definition}
The following theorem gives the desired bounds.

\begin{theorem}\label{thm_rem_bounds_entropy}
Consider a set $\mathscr{P}$ of probabilities on a generic measurable space $(\Omega,\mathcal{F})$. If $\Omega$ is uncountable, assume that every $P\in\mathscr{P}$ is dominated by a $\sigma$-finite measure $\mu$ and that the Radon-Nikodym derivatives $\frac{\text{d}P}{\text{d}\mu}$ are continuous and bounded, for all $P\in \mathscr{P}$. Define
$$H(\underline{P}):=\begin{cases}
-\int_\Omega \log\left[\overline{\pi}(\omega)\right] \underline{P}(\text{d}\omega) & \text{ if } \Omega \text{ is uncountable}\\
-\sum_{\omega\in\Omega} \underline{P}(\{\omega\}) \log[\overline{P}(\{\omega\})] & \text{ if } \Omega \text{ is at most countable}
\end{cases}$$
and similarly
$$H(\overline{P}):=\begin{cases}
-\int_\Omega \log\left[\underline{\pi}(\omega)\right] \overline{P}(\text{d}\omega) & \text{ if } \Omega \text{ is uncountable}\\
-\sum_{\omega\in\Omega} \overline{P}(\{\omega\}) \log[\underline{P}(\{\omega\})] & \text{ if } \Omega \text{ is at most countable}
\end{cases},$$
Then, $\overline{H}(P) \leq H(\overline{P})$ and $\underline{H}(P) \geq H(\underline{P})$. In addition, if $\overline{P}$ is concave, the first bound is tighter, and if $\underline{P}$ is convex, the second bound is tighter.
\end{theorem}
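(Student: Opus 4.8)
The plan is to reduce both inequalities to two elementary facts: a pointwise comparison of densities and a domination of ordinary integrals by Choquet integrals over the core. Observe first that every $P \in \mathscr{P}$ lies in $\text{core}(\overline{P})$, since $\underline{P}(A) = \inf_{Q \in \mathscr{P}} Q(A) \leq P(A) \leq \sup_{Q \in \mathscr{P}} Q(A) = \overline{P}(A)$ for all $A$. I would warm up with the at-most-countable case, where the definitions of $H(\overline{P})$ and $H(\underline{P})$ are explicit sums and no Choquet machinery is needed. Writing $p = P(\{\omega\})$, $\underline p = \underline{P}(\{\omega\})$, $\overline p = \overline{P}(\{\omega\})$ with $\underline p \le p \le \overline p \le 1$, monotonicity of $\log$ gives $-p\log p \le -p \log\underline p \le -\overline p \log \underline p$ (using $-\log\underline p \ge 0$ together with $p \le \overline p$), and summing over $\omega$ yields $H(P) \le H(\overline{P})$; the reverse chain $-p\log p \ge -p\log\overline p \ge -\underline p\log\overline p$ gives $H(P)\ge H(\underline{P})$. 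Taking suprema and infima over $P\in\mathscr{P}$ produces the two bounds.

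For the uncountable case I would isolate the key lemma: for any $P\in\text{core}(\overline{P})$ and any bounded $f$, one has $\int_\Omega f\, \underline{P}(\mathrm d\omega) \le \int_\Omega f\, P(\mathrm d\omega) \le \int_\Omega f\, \overline{P}(\mathrm d\omega)$, where the outer integrals are Choquet integrals in the sense of Definition \ref{choq_int_and_cvx_lp}. This follows directly from the definition: the inequalities $\underline{P}(\{f\ge t\}) \le P(\{f\ge t\}) \le \overline{P}(\{f\ge t\})$ hold at every level $t$, so comparing the two pieces of the Choquet integral termwise — the part on $(0,\infty)$ and the part on $(-\infty,0)$, the latter after subtracting $\mathfrak{P}(\Omega)=1$ — preserves the ordering. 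Crucially this needs only monotonicity of the capacities, not concavity, and it handles $f$ of arbitrary sign, which matters because $-\log\underline\pi$ and $-\log\overline\pi$ need not be nonnegative.

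With the lemma in hand the two bounds assemble immediately. From $\underline\pi(\omega)\le p(\omega)\le\overline\pi(\omega)$ I get $-\log\overline\pi \le -\log p \le -\log\underline\pi$ pointwise. Hence $H(P) = \int(-\log p)\,P(\mathrm d\omega) \le \int(-\log\underline\pi)\,P(\mathrm d\omega) \le \int(-\log\underline\pi)\,\overline{P}(\mathrm d\omega) = H(\overline{P})$, and taking the supremum over $\mathscr{P}$ gives $\overline{H}(P)\le H(\overline{P})$. Symmetrically, $H(P)=\int(-\log p)\,P(\mathrm d\omega)\ge \int(-\log\overline\pi)\,P(\mathrm d\omega)\ge\int(-\log\overline\pi)\,\underline{P}(\mathrm d\omega)=H(\underline{P})$, and the infimum over $\mathscr{P}$ gives $\underline{H}(P)\ge H(\underline{P})$.

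Finally, for the tightness statements I would invoke the classical Choquet representation theorem for $2$-alternating and $2$-monotone capacities. When $\overline{P}$ is concave (submodular) the Choquet integral coincides with the upper expectation over the core, $\int(-\log\underline\pi)\,\overline{P}(\mathrm d\omega)=\sup_{P\in\text{core}(\overline{P})}\int(-\log\underline\pi)\,P(\mathrm d\omega)$, so $H(\overline{P})$ is the least upper bound attainable by dominated measures; without concavity the Choquet integral can strictly exceed this supremum, so concavity minimizes the slack in the second inequality and the bound is tighter. Dually, convexity of $\underline{P}$ forces $\int(-\log\overline\pi)\,\underline{P}(\mathrm d\omega)=\inf_{P\in\text{core}(\overline{P})}\int(-\log\overline\pi)\,P(\mathrm d\omega)$, the greatest lower bound, sharpening the second bound. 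I expect the main obstacle to be the uncountable case — specifically, justifying that the Choquet integrals of $-\log\underline\pi$ and $-\log\overline\pi$ are well-defined under the stated continuity and boundedness hypotheses (controlling the behavior where the lower density approaches $0$), and stating the Choquet representation theorem in a form that applies to the conjugate pair $(\underline P,\overline P)$ rather than to an abstract capacity.
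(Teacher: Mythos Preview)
Your proposal is correct and follows essentially the same two-step strategy as the paper: bound $-\log p$ pointwise by $-\log\underline\pi$ (resp.\ $-\log\overline\pi$), then dominate the ordinary integral by the Choquet integral with respect to $\overline P$ (resp.\ $\underline P$), invoking the Choquet representation for concave/convex capacities to obtain the tightness claim. The only differences are organizational---you fix $P$ and bound $H(P)$ before taking the supremum (and give a self-contained layer-cake proof of the Choquet domination), whereas the paper takes the supremum first and cites the domination as a known property---but the mathematical content is identical.
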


\begin{remark}\label{no_belief}
In \cite[Section 4.6.1]{eyke}, the authors point out that \cite{abellan2} presents a generalization of the Hartley measure, called generalized Hartley measure $GH(P)$, that can be used to disaggregate the total uncertainty captured by $\overline{H}(P)$ into aleatoric and epistemic uncertainties. We prefer not to introduce it in the present work because $GH(P)$ is defined based on the mass function of a belief function \cite[Definition 2.4]{gong}.\footnote{A belief function is a mathematical concept that should not be confused with the term ``belief'' we used throughout the paper to address the agent's  knowledge.} This entails that the authors assume that lower probability $\underline{P}$ associated with the set of probabilities of interest is a belief function, and so that for every collection $\{A,A_1,\ldots,A_k\}$ such that $A\subset A_i$, the following holds
$$\underline{P}(A)\geq \sum_{\emptyset\neq I \subset \{1,\ldots,k\}} (-1)^{\#I-1} \underline{P}(\cap_{i\in I} A_i),$$
for all $k\in\mathbb{N}$. As it is immediate to see, this is assumption is not needed in the context of CBDL. Other ways of disentangling and quantifying AU and EU can be found e.g. in \cite{sale,second-order}
\end{remark}

\section{How to derive a predictive distribution}\label{post_pred_app}
Suppose we performed a Bayesian updating procedure so to obtain posterior pdf $p(\theta \mid x_1,y_1,\ldots,x_n,y_n)$. Recall that $\{(x_i,y_i)\}_{i=1}^n \in (\mathcal{X}\times\mathcal{Y})^n$ denotes the training set. We obtain the predictive distribution $p(\tilde{y} \mid \tilde{x}, x_1,y_1,\ldots,x_n,y_n)$ on $\mathcal{Y}$ as follows
\begin{align*}
    p(\tilde{y} \mid \tilde{x}, x_1,y_1,\ldots,x_n,y_n)&=\int_\Theta p(\tilde{y},\theta \mid \tilde{x}, x_1,y_1,\ldots,x_n,y_n) \text{d}\theta\\
    &=\int_\Theta p(\tilde{y}\mid \theta, \tilde{x}, x_1,y_1,\ldots,x_n,y_n) \cdot p(\theta \mid \tilde{x}, x_1,y_1,\ldots,x_n,y_n) \text{d}\theta\\
    &=\int_\Theta p(\tilde{y}\mid \tilde{x},\theta) \cdot p(\theta \mid x_1,y_1,\ldots,x_n,y_n) \text{d}\theta,
\end{align*}
where $p(\tilde{y}\mid \tilde{x},\theta)$ is the likelihood used to derive the posterior. Notice that the last equality comes from output $\tilde{y}$ only depending on input $\tilde{x}$ and parameter $\theta$, and from having assumed $D_\mathbf{x} \indep \theta$ (see section \ref{bnn_back}). From an applied point of view, a sample from $p(\tilde{y} \mid \tilde{x}, x_1,y_1,\ldots,x_n,y_n)$ is obtained as follows: 
\begin{enumerate}
    \item specify input $\tilde{x}$;
    \item sample a parameter $\tilde{\theta}$ from the posterior, $\tilde{\theta}\sim p(\theta \mid x_1,y_1,\ldots,x_n,y_n)$;
    \item plug $\tilde{\theta}$ in the likelihood and sample $\tilde{y}\sim p({y}\mid \tilde{\theta}, \tilde{x})$.
\end{enumerate}

\section{Aleatoric uncertainty check for $\alpha$-level IHDR}\label{alea_check}
The diameter of $IR_\alpha(\hat{\mathcal{P}}_\text{pred})$ is a function of the predictive aleatoric uncertainty (AU) faced by the agent.\footnote{Since the diameter is a metric concept, we assume that we can find a well-defined metric $d_\mathbf{y}$ on $\mathcal{Y}$. If that is not the case, we substitute the diameter with the notion of cardinality.} If we want to avoid to perform the computation in \textbf{Step 5} of Algorithm \ref{algo-1} only to discover that $IR_\alpha(\hat{\mathcal{P}}_\text{pred})$ is ``too large'', then we can add a ``predictive AU check''.

At the beginning of the analysis, compute the lower entropy $\underline{H}(\hat{P}^\text{pred})=\min_{k,s} H(\hat{P}^\text{pred}_{k,s})$ associated with the set $\text{ex}\hat{\mathcal{P}}_\text{pred}$ of extreme elements of the VI-approximated predictive credal set $\hat{\mathcal{P}}_\text{pred}$. By \eqref{approx-au-cbdl}, it is equal to the predictive aleatoric uncertainty encoded in $\hat{\mathcal{P}}_\text{pred}$. We then verify whether the lower entropy $\underline{H}(\hat{P}^\text{pred})$ is ``too high''. That is, if $\underline{H}(\hat{P}^\text{pred}) > \varphi$, for some $\varphi >0$, we want our procedure to abstain. This means that if the predictive aleatoric uncertainty in set $\hat{\mathcal{P}}_\text{pred}$ is too high, then our procedure does not return any output set for input $\tilde{x}$. The value of $\varphi$ can be set equal to the entropy of the probability measures that are typically used in the context the agent works in. For example, in medical applications the agent may consider the entropy of a Normal distribution, while in financial applications the entropy of a distribution with fatter tails, such as a $t$-distribution or a Cauchy. We call these \textit{reference $\varphi$ values}. 

If we add this ``predictive AU check'', at inference time (that is, before \textbf{Step 4} of Algorithm \ref{algo-1}), the agent needs to specify the pair of parameters 
$(\varphi,\alpha)$. 

\section{$\alpha$-level IHDR in a Classification setting}\label{ihdr_class}
In classification problems, BNNs compute the probability vector
$$\varpi:=\frac{1}{\#\Theta}\sum_{\theta\in\Theta} \Phi_{\theta\mid D}(x),$$
where we write $\Phi_{\theta\mid D}$ to highlight the fact that $\theta$ is sampled from posterior $p(\theta\mid D)$, and then select the most likely class $\hat{y}:=\argmax_j \varpi_j$, where the $\varpi_j$'s are the elements of $\varpi$. 

When applied to a classification setting, the general procedure introduced in Algorithm \ref{algo-1} becomes the following. Recall that we denote by $K$ the cardinality of $\text{ex}\mathcal{P}_\text{prior}$, and by $S$ the cardinality of $\text{ex}\mathcal{P}_\text{lik}$. 
 Assume that $\mathcal{Y}=\{y_1,\ldots,y_J\}$, that is, there are $J \in\mathbb{N}_{\geq 2}$ possible labels. Then, a VI-approximated predictive distribution $\hat{P}^\text{pred}_{k,s}$ in $\text{ex}\hat{\mathcal{P}}_\text{pred}$ can be seen as $J$-dimensional probability vector $\hat{\mathbf{p}}^\text{pred}_{k,s}=(\hat{p}^\text{pred}_{k,s,1},\ldots, \hat{p}^\text{pred}_{k,s,J})^\top$, where $\hat{p}^\text{pred}_{k,s,j}=\hat{P}^\text{pred}_{k,s}(\{y_j\})$, for all $k\in\{1,\ldots,K\}$, $s\in\{1,\ldots,S\}$, and $j\in\{1,\ldots,J\}$.


Now fix any $k$ and any $s$, and define the partial order $\preceq_{k,s}$ on $\mathcal{Y}$ as
$y_l \preceq_{k,s} y_i \iff \hat{p}^\text{pred}_{k,s,l} \geq \hat{p}^\text{pred}_{k,s,i}$ and $y_l \prec_{k,s} y_i \iff \hat{p}^\text{pred}_{k,s,l} > \hat{p}^\text{pred}_{k,s,i}$, where $i,l\in\{1,\ldots,J\}$, $i\neq l$. This means that we can order the labels according to the probability that $\hat{P}^\text{pred}_{k,s}$ assigns to them: the first label will be the one having highest probability according to $\hat{P}^\text{pred}_{k,s}$, the second label will have the second-highest probability according to $\hat{P}^\text{pred}_{k,s}$, and so on.

Now order the label space $\mathcal{Y}$ according to $\preceq_{k,s}$ so to obtain
$$\mathcal{Y}^{k,s} := \{y_1^{k,s},\ldots,y_J^{k,s}\}.$$
This means that $y_1^{k,s} \preceq_{k,s} y_j^{k,s}$, for all $j\in\{2,\ldots, J\}$, $y_2^{k,s} \preceq_{k,s} y_j^{k,s}$, for all $j\in\{3,\ldots,J\}$ (but $y_1^{k,s} \preceq_{k,s} y_2^{k,s}$), and so on. That is, we order the labels from the most to the least likely according to $\hat{P}^\text{pred}_{k,s}$.

Then, we call \textit{$\alpha$-level credible set} according to $\hat{P}^\text{pred}_{k,s}$, $\alpha\in [0,1]$, the set
\begin{align}\label{cs_def}
\begin{split}
    CS_\alpha(\hat{P}^\text{pred}_{k,s}):=\bigg\{y_1^{k,s},\ldots,y_j^{k,s} :&\sum_{i=1}^j \hat{P}^\text{pred}_{k,s}(\{y_i^{k,s}\}) \in [1-\alpha,1-\alpha+\varepsilon] \text{, } j\leq J,\\
    \text{and } &\nexists j^\prime <j : \sum_{i=1}^{j^\prime} \hat{P}^\text{pred}_{k,s}(\{y_i^{k,s}\}) \in [1-\alpha,1-\alpha+\varepsilon] \bigg\},
\end{split}
\end{align}
for some $\varepsilon>0$. It corresponds to the $\alpha$-level HDR $R_\alpha(\hat{P}^\text{pred}_{k,s})$. Notice that we require $\sum_{i=1}^j \hat{P}^\text{pred}_{k,s}(\{y_i^{k,s}\}) \in [1-\alpha,1-\alpha+\varepsilon]$ because we may need to go slightly above level $1-\alpha$. Just as a toy example, we may have $7$ labels, $3$ of which would give a $0.945$ coverage, while $4$ would give a coverage of $0.953$. If we are interested in the $\alpha=0.05$-level credible set, we ought to include the fourth label, thus yielding a coverage slightly higher than $1-\alpha=0.95$. The interpretation to $CS_\alpha(\hat{P}^\text{pred}_{k,s})$ is the following: it consists of the smallest collection of labels to which $\hat{P}^\text{pred}_{k,s}$ assigns probability of at least $1-\alpha$ (that is, those having the highest probability of being the correct one for the new input $\tilde x$).

Finally, we call \textit{$\alpha$-level imprecise credible set}, $\alpha\in [0,1]$, the set
$$ICS_\alpha(\hat{\mathcal{P}}_\text{pred}):=\bigcup_{k,s} CS_\alpha(\hat{P}^\text{pred}_{k,s}).$$
In turn, we have that $\underline{\hat{P}}^\text{pred}[\{\tilde{y}\in ICS_\alpha(\hat{\mathcal{P}}_\text{pred})]\geq 1-\alpha$.

\begin{remark}
    Notice that if a credible set of level $\approx \alpha$ is enough, then we can replace the left endpoint of the interval in \eqref{cs_def} with $1-(\alpha+\varepsilon_{k,s})$, for some $\varepsilon_{k,s}>0$. Strictly speaking, in this case we obtain an $(\alpha+\varepsilon_{k,s})$-level credible set, which we denote by $CS_{\alpha_{k,s}}(\hat{P}^\text{pred}_{k,s})$, where $\alpha_{k,s}:=\alpha+\varepsilon_{k,s}$. Going back to our toy example, we will have a credible set with $3$ labels that yields a coverage of $1- (\alpha+\varepsilon_{k,s}) = 0.945 \approx 0.95 = 1-\alpha$, so $\varepsilon_{k,s}=0.005$. In turn, this implies that the imprecise credible set will have a coverage of $1-(\alpha + \max_{k,s} \varepsilon_{k,s})$, that is, it will have level $\alpha + \max_{k,s} \varepsilon_{k,s}$. We denote it by $ICS_{\tilde{\alpha}}(\hat{\mathcal{P}}_\text{pred})$, where $\tilde{\alpha}:=\alpha + \max_{k,s} \varepsilon_{k,s}$.
\end{remark}

\section{Distance between a set of distributions $\mathcal{P}$ and a single distribution $P^\prime$ having different dimensions}\label{dist_diff_dim}

Many concepts in this section are derived from \cite{lim2}. Let $m,n\in\mathbb{N}$ such that $m\leq n$, and let $p\in[1,\infty]$. Call $M^p(\mathbb{R}^j)$ the set of probability measures on $\mathbb{R}^j$ having finite $p$-th moment, and $M_d(\mathbb{R}^j)$ the set of probability measures on $\mathbb{R}^j$ having density with respect to some $\sigma$-finite dominating measure $\mu$, $j\in\{m,n\}$. Let $O(m,n):=\{V\in\mathbb{R}^{m\times n} : VV^\top=I_m\}$, where $I_m$ denotes the $m$-dimensional identity matrix, and for any $V\in O(m,n)$ and any $b\in\mathbb{R}^m$, define the following function
$$\varphi_{V,b}:\mathbb{R}^n \rightarrow \mathbb{R}^m, \quad x \mapsto \varphi_{V,b}(x):=Vx+b.$$

Let $\mathcal{B}(\mathbb{R}^n)$ be the Borel $\sigma$-algebra on $\mathbb{R}^n$, and for any $Q\in\Delta(\mathbb{R}^n,\mathcal{B}(\mathbb{R}^n))$, define $\varphi_{V,b}(Q):=Q\circ \varphi_{V,b}^{-1}$, the pushforward measure. Consider then two generic probability measures $Q,S$ such that $Q\in M^p(\mathbb{R}^m)$ and $S\in M^p(\mathbb{R}^n)$, and call
\begin{align*}
    \Phi_p^+(Q,n)&:=\{\alpha\in M^p(\mathbb{R}^n) : \varphi_{V,b}(\alpha)=Q \text{, for some } V \in O(m,n), b\in\mathbb{R}^m\},\\
    \Phi_d^+(Q,n)&:=\{\alpha\in M_d(\mathbb{R}^n) : \varphi_{V,b}(\alpha)=Q \text{, for some } V \in O(m,n), b\in\mathbb{R}^m\},\\
    \Phi^-(S,m)&:=\{\beta\in M(\mathbb{R}^m) : \varphi_{V,b}(S)=\beta \text{, for some } V \in O(m,n), b\in\mathbb{R}^m\}.
\end{align*}

Recall now the definition of $p$-Wasserstein metric between two generic distributions defined on the \textit{same} Euclidean space. Let $P_1,P_2 \in M^p(\mathbb{R}^n)$, for some $n\in\mathbb{N}$ and some $p\in [1,\infty]$. Then, the $p$-Wasserstein distance between them is defined as
$$W_p(P_1,P_2):=\left[\inf_{\gamma\in\Gamma(P_1,P_2)} \int_{\mathbb{R}^{2n}} \| x-y\|_2^p \gamma(\text{d}(x,y))   \right]^{1/p},$$
where $\| \cdot \|_2$ denotes the Euclidean distance, $p=\infty$ is interpreted as the essential supremum, and  $\Gamma(P_1,P_2):=\{\gamma\in\Delta(\mathbb{R}^{2n},\mathcal{B}(\mathbb{R}^{2n})) : \text{proj}_1^n(\gamma)=P_2 \text{, proj}_2^n(\gamma)=P_1\}$ is the set of couplings between $P_1$ and $P_2$, where $\text{proj}_1^n$ 
is the projection onto the first $n$ coordinates, and $\text{proj}_2^n$ is the projection to the last $n$ coordinates.

Recall then the definition of $f$-divergence between two generic distributions defined on the \textit{same} Euclidean space. Let $P_1,P_2 \in \Delta(\mathbb{R}^{n},\mathcal{B}(\mathbb{R}^{n}))$, for some $n\in\mathbb{N}$, and assume $P_1 \ll P_2$. Then, for any convex functional $f$ on $\mathbb{R}$ such that $f(1)=0$, the $f$-divergence between $P_1$ and $P_2$ is defined as
$$\text{div}_f(P_1 \| P_2):=\int_{\mathbb{R}^{n}} f \left( \frac{\text{d}P_1}{\text{d}P_2}(x) \right) P_2(\text{d} x).$$
Aside from the Rényi divergence, the $f$-divergence includes just about every known divergences as special case \cite{lim2}. The following are the main results of this section.

\begin{lemma}\label{dist_lemma2}
Let $m,n\in\mathbb{N}$ such that $m\leq n$, and let $p\in[1,\infty]$ and $f$ be any convex functional on $\mathbb{R}$ such that $f(0)=1$. Consider a generic $\mathscr{P}\subset M^p(\mathbb{R}^m)$ and $P^\prime \in M^p(\mathbb{R}^n)$. Let $\Phi_p^+(\mathscr{P},n):=\cup_{P\in\mathscr{P}} \Phi_p^+(P,n)$ and $\Phi_d^+(\mathscr{P},n):=\cup_{P\in\mathscr{P}} \Phi_d^+(P,n)$. Define
\begin{itemize}
    \item $W^+_p({P},P^\prime):=\inf_{\alpha\in\Phi_p^+({P},n)} W_p(\alpha,P^\prime)$, for all $P\in\mathscr{P}$;
    \item $\text{div}^+_f({P}\| P^\prime):=\inf_{\alpha\in\Phi_d^+({P},n)} \text{div}_f({P}\| P^\prime)$, for all $P\in\mathscr{P}$;
    \item $W^+_p(\mathscr{P},P^\prime):=\inf_{\alpha\in\Phi_p^+(\mathscr{P},n)} W_p(\alpha,P^\prime)$;
    \item $\text{div}^+_f(\mathscr{P}\| P^\prime):=\inf_{\alpha\in\Phi_d^+(\mathscr{P},n)} \text{div}_f({P}\| P^\prime)$;
\end{itemize}
Then, for all $P\in\mathscr{P}$ the following holds
$$W^+_p(\mathscr{P},P^\prime) \leq W^+_p({P},P^\prime) \quad \text{and} \quad \text{div}^+_f(\mathscr{P} \| P^\prime) \leq \text{div}^+_f({P}\| P^\prime).$$
\end{lemma}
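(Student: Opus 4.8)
The plan is to reduce the statement to the elementary monotonicity of the infimum under set inclusion, exactly in the spirit of Proposition \ref{dist_lemma}, once the union structure built into $\Phi_p^+(\mathscr{P},n)$ and $\Phi_d^+(\mathscr{P},n)$ is made explicit. First I would record the key set-theoretic observation that is immediate from the definitions: $\Phi_p^+(\mathscr{P},n)=\bigcup_{P\in\mathscr{P}}\Phi_p^+(P,n)$, and hence $\Phi_p^+(P,n)\subseteq\Phi_p^+(\mathscr{P},n)$ for every fixed $P\in\mathscr{P}$ (and likewise for $\Phi_d^+$). Since $m\leq n$, each $\Phi_p^+(P,n)$ is nonempty — there always exists an admissible lifting $\alpha$ with $\varphi_{V,b}(\alpha)=P$ for some $V\in O(m,n)$ and $b\in\mathbb{R}^m$ — so all the infima defining $W_p^+$ and $div_f^+$ are taken over nonempty sets and are well defined; moreover, each candidate $\alpha$ and the reference measure $P^\prime$ both live on the common space $\mathbb{R}^n$, so $W_p(\alpha,P^\prime)$ and $div_f(\alpha\| P^\prime)$ make sense.

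For the Wasserstein case I would then invoke the standard identity that an infimum over a union equals the infimum of the infima over each member of the union:
\begin{equation*}
W_p^+(\mathscr{P},P^\prime)=\inf_{\alpha\in\Phi_p^+(\mathscr{P},n)}W_p(\alpha,P^\prime)=\inf_{P\in\mathscr{P}}\inf_{\alpha\in\Phi_p^+(P,n)}W_p(\alpha,P^\prime)=\inf_{P\in\mathscr{P}}W_p^+(P,P^\prime).
\end{equation*}
Because an infimum is a lower bound for every one of its terms, $\inf_{P\in\mathscr{P}}W_p^+(P,P^\prime)\leq W_p^+(P,P^\prime)$ for each fixed $P\in\mathscr{P}$, which is precisely the first claimed inequality. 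The divergence half is verbatim the same argument with $\Phi_p^+$ replaced by $\Phi_d^+$ and $W_p$ by $div_f$, yielding $div_f^+(\mathscr{P}\| P^\prime)=\inf_{P\in\mathscr{P}}div_f^+(P\| P^\prime)\leq div_f^+(P\| P^\prime)$.

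I do not anticipate any genuine obstacle here: once the union is exposed, the statement is just Proposition \ref{dist_lemma} applied to the (possibly infinite) lifted families $\Phi_p^+(P,n)$ and $\Phi_d^+(P,n)$, and the machinery of \cite{lim2} enters only to guarantee that these lifting sets are nonempty and that the distances $W_p$ and $div_f$ are meaningful between measures on $\mathbb{R}^n$. The one point I would state carefully is exactly this nonemptiness, so that none of the infima degenerates to the vacuous value $+\infty$; everything else is the order-theoretic fact that enlarging the index set of an infimum can only decrease its value. If desired, one could note explicitly that restricting $\Phi_d^+$ to densities ensures the absolute-continuity requirement $\alpha\ll P^\prime$ needed for $div_f$ is handled within the admissible class, but this does not affect the inequality chain above.
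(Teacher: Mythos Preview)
Your proof is correct and follows essentially the same approach as the paper's own proof: both use the inclusion $\Phi_p^+(P,n)\subseteq\Phi_p^+(\mathscr{P},n)$ and the monotonicity of the infimum under enlargement of the index set. Your version is slightly more polished (explicitly noting the identity $\inf$ over a union equals $\inf$ of $\inf$s, and flagging nonemptiness), but the argument is the same.
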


\begin{lemma}\label{dist_lemma3}
Let $m,n\in\mathbb{N}$ such that $m\leq n$, and let $p\in[1,\infty]$ and $f$ be any convex functional on $\mathbb{R}$ such that $f(0)=1$. Consider a generic $\mathscr{P}\subset M^p(\mathbb{R}^n)$ and $P^\prime \in M^p(\mathbb{R}^m)$. Let $\Phi^-(\mathscr{P},m):=\cup_{P\in\mathscr{P}} \Phi^-(P,m)$. Define
\begin{itemize}
    \item $W^-_p({P},P^\prime):=\inf_{\alpha\in\Phi^-({P},m)} W_p(\alpha,P^\prime)$, for all $P\in\mathscr{P}$;
    \item $\text{div}^-_f({P}\| P^\prime):=\inf_{\alpha\in\Phi^-({P},m)} \text{div}_f({P}\| P^\prime)$, for all $P\in\mathscr{P}$;
    \item $W^-_p(\mathscr{P},P^\prime):=\inf_{\alpha\in\Phi^-(\mathscr{P},m)} W_p(\alpha,P^\prime)$;
    \item $\text{div}^-_f(\mathscr{P}\| P^\prime):=\inf_{\alpha\in\Phi^-(\mathscr{P},m)} \text{div}_f({P}\| P^\prime)$;
\end{itemize}
Then, for all $P\in\mathscr{P}$ the following holds
$$W^-_p(\mathscr{P},P^\prime) \leq W^-_p({P},P^\prime) \quad \text{and} \quad \text{div}^-_f(\mathscr{P} \| P^\prime) \leq \text{div}^-_f({P}\| P^\prime).$$
\end{lemma}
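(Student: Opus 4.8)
The plan is to observe that Lemma \ref{dist_lemma3} has exactly the same logical skeleton as Proposition \ref{dist_lemma}: each inequality asserts that an infimum taken over a larger index set cannot exceed an infimum taken over a subset of that index set. The only new ingredient compared to Proposition \ref{dist_lemma} is that the family over which we infimize is now the set of affine pushforwards $\Phi^-(\cdot,m)$ rather than $\mathscr{P}$ itself; but this does not alter the argument, because by construction $\Phi^-(\mathscr{P},m)=\bigcup_{P\in\mathscr{P}}\Phi^-(P,m)$ is a union indexed over $\mathscr{P}$. The entire geometric content of the Wasserstein metric and of the $f$-divergence is therefore irrelevant to the proof: the statement is purely a monotonicity-of-infimum fact about nested index sets.

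First I would fix an arbitrary $P\in\mathscr{P}$ and record the inclusion $\Phi^-(P,m)\subseteq\Phi^-(\mathscr{P},m)$, which is immediate from the definition of $\Phi^-(\mathscr{P},m)$ as the union over all elements of $\mathscr{P}$. Then I would invoke the elementary monotonicity of the infimum: if $A\subseteq B$, then $\inf_{x\in B}g(x)\leq\inf_{x\in A}g(x)$ for any real-valued $g$. Applying this with $g(\alpha)=W_p(\alpha,P^\prime)$, $A=\Phi^-(P,m)$, and $B=\Phi^-(\mathscr{P},m)$ gives
$$W^-_p(\mathscr{P},P^\prime)=\inf_{\alpha\in\Phi^-(\mathscr{P},m)}W_p(\alpha,P^\prime)\leq\inf_{\alpha\in\Phi^-(P,m)}W_p(\alpha,P^\prime)=W^-_p(P,P^\prime).$$
Since $P\in\mathscr{P}$ was arbitrary, the first inequality holds for all $P\in\mathscr{P}$. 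Taking instead $g(\alpha)=div_f(\alpha\| P^\prime)$ and repeating the identical reasoning yields the second inequality, so both inequalities fall out of one and the same argument.

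Before concluding I would dispatch two well-definedness points. Each $\Phi^-(P,m)$ is nonempty, since $O(m,n)$ is nonempty (take, for instance, the first $m$ rows of $I_n$) and $b$ ranges freely over $\mathbb{R}^m$; hence the infima are over nonempty sets. Moreover an affine pushforward $\varphi_{V,b}(P)=P\circ\varphi_{V,b}^{-1}$ of a measure with finite $p$-th moment again has finite $p$-th moment, so each such $\alpha$ lies in $M^p(\mathbb{R}^m)$ and $W_p(\alpha,P^\prime)$ is finite and well-defined on the common space $\mathbb{R}^m$; for the divergence one further restricts to those $\alpha$ with $\alpha\ll P^\prime$ so that $div_f(\alpha\| P^\prime)$ is defined. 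These are routine checks rather than genuine obstacles. In fact there is no real difficulty in this lemma: the single conceptual observation — that comparing a set of high-dimensional laws to a lower-dimensional $P^\prime$ amounts to infimizing over a union of projected laws — is what makes the proof transfer verbatim from the ``$+$'' setting of Lemma \ref{dist_lemma2} to the present ``$-$'' setting.
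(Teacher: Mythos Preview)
Your proposal is correct and follows essentially the same approach as the paper: the paper's proof simply says it is ``very similar to that of Lemma \ref{dist_lemma2}'', and the proof of Lemma \ref{dist_lemma2} is precisely the nested-index-set monotonicity argument you wrote out, namely $\Phi^-(P,m)\subseteq\Phi^-(\mathscr{P},m)$ implies the infimum over the larger set is no greater. Your additional well-definedness remarks (nonemptiness of $\Phi^-(P,m)$, preservation of finite $p$-th moments under affine pushforward) go slightly beyond what the paper records but are routine and do not change the approach.
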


A visual representation of the application of Lemma \ref{dist_lemma3} in the context of CBDL is given in Figure \ref{fig:2}.
\begin{figure}[h]
\centering
\includegraphics[width=\textwidth]{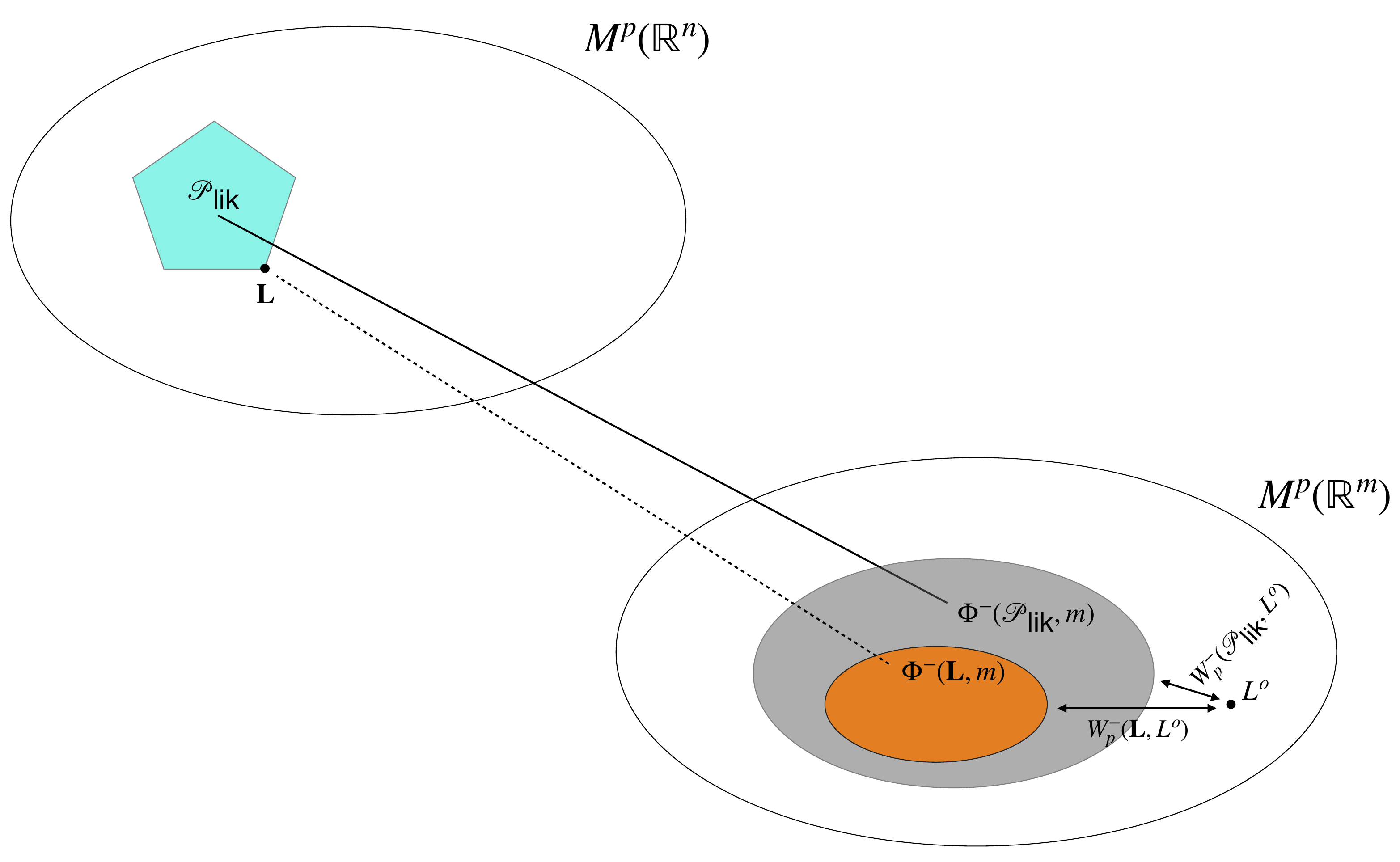}
\caption{\small We assume that $n>m$ and that the oracle distribution $L^o$ belongs to $M^p(\mathbb{R}^m)$, while likelihood FGCS $\mathcal{P}_\text{lik}$ is a subset of $M^p(\mathbb{R}^n)$, for some finite $p\geq 1$. We also assume that $\mathbf{L}$ is one of the extreme elements of $\mathcal{P}_\text{lik}$. We see how $W^-_p(\mathcal{P}_\text{lik},L^o)<W^-_p(\mathbf{L},L^o)$; if we replace metric $W^-_p$ by a generic $f$-divergence $\text{div}_f$, the inequality would still hold thanks to Lemma \ref{dist_lemma3}.}
\label{fig:2}
\centering
\end{figure}

\section{Details on Artificial Pancreas Example}
\label{appendix_ap_example}

\textbf{Artificial Pancreas Model.} An important factor when designing the controller for an artificial pancreas is to adapt the insulin delivery algorithm to the particular details of the patient. This is because patients display a wide range of variability in their response to insulin, depending on age, Body Mass Index (BMI), and other physiological parameters. The Bayesian Neural Network models have $2$ hidden layers, with $10$  neurons each, for the case study with $4$ different seeds. This choice was informed by the experiments in \cite{pancreas_paper}. For the case study with different architectures, we trained BNNs with $4$ different widths: $10$, $20$, $30$, and $40$. The horizon length is $H = 10$ time steps, and the prediction horizon is $5$ steps into the future. The neural networks were trained for $200$ time steps, with a learning rate of $0.001$, and batch size of $128$ using Mean-Field Variational Inference. The training dataset consisted of $28400$ training samples, recorded without meals. 

\textbf{Controller.} We implemented a simple model predictive controller, using an off-the-shelf implementation of the covariance matrix adaptation evolution strategy (CMA-ES). The model predictive control planning horizon was $k=5$, with a fixed seed for the randomized solver.

\section{Other possible baselines for CBDL}\label{other_baselines}
Our experiments may seem like a type of belief tracking \cite{fortin,klein}. Two comments are in order. First, this line of literature does not use deep learning techniques. Second, in these works the authors rely on Dempster-Shafer theory, a field in imprecise probability theory where lower probabilities are assumed to be belief functions, see Remark \ref{no_belief}. We do not rely on this assumption in our work.

\section{Further related work}\label{references-2}
Modeling uncertainty has been a longstanding goal of ML/AI research and a variety of approaches have been developed for doing so \cite{guo2017calibration, park2020pac, jospin}. Recently, emphasis has been placed on discerning between aleatoric and epistemic uncertainties \cite{senge_reliable_2014, kull_reliability_2014, kendall2017uncertainties}. 
In \cite{tretiak}, the authors present an IP-based neural network which uses a regression technique based on probability intervals. Contrary to CBDL, their NN is rooted in the frequentist approach to imprecise probabilities \cite{huber}. 

In \cite[Sections 2.1, 2.3]{cuzzo}, the authors focus on belief-functions-based classification methods. CBDL cannot be directly compared with these methodologies because (i) they do not require that the user expresses their knowledge via a belief function, but rather through a credal set; (ii) they can be used for regression and classification; (iii) they are rooted in Bayesian theory, as opposed to Dempster-Shafer theory. Other works in ML using a belief function approach are those from the field of evidential machine learning (EML), see e.g. \cite{amini,charpentier,thierry,thierry2,sensoy} and references therein. Existing models mainly address clustering, classification, and regression problems. The reasons why CBDL cannot be directly compared with methods from the EML literature are (i) and (iii) above, the fact that EML methods are not (derived from) Bayesian ones, and that their definitions of AU and EU are slightly different from the canonical ones in ensemble deep learning.
 

\section{Proofs}\label{proofs}

\begin{proof}[Proof of Proposition \ref{extrema_gen_lem}]
If $\overline{P}(A)=\sup_{P^\prime\in\Pi^\prime}P^\prime(A)$, for all $A\in\mathcal{F}$, then it is immediate to see that $\overline{P}(A)=\sup_{P\in\Pi}P(A)=\sup_{P^\prime\in\text{ex}\Pi^\prime}P^\prime(A)=\sup_{P^\prime\in\Pi^\prime}P^\prime(A)$, for all $A\in\mathcal{F}$, since $\Pi \subset \Pi^\prime$.

Suppose now that $\overline{P}(A)=\sup_{P\in\Pi}P(A)$, for all $A\in\mathcal{F}$. Then, we have that for all $P\in\Pi$ and all $A\in\mathcal{F}$, $P(A) \leq \overline{P}(A)$. Pick now any $P^\prime\in\Pi^\prime$. We can write it as $P^\prime=\sum_{j=1}^k \beta_j P_j$, where $\beta_j\in [0,1]$, for all $j\in\{1,\ldots,k\}$, $\sum_{j=1}^k \beta_j=1$, and $\{P_j\}_{j=1}^k=\Pi$. Pick then any $A\in\mathcal{F}$; we have
$$P^\prime(A)=\sum_{j=1}^k \beta_j P_j(A) \leq \sum_{j=1}^k \beta_j \overline{P}(A)=\overline{P}(A).$$
So $\overline{P}(A)\geq P^\prime(A)$. Because this holds for all $P^\prime\in\Pi^\prime$ and all $A\in\mathcal{F}$, the claim is proven.
\end{proof}

\begin{proof}[Proof of Proposition \ref{gen_entr_lem}]
The lower bound for the upper entropy of $\Pi^\prime$ comes immediately from $\Pi^\prime$ being a superset of $\Pi$.

Let us now prove the lower entropy  equality.  Let $\Pi=\{P_1,\ldots,P_k\}$ and $\Pi^\prime=\mathrm{Conv}\Pi$. Pick any $P^\prime \in \Pi^\prime$. By the definition of $\Pi^\prime$, there exists a collection of non-negative reals $\{\beta_j\}_{j=1}^k$ such that $\sum_{j=1}^k \beta_j=1$ and $\sum_{j=1}^k \beta_j P_j=P^\prime$. By the concavity of the entropy, we have that
    $$H(P^\prime)=H\left( \sum_{j=1}^k \beta_j P_j \right) \geq \sum_{j=1}^k \beta_j H(P_j) \geq \sum_{j=1}^k \beta_j \underline{H}(P)=\underline{H}(P) := \inf_{P\in\Pi} H(P).$$
Since $P^\prime$ was chosen arbitrarily and $\Pi$ is finite, this implies that 
$$\inf_{P^\prime\in\Pi^\prime} H(P^\prime)=:\underline{H}(P^\prime) \geq \underline{H}(P).$$
In addition, we have that, since $\Pi \subset \Pi^\prime$, $\underline{H}(P^\prime) \leq \underline{H}(P)$. Combining this with the above result, we obtain
$$\underline{H}(P^\prime) = \underline{H}(P).$$

\end{proof}

\begin{proof}[Proof of Proposition \ref{dist_lemma}]
Pick any metric $d$ on the space $\Delta_\Omega \equiv \Delta(\Omega,\mathcal{F})$ of probabilities on $(\Omega,\mathcal{F})$, and any $\mathbf{P}^\prime\in\Pi^\prime$. Because $\mathbf{P}^\prime$ belongs to $\Pi^\prime$, $\inf_{P^\prime\in\Pi^\prime} d(P^\prime,\Psi)$ can only be either equal to or smaller than $d(\mathbf{P}^\prime,\Psi)$. By the definition of $d(\Pi^\prime,\Psi)$, if $\inf_{P^\prime\in\Pi^\prime} d(P^\prime,\Psi)=d(\mathbf{P}^\prime,\Psi)$, then $d(\Pi^\prime,\Psi) = d(\mathbf{P}^\prime,\Psi)$. If instead $\inf_{P^\prime\in\Pi^\prime} d(P^\prime,\Psi)<d(\mathbf{P}^\prime,\Psi)$, then $d(\Pi^\prime,\Psi) < d(\mathbf{P}^\prime,\Psi)$. The proof is similar for a generic divergence $\text{div}$ on $\Delta(\Omega,\mathcal{F})$.
\end{proof}

\begin{proof}[Proof of Theorem \ref{main1}]
Assume $\mathcal{P}^{\text{co}},\mathcal{L}_\theta^{\text{co}}$ are nonempty. Then, by \cite[Proposition 3]{marinacci2}, they are weak$^\star$-compact. Pick any $A\in\mathcal{B}$. Recall that we can rewrite the usual Bayes' updating rule as
\begin{align*}
    {P}_D(A) &= \frac{\int_\Theta {L}(\theta) \mathbbm{1}_A(\theta) P(\text{d}\theta)}{\int_\Theta {L}(\theta) \mathbbm{1}_A(\theta) P(\text{d}\theta)+ \int_\Theta {L}(\theta) \mathbbm{1}_{A^c}(\theta) P(\text{d}\theta)}\\
    &=\frac{1}{1+\frac{\int_\Theta {L}(\theta) \mathbbm{1}_{A^c}(\theta) P(\text{d}\theta)}{\int_\Theta {L}(\theta) \mathbbm{1}_{A}(\theta) P(\text{d}\theta)}},
\end{align*}
which is maximized when 
$$\frac{\int_\Theta {L}(\theta) \mathbbm{1}_{A^c}(\theta) P(\text{d}\theta)}{\int_\Theta {L}(\theta) \mathbbm{1}_{A}(\theta) P(\text{d}\theta)}$$
is minimized. But
$$\frac{\int_\Theta {L}(\theta) \mathbbm{1}_{A^c}(\theta) P(\text{d}\theta)}{\int_\Theta {L}(\theta) \mathbbm{1}_{A}(\theta) P(\text{d}\theta)} \geq \frac{\inf_{P\in\mathcal{P}^{\text{co}}}\int_\Theta \underline{L}(\theta) \mathbbm{1}_{A^c}(\theta) P(\text{d}\theta)}{\sup_{P\in\mathcal{P}^{\text{co}}}\int_\Theta \overline{L}(\theta) \mathbbm{1}_A(\theta) P(\text{d}\theta)},$$
which proves the inequality in \eqref{ineq_main}. Assume now that $\overline{P}$ is concave. By \cite[Lemma 1]{wasserman}, we have that there exists $\mathbf{P}\in\mathcal{P}^{\text{co}}$ such that 
\begin{align}\label{eq_lemma1}
    \sup_{P\in\mathcal{P}^{\text{co}}}\int_\Theta L(\theta)\mathbbm{1}_A(\theta)P(\text{d}\theta)=\int_\Theta L(\theta)\mathbbm{1}_A(\theta)\mathbf{P}(\text{d}\theta),
\end{align}
for all $L\in\mathscr{L}$. In addition, by \cite[Lemma 4]{wasserman}, we have that for all $X\in\mathscr{X}$ and all $\epsilon>0$, there exists a non-negative, upper semi-continuous function $h\leq X$ such that
 \begin{align}\label{eq_pre_lemma}
 \begin{split}
\left[\sup_{P\in\mathcal{P}^{\text{co}}}\int_\Theta X(\theta)P(\text{d}\theta)\right] - \epsilon &< \sup_{P\in\mathcal{P}^{\text{co}}}\int_\Theta h(\theta)P(\text{d}\theta) \\&\leq \sup_{P\in\mathcal{P}^{\text{co}}}\int_\Theta X(\theta)P(\text{d}\theta).
\end{split}
\end{align}
Let now $X=\overline{L}\mathbbm{1}_A$. Notice that since $\mathcal{L}^\text{co}_\theta$ is weak$^\star$-compact, by \eqref{L_scr} so is $\mathscr{L}$. This implies that $\underline{L},\overline{L}\in\mathscr{L}$, since a compact set always contains its boundary, so $\overline{L}\in\mathscr{X}$ as well, and in turn $\overline{L}\mathbbm{1}_A\in\mathscr{X}$. Fix then any $L\in\mathscr{L}$ and put $h=L\mathbbm{1}_A$. It is immediate to see that $h$ is non-negative and upper semi-continuous. Then, by \eqref{eq_pre_lemma}, we have that for all $\epsilon>0$
\begin{align}\label{eq_lemma4}
\begin{split}
&\left[\sup_{P\in\mathcal{P}^{\text{co}}}\int_\Theta \overline{L}(\theta)\mathbbm{1}_A(\theta)P(\text{d}\theta)\right] - \epsilon < \\&\sup_{P\in\mathcal{P}^{\text{co}}}\int_\Theta {L}(\theta)\mathbbm{1}_A(\theta)P(\text{d}\theta) \leq \sup_{P\in\mathcal{P}^{\text{co}}}\int_\Theta \overline{L}(\theta)\mathbbm{1}_A(\theta)P(\text{d}\theta).
\end{split}
\end{align}
Combining \eqref{eq_lemma1} and\eqref{eq_lemma4}, we obtain
\begin{align}\label{eq_lemma_comb}
\begin{split}
    &\left[\sup_{P\in\mathcal{P}^{\text{co}}}\int_\Theta \overline{L}(\theta)\mathbbm{1}_A(\theta)P(\text{d}\theta)\right] - \epsilon\\ &< \int_\Theta {L}(\theta)\mathbbm{1}_A(\theta)\mathbf{P}(\text{d}\theta) \leq \sup_{P\in\mathcal{P}^{\text{co}}}\int_\Theta \overline{L}(\theta)\mathbbm{1}_A(\theta)P(\text{d}\theta),
\end{split}
\end{align}
for all $L\in\mathscr{L}$. 

Pick now any $\epsilon>0$ and put 
\begin{align*}
    k&:=\sup_{P\in\mathcal{P}^{\text{co}}}\int_\Theta \overline{L}(\theta) \mathbbm{1}_A(\theta) P(\text{d}\theta)\\&+ \inf_{P\in\mathcal{P}^{\text{co}}}\int_\Theta \underline{L}(\theta) \mathbbm{1}_{A^c}(\theta) P(\text{d}\theta)>0.
\end{align*}
Choose any $L\in\mathscr{L}$ and $\delta\in (0,\epsilon k)$. By \eqref{eq_lemma_comb} we have that $[\sup_{P\in\mathcal{P}^{\text{co}}}\int_\Theta \overline{L}(\theta)\mathbbm{1}_A(\theta)P(\text{d}\theta)] - \delta < \int_\Theta {L}(\theta)\mathbbm{1}_A(\theta)\mathbf{P}(\text{d}\theta)$ and that $[\inf_{P\in\mathcal{P}^{\text{co}}}\int_\Theta \underline{L}(\theta)\mathbbm{1}_{A^c}(\theta)P(\text{d}\theta)] + \delta > \int_\Theta {L}(\theta)\mathbbm{1}_{A^c}(\theta)\mathbf{P}(\text{d}\theta)$. Recall that $\mathbf{c}:=\sup_{P\in\mathcal{P}^{\text{co}}}\int_\Theta \overline{L}(\theta) \mathbbm{1}_A(\theta) P(\text{d}\theta)+ \inf_{P\in\mathcal{P}^{\text{co}}}\int_\Theta \underline{L}(\theta) \mathbbm{1}_{A^c}(\theta) P(\text{d}\theta)$, and define $\mathbf{d}:=\int_\Theta {L}(\theta) \mathbbm{1}_A(\theta) \mathbf{P}(\text{d}\theta)+ \int_\Theta {L}(\theta) \mathbbm{1}_{A^c}(\theta) \mathbf{P}(\text{d}\theta)$. Then, 
\begin{align*}
    \mathbf{P}_D(A)&=\frac{\int_\Theta {L}(\theta) \mathbbm{1}_A(\theta) \mathbf{P}(\text{d}\theta)}{\mathbf{d}}\\
    &\geq \frac{\left[\sup_{P\in\mathcal{P}^{\text{co}}}\int_\Theta \overline{L}(\theta) \mathbbm{1}_A(\theta) P(\text{d}\theta)\right]-\delta}{\mathbf{c}+\delta-\delta}\\
    &=\frac{\sup_{P\in\mathcal{P}^{\text{co}}}\int_\Theta \overline{L}(\theta) \mathbbm{1}_A(\theta) P(\text{d}\theta)}{\mathbf{c}} -\frac{\delta}{k}\\
    &>\frac{\sup_{P\in\mathcal{P}^{\text{co}}}\int_\Theta \overline{L}(\theta) \mathbbm{1}_A(\theta) P(\text{d}\theta)}{\mathbf{c}} -\epsilon.
\end{align*}
Since this holds for all $\epsilon>0$, we have that
$$\sup_{P_D\in\mathcal{P}_D^{\text{co}}}P_D(A)=\frac{\sup_{P\in\mathcal{P}^{\text{co}}}\int_\Theta \overline{L}(\theta) \mathbbm{1}_A(\theta) P(\text{d}\theta)}{\mathbf{c}},$$
concluding the proof.
\end{proof}

\begin{proof}[Proof of Lemma \ref{lemma_ccv}]
\cite{walley2,wasserman} show that concave upper probabilities are closed with respect to the generalized Bayes' rule. In particular, this means that, if we let $\mathbf{b}:=\sup_{P\in\mathcal{P}^{\text{co}}}\int_\Theta {L}(\theta) \mathbbm{1}_A(\theta) P(\text{d}\theta)+ \inf_{P\in\mathcal{P}^{\text{co}}}\int_\Theta {L}(\theta) \mathbbm{1}_{A^c}(\theta) P(\text{d}\theta)$, for any fixed $A\in\mathcal{B}$, if $\overline{P}$ is concave, then for all $L\in\mathscr{L}$ 
\begin{equation}\label{ccv_gen}
    \overline{P}_D(A)=\frac{\sup_{P\in\mathcal{P}^{\text{co}}}\int_\Theta {L}(\theta) \mathbbm{1}_A(\theta) P(\text{d}\theta)}{\mathbf{b}}
\end{equation}
is concave. But since $\mathcal{L}^\text{co}_\theta$ is weak$^\star$-compact (by our assumption and \cite[Proposition 3]{marinacci2}), by \eqref{L_scr} so is $\mathscr{L}$. This implies that $\underline{L},\overline{L}\in\mathscr{L}$, since a compact set always contains its boundary. Call then $L^\prime=\overline{L}\mathbbm{1}_A+\underline{L}\mathbbm{1}_{A^c}$. It is immediate to see that $L^\prime\in\mathscr{L}$. Then, by \eqref{ccv_gen} we have that if we call $\mathbf{b}^\prime:=\sup_{P\in\mathcal{P}^{\text{co}}}\int_\Theta {L^\prime}(\theta) \mathbbm{1}_A(\theta) P(\text{d}\theta)+ \inf_{P\in\mathcal{P}^{\text{co}}}\int_\Theta {L^\prime}(\theta) \mathbbm{1}_{A^c}(\theta) P(\text{d}\theta)$, it follows that
\begin{align*}
    \overline{P}_D(A)&=\frac{\sup_{P\in\mathcal{P}^{\text{co}}}\int_\Theta {L^\prime}(\theta) \mathbbm{1}_A(\theta) P(\text{d}\theta)}{\mathbf{b}^\prime}\\
    &=\frac{\sup_{P\in\mathcal{P}^{\text{co}}}\int_\Theta \overline{L}(\theta) \mathbbm{1}_A(\theta) P(\text{d}\theta)}{\mathbf{c}}
\end{align*}
is concave, concluding the proof.
\end{proof}

\begin{proof}[Proof of Theorem \ref{thm_rem_bounds_entropy}]
Suppose $\Omega$ is uncountable. First notice that, since we assumed $\frac{\text{d}P}{\text{d}\mu}$ to be continuous and bounded for all $P\in\mathscr{P}$, then so is $\log \circ \frac{\text{d}P}{\text{d}\mu}$, for all $P\in\mathscr{P}$, since composing a continuous function with a continuous and bounded one gives us a continuous and bounded function. This entails that the Choquet integrals of $\log \circ \frac{\text{d}P}{\text{d}\mu}$ with respect to $\underline{P}$ and $\overline{P}$ are both well defined. In addition, being continuous and bounded, both $\frac{\text{d}P}{\text{d}\mu}$ and $\log \circ \frac{\text{d}P}{\text{d}\mu}$ attain their infima and suprema thanks to Weierstrass' extreme value theorem, for all $P\in\mathscr{P}$. Hence, all the Choquet integrals used in this proof are well defined.

Then, we have the following
\begin{align}
    \overline{H}(P):&=\sup_{P\in\mathscr{P}} H(P) =\sup_{P\in\mathscr{P}} \left( -\int_\Omega \log\left[\frac{\text{d}P}{\text{d}\mu}(\omega)\right] {P}(\text{d}\omega) \right) \nonumber\\
    &=\sup_{P\in\mathscr{P}} \int_\Omega (-\log)\left[\frac{\text{d}P}{\text{d}\mu}(\omega)\right] {P}(\text{d}\omega)  \nonumber\\
    &\leq \sup_{P\in\mathscr{P}}\int_\Omega \sup_{P\in\mathscr{P}} \left\lbrace{(-\log)\left(\frac{\text{d}P}{\text{d}\mu}(\omega)\right)}\right\rbrace {P}(\text{d}\omega) \label{easy_ineq} \\
    &\leq \int_\Omega \sup_{P\in\mathscr{P}} \left\lbrace{(-\log)\left(\frac{\text{d}P}{\text{d}\mu}(\omega)\right)}\right\rbrace \overline{P}(\text{d}\omega) \label{sup_sum_inequality} \\
    &=-\int_\Omega \inf_{P\in\mathscr{P}} \left\lbrace{\log\left(\frac{\text{d}P}{\text{d}\mu}(\omega)\right)}\right\rbrace \overline{P}(\text{d}\omega) \label{sup_minus} \\
    &=-\int_\Omega  \log\left(\inf_{P\in\mathscr{P}}\frac{\text{d}P}{\text{d}\mu}(\omega)\right) \overline{P}(\text{d}\omega) \label{log_incr}\\
    &=-\int_\Omega \log\left[\underline{\pi}(\omega)\right] \overline{P}(\text{d}\omega) = H(\overline{P}).\nonumber
\end{align}
The inequality in \eqref{easy_ineq} is true because for all $\omega\in\Omega$,
$$\sup_{P\in\mathscr{P}} \left\lbrace{(-\log)\left(\frac{\text{d}P}{\text{d}\mu}(\omega)\right)}\right\rbrace \geq (-\log)\left(\frac{\text{d}P}{\text{d}\mu}(\omega)\right).$$
The inequality in \eqref{sup_sum_inequality} is a property of Choquet integrals taken with respect to upper probabilities \cite{marinacci2}. The equality in \eqref{sup_minus} is true because for a generic function $f$, we have that $\sup -f=-\inf f$. Finally, the equality in \eqref{log_incr} is true because the logarithm is a strictly increasing function. By \cite[Theorem 38]{marinacci2}, if $\overline{P}$ is concave, then inequality \eqref{sup_sum_inequality} holds with an equality, and so the bound is tighter. 

The proof for $\underline{H}(P) \geq H(\underline{P})$ is similar; we use the facts that 
\begin{itemize}
    \item $\inf_{P\in\mathscr{P}} \left\lbrace{(-\log)\left(\frac{\text{d}P}{\text{d}\mu}(\omega)\right)}\right\rbrace \leq (-\log)\left(\frac{\text{d}P}{\text{d}\mu}(\omega)\right)$, for all $\omega\in\Omega$;
    \item by \cite{marinacci2}, 
    \begin{align}\label{inf_uncount}
    \begin{split}
       &\inf_{P\in\mathscr{P}}\int_\Omega \inf_{P\in\mathscr{P}} \left\lbrace{(-\log)\left(\frac{\text{d}P}{\text{d}\mu}(\omega)\right)}\right\rbrace {P}(\text{d}\omega)\\ &\geq \int_\Omega \inf_{P\in\mathscr{P}} \left\lbrace{(-\log)\left(\frac{\text{d}P}{\text{d}\mu}(\omega)\right)}\right\rbrace \underline{P}(\text{d}\omega);
         \end{split}
    \end{align}
    \item for a generic function $f$, $\inf -f=-\sup f$;
    \item by \cite[Theorem 38]{marinacci2}, if $\underline{P}$ is convex, then 
    \eqref{inf_uncount} holds with an equality.
\end{itemize}

Suppose now $\Omega$ is at most countable; in this case, we do not need any assumptions to make the Choquet integrals well defined, since we will not deal with density functions. The following holds
\begin{align}
    \overline{H}(P):&=\sup_{P\in\mathscr{P}} H(P) \nonumber=\sup_{P\in\mathscr{P}} \left(-\sum_{\omega\in\Omega} P(\{\omega\})\log\left[P(\{\omega\})\right] \right) \nonumber\\
    &=\sup_{P\in\mathscr{P}} \sum_{\omega\in\Omega} P(\{\omega\})(-\log)\left[P(\{\omega\})\right] \nonumber \\
    &\leq \sum_{\omega\in\Omega} \sup_{P\in\mathscr{P}} \left\lbrace{P(\{\omega\})(-\log)\left[P(\{\omega\})\right]}\right\rbrace \label{sup_sum_ineq2}\\
    &\leq \sum_{\omega\in\Omega}  \overline{P}(\{\omega\})\sup_{P\in\mathscr{P}}(-\log)\left[P(\{\omega\})\right] \label{sup_prod}\\
    &=-\sum_{\omega\in\Omega}  \overline{P}(\{\omega\})\inf_{P\in\mathscr{P}}\log\left[P(\{\omega\})\right] \label{sup_minus2}\\
    &=-\sum_{\omega\in\Omega}  \overline{P}(\{\omega\})\log\left[\inf_{P\in\mathscr{P}} P(\{\omega\})\right] \label{log_incr2}\\
    &=-\sum_{\omega\in\Omega}  \overline{P}(\{\omega\})\log\left[\underline{P}(\{\omega\})\right]=H(\overline{P}). \nonumber
\end{align}
The inequality in \eqref{sup_sum_ineq2} comes from the well known fact that the sum of the suprema is at least equal to the supremum of the sum. The inequality in \eqref{sup_prod} comes from the fact that for differentiable functions, the product of the suprema is at least equal to the supremum of the product. The equality in \eqref{sup_minus2} is true because for a generic function $f$, we have that $\sup -f=-\inf f$. Finally, the equality in \eqref{log_incr2} is true because the logarithm is a strictly increasing function. By \cite[Theorem 38]{marinacci2}, if $\overline{P}$ is concave, then inequality \eqref{sup_sum_ineq2} holds with an equality, and so the bound is tighter. 

The proof for $\underline{H}(P) \geq H(\underline{P})$ is similar; we use the facts that 
\begin{itemize}
    \item the sum of the infima is at most equal to the infimum of the sum;
    \item for differentiable functions, the product of the infima is at most equal to the infimum of the product;
    \item for a generic function $f$, $\inf -f=-\sup f$;
    \item by \cite[Theorem 38]{marinacci2}, if $\underline{P}$ is convex, then $\inf_{P\in\mathscr{P}} \sum_{\omega\in\Omega} P(\{\omega\})(-\log)\left[P(\{\omega\})\right] = \sum_{\omega\in\Omega} \inf_{P\in\mathscr{P}} \left\lbrace{P(\{\omega\})(-\log)\left[P(\{\omega\})\right]}\right\rbrace$.
\end{itemize}
\end{proof}

\begin{proof}[Proof of Lemma \ref{dist_lemma2}]
Fix any $p\in [1,\infty]$ and pick any $\mathbf{P}\in\mathscr{P}$. Because $\Phi_p^+(\mathbf{P},n) \subset \Phi_p^+(\mathscr{P},n)$, then $\inf_{\alpha\in\Phi_p^+(\mathscr{P},n)} W_p(\alpha,P^\prime)$ can only be either equal or smaller than $\inf_{\alpha\in\Phi_p^+(\mathbf{P},n)} W_p(\alpha,P^\prime)$. Now, if $\inf_{\alpha\in\Phi_p^+(\mathscr{P},n)} W_p(\alpha,P^\prime)=\inf_{\alpha\in\Phi_p^+(\mathbf{P},n)} W_p(\alpha,P^\prime)$, then $W^+_p(\mathscr{P},P^\prime) = W^+_p(\mathbf{P},P^\prime)$. If instead $\inf_{\alpha\in\Phi_p^+(\mathscr{P},n)} W_p(\alpha,P^\prime)<\inf_{\alpha\in\Phi_p^+(\mathbf{P},n)} W_p(\alpha,P^\prime)$, then $W^+_p(\mathscr{P},P^\prime) < W^+_p(\mathbf{P},P^\prime)$. This concludes the first part of the proof. Fix then any convex functional $f$ on $\mathbb{R}$ such that $f(0)=1$; the proof is similar for $f$-divergences.
\end{proof}

\begin{proof}[Proof of Lemma \ref{dist_lemma3}]
The proof is very similar to that of Lemma \ref{dist_lemma2}.
\end{proof}

\section{All Results: In-distribution and Out-of-distribution Evaluation}
\label{sec:id_ood_eval_all}

\subsection{CIFAR-10 Results}
\begin{table*}[h]
\scriptsize
\begin{center}

\hspace*{-1.25cm}
\begin{tabular}{ | p{1.4cm} | p{1.7cm} | p{0.65cm} | p{0.65cm} | p{0.65cm}| p{0.65cm} | p{0.65cm}| p{0.65cm} || p{0.65cm} | p{0.65cm} | p{0.65cm} | p{0.65cm} | p{0.65cm} | p{0.65cm} || p{0.65cm} | p{0.65cm} | p{0.65cm} | p{0.65cm} | p{0.65cm} | p{0.65cm} |}

    \cline{3-20}
     \multicolumn{1}{c}{} &\multicolumn{1}{c}{} & \multicolumn{6}{|c|| }{CBDL} &  \multicolumn{6}{|c|}{Ensemble} 
     &  \multicolumn{6}{|c|}{BNN}
     \\
    \cline{3-20}
     \multicolumn{1}{c}{} & \multicolumn{1}{c}{} & \multicolumn{3}{|c|}{Epistemic} & \multicolumn{3}{|c||}{Aleatoric} &\multicolumn{3}{|c|}{Epistemic} & \multicolumn{3}{|c|}{Aleatoric}
     &\multicolumn{3}{|c|}{Epistemic} & \multicolumn{3}{|c|}{Aleatoric}
     \\
    \cline{3-20}
    \multicolumn{1}{c}{} & \multicolumn{1}{c|}{} & Low & Med & High & Low & Med & High & Low & Med & High & Low & Med & High &
    Low & Med & High &
    Low & Med & High
    \\
    
\hline
\multirow{18}{1.5cm}{CIFAR-10C} &
gaussian  & 0.145 & 0.169 & \textbf{0.176} & 0.066 & 0.099 & \textbf{0.102} & 0.012 & \textbf{0.016} & \textbf{0.016} & \textbf{0.065} & 0.056 & 0.055 
& 0.268 & 0.359 & 0.38 & 0.292 & 0.388 & 0.412
\\ 

 \cline{2-20}
& shot noise & 0.129 & 0.158 & \textbf{0.174} & 0.053 & 0.084 & \textbf{0.104} & 0.01 & 0.014 & \textbf{0.016} & \textbf{0.069} & 0.061 & 0.055 
 & 0.224 & 0.315 & 0.378 & 0.249 & 0.342 & 0.409
\\ 

\cline{2-20}
& speckle noise & 0.128 & 0.158 & \textbf{0.174} & 0.053 & 0.081 & \textbf{0.102} & 0.01 & 0.014 & \textbf{0.016} & \textbf{0.069} & 0.061 & 0.055 
& 0.224 & 0.31 & 0.375 & 0.247 & 0.336 & 0.407
\\ 

 \cline{2-20}
& impulse  & 0.134 & 0.161 & \textbf{0.174} & 0.052 & 0.085 & \textbf{0.118} & 0.011 & 0.015 & \textbf{0.017} & \textbf{0.069} & 0.059 & 0.051 
 & 0.231 & 0.338 & 0.423 & 0.257 & 0.364 & 0.454\\

 \cline{2-20}
& defocus blur & 0.105 & 0.129 & \textbf{0.176} & 0.032 & 0.048 & \textbf{0.095} & 0.008 & 0.01 & \textbf{0.016} & \textbf{0.076} & 0.071 & 0.057 
& 0.154 & 0.203 & 0.325 & 0.185 & 0.25 & 0.434
\\ 

 \cline{2-20}
& gaussian blur & 0.105 & 0.154 & \textbf{0.186} & 0.032 & 0.069 & \textbf{0.11} & 0.008 & 0.013 & \textbf{0.017} & \textbf{0.076} & 0.064 & 0.053 
& 0.154 & 0.258 & 0.348 & 0.185 & 0.329 & 0.497
\\ 
 \cline{2-20}
& motion blur & 0.137 & 0.166 & \textbf{0.175} & 0.055 & 0.085 & \textbf{0.1} & 0.011 & 0.015 & \textbf{0.016} & \textbf{0.068} & 0.06 & 0.056 
& 0.227 & 0.322 & 0.36 & 0.268 & 0.385 & 0.446
\\ 
 \cline{2-20}
& zoom blur & 0.148 & 0.161 & \textbf{0.176} & 0.063 & 0.077 & \textbf{0.1} & 0.012 & 0.014 & \textbf{0.016} & \textbf{0.066} & 0.062 & 0.056 
& 0.253 & 0.287 & 0.338 & 0.318 & 0.365 & 0.457
\\ 
 \cline{2-20}
& snow & 0.131 & 0.157 & \textbf{0.163} & 0.05 & 0.077 & \textbf{0.086} & 0.01 & 0.014 & \textbf{0.015} & \textbf{0.07} & 0.062 & 0.059 
& 0.221 & 0.31 & 0.337 & 0.242 & 0.326 & 0.355
\\ 

\cline{2-20}
& fog & 0.104 & 0.122 & \textbf{0.154} & 0.033 & 0.045 & \textbf{0.09} & 0.008 & 0.009 & \textbf{0.013} & \textbf{0.076} & 0.072 & 0.06 
 & 0.155 & 0.199 & 0.321 & 0.187 & 0.246 & 0.397
\\

 \cline{2-20} 
& brightness & 0.103 & 0.108 & \textbf{0.124} & 0.031 & 0.034 & \textbf{0.043} & 0.008 & 0.008 & \textbf{0.01} & \textbf{0.076} & 0.075 & 0.072 
& 0.153 & 0.164 & 0.195 & 0.179 & 0.188 & 0.225
\\ 

 \cline{2-20}
& contrast & 0.107 & 0.145 & \textbf{0.165} & 0.035 & 0.066 & \textbf{0.148} & 0.008 & 0.012 & \textbf{0.017} & \textbf{0.075} & 0.065 & 0.046 
& 0.162 & 0.265 & 0.419 & 0.197 & 0.342 & 0.69
\\ 

 \cline{2-20}
& elastic & 0.134 & 0.144 & \textbf{0.168} & 0.053 & 0.059 & \textbf{0.089} & 0.011 & 0.012 & \textbf{0.015} & \textbf{0.069} & 0.067 & 0.058 
& 0.219 & 0.238 & 0.33 & 0.273 & 0.297 & 0.38
\\ 
 \cline{2-20}
& pixelate & 0.116 & 0.135 & \textbf{0.162} & 0.042 & 0.065 & \textbf{0.096} & 0.009 & 0.011 & \textbf{0.015} & \textbf{0.073} & 0.066 & 0.058 
& 0.191 & 0.259 & 0.342 & 0.215 & 0.275 & 0.38
\\ 
 \cline{2-20}
& jpeg & 0.13 & 0.147 & \textbf{0.156} & 0.049 & 0.064 & \textbf{0.075} & 0.01 & 0.012 & \textbf{0.013} & \textbf{0.07} & 0.066 & 0.063 
& 0.217 & 0.262 & 0.296 & 0.246 & 0.293 & 0.331
\\ 
 \cline{2-20}
& spatter & 0.117 & 0.145 & \textbf{0.147} & 0.041 & \textbf{0.065} & 0.062 & 0.009 & \textbf{0.012} & \textbf{0.012} & \textbf{0.073} & 0.065 & 0.066 
& 0.191 & 0.279 & 0.266 & 0.216 & 0.299 & 0.291
\\ 
 \cline{2-20}
& saturate & 0.112 & 0.113 & \textbf{0.131} & 0.041 & 0.039 & \textbf{0.046} & 0.008 & 0.009 & \textbf{0.011} & 0.073 & \textbf{0.074} & 0.07 
& 0.183 & 0.183 & 0.219 & 0.212 & 0.212 & 0.257
\\ 
 \cline{2-20}
& frost & 0.124 & 0.153 & \textbf{0.166} & 0.047 & 0.075 & \textbf{0.099} & 0.01 & 0.013 & \textbf{0.015} & \textbf{0.071} & 0.063 & 0.056 
 & 0.208 & 0.298 & 0.361 & 0.23 & 0.326 & 0.403
\\ 

 \hline\noalign{\vspace{1ex}}
 \cline{3-20}
  \multicolumn{1}{c}{} & \multicolumn{1}{c}{} & \multicolumn{3}{|c|}{Epistemic} & \multicolumn{3}{|c||}{Aleatoric} & \multicolumn{3}{|c|}{Epistemic} & \multicolumn{3}{|c||}{Aleatoric} 
  & \multicolumn{3}{|c|}{Epistemic} & \multicolumn{3}{|c|}{Aleatoric}
  \\
\hline
\multirow{3}{2.5cm}{} & In Dist-Clean & \multicolumn{3}{|c|}{0.102} & \multicolumn{3}{|c||}{0.031}  & \multicolumn{3}{|c|}{0.008} & \multicolumn{3}{|c||}{0.076} 
 & \multicolumn{3}{|c|}{ 0.151} & \multicolumn{3}{|c|}{0.179}  
\\ 
 \cline{2-20}

\multirow{3}{2.5cm}{Dataset} & MNIST 
& \multicolumn{3}{|c|}{0.184 (1.801)} & \multicolumn{3}{|c||}{0.142 (4.626)}  
& \multicolumn{3}{|c|}{0.021 (2.723)} & \multicolumn{3}{|c||}{0.044 (0.572)} 
& \multicolumn{3}{|c|}{ 0.445 (2.95)} & \multicolumn{3}{|c|}{0.772 (4.312)}  
\\ 
 \cline{2-20}
& Fashion MNIST 
& \multicolumn{3}{|c|}{0.183 (1.791)} & \multicolumn{3}{|c||}{0.147 (4.808)}  
& \multicolumn{3}{|c|}{0.022 (2.822)} & \multicolumn{3}{|c||}{0.042 (0.557)} 
& \multicolumn{3}{|c|}{ 0.431 (2.861)} & \multicolumn{3}{|c|}{0.811 (4.53)}  
\\ 
\cline{2-20}
& SVHN & \multicolumn{3}{|c|}{0.183 (1.789)} & \multicolumn{3}{|c||}{0.141 (4.606)}  
& \multicolumn{3}{|c|}{0.019 (2.421) } & \multicolumn{3}{|c||}{0.046 (0.608)} 
& \multicolumn{3}{|c|}{0.394 (2.612)} & \multicolumn{3}{|c|}{0.647 (3.616)}  
 \\
 \hline
\end{tabular}
\end{center}
\caption{\footnotesize CIFAR-10 Results. The 4 BNNs trained have the following accuracies : $90, 89, 90, 89$ in percentage terms and rounded to the nearest whole number. For different categories of corruptions, increasing severity leads to higher levels of aleatoric uncertainty for CBDL. When exposed to completely unseen datasets, this reaches its peak. In contrast, the Ensemble has a reverse trend. For the single BNN, the network with the highest accuracy was selected.}
\label{tab:cifar-10-bnn}
\end{table*}

\newpage
\subsection{MNIST Results}
\begin{table*}[h]
\scriptsize
\begin{center}
\hspace*{-1.25cm}
\begin{tabular}{ | p{1.4cm} | p{1.7cm} | p{0.65cm} | p{0.65cm} | p{0.65cm}| p{0.65cm} | p{0.65cm}| p{0.65cm} || p{0.65cm} | p{0.65cm} | p{0.65cm} | p{0.65cm} | p{0.65cm} | p{0.65cm} || p{0.65cm} | p{0.65cm} | p{0.65cm} | p{0.65cm} | p{0.65cm} | p{0.65cm} |}
    \cline{3-20}
     \multicolumn{1}{c}{} &\multicolumn{1}{c}{} & \multicolumn{6}{|c|| }{CBDL} &  \multicolumn{6}{|c|}{Ensemble} 
     &  \multicolumn{6}{|c|}{BNN}
     \\
    \cline{3-20}
     \multicolumn{1}{c}{} & \multicolumn{1}{c}{} & \multicolumn{3}{|c|}{Epistemic} & \multicolumn{3}{|c||}{Aleatoric} &\multicolumn{3}{|c|}{Epistemic} & \multicolumn{3}{|c|}{Aleatoric}
     &\multicolumn{3}{|c|}{Epistemic} & \multicolumn{3}{|c|}{Aleatoric}
     \\
    \cline{3-20}
    \multicolumn{1}{c}{} & \multicolumn{1}{c|}{} & Low & Med & High & Low & Med & High & Low & Med & High & Low & Med & High &
    Low & Med & High &
    Low & Med & High
    \\

    \hline
\multirow{15}{1.4cm}{MNIST-C} & brightness & 0.21 & \textbf{0.246} & 0.195 & 0.128 & 0.087 & \textbf{0.158} & 0.023 & \textbf{0.026} & 0.024 & 0.045 & \textbf{0.05} & 0.045 
& 0.323 & 0.599 & 0.767 & 0.11 & 0.229 & 0.301
\\ 
 \cline{2-20}
& canny edges & \textbf{0.162} & 0.158 & \textbf{0.162} & 0.174 & \textbf{0.176} & 0.174 & \textbf{0.016} & 0.015 & \textbf{0.016} & \textbf{0.042} & \textbf{0.042} & \textbf{0.042} 
 & 0.641 & 0.658 & 0.651 & 0.296 & 0.289 & 0.296
\\ 
 \cline{2-20}
& dotted line & \textbf{0.18} & 0.173 & 0.176 & 0.097 & \textbf{0.102} & \textbf{0.102} & \textbf{0.014} & 0.013 & \textbf{0.014} & \textbf{0.055} & \textbf{0.055} & \textbf{0.055} 
&0.375 & 0.377 & 0.376 & 0.153 & 0.152 & 0.15
\\ 
 \cline{2-20}
& fog & 0.185 & 0.187 & \textbf{0.19} & 0.163 & \textbf{0.167} & 0.166 & \textbf{0.023} & \textbf{0.023} & \textbf{0.023} & \textbf{0.043} & 0.042 & \textbf{0.043} 
&0.749 & 0.827 & 0.826 & 0.326 & 0.359 & 0.361
\\ 
 \cline{2-20}
& glass blur & \textbf{0.161} & 0.145 & 0.155 & 0.181 & \textbf{0.206} & 0.2 & 0.016 & 0.016 & \textbf{0.02} & \textbf{0.038} & 0.033 & 0.033 
&0.634 & 0.706 & 0.751 & 0.323 & 0.389 & 0.386
\\ 
 \cline{2-20}
& impulse noise & \textbf{0.186} & 0.172 & 0.181 & 0.083 & 0.132 & \textbf{0.158} & 0.013 & 0.015 & \textbf{0.021} & \textbf{0.057} & 0.049 & 0.042 
&0.297 & 0.469 & 0.766 & 0.122 & 0.219 & 0.379
\\ 
 \cline{2-20}
& motion blur & \textbf{0.184} & 0.161 & 0.157 & 0.123 & 0.185 & \textbf{0.195} & 0.016 & 0.018 & \textbf{0.019} & \textbf{0.049} & 0.035 & 0.029 
& 0.404 & 0.765 & 0.918 & 0.188 & 0.449 & 0.542
\\ 
 \cline{2-20}
& rotate & \textbf{0.189} & 0.167 & 0.142 & 0.072 & 0.134 & \textbf{0.196} & 0.013 & 0.014 & \textbf{0.015} & \textbf{0.059} & 0.048 & 0.033 
&v0.247 & 0.451 & 0.749 & 0.096 & 0.211 & 0.379
\\ 
 \cline{2-20}
& scale & \textbf{0.196} & 0.169 & 0.121 & 0.08 & 0.152 & \textbf{0.232} & 0.014 & \textbf{0.015} & 0.013 & \textbf{0.057} & 0.044 & 0.025 
&0.269 & 0.478 & 0.859 & 0.103 & 0.217 & 0.541
\\ 
 \cline{2-20}
& shear & \textbf{0.188} & 0.177 & 0.153 & 0.065 & 0.102 & \textbf{0.184} & 0.012 & 0.014 & \textbf{0.017} & \textbf{0.061} & 0.055 & 0.036 
& 0.219 & 0.346 & 0.736 & 0.082 & 0.147 & 0.366
\\ 
 \cline{2-20}
& shot noise & \textbf{0.188} & 0.179 & 0.18 & 0.061 & 0.08 & \textbf{0.113} & 0.012 & 0.012 & \textbf{0.014} & \textbf{0.062} & 0.059 & 0.052 
&0.213 & 0.247 & 0.376 & 0.078 & 0.091 & 0.162
\\ 
 \cline{2-20}
& spatter & \textbf{0.186} & 0.174 & 0.176 & 0.074 & \textbf{0.132} & 0.127 & 0.013 & \textbf{0.016} & \textbf{0.016} & \textbf{0.059} & 0.047 & 0.049 
&0.247 & 0.455 & 0.421 & 0.098 & 0.226 & 0.195
\\ 
 \cline{2-20}
& stripe & 0.18 & 0.182 & \textbf{0.184} & \textbf{0.165} & 0.163 & 0.161 & \textbf{0.021} & 0.02 & \textbf{0.021} & \textbf{0.04} & \textbf{0.04} & \textbf{0.04} 
&0.745 & 0.76 & 0.76 & 0.225 & 0.22 & 0.229
\\ 
 \cline{2-20}
& translate & 0.191 & \textbf{0.192} & \textbf{0.192} & 0.061 & 0.086 & \textbf{0.128} & 0.013 & 0.015 & \textbf{0.019} & \textbf{0.061} & 0.056 & 0.046 
&0.214 & 0.292 & 0.474 & 0.078 & 0.112 & 0.204
\\ 
 \cline{2-20}
& zigzag & \textbf{0.18} & 0.176 & 0.179 & 0.119 & \textbf{0.123} & 0.122 & \textbf{0.016} & \textbf{0.016} & \textbf{0.016} & \textbf{0.051} & 0.05 & 0.05 
& 0.476 & 0.477 & 0.476 & 0.201 & 0.202 & 0.201
\\ 

 \hline\noalign{\vspace{1ex}}
 \cline{3-20}
  \multicolumn{1}{c}{} & \multicolumn{1}{c}{} & \multicolumn{3}{|c|}{Epistemic} & \multicolumn{3}{|c||}{Aleatoric} & \multicolumn{3}{|c|}{Epistemic} & \multicolumn{3}{|c||}{Aleatoric} 
  & \multicolumn{3}{|c|}{Epistemic} & \multicolumn{3}{|c|}{Aleatoric}
  \\
\hline
\multirow{3}{2.5cm}{} & In Dist-Clean 
& \multicolumn{3}{|c|}{0.188} & \multicolumn{3}{|c||}{0.057}  
& \multicolumn{3}{|c|}{0.012} & \multicolumn{3}{|c||}{0.063} 
& \multicolumn{3}{|c|}{ 0.198} & \multicolumn{3}{|c|}{0.071}  
\\ 
 \cline{2-20}

\multirow{3}{2.5cm}{Dataset} & CIFAR 
& \multicolumn{3}{|c|}{0.185 (0.986)} & \multicolumn{3}{|c||}{0.167 (2.94)}  
& \multicolumn{3}{|c|}{0.023 (1.948)} & \multicolumn{3}{|c||}{0.042 (0.67)} 
& \multicolumn{3}{|c|}{0.821 (4.155)} & \multicolumn{3}{|c||}{0.372 (5.124)}
\\ 
 \cline{2-20}
& Fashion MNIST 
& \multicolumn{3}{|c|}{0.162 (0.866)} & \multicolumn{3}{|c||}{0.187 (3.287)}  
& \multicolumn{3}{|c|}{0.019 (1.618)} & \multicolumn{3}{|c||}{0.035 (0.564)} 
& \multicolumn{3}{|c|}{0.919 (4.654)} & \multicolumn{3}{|c||}{0.461 (6.496)}
\\ 
 \cline{2-20}
& SVHN 
& \multicolumn{3}{|c|}{0.190 (1.015)} & \multicolumn{3}{|c||}{0.168 (2.963)}  
& \multicolumn{3}{|c|}{0.022 (1.847)} & \multicolumn{3}{|c||}{0.043 (0.69)} 
& \multicolumn{3}{|c|}{0.923 (4.674)} & \multicolumn{3}{|c||}{0.297 (4.183)} 
\\ 
 \hline
\end{tabular}
\end{center}
\caption{\small The 4 BNNs trained have the following accuracy: $99, 99, 99, 98$ in percentage terms and rounded to the nearest whole number. These are the probabilities of the most likely label to be the correct one according to the 4 different BNNs. For the single BNN, the network with the highest accuracy was selected.
For different categories of corruptions, increasing severity leads to higher levels of aleatoric uncertainty for IBNNs. When exposed to completely unseen datasets, this gets close to the highest aleatoric uncertainty. The same is not true for EBNN. The epistemic uncertainty for IBNNs shows a less consistent trend in this case. }
\label{tab:mnist-bnn-results}
\end{table*}

\newpage
\subsection{SVHN Results}

\begin{table*}[h]
\scriptsize
\begin{center}

\hspace*{-1.25cm}
\begin{tabular}{ | p{1.4cm} | p{1.7cm} | p{0.65cm} | p{0.65cm} | p{0.65cm}| p{0.65cm} | p{0.65cm}| p{0.65cm} || p{0.65cm} | p{0.65cm} | p{0.65cm} | p{0.65cm} | p{0.65cm} | p{0.65cm} || p{0.65cm} | p{0.65cm} | p{0.65cm} | p{0.65cm} | p{0.65cm} | p{0.65cm} |}
    \cline{3-20}
     \multicolumn{1}{c}{} &\multicolumn{1}{c}{} & \multicolumn{6}{|c|| }{CBDL} &  \multicolumn{6}{|c|}{Ensemble} 
     &  \multicolumn{6}{|c|}{BNN}
     \\
    \cline{3-20}
     \multicolumn{1}{c}{} & \multicolumn{1}{c}{} & \multicolumn{3}{|c|}{Epistemic} & \multicolumn{3}{|c||}{Aleatoric} &\multicolumn{3}{|c|}{Epistemic} & \multicolumn{3}{|c|}{Aleatoric}
     &\multicolumn{3}{|c|}{Epistemic} & \multicolumn{3}{|c|}{Aleatoric}
     \\
    \cline{3-20}
    \multicolumn{1}{c}{} & \multicolumn{1}{c|}{} & Low & Med & High & Low & Med & High & Low & Med & High & Low & Med & High &
    Low & Med & High &
    Low & Med & High
    \\

\hline

\multirow{18}{1.4cm}{SVHN-C} & brightness & 0.062 & 0.074 & \textbf{0.102} & 0.011 & 0.019 & \textbf{0.053} & 0.005 & 0.006 & \textbf{0.008} & \textbf{0.083} & 0.08 & 0.07 
& 0.113 & 0.161 & 0.293 & 0.136 & 0.198 & 0.405
\\ 
 \cline{2-20}
& contrast & 0.077 & 0.099 & \textbf{0.143} & 0.022 & 0.048 & \textbf{0.174} & 0.006 & 0.008 & \textbf{0.018} & \textbf{0.08} & 0.072 & 0.04 
 & 0.17 & 0.261 & 0.37 & 0.247 & 0.487 & 1.212
\\ 
 \cline{2-20}
& defocus blur & 0.109 & \textbf{0.154} & 0.138 & 0.044 & 0.147 & \textbf{0.218} & 0.009 & 0.017 & \textbf{0.02} & \textbf{0.072} & 0.045 & 0.029 
& 0.272 & 0.486 & 0.431 & 0.353 & 0.877 & 1.334
\\ 
 \cline{2-20}
& elastic & 0.182 & 0.177 & \textbf{0.192} & \textbf{0.14} & 0.122 & 0.102 & \textbf{0.023} & 0.021 & 0.022 & 0.04 & 0.046 & \textbf{0.05} 
& 0.562 & 0.495 & 0.489 & 0.84 & 0.779 & 0.639
\\ 
 \cline{2-20}
& fog & 0.157 & 0.171 & \textbf{0.181} & 0.095 & 0.124 & \textbf{0.135} & 0.016 & 0.02 & \textbf{0.023} & \textbf{0.056} & 0.047 & 0.042 
& 0.448 & 0.534 & 0.599 & 0.602 & 0.709 & 0.704
\\ 
 \cline{2-20}
& frost & 0.089 & 0.118 & \textbf{0.131} & 0.026 & 0.052 & \textbf{0.063} & 0.007 & 0.01 & \textbf{0.012} & \textbf{0.078} & 0.069 & 0.066 
& 0.2 & 0.319 & 0.372 & 0.238 & 0.381 & 0.442
\\ 
 \cline{2-20}
& gaussian blur & 0.068 & 0.131 & \textbf{0.14} & 0.017 & 0.087 & \textbf{0.208} & 0.005 & 0.012 & \textbf{0.019} & \textbf{0.081} & 0.06 & 0.032 
& 0.143 & 0.38 & 0.546 & 0.176 & 0.603 & 1.151
\\ 
 \cline{2-20}
& gaussian noise & 0.116 & 0.163 & \textbf{0.182} & 0.031 & 0.061 & \textbf{0.103} & 0.01 & 0.018 & \textbf{0.025} & \textbf{0.074} & 0.061 & 0.044 
 & 0.251 & 0.429 & 0.655 & 0.251 & 0.41 & 0.586
\\ 
 \cline{2-20}
& impulse noise & 0.13 & 0.169 & \textbf{0.185} & 0.035 & 0.06 & \textbf{0.102} & 0.012 & 0.018 & \textbf{0.026} & \textbf{0.072} & 0.061 & 0.045 
& 0.3 & 0.449 & 0.653 & 0.296 & 0.43 & 0.589
\\ 
 \cline{2-20}
& jpeg & 0.07 & 0.083 & \textbf{0.129} & 0.014 & 0.018 & \textbf{0.041} & 0.005 & 0.006 & \textbf{0.012} & \textbf{0.082} & 0.08 & 0.071 
& 0.124 & 0.153 & 0.287 & 0.134 & 0.166 & 0.306
\\ 
 \cline{2-20}
& motion blur & 0.1 & 0.157 & \textbf{0.167} & 0.027 & 0.078 & \textbf{0.149} & 0.008 & 0.015 & \textbf{0.02} & \textbf{0.077} & 0.06 & 0.041 
 & 0.233 & 0.438 & 0.605 & 0.272 & 0.568 & 0.889
\\ 
 \cline{2-20}
& pixelate & 0.061 & 0.123 & \textbf{0.18} & 0.011 & 0.036 & \textbf{0.095} & 0.004 & 0.011 & \textbf{0.02} & \textbf{0.083} & 0.073 & 0.052 
& 0.102 & 0.222 & 0.516 & 0.114 & 0.254 & 0.591
\\ 
 \cline{2-20}
& saturate & 0.063 & 0.065 & \textbf{0.141} & 0.011 & 0.012 & \textbf{0.075} & 0.005 & 0.005 & \textbf{0.014} & \textbf{0.083} & \textbf{0.083} & 0.062 
& 0.105 & 0.108 & 0.376 & 0.116 & 0.12 & 0.415
\\ 
 \cline{2-20}
& shot noise & 0.119 & 0.17 & \textbf{0.183} & 0.031 & 0.067 & \textbf{0.106} & 0.011 & 0.019 & \textbf{0.026} & \textbf{0.074} & 0.059 & 0.044 
 & 0.255 & 0.462 & 0.659 & 0.256 & 0.441 & 0.584
\\ 
 \cline{2-20}
& snow & 0.131 & \textbf{0.16} & \textbf{0.16} & 0.043 & 0.076 & \textbf{0.096} & 0.012 & \textbf{0.017} & \textbf{0.017} & \textbf{0.07} & 0.059 & 0.055 
 & 0.3 & 0.435 & 0.492 & 0.31 & 0.443 & 0.519
\\ 
 \cline{2-20}
& spatter & 0.088 & 0.127 & \textbf{0.163} & 0.021 & 0.037 & \textbf{0.06} & 0.007 & 0.012 & \textbf{0.017} & \textbf{0.079} & 0.072 & 0.063 
 & 0.169 & 0.277 & 0.404 & 0.191 & 0.301 & 0.403
\\ 
 \cline{2-20}
& speckle noise & 0.1 & 0.143 & \textbf{0.181} & 0.023 & 0.046 & \textbf{0.085} & 0.008 & 0.015 & \textbf{0.023} & \textbf{0.077} & 0.067 & 0.052 
 & 0.201 & 0.345 & 0.655 & 0.207 & 0.343 & 0.516
\\ 
 \cline{2-20}
& zoom blur & 0.061 & 0.061 & \textbf{0.064} & 0.011 & 0.012 & \textbf{0.013} & 0.004 & 0.004 & \textbf{0.005} & \textbf{0.083} & \textbf{0.083} & \textbf{0.083} 
 & 0.105 & 0.108 & 0.121 & 0.117 & 0.123 & 0.136
\\ 
 \hline\noalign{\vspace{1ex}}
 \cline{3-20}
  \multicolumn{1}{c}{} & \multicolumn{1}{c}{} & \multicolumn{3}{|c|}{Epistemic} & \multicolumn{3}{|c||}{Aleatoric} & \multicolumn{3}{|c|}{Epistemic} & \multicolumn{3}{|c||}{Aleatoric} 
  & \multicolumn{3}{|c|}{Epistemic} & \multicolumn{3}{|c|}{Aleatoric}
  \\
\hline

\multirow{3}{2.5cm}{} & In Dist-Clean 
& \multicolumn{3}{|c|}{0.061} & \multicolumn{3}{|c||}{0.01}  
& \multicolumn{3}{|c|}{0.004} & \multicolumn{3}{|c||}{0.083} 
 & \multicolumn{3}{|c|}{ 0.097} & \multicolumn{3}{|c|}{0.107}  
\\ 
 \cline{2-20}

\multirow{3}{2.5cm}{Dataset} & CIFAR 
& \multicolumn{3}{|c|}{0.181 (2.992)} & \multicolumn{3}{|c||}{0.121 (11.778)}  
& \multicolumn{3}{|c|}{0.026 (6.029)} & \multicolumn{3}{|c||}{0.040 (0.479)} 
& \multicolumn{3}{|c|}{0.655 (6.765)} & \multicolumn{3}{|c|}{0.716(6.69)} 
\\ 

 \cline{2-20}
& MNIST 
& \multicolumn{3}{|c|}{0.198 (3.272)} & \multicolumn{3}{|c||}{0.063 (6.154)}  
& \multicolumn{3}{|c|}{0.024 (5.495)} & \multicolumn{3}{|c||}{0.056 (0.675)} 
& \multicolumn{3}{|c|}{0.329 (3.393)} & \multicolumn{3}{|c|}{0.336 (3.139)} 
\\ 
 \cline{2-20}
& Fashion MNIST 
& \multicolumn{3}{|c|}{0.199 (3.288)} & \multicolumn{3}{|c||}{0.113 (10.984)}  
& \multicolumn{3}{|c|}{0.026 (6.002)} & \multicolumn{3}{|c||}{0.041 (0.491)} 
& \multicolumn{3}{|c|}{0.103 (6.375)} & \multicolumn{3}{|c|}{0.125 (6.178)} 
\\ 
 \hline
\end{tabular}
\end{center}
\caption{\small The 4 BNNs trained have the following accuracy : $95, 95, 96, 95$ in percentage terms and rounded to the nearest whole number. These are the probabilities of the most likely label to be the correct one according to the 4 different BNNs. For the single BNN, the network with the highest accuracy was selected.
For different categories of corruptions, increasing severity leads to higher levels of aleatoric uncertainty for IBNNs. When exposed to completely unseen datasets, this reaches its peak. The same is not true for EBNN. For the epistemic uncertainty as well, there is a clear trend with increasing corruption severity.}
\label{tab:svhn-bnn-results}
\end{table*}
\newpage
\subsection{Fashion MNIST}

\begin{table*}[h]
\scriptsize
\begin{center}

\hspace*{-1.25cm}
\begin{tabular}{ | p{1.5cm} | p{1.7cm} | p{0.65cm} | p{0.65cm} | p{0.65cm}| p{0.65cm} | p{0.65cm}| p{0.65cm} || p{0.65cm} | p{0.65cm} | p{0.65cm} | p{0.65cm} | p{0.65cm} | p{0.65cm} || p{0.65cm} | p{0.65cm} | p{0.65cm} | p{0.65cm} | p{0.65cm} | p{0.65cm} |}
    \cline{3-20}
     \multicolumn{1}{c}{} &\multicolumn{1}{c}{} & \multicolumn{6}{|c|| }{CBDL} &  \multicolumn{6}{|c|}{Baseline} 
     &  \multicolumn{6}{|c|}{BNN}
     \\
    \cline{3-20}
     \multicolumn{1}{c}{} & \multicolumn{1}{c}{} & \multicolumn{3}{|c|}{Epistemic} & \multicolumn{3}{|c||}{Aleatoric} &\multicolumn{3}{|c|}{Epistemic} & \multicolumn{3}{|c|}{Aleatoric}
     &\multicolumn{3}{|c|}{Epistemic} & \multicolumn{3}{|c|}{Aleatoric}
     \\
    \cline{3-20}
    \multicolumn{1}{c}{} & \multicolumn{1}{c|}{} & Low & Med & High & Low & Med & High & Low & Med & High & Low & Med & High &
    Low & Med & High &
    Low & Med & High
    \\
\hline

\multirow{15}{1.5cm}{Fashion MNIST-C} & brightness & 0.199 & \textbf{0.212} & 0.205 & \textbf{0.108} & 0.106 & 0.094 & 0.021 & \textbf{0.024} & 0.022 & \textbf{0.051} & 0.048 & \textbf{0.051} 
 & 0.568 & 0.519 & 0.475 & 0.261 & 0.294 & 0.262
\\ 
 \cline{2-20}
& canny edges & \textbf{0.18} & 0.175 & 0.177 & 0.157 & \textbf{0.164} & 0.16 & \textbf{0.021} & 0.02 & 0.02 & \textbf{0.043} & 0.042 & \textbf{0.043} 
& 0.729 & 0.734 & 0.753 & 0.216 & 0.215 & 0.217
\\ 
 \cline{2-20}
& dotted line & 0.167 & \textbf{0.168} & \textbf{0.168} & \textbf{0.051} & \textbf{0.051} & \textbf{0.051} & 0.012 & \textbf{0.013} & \textbf{0.013} & \textbf{0.067} & 0.066 & 0.066 
& 0.336 & 0.349 & 0.36 & 0.127 & 0.131 & 0.129
\\ 
 \cline{2-20}
& fog & \textbf{0.19} & 0.185 & 0.187 & 0.128 & \textbf{0.132} & 0.129 & \textbf{0.021} & 0.02 & \textbf{0.021} & \textbf{0.046} & 0.045 & \textbf{0.046} 
& 0.489 & 0.463 & 0.448 & 0.264 & 0.25 & 0.237
\\ 
 \cline{2-20}
& glass blur & 0.189 & 0.189 & \textbf{0.193} & 0.114 & 0.127 & \textbf{0.133} & 0.017 & 0.019 & \textbf{0.022} & \textbf{0.052} & 0.048 & 0.045 
& 0.517 & 0.641 & 0.777 & 0.209 & 0.274 & 0.354
\\ 
 \cline{2-20}
& impulse noise & 0.169 & 0.196 & \textbf{0.202} & 0.052 & 0.09 & \textbf{0.111} & 0.013 & 0.019 & \textbf{0.02} & \textbf{0.066} & 0.055 & 0.051 
& 0.304 & 0.498 & 0.675 & 0.118 & 0.203 & 0.276
\\ 
 \cline{2-20}
& motion blur & \textbf{0.182} & 0.171 & 0.163 & 0.106 & 0.157 & \textbf{0.174} & 0.017 & \textbf{0.02} & 0.019 & \textbf{0.053} & 0.041 & 0.038 
& 0.505 & 0.616 & 0.641 & 0.216 & 0.296 & 0.285
\\ 
 \cline{2-20}
& rotate & \textbf{0.185} & 0.173 & 0.162 & 0.074 & 0.164 & \textbf{0.184} & 0.016 & \textbf{0.02} & \textbf{0.02} & \textbf{0.06} & 0.037 & 0.035 
& 0.392 & 0.754 & 0.797 & 0.162 & 0.338 & 0.341
\\ 
 \cline{2-20}
& scale & 0.154 & \textbf{0.172} & 0.127 & 0.054 & 0.123 & \textbf{0.223} & 0.01 & \textbf{0.015} & 0.014 & \textbf{0.067} & 0.05 & 0.028 
& 0.302 & 0.657 & 0.917 & 0.11 & 0.209 & 0.398
\\ 
 \cline{2-20}
& shear & 0.172 & 0.177 & \textbf{0.182} & 0.063 & 0.13 & \textbf{0.144} & 0.014 & 0.019 & \textbf{0.02} & \textbf{0.063} & 0.045 & 0.039 
 & 0.333 & 0.609 & 0.719 & 0.134 & 0.257 & 0.279
\\ 
 \cline{2-20}
& shot noise & 0.16 & 0.178 & \textbf{0.197} & 0.046 & 0.059 & \textbf{0.088} & 0.01 & 0.013 & \textbf{0.017} & \textbf{0.068} & 0.064 & 0.056 
& 0.268 & 0.344 & 0.49 & 0.097 & 0.121 & 0.17
\\ 
 \cline{2-20}
& spatter & 0.15 & 0.189 & \textbf{0.191} & 0.044 & \textbf{0.082} & 0.074 & 0.01 & \textbf{0.018} & 0.017 & \textbf{0.07} & 0.057 & 0.059 
& 0.279 & 0.479 & 0.459 & 0.112 & 0.21 & 0.19
\\ 
 \cline{2-20}
& stripe & \textbf{0.21} & 0.208 & 0.208 & 0.125 & \textbf{0.126} & 0.125 & \textbf{0.025} & \textbf{0.025} & \textbf{0.025} & \textbf{0.041} & \textbf{0.041} & \textbf{0.041} 
& 0.686 & 0.698 & 0.697 & 0.349 & 0.352 & 0.341
\\ 
 \cline{2-20}
& translate & 0.153 & \textbf{0.189} & 0.17 & 0.046 & 0.094 & \textbf{0.158} & 0.01 & 0.018 & \textbf{0.019} & \textbf{0.069} & 0.054 & 0.041 
& 0.27 & 0.499 & 0.721 & 0.102 & 0.21 & 0.317
\\ 
 \cline{2-20}
& zigzag & 0.187 & \textbf{0.19} & 0.189 & 0.065 & 0.065 & \textbf{0.066} & \textbf{0.016} & \textbf{0.016} & \textbf{0.016} & \textbf{0.061} & \textbf{0.061} & \textbf{0.061} 
& 0.39 & 0.404 & 0.411 & 0.151 & 0.154 & 0.152
\\ 
 \hline\noalign{\vspace{1ex}}
 \cline{3-20}
  \multicolumn{1}{c}{} & \multicolumn{1}{c}{} & \multicolumn{3}{|c|}{Epistemic} & \multicolumn{3}{|c||}{Aleatoric} & \multicolumn{3}{|c|}{Epistemic} & \multicolumn{3}{|c||}{Aleatoric} 
  & \multicolumn{3}{|c|}{Epistemic} & \multicolumn{3}{|c|}{Aleatoric}
  \\
  \hline

  \multirow{3}{2.5cm}{} & In Dist-Clean 
& \multicolumn{3}{|c|}{0.121} & \multicolumn{3}{|c||}{0.029}  
& \multicolumn{3}{|c|}{0.007} & \multicolumn{3}{|c||}{0.075} 
 & \multicolumn{3}{|c|}{ 0.197} & \multicolumn{3}{|c|}{0.351}  
\\ 
 \cline{2-20}

\multirow{3}{2.5cm}{Dataset} & CIFAR 
& \multicolumn{3}{|c|}{0.205 (1.705)} & \multicolumn{3}{|c||}{0.104 (3.561)}  
& \multicolumn{3}{|c|}{0.022 (3.178)} & \multicolumn{3}{|c||}{0.050 (0.667)} 
& \multicolumn{3}{|c|}{0.408 (2.073)} & \multicolumn{3}{|c|}{0.221 (0.629)}
\\ 
 \cline{2-20}
 
& MNIST 
& \multicolumn{3}{|c|}{0.165 (1.367)} & \multicolumn{3}{|c||}{0.182 (6.192)}  
& \multicolumn{3}{|c|}{0.021 (3.069)} & \multicolumn{3}{|c||}{0.033 (0.443)} 
& \multicolumn{3}{|c|}{0.909 (4.618)} & \multicolumn{3}{|c|}{0.328 (0.932)}
\\ 
 \cline{2-20}
 
& SVHN 
& \multicolumn{3}{|c|}{0.218 (1.81)} & \multicolumn{3}{|c||}{0.109 (3.742)}  
& \multicolumn{3}{|c|}{0.026 (3.827)} & \multicolumn{3}{|c||}{0.046 (0.618)} 
& \multicolumn{3}{|c|}{0.598 (3.036)} & \multicolumn{3}{|c|}{0.305 (0.867)}
\\ 
 \hline
\end{tabular}
\end{center}
\caption{\small The 4 BNNs trained have the following accuracies : $93, 92, 92, 92$ in percentage terms and rounded to the nearest whole number. These are the probabilities of the most likely label to be the correct one according to the 4 different BNNs. 
For the single BNN, the network with the highest accuracy was selected.
For different categories of corruptions, increasing severity leads to higher levels of aleatoric uncertainty for IBNNs. Which is high when exposed to completely unseen datasets as well. The same is not true for EBNN.}
\label{tab:fashion-results}
\end{table*}

\clearpage
\section{Evaluation Summary by Noise}
\subsection{CIFAR-10 Results}
\begin{table*}[h]
\scriptsize
\centering

\begin{tabular}{ | p{0.8cm} | p{2cm} | p{2cm} | p{2cm} | p{2cm}|}

\cline{3-5}
\multicolumn{1}{c}{} & \multicolumn{1}{c|}{} & BNN & Ensemble & CBDL 
\\
    
\hline
\multirow{18}{1cm}{CIFAR-10 } 
& gaussian noise & 
0.613 $\pm$ 0.116 & 0.703 $\pm$ 0.116 & \textbf{0.715 $\pm$ 0.168} \\ 
\cline{2-5} 
& shot noise & 
0.669 $\pm$ 0.077 & \textbf{0.794 $\pm$ 0.077} & 0.793 $\pm$ 0.144 \\ 
\cline{2-5} 
& impulse noise & 
0.66 $\pm$ 0.132 & 0.776 $\pm$ 0.132 & \textbf{0.791 $\pm$ 0.191} \\ 
\cline{2-5} 
& speckle noise & 
0.683 $\pm$ 0.057 & 0.829 $\pm$ 0.057 & \textbf{0.839 $\pm$ 0.115} \\ 
\cline{2-5} 
& gaussian blur & 
0.741 $\pm$ 0.073 & \textbf{0.836 $\pm$ 0.073} & 0.825 $\pm$ 0.191 \\ 
\cline{2-5} 
& defocus blur & 
0.756 $\pm$ 0.044 & \textbf{0.919 $\pm$ 0.044} & 0.912 $\pm$ 0.105 \\ 
\cline{2-5} 
& motion blur & 
0.813 $\pm$ 0.036 & \textbf{0.902 $\pm$ 0.036} & 0.889 $\pm$ 0.067 \\ 
\cline{2-5} 
& zoom blur & 
0.78 $\pm$ 0.029 & \textbf{0.899 $\pm$ 0.029} & 0.888 $\pm$ 0.066 \\ 
\cline{2-5} 
& fog & 
0.753 $\pm$ 0.034 & \textbf{0.946 $\pm$ 0.034} & 0.94 $\pm$ 0.07 \\ 
\cline{2-5} 
& frost & 
0.745 $\pm$ 0.025 & \textbf{0.884 $\pm$ 0.025} & 0.884 $\pm$ 0.062 \\ 
\cline{2-5} 
& snow & 
0.763 $\pm$ 0.039 & 0.927 $\pm$ 0.039 & \textbf{0.929 $\pm$ 0.044} \\ 
\cline{2-5} 
& spatter & 
0.773 $\pm$ 0.036 & 0.955 $\pm$ 0.036 & \textbf{0.956 $\pm$ 0.019} \\ 
\cline{2-5} 
& contrast & 
0.665 $\pm$ 0.155 & \textbf{0.785 $\pm$ 0.155} & 0.782 $\pm$ 0.234 \\ 
\cline{2-5} 
& brightness & 
0.737 $\pm$ 0.025 & \textbf{0.991 $\pm$ 0.025} & 0.99 $\pm$ 0.004 \\ 
\cline{2-5} 
& saturate & 
0.751 $\pm$ 0.019 & \textbf{0.984 $\pm$ 0.019} & 0.983 $\pm$ 0.009 \\ 
\cline{2-5} 
& jpeg  & 
0.775 $\pm$ 0.011 & \textbf{0.94 $\pm$ 0.011} & 0.937 $\pm$ 0.023 \\ 
\cline{2-5} 
& pixelate & 
0.685 $\pm$ 0.097 & 0.848 $\pm$ 0.097 & \textbf{0.855 $\pm$ 0.132} \\ 
\cline{2-5} 
& elastic  & 
0.804 $\pm$ 0.021 & \textbf{0.952 $\pm$ 0.021} & 0.95 $\pm$ 0.024 \\ 
\cline{2-5}

 \hline
\end{tabular}
\caption{\small Results for CIFAR-10C, organized by noise. For different categories of corruptions, increasing severity leads to higher levels of aleatoric uncertainty for CBDL. When exposed to completely unseen datasets, this reaches its peak. In contrast, the baseline has a reverse trend.}
\label{tab:cifar-10-by-noise}
\end{table*}

\subsection{MNIST Results}
\begin{table*}[h]
\scriptsize
\centering

\begin{tabular}{ | p{0.8cm} | p{2cm} | p{2cm} | p{2cm} | p{2cm}|}

\cline{3-5}
\multicolumn{1}{c}{} & \multicolumn{1}{c|}{} & BNN & Ensemble & CBDL 
\\
    
\hline
\multirow{18}{1cm}{MNIST}
& shot noise & 
0.597 $\pm$ 0.07 & 0.999 $\pm$ 0.07 & \textbf{0.999 $\pm$ 0.001} \\ 
\cline{2-5} 
& impulse noise & 
0.56 $\pm$ 0.296 & \textbf{0.716 $\pm$ 0.296} & 0.7 $\pm$ 0.396 \\ 
\cline{2-5} 
& glass blur & 
0.365 $\pm$ 0.217 & 0.406 $\pm$ 0.217 & \textbf{0.41 $\pm$ 0.271} \\ 
\cline{2-5} 
& motion blur & 
0.498 $\pm$ 0.317 & 0.65 $\pm$ 0.317 & \textbf{0.654 $\pm$ 0.394} \\ 
\cline{2-5} 
& fog & 
0.343 $\pm$ 0.269 & 0.364 $\pm$ 0.269 & \textbf{0.579 $\pm$ 0.282} \\ 
\cline{2-5} 
& spatter & 
0.714 $\pm$ 0.089 & 0.991 $\pm$ 0.089 & \textbf{0.991 $\pm$ 0.007} \\ 
\cline{2-5} 
& brightness & 
0.63 $\pm$ 0.323 & 0.803 $\pm$ 0.323 & \textbf{0.904 $\pm$ 0.113} \\ 
\cline{2-5} 
& shear & 
0.663 $\pm$ 0.121 & \textbf{0.916 $\pm$ 0.121} & 0.915 $\pm$ 0.168 \\ 
\cline{2-5} 
& rotate & 
0.64 $\pm$ 0.066 & 0.990 $\pm$ 0.066 & \textbf{0.999 $\pm$ 0.0} \\ 
\cline{2-5} 
& scale & 
0.618 $\pm$ 0.32 & \textbf{0.755 $\pm$ 0.32} & 0.749 $\pm$ 0.382 \\ 
\cline{2-5} 
& translate & 
0.694 $\pm$ 0.142 & 0.992 $\pm$ 0.142 & \textbf{0.987 $\pm$ 0.023} \\ 
\cline{2-5} 
& dotted line & 
0.728 $\pm$ 0.019 & \textbf{0.995 $\pm$ 0.019} & 0.994 $\pm$ 0.001 \\ 
\cline{2-5} 
& zigzag & 
0.777 $\pm$ 0.023 & \textbf{0.931 $\pm$ 0.023} & 0.927 $\pm$ 0.014 \\ 
\cline{2-5} 
& stripe & 
\textbf{0.288 $\pm$ 0.279} & 0.06 $\pm$ 0.279 & 0.106 $\pm$ 0.08 \\ 
\cline{2-5} 
& canny edges & 
0.84 $\pm$ 0.043 & 0.849 $\pm$ 0.043 & \textbf{0.911 $\pm$ 0.036} \\ 
\cline{2-5}

 \hline
\end{tabular}
\caption{\small Results for MNIST-C, organized by noise. For different categories of corruptions, increasing severity leads to higher levels of aleatoric uncertainty for CBDL. When exposed to completely unseen datasets, this reaches its peak. In contrast, the baseline has a reverse trend.}
\label{tab:mnist-by-noise}
\end{table*}


\newpage 
\subsection{Fashion MNIST Results}
\begin{table*}[h]
\scriptsize
\centering

\begin{tabular}{ | p{0.8cm} | p{2cm} | p{2cm} | p{2cm} | p{2cm}|}

\cline{3-5}
\multicolumn{1}{c}{} & \multicolumn{1}{c|}{} & BNN & Ensemble & CBDL 
\\
    
\hline
\multirow{18}{1cm}{Fashion MNIST } 
& shot noise & 
0.538 $\pm$ 0.134 & \textbf{0.636 $\pm$ 0.134} & 0.624 $\pm$ 0.161 \\ 
\cline{2-5} 
& impulse noise & 
0.411 $\pm$ 0.251 & 0.511 $\pm$ 0.251 & \textbf{0.519 $\pm$ 0.344 }\\ 
\cline{2-5} 
& glass blur & 
\textbf{0.407 $\pm$ 0.164} & 0.374 $\pm$ 0.164 & 0.385 $\pm$ 0.227 \\ 
\cline{2-5} 
& motion blur & 
0.66 $\pm$ 0.061 & 0.928 $\pm$ 0.061 & \textbf{0.933 $\pm$ 0.034 }\\ 
\cline{2-5} 
& fog & 
0.045 $\pm$ 0.031 & 0.122 $\pm$ 0.031 & \textbf{0.312 $\pm$ 0.094} \\ 
\cline{2-5} 
& spatter & 
0.665 $\pm$ 0.043 & \textbf{0.937 $\pm$ 0.043} & 0.936 $\pm$ 0.019 \\ 
\cline{2-5} 
& brightness & 
0.095 $\pm$ 0.117 & 0.273 $\pm$ 0.117 & \textbf{0.371 $\pm$ 0.201} \\ 
\cline{2-5} 
& shear & 
0.593 $\pm$ 0.143 & 0.691 $\pm$ 0.143 & \textbf{0.719 $\pm$ 0.165} \\ 
\cline{2-5} 
& rotate & 
0.45 $\pm$ 0.277 & \textbf{0.488 $\pm$ 0.277} & 0.451 $\pm$ 0.347 \\ 
\cline{2-5} 
& scale & 
\textbf{0.437 $\pm$ 0.259} & 0.38 $\pm$ 0.259 & 0.396 $\pm$ 0.363 \\ 
\cline{2-5} 
& translate & 
0.586 $\pm$ 0.147 & 0.716 $\pm$ 0.147 & \textbf{0.752 $\pm$ 0.178} \\ 
\cline{2-5} 
& dotted line & 
0.661 $\pm$ 0.024 & 0.945 $\pm$ 0.024 & \textbf{0.947 $\pm$ 0.01} \\ 
\cline{2-5} 
& zigzag & 
0.669 $\pm$ 0.019 & 0.915 $\pm$ 0.019 & \textbf{0.919 $\pm$ 0.015} \\ 
\cline{2-5} 
& stripe & 
\textbf{0.141 $\pm$ 0.046} & 0.016 $\pm$ 0.046 &0.071 $\pm$ 0.049 \\ 
\cline{2-5} 
& canny edges & 
\textbf{0.143 $\pm$ 0.086}& 0.035 $\pm$ 0.086 & 0.062 $\pm$ 0.023 \\ 
\cline{2-5}

 \hline
\end{tabular}
\caption{\small Results for Fashion MNIST-C, organized by noise. For different categories of corruptions, increasing severity leads to higher levels of aleatoric uncertainty for CBDL. When exposed to completely unseen datasets, this reaches its peak. In contrast, the baseline has a reverse trend.}
\label{tab:fmnist-by-noise}
\end{table*}

\subsection{SVHN Results}
\begin{table*}[h]
\scriptsize
\centering

\begin{tabular}{ | p{0.8cm} | p{2cm} | p{2cm} | p{2cm} | p{2cm}|}

\cline{3-5}
\multicolumn{1}{c}{} & \multicolumn{1}{c|}{} & BNN & Ensemble & CBDL 
\\
    
\hline
\multirow{18}{1cm}{SVHN} 
& Gaussian Noise & 
0.475 $\pm$ 0.304 & 0.544 $\pm$ 0.304 & \textbf{0.549 $\pm$ 0.339} \\ 
\cline{2-5} 
& Shot Noise & 
0.447 $\pm$ 0.298 & 0.502 $\pm$ 0.298 & \textbf{0.504 $\pm$ 0.332 }\\ 
\cline{2-5} 
& Impulse Noise & 
0.478 $\pm$ 0.304 & 0.522 $\pm$ 0.304 & \textbf{0.532 $\pm$ 0.315} \\ 
\cline{2-5} 
& Speckle Noise & 
0.586 $\pm$ 0.211 & 0.69 $\pm$ 0.211 & \textbf{0.695 $\pm$ 0.28} \\ 
\cline{2-5} 
& Gaussian Blur & 
\textbf{0.634 $\pm$ 0.325} & 0.627 $\pm$ 0.325 & 0.62 $\pm$ 0.39 \\ 
\cline{2-5} 
& Defocus Blur & 
\textbf{0.374 $\pm$ 0.361} & 0.363 $\pm$ 0.361 & 0.351 $\pm$ 0.405 \\ 
\cline{2-5} 
& Motion Blur & 
0.765 $\pm$ 0.094 & \textbf{0.803 $\pm$ 0.094} & 0.795 $\pm$ 0.133 \\ 
\cline{2-5} 
& Zoom Blur & 
0.675 $\pm$ 0.02 & 0.998 $\pm$ 0.02 & \textbf{0.998 $\pm$ 0.002} \\ 
\cline{2-5} 
& Fog & 
\textbf{0.432 $\pm$ 0.264} & 0.363 $\pm$ 0.264 & 0.34 $\pm$ 0.246 \\ 
\cline{2-5} 
& Frost & 
0.79 $\pm$ 0.032 & \textbf{0.952 $\pm$ 0.032} & 0.945 $\pm$ 0.035 \\ 
\cline{2-5} 
& Snow & 
0.738 $\pm$ 0.034 & \textbf{0.779 $\pm$ 0.034} & 0.775 $\pm$ 0.08 \\ 
\cline{2-5} 
& Spatter & 
0.714 $\pm$ 0.093 & \textbf{0.807 $\pm$ 0.093} & 0.798 $\pm$ 0.188 \\ 
\cline{2-5} 
& Contrast & 
\textbf{0.852 $\pm$ 0.115} & 0.781 $\pm$ 0.115 & 0.803 $\pm$ 0.284 \\ 
\cline{2-5} 
& Brightness & 
0.715 $\pm$ 0.067 & \textbf{0.992 $\pm$ 0.067} & 0.989 $\pm$ 0.016 \\ 
\cline{2-5} 
& Saturate & 
0.707 $\pm$ 0.078 & \textbf{0.93 $\pm$ 0.078} & 0.916 $\pm$ 0.14 \\ 
\cline{2-5} 
& JPEG & 
0.744 $\pm$ 0.051 & \textbf{0.951 $\pm$ 0.051} & 0.949 $\pm$ 0.073 \\ 
\cline{2-5} 
& Pixelate & 
0.577 $\pm$ 0.275 & 0.721 $\pm$ 0.275 & \textbf{0.729 $\pm$ 0.35} \\ 
\cline{2-5} 
& Elastic & 
\textbf{0.274 $\pm$ 0.23} & 0.213 $\pm$ 0.23 & 0.227 $\pm$ 0.208 \\ 
\cline{2-5}

 \hline
\end{tabular}
\caption{\small Results for SVHN-C, organized by noise. For different categories of corruptions, increasing severity leads to higher levels of aleatoric uncertainty for CBDL. When exposed to completely unseen datasets, this reaches its peak. In contrast, the baseline has a reverse trend.}
\label{tab:svhn-by-noise}
\end{table*}

\clearpage
\section{Accuracy vs Rejection Rate - All Results}
\label{sec:acc_vs_rej_all}
\subsection{CIFAR10-C}
\begin{figure}[h] 
\centering 
\includegraphics[width=1.0\textwidth]{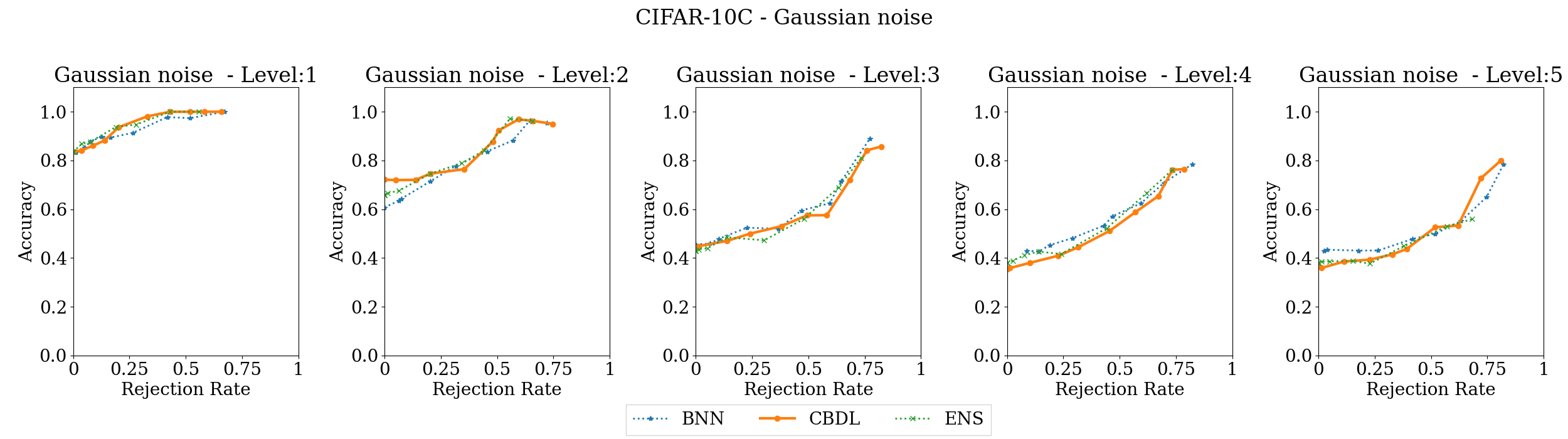}
\caption{\small Accuracy vs Rejection Rate - CIFAR10C gaussian noise.}
\label{fig:acc_vs_rej-CIFAR10C gaussian_noise}
\end{figure}

\begin{figure}[h] 
\centering 
\includegraphics[width=1.0\textwidth]{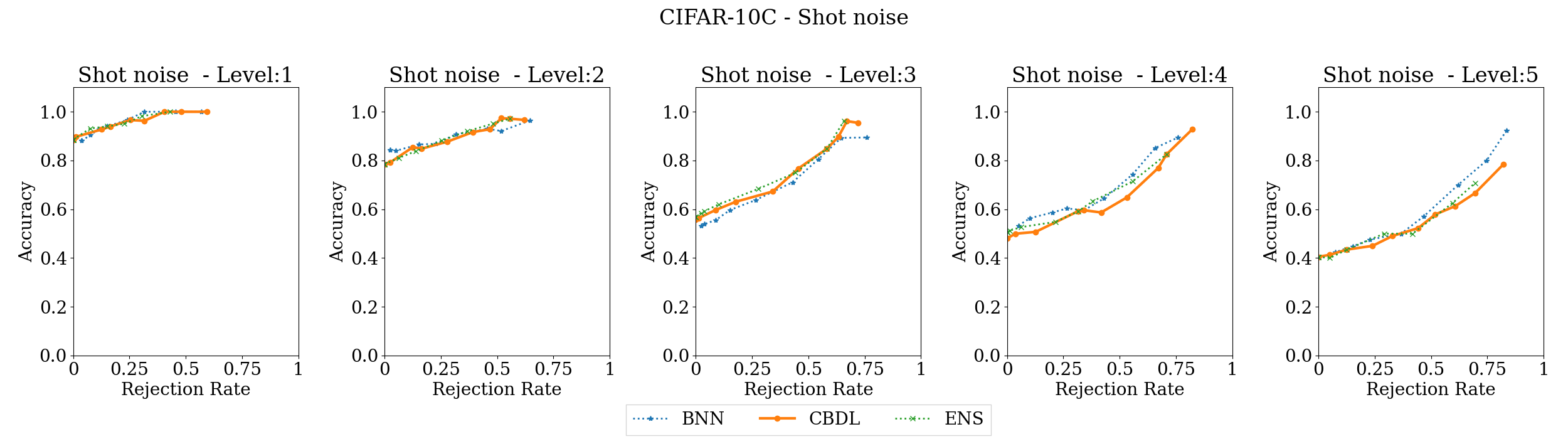}
\caption{\small Accuracy vs Rejection Rate - CIFAR10C shot noise.}
\label{fig:acc_vs_rej-CIFAR10C shot_noise}
\end{figure}

\begin{figure}[h] 
\centering 
\includegraphics[width=1.0\textwidth]{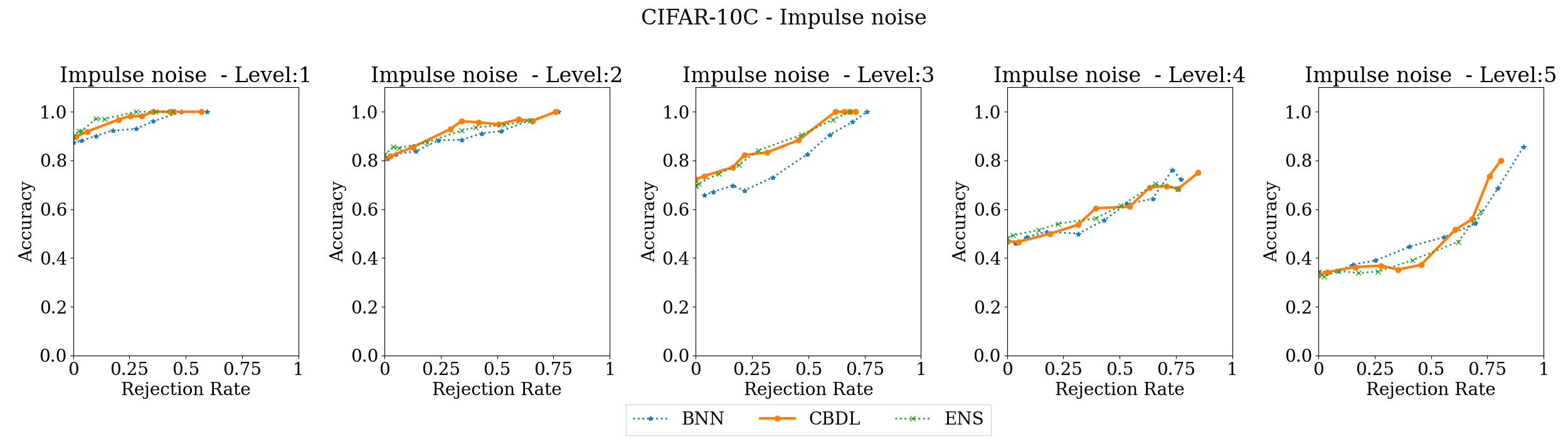}
\caption{\small Accuracy vs Rejection Rate - CIFAR10C impulse noise.}
\label{fig:acc_vs_rej-CIFAR10C impulse_noise}
\end{figure}

\begin{figure}[h] 
\centering 
\includegraphics[width=1.0\textwidth]{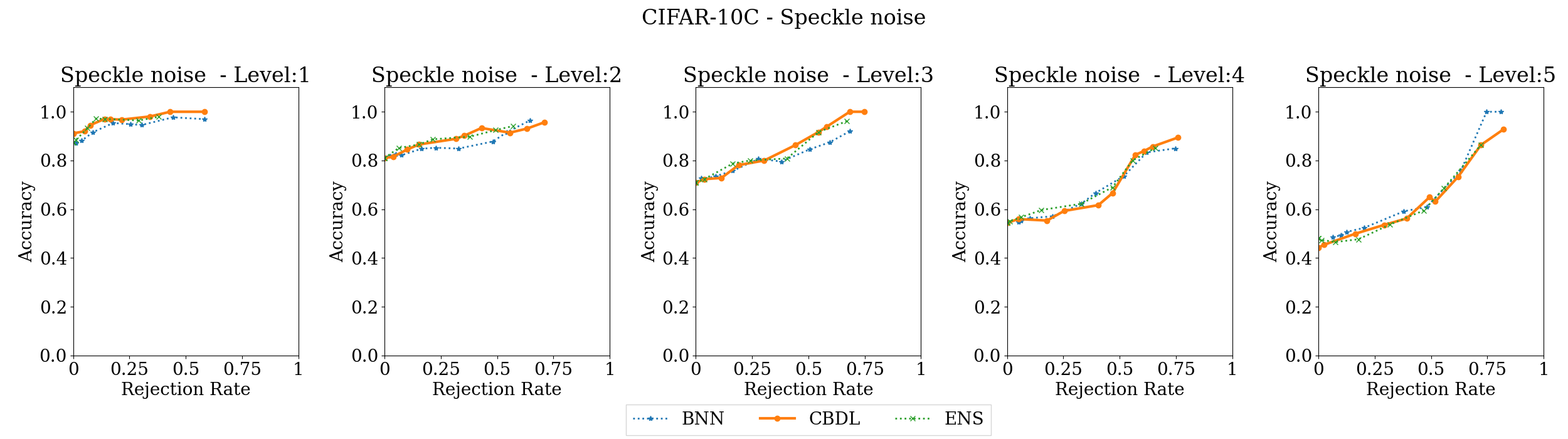}
\caption{\small Accuracy vs Rejection Rate - CIFAR10C speckle noise.}
\label{fig:acc_vs_rej-CIFAR10C speckle_noise}
\end{figure}

\begin{figure}[h] 
\centering 
\includegraphics[width=1.0\textwidth]{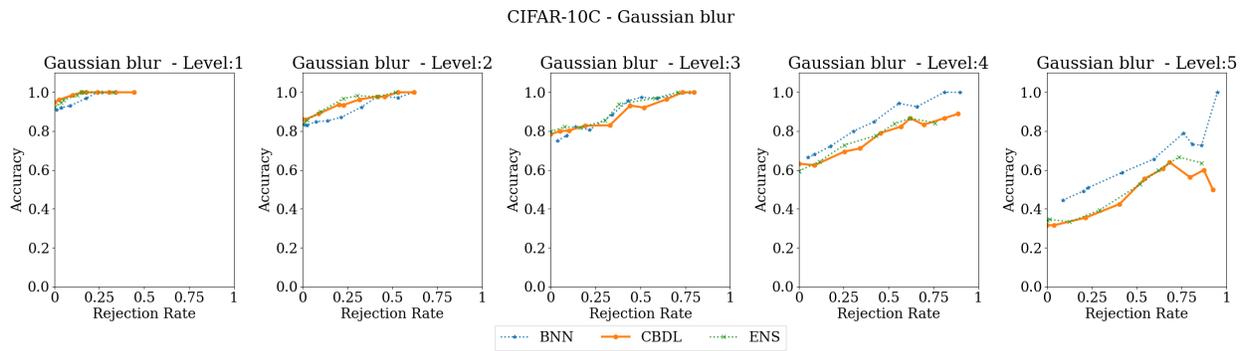}
\caption{\small Accuracy vs Rejection Rate - CIFAR10C gaussian blur.}
\label{fig:acc_vs_rej-CIFAR10C gaussian_blur}
\end{figure}

\begin{figure}[h] 
\centering 
\includegraphics[width=1.0\textwidth]{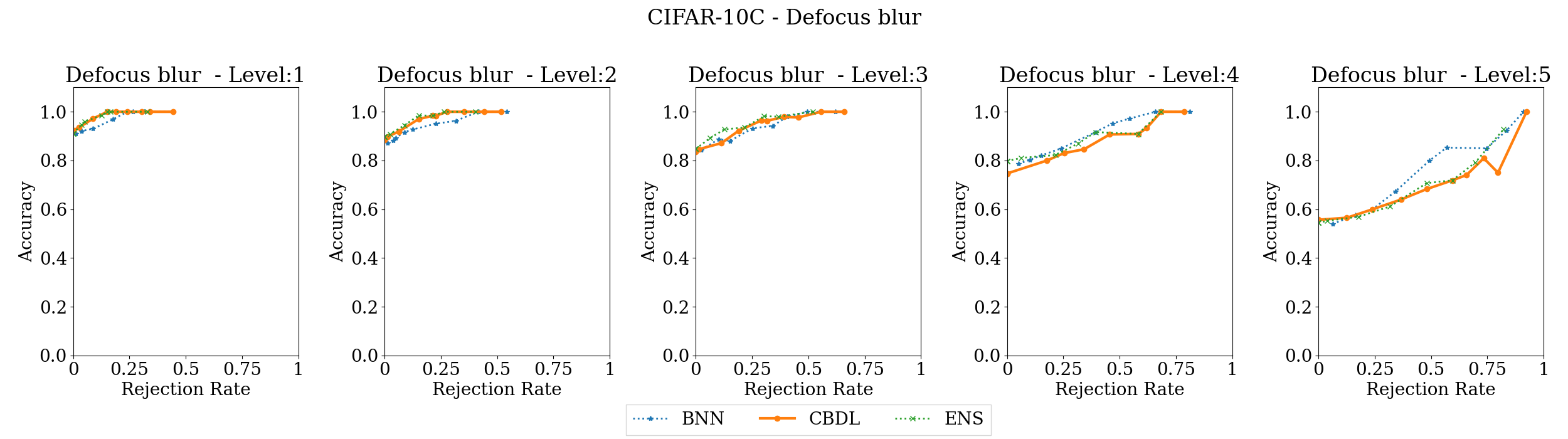}
\caption{\small Accuracy vs Rejection Rate - CIFAR10C defocus blur.}
\label{fig:acc_vs_rej-CIFAR10C defocus_blur}
\end{figure}

\begin{figure}[h] 
\centering 
\includegraphics[width=1.0\textwidth]{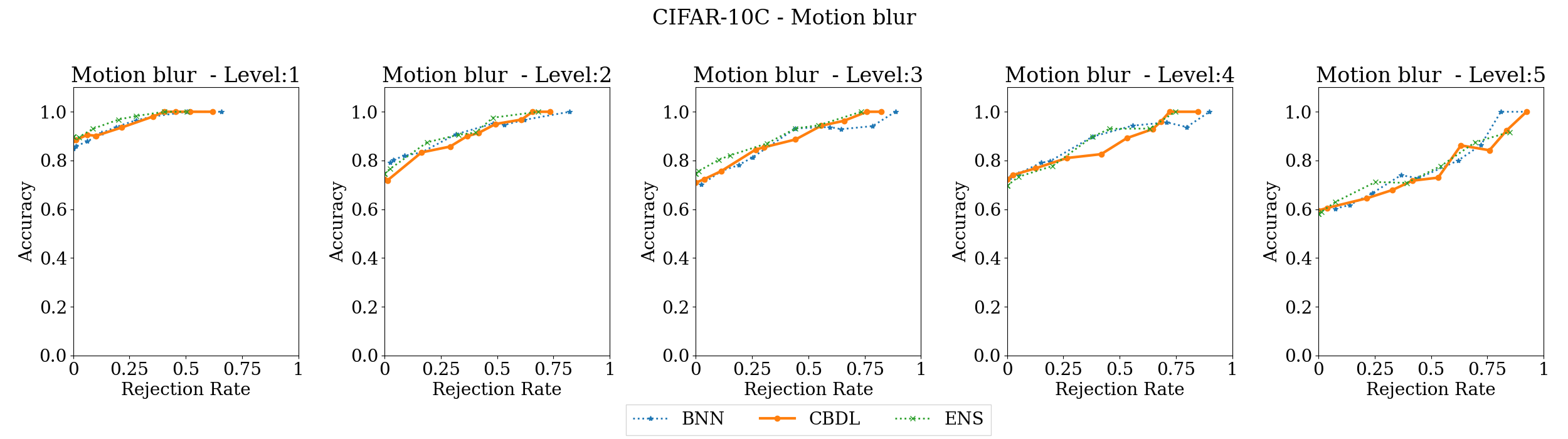}
\caption{\small Accuracy vs Rejection Rate - CIFAR10C motion blur.}
\label{fig:acc_vs_rej-CIFAR10C motion_blur}
\end{figure}

\begin{figure}[h] 
\centering 
\includegraphics[width=1.0\textwidth]{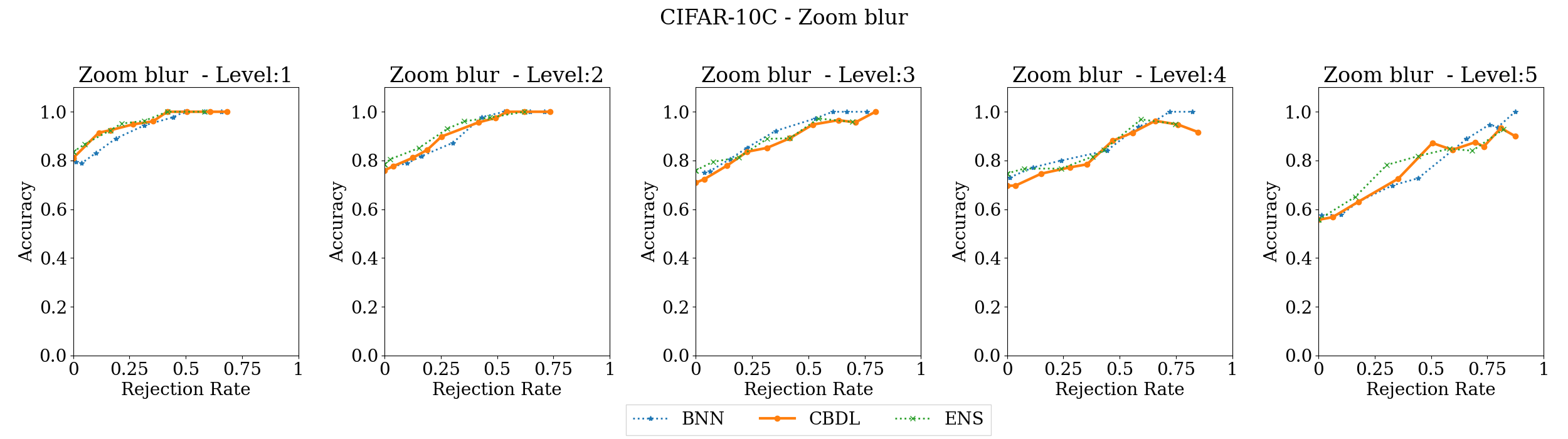}
\caption{\small Accuracy vs Rejection Rate - CIFAR10C zoom blur.}
\label{fig:acc_vs_rej-CIFAR10C zoom_blur}
\end{figure}

\begin{figure}[h] 
\centering 
\includegraphics[width=1.0\textwidth]{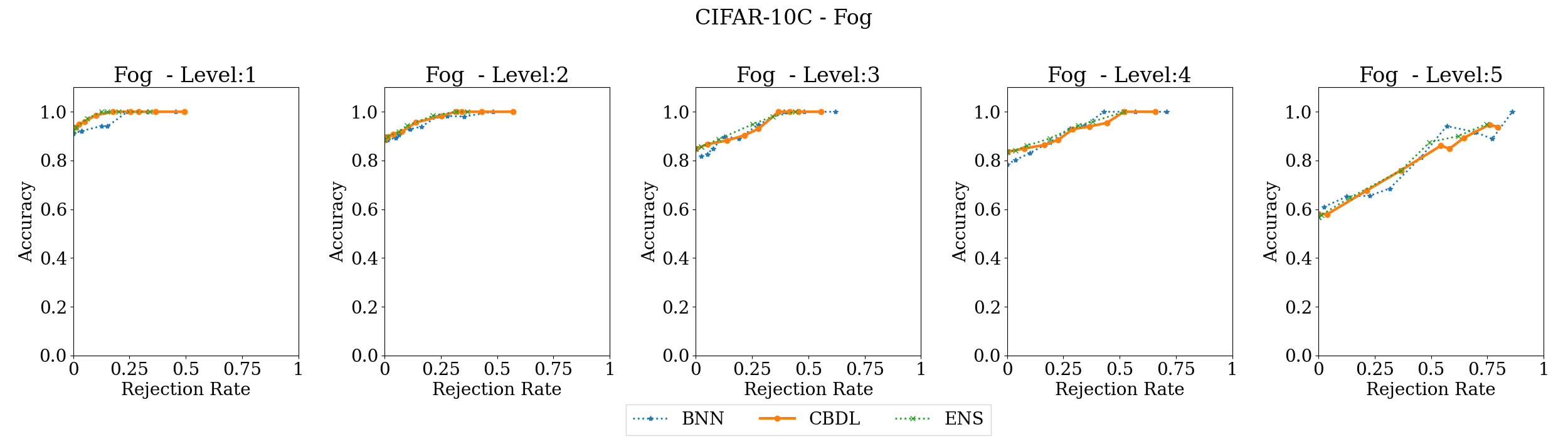}
\caption{\small Accuracy vs Rejection Rate - CIFAR10C fog.}
\label{fig:acc_vs_rej-CIFAR10C fog}
\end{figure}

\begin{figure}[h] 
\centering 
\includegraphics[width=1.0\textwidth]{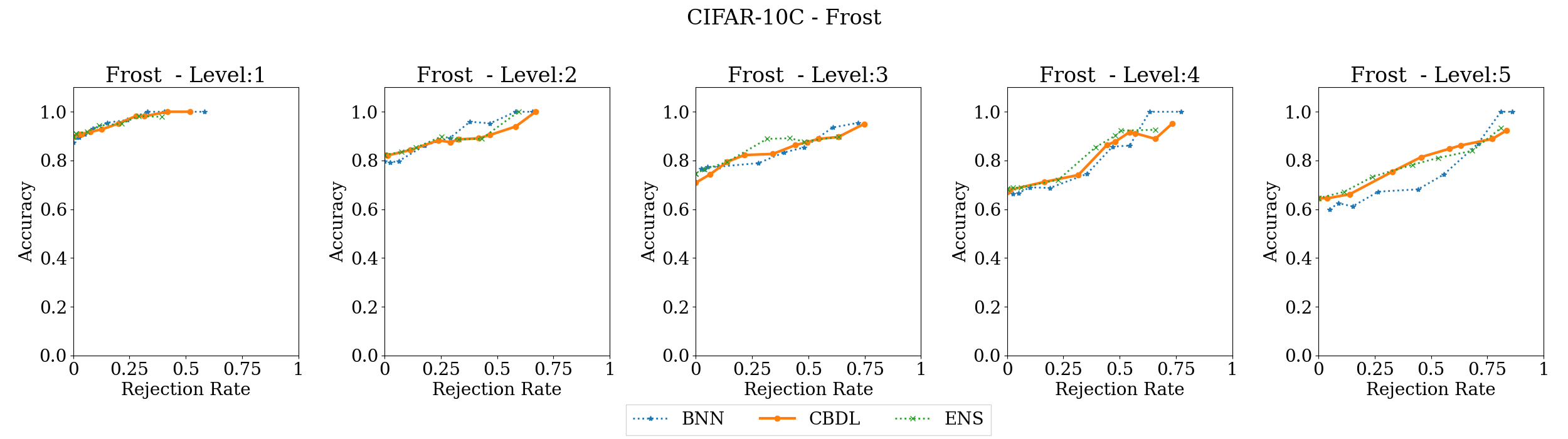}
\caption{\small Accuracy vs Rejection Rate - CIFAR10C frost.}
\label{fig:acc_vs_rej-CIFAR10C frost}
\end{figure}

\begin{figure}[h] 
\centering 
\includegraphics[width=1.0\textwidth]{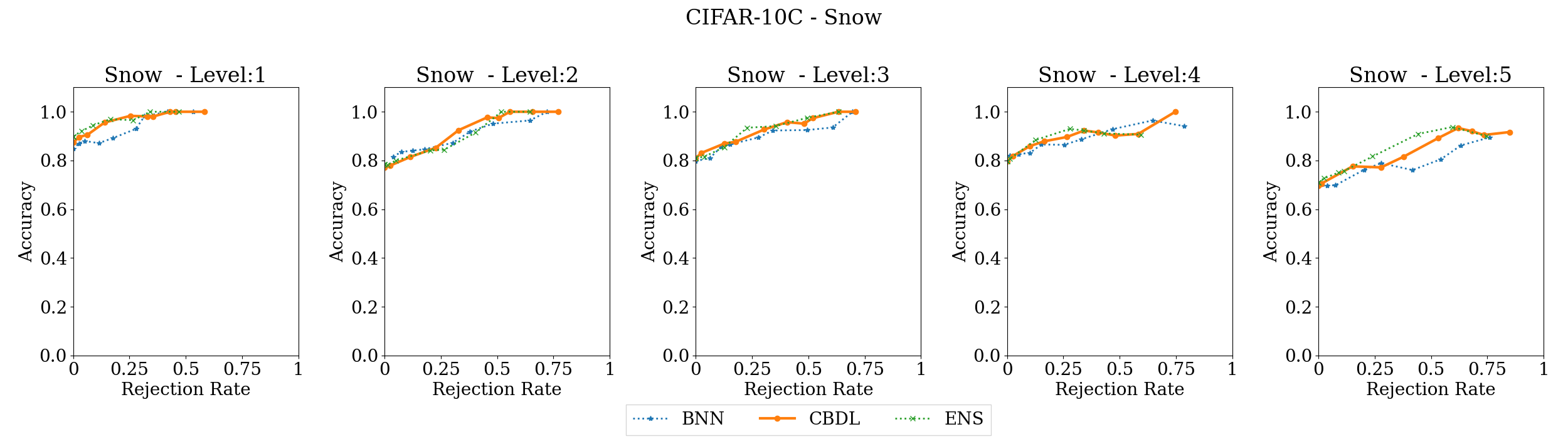}
\caption{\small Accuracy vs Rejection Rate - CIFAR10C snow.}
\label{fig:acc_vs_rej-CIFAR10C snow}
\end{figure}

\begin{figure}[h] 
\centering 
\includegraphics[width=1.0\textwidth]{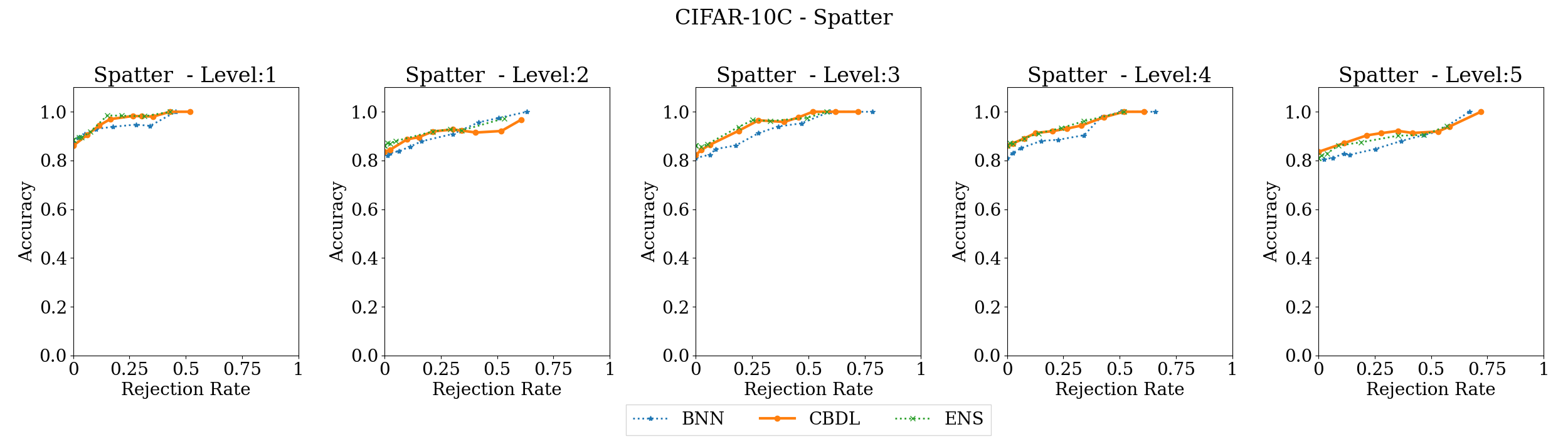}
\caption{\small Accuracy vs Rejection Rate - CIFAR10C spatter.}
\label{fig:acc_vs_rej-CIFAR10C spatter}
\end{figure}

\begin{figure}[h] 
\centering 
\includegraphics[width=1.0\textwidth]{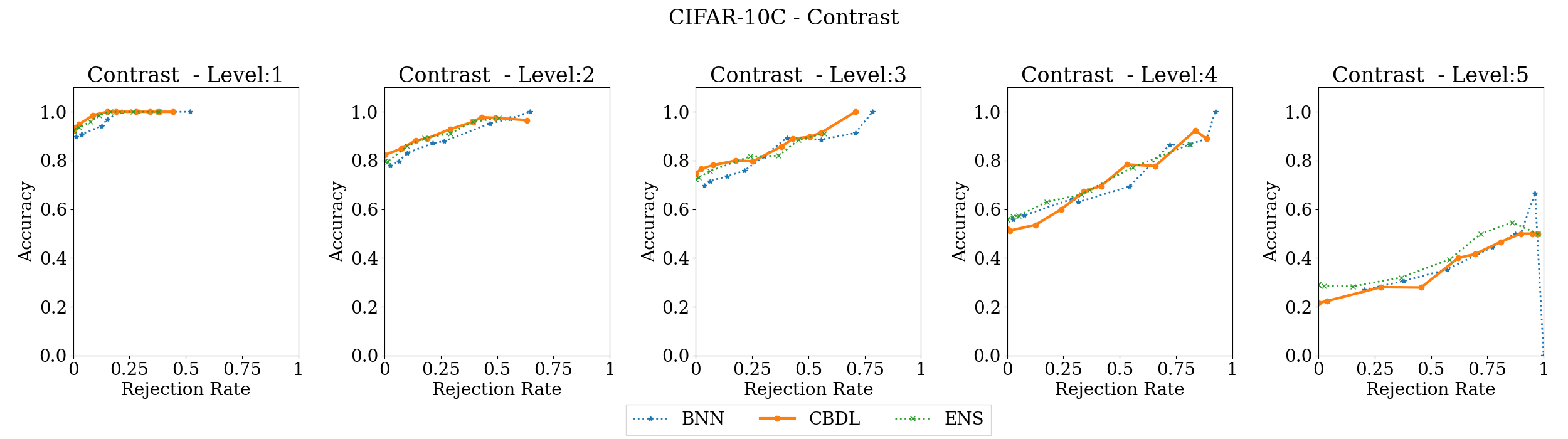}
\caption{\small Accuracy vs Rejection Rate - CIFAR10C contrast.}
\label{fig:acc_vs_rej-CIFAR10C contrast}
\end{figure}

\begin{figure}[h] 
\centering 
\includegraphics[width=1.0\textwidth]{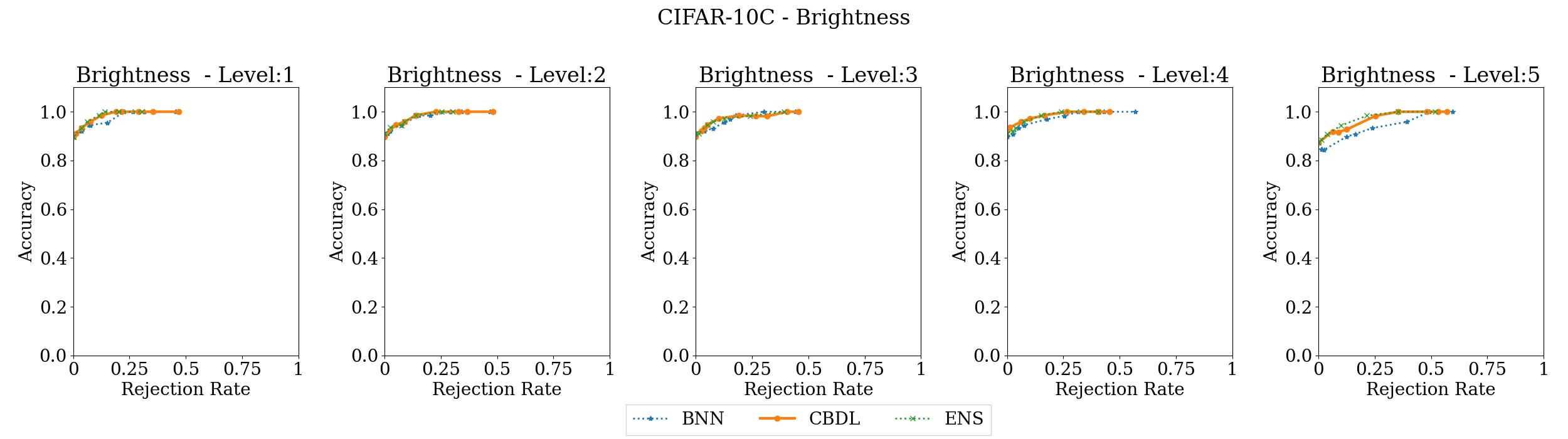}
\caption{\small Accuracy vs Rejection Rate - CIFAR10C brightness.}
\label{fig:acc_vs_rej-CIFAR10C brightness}
\end{figure}

\begin{figure}[h] 
\centering 
\includegraphics[width=1.0\textwidth]{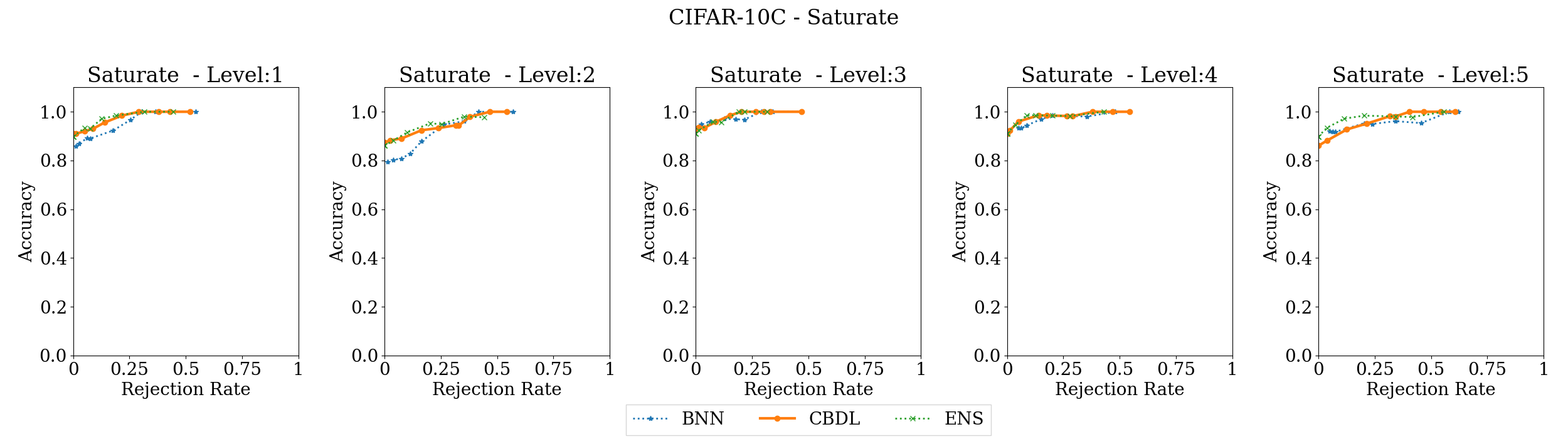}
\caption{\small Accuracy vs Rejection Rate - CIFAR10C saturate.}
\label{fig:acc_vs_rej-CIFAR10C saturate}
\end{figure}

\begin{figure}[h] 
\centering 
\includegraphics[width=1.0\textwidth]{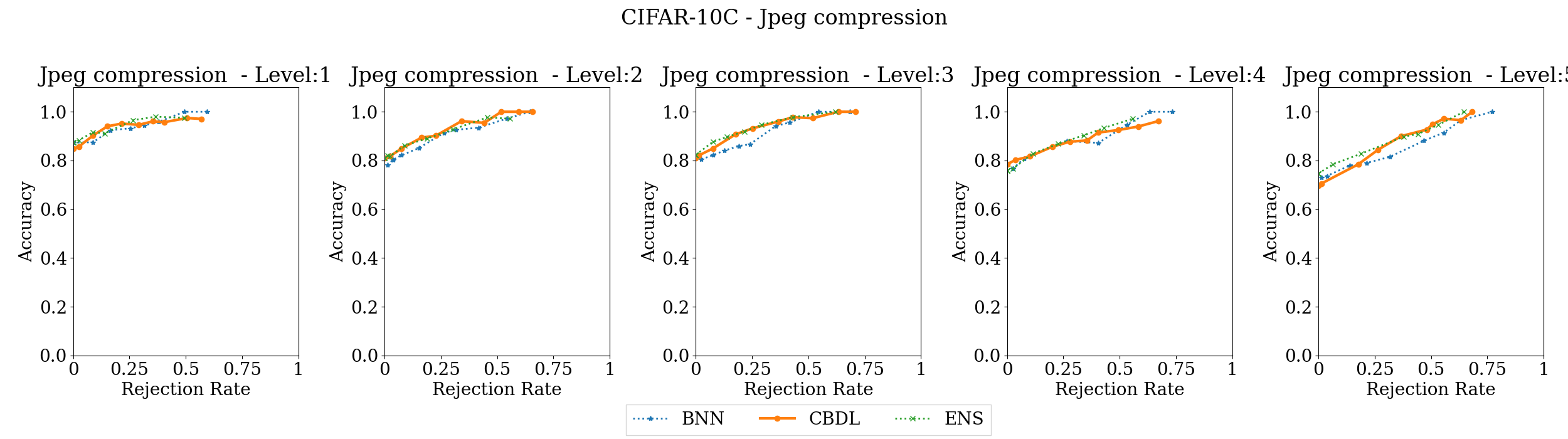}
\caption{\small Accuracy vs Rejection Rate - CIFAR10C jpeg compression.}
\label{fig:acc_vs_rej-CIFAR10C jpeg_compression}
\end{figure}

\begin{figure}[h] 
\centering 
\includegraphics[width=1.0\textwidth]{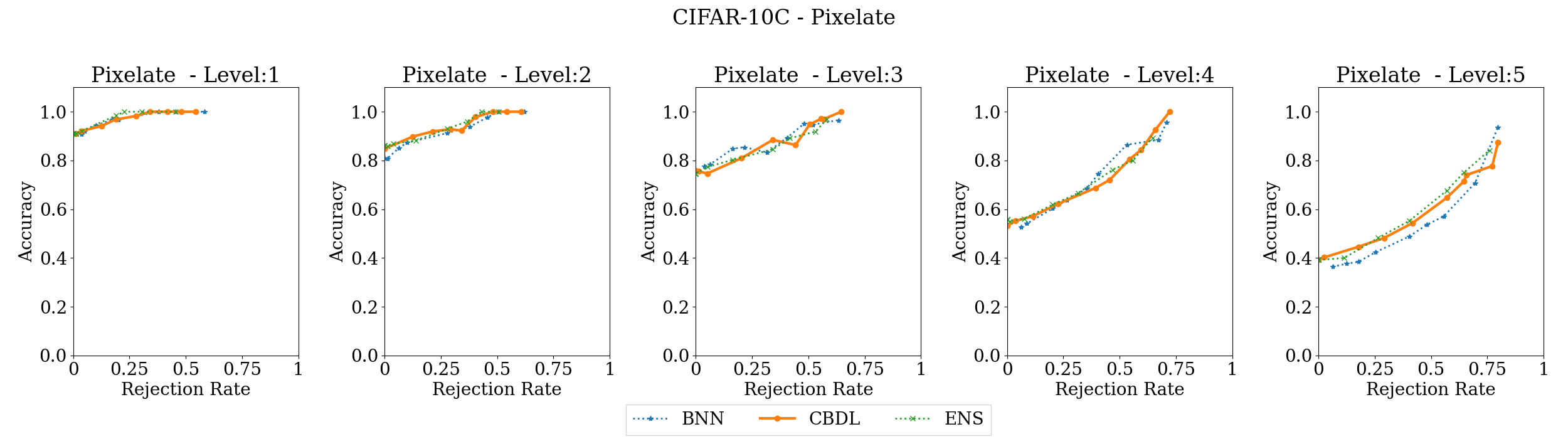}
\caption{\small Accuracy vs Rejection Rate - CIFAR10C pixelate.}
\label{fig:acc_vs_rej-CIFAR10C pixelate}
\end{figure}

\begin{figure}[h] 
\centering 
\includegraphics[width=1.0\textwidth]{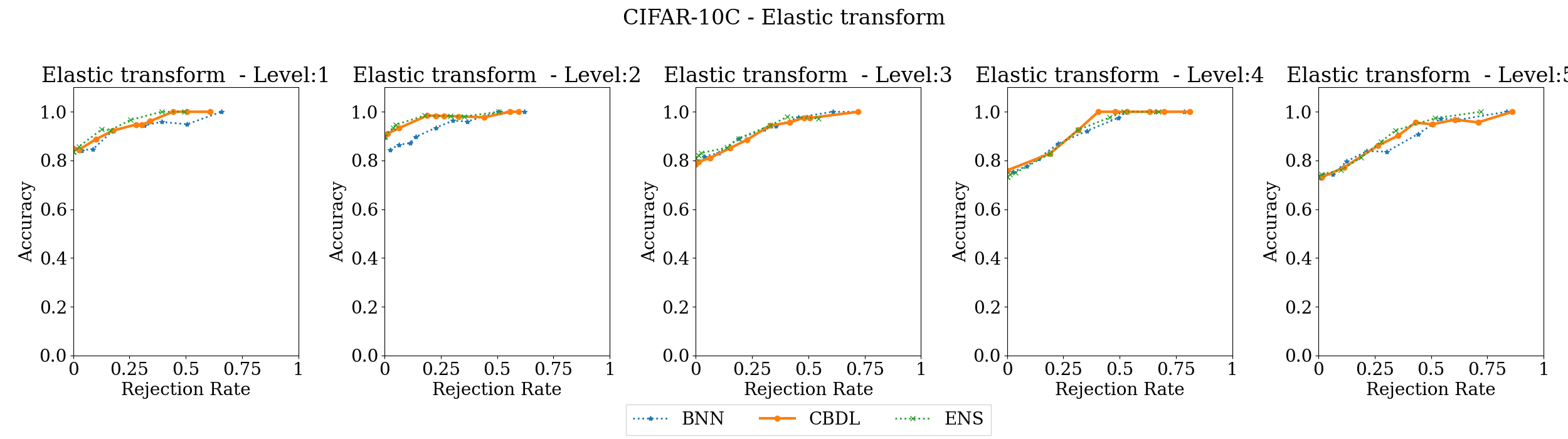}
\caption{\small Accuracy vs Rejection Rate - CIFAR10C elastic transform.}
\label{fig:acc_vs_rej-CIFAR10C elastic_transform}
\end{figure}
\clearpage
\subsection{MNIST-C}

\begin{figure}[h] 
\centering 
\includegraphics[width=1.0\textwidth]{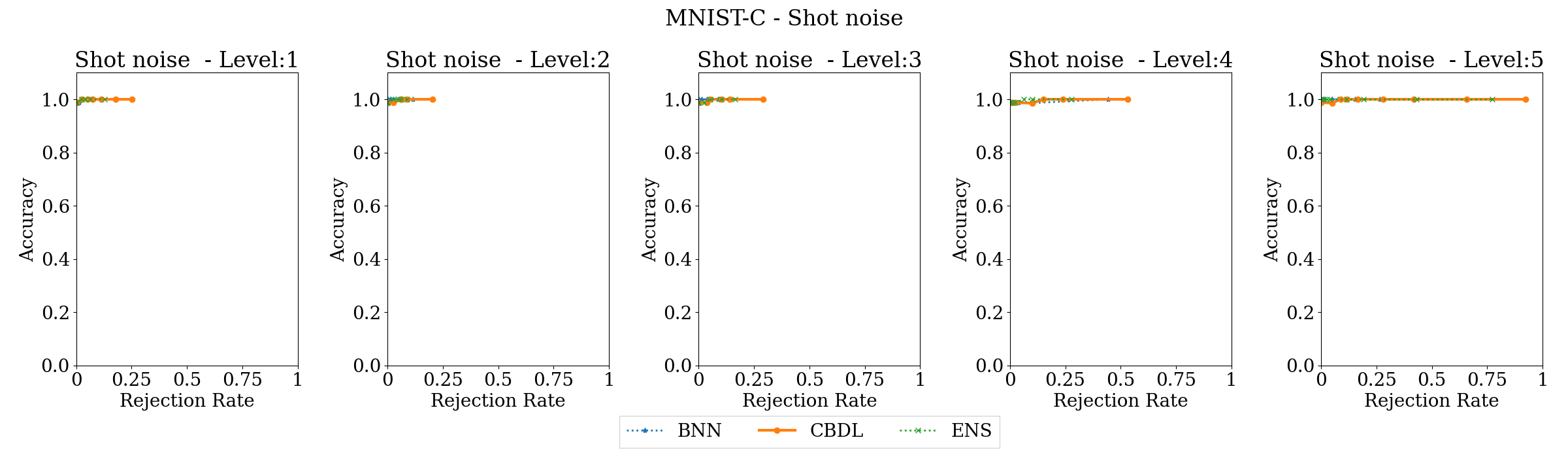}
\caption{\small Accuracy vs Rejection Rate - MNISTC shot noise.}
\label{fig:acc_vs_rej-MNISTC shot_noise}
\end{figure}

\begin{figure}[h] 
\centering 
\includegraphics[width=1.0\textwidth]{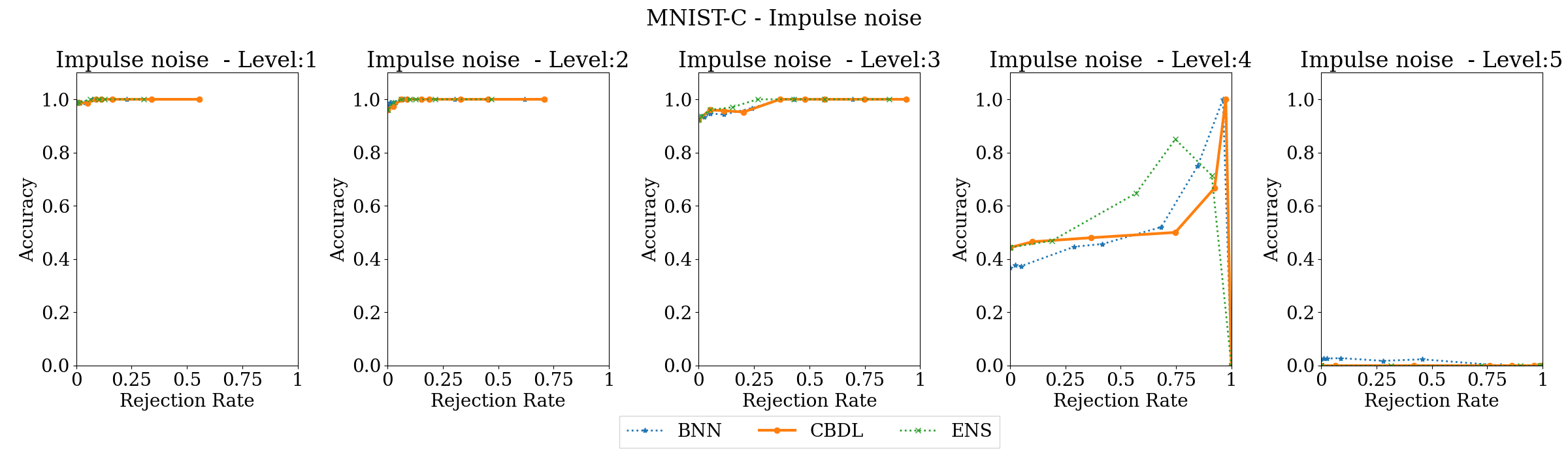}
\caption{\small Accuracy vs Rejection Rate - MNISTC impulse noise.}
\label{fig:acc_vs_rej-MNISTC impulse_noise}
\end{figure}

\begin{figure}[h] 
\centering 
\includegraphics[width=1.0\textwidth]{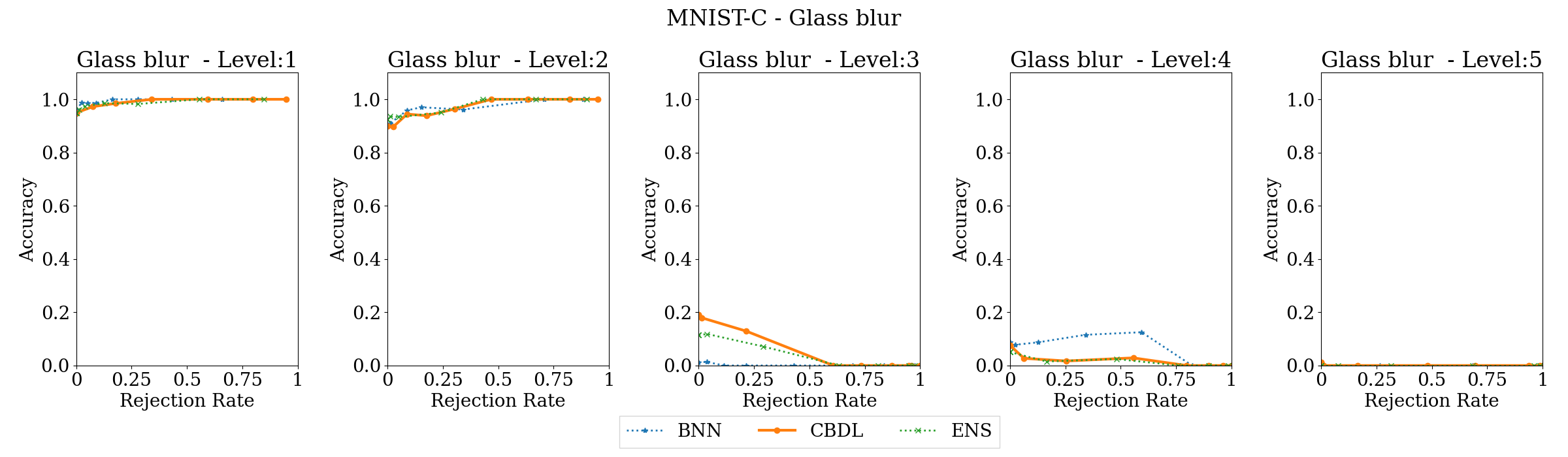}
\caption{\small Accuracy vs Rejection Rate - MNISTC glass blur.}
\label{fig:acc_vs_rej-MNISTC glass_blur}
\end{figure}

\begin{figure}[h] 
\centering 
\includegraphics[width=1.0\textwidth]{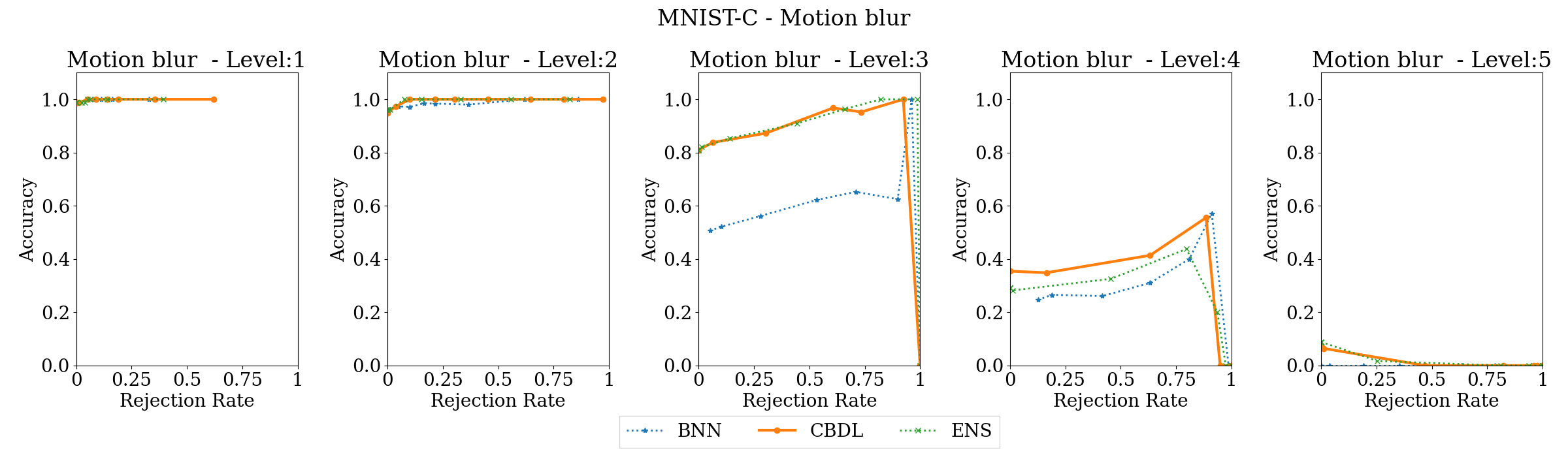}
\caption{\small Accuracy vs Rejection Rate - MNISTC motion blur.}
\label{fig:acc_vs_rej-MNISTC motion_blur}
\end{figure}

\begin{figure}[h] 
\centering 
\includegraphics[width=1.0\textwidth]{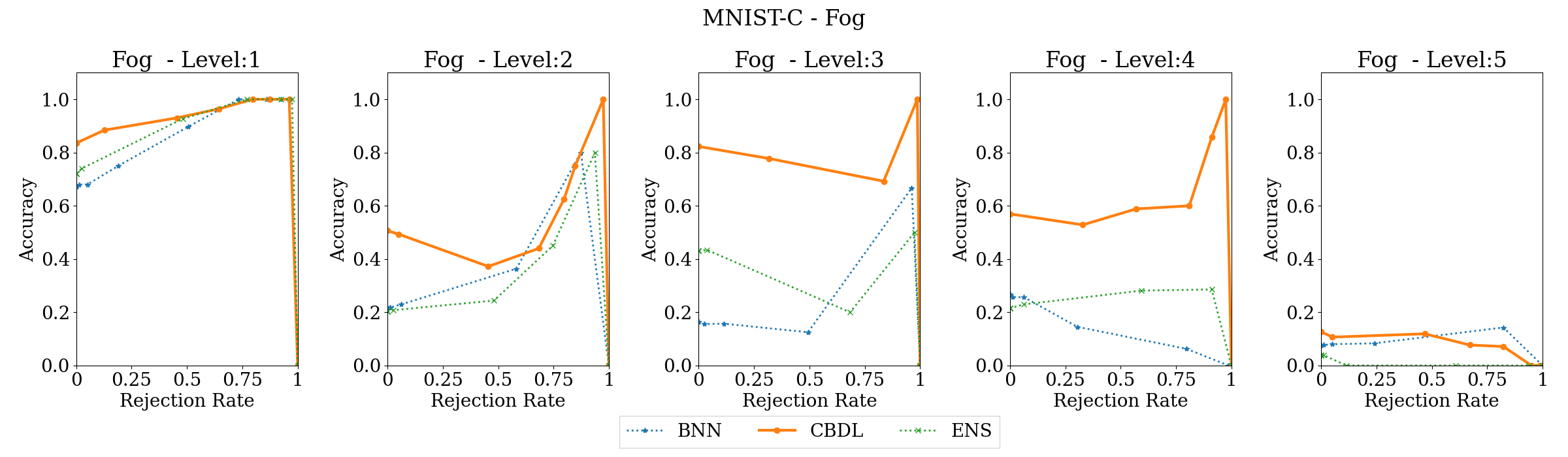}
\caption{\small Accuracy vs Rejection Rate - MNISTC fog.}
\label{fig:acc_vs_rej-MNISTC fog}
\end{figure}

\begin{figure}[h] 
\centering 
\includegraphics[width=1.0\textwidth]{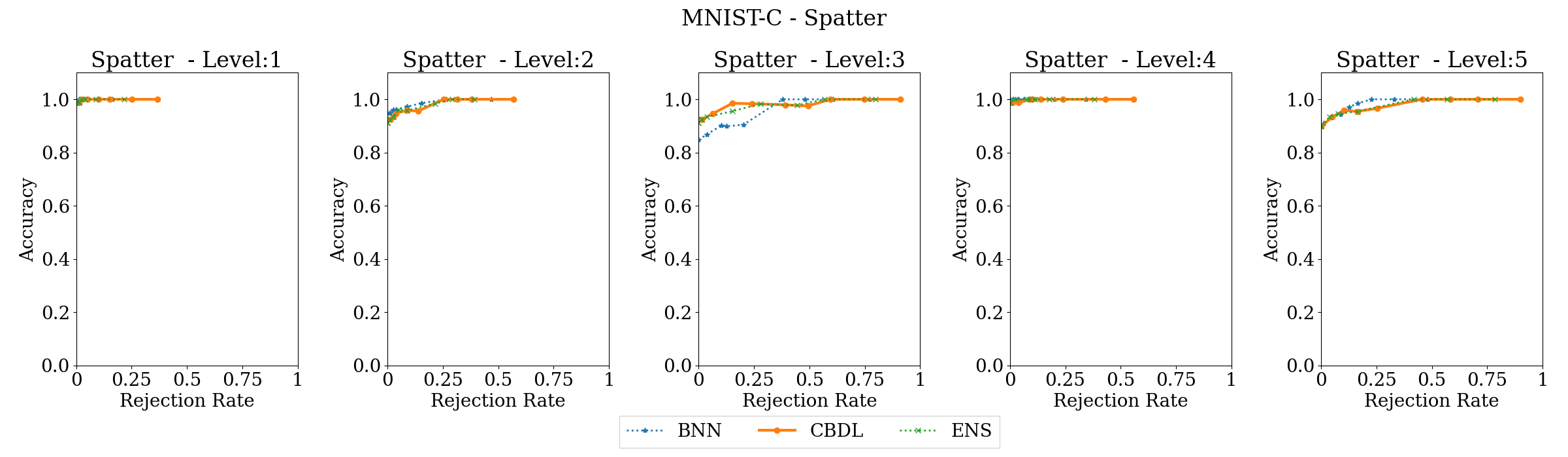}
\caption{\small Accuracy vs Rejection Rate - MNISTC spatter.}
\label{fig:acc_vs_rej-MNISTC spatter}
\end{figure}

\begin{figure}[h] 
\centering 
\includegraphics[width=1.0\textwidth]{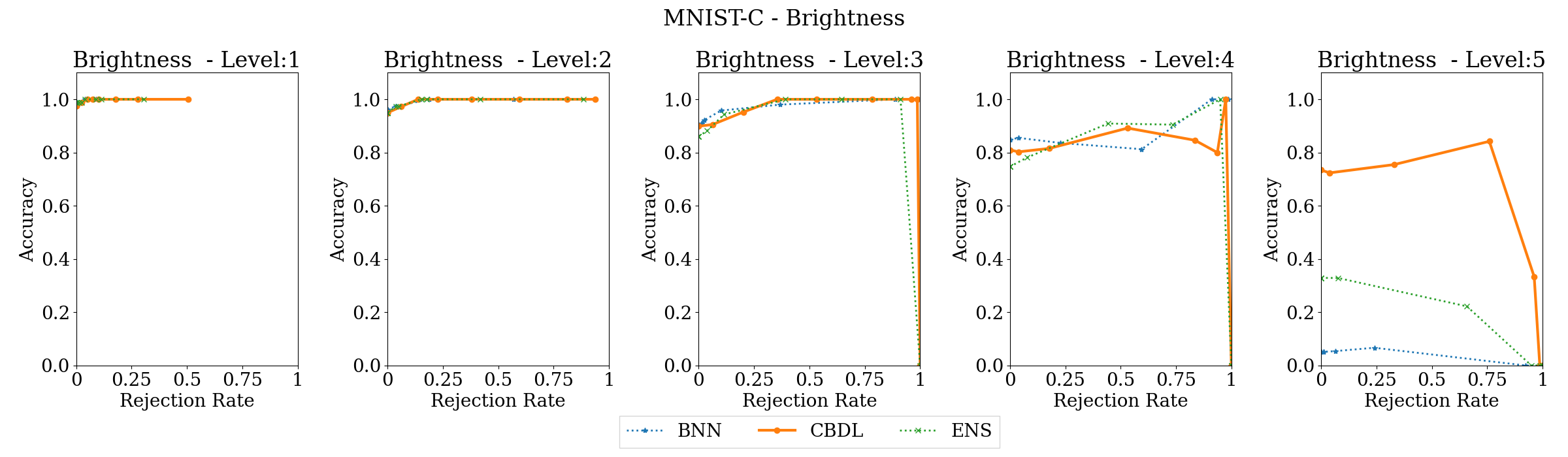}
\caption{\small Accuracy vs Rejection Rate - MNISTC brightness.}
\label{fig:acc_vs_rej-MNISTC brightness}
\end{figure}

\begin{figure}[h] 
\centering 
\includegraphics[width=1.0\textwidth]{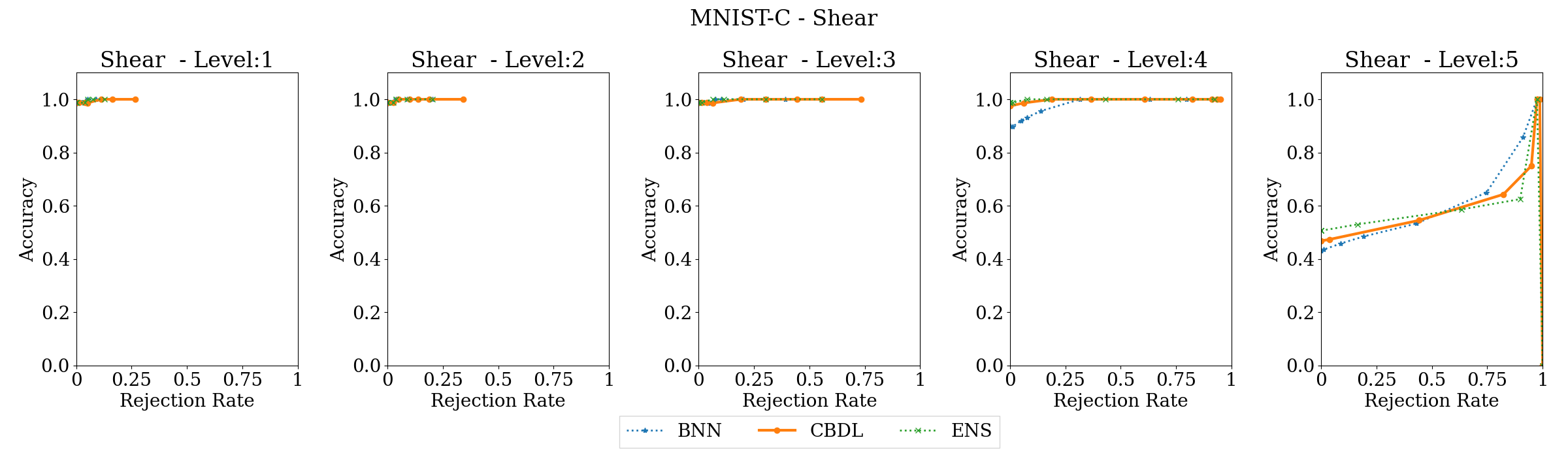}
\caption{\small Accuracy vs Rejection Rate - MNISTC shear.}
\label{fig:acc_vs_rej-MNISTC shear}
\end{figure}

\begin{figure}[h] 
\centering 
\includegraphics[width=1.0\textwidth]{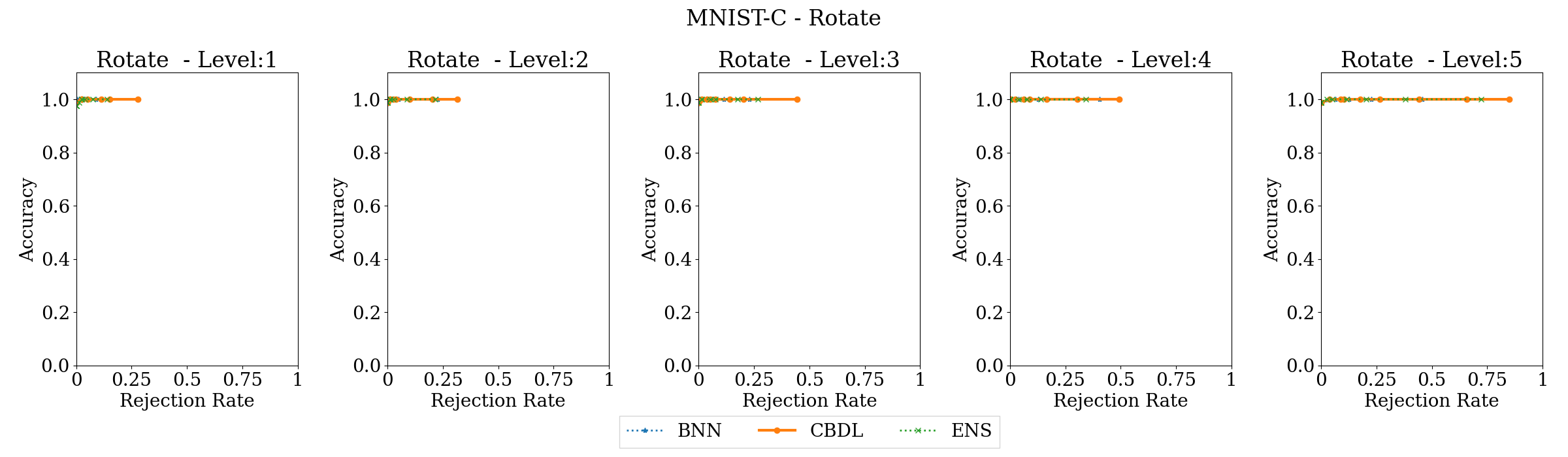}
\caption{\small Accuracy vs Rejection Rate - MNISTC rotate.}
\label{fig:acc_vs_rej-MNISTC rotate}
\end{figure}

\begin{figure}[h] 
\centering 
\includegraphics[width=1.0\textwidth]{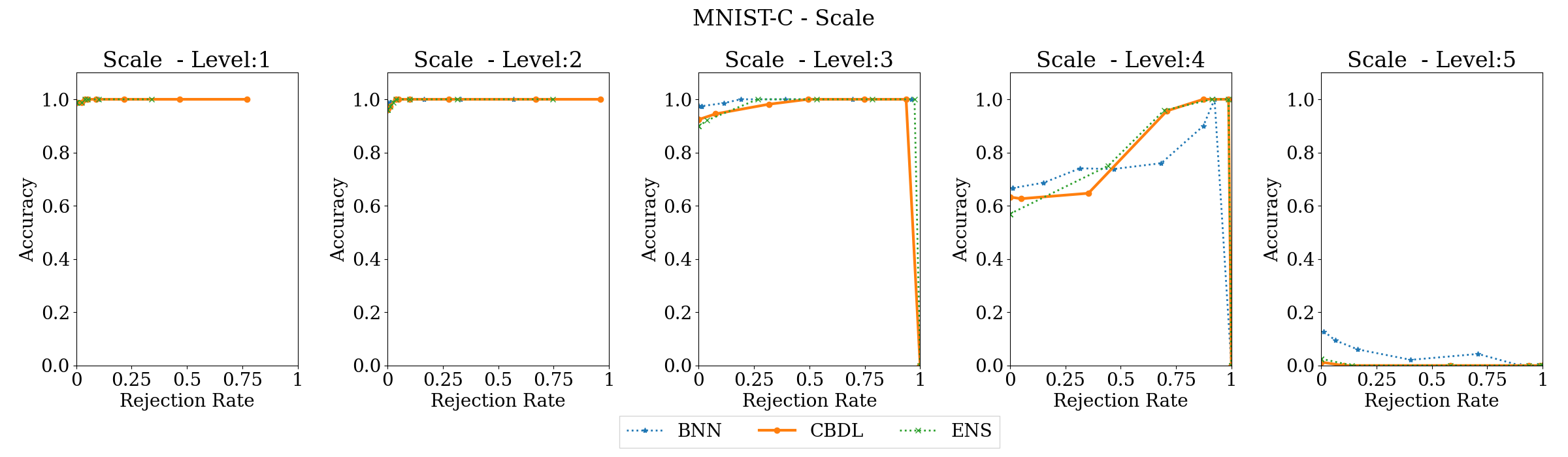}
\caption{\small Accuracy vs Rejection Rate - MNISTC scale.}
\label{fig:acc_vs_rej-MNISTC scale}
\end{figure}

\begin{figure}[h] 
\centering 
\includegraphics[width=1.0\textwidth]{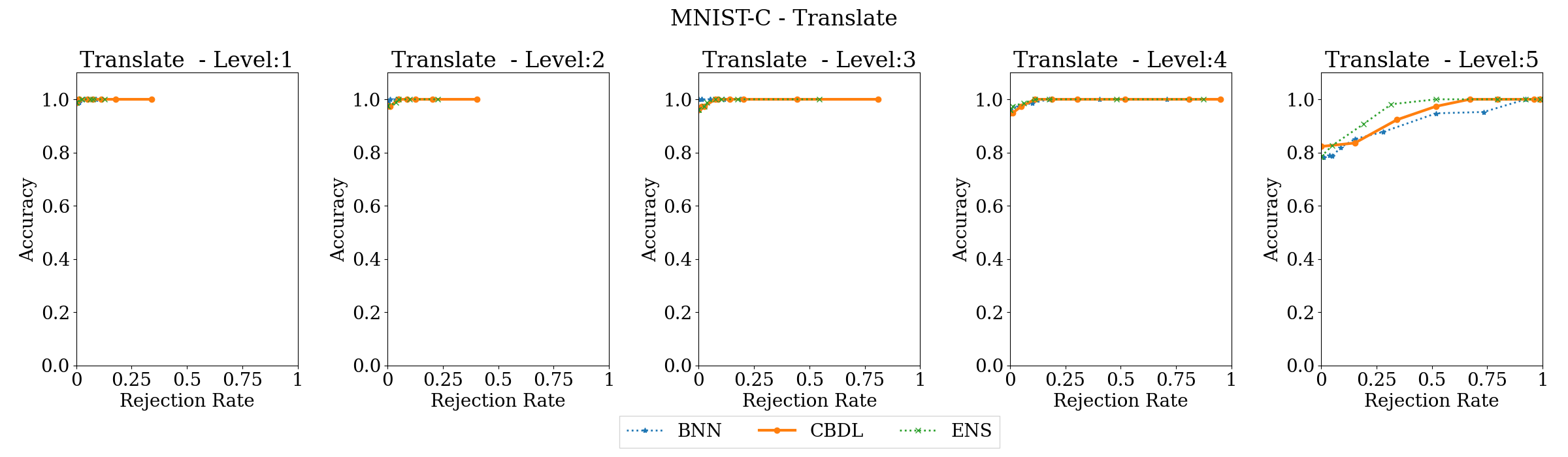}
\caption{\small Accuracy vs Rejection Rate - MNISTC translate.}
\label{fig:acc_vs_rej-MNISTC translate}
\end{figure}

\begin{figure}[h] 
\centering 
\includegraphics[width=1.0\textwidth]{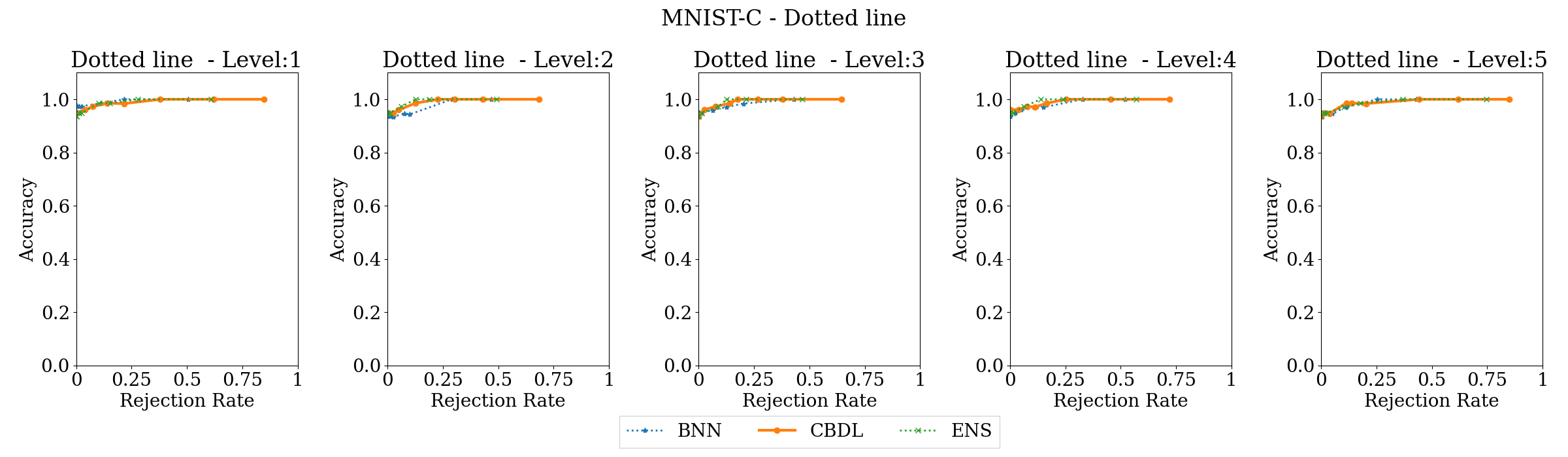}
\caption{\small Accuracy vs Rejection Rate - MNISTC dotted line.}
\label{fig:acc_vs_rej-MNISTC dotted_line}
\end{figure}

\begin{figure}[h] 
\centering 
\includegraphics[width=1.0\textwidth]{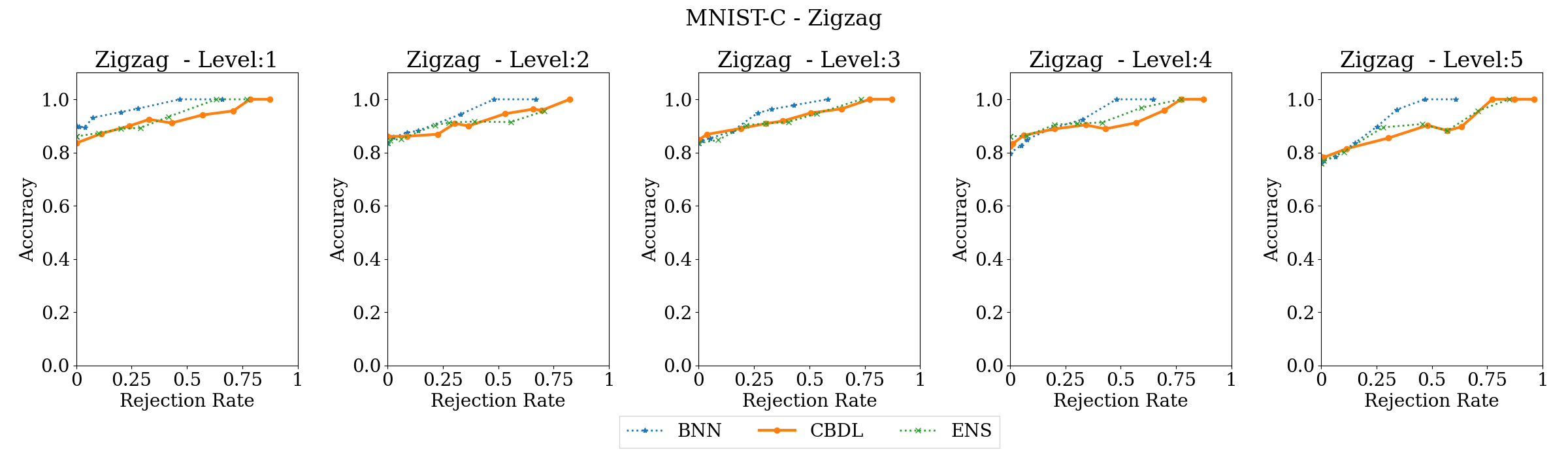}
\caption{\small Accuracy vs Rejection Rate - MNISTC zigzag.}
\label{fig:acc_vs_rej-MNISTC zigzag}
\end{figure}

\begin{figure}[h] 
\centering 
\includegraphics[width=1.0\textwidth]{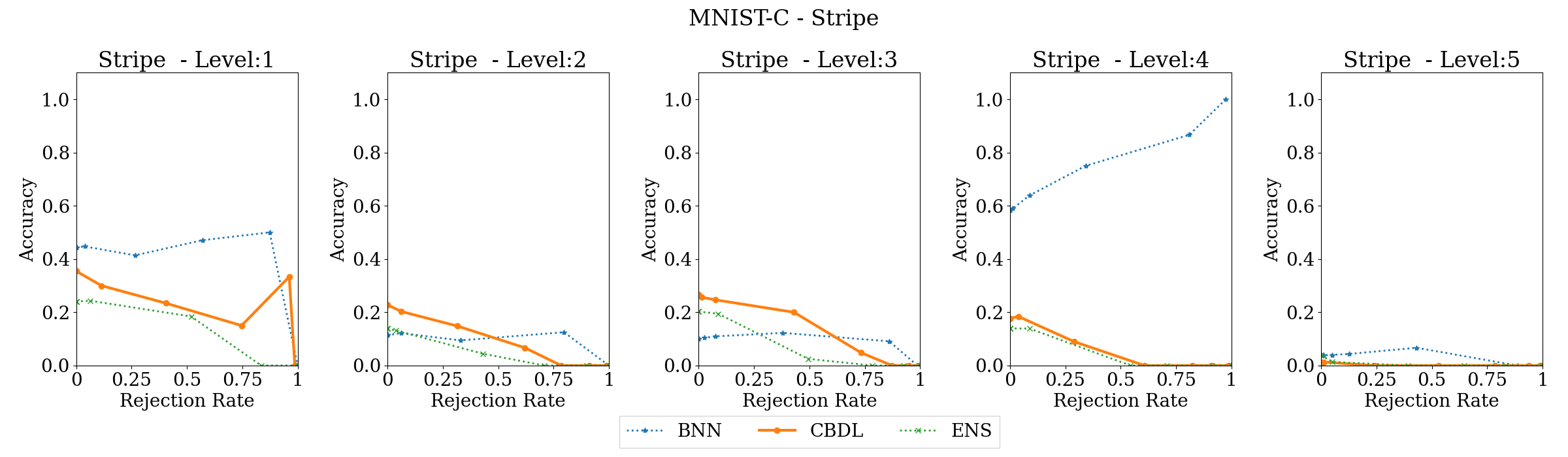}
\caption{\small Accuracy vs Rejection Rate - MNISTC stripe.}
\label{fig:acc_vs_rej-MNISTC stripe}
\end{figure}

\begin{figure}[h] 
\centering 
\includegraphics[width=1.0\textwidth]{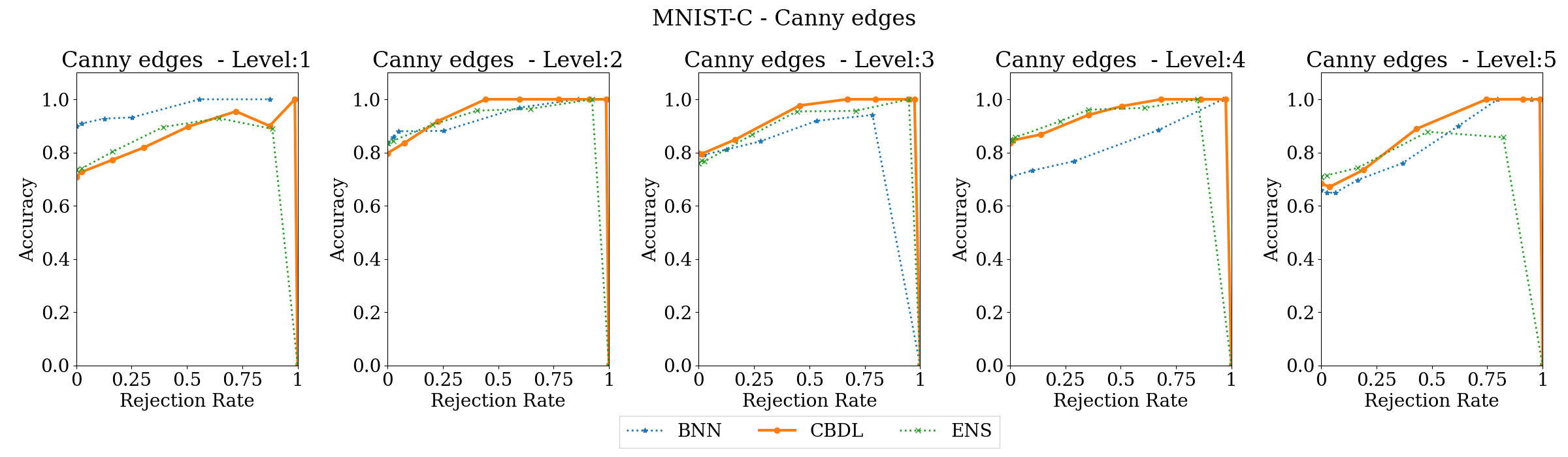}
\caption{\small Accuracy vs Rejection Rate - MNISTC canny edges.}
\label{fig:acc_vs_rej-MNISTC canny_edges}
\end{figure}

\clearpage
\subsection{Fashion MNIST-C}
\begin{figure}[h] 
\centering 
\includegraphics[width=1.0\textwidth]{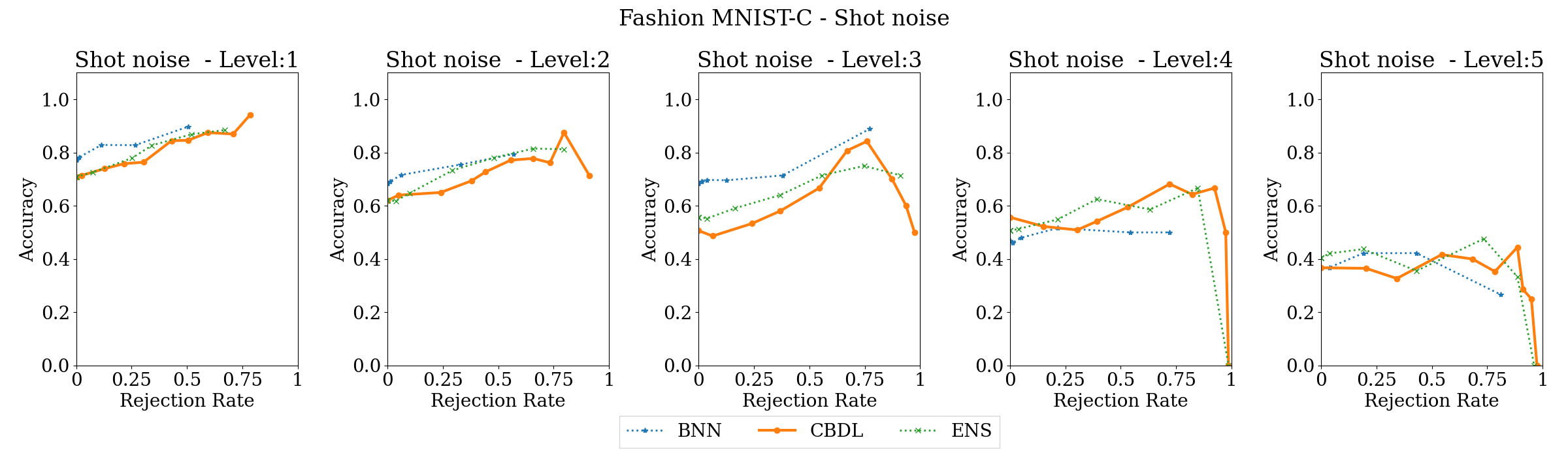}
\caption{\small Accuracy vs Rejection Rate - FashionMNISTC shot noise.}
\label{fig:acc_vs_rej-FashionMNISTC shot_noise}
\end{figure}

\begin{figure}[h] 
\centering 
\includegraphics[width=1.0\textwidth]{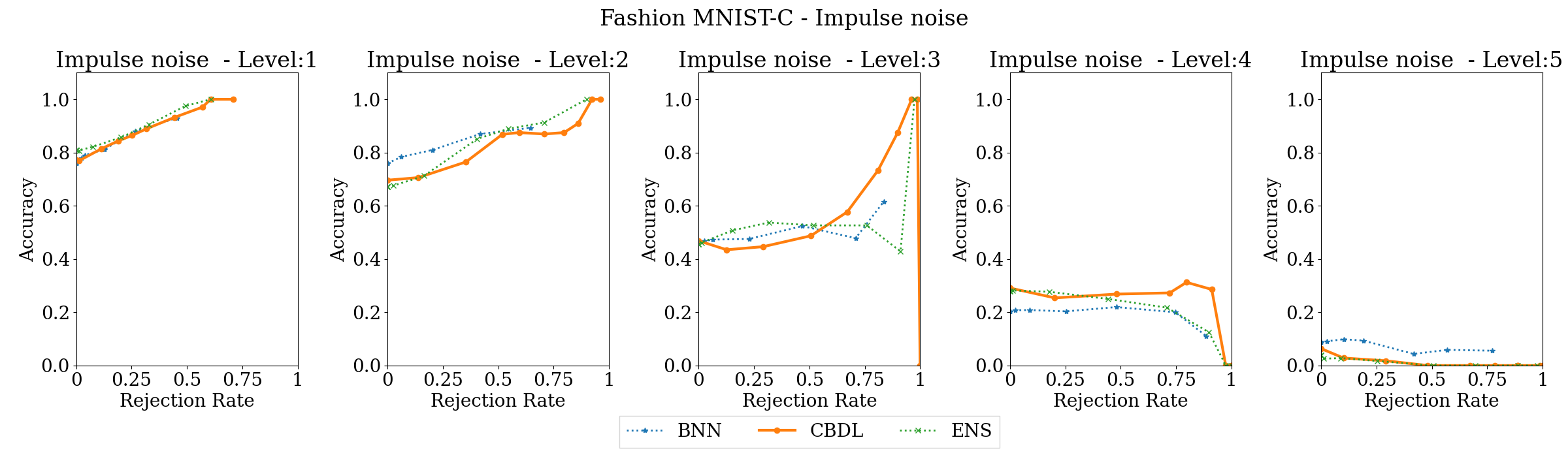}
\caption{\small Accuracy vs Rejection Rate - FashionMNISTC impulse noise.}
\label{fig:acc_vs_rej-FashionMNISTC impulse_noise}
\end{figure}

\begin{figure}[h] 
\centering 
\includegraphics[width=1.0\textwidth]{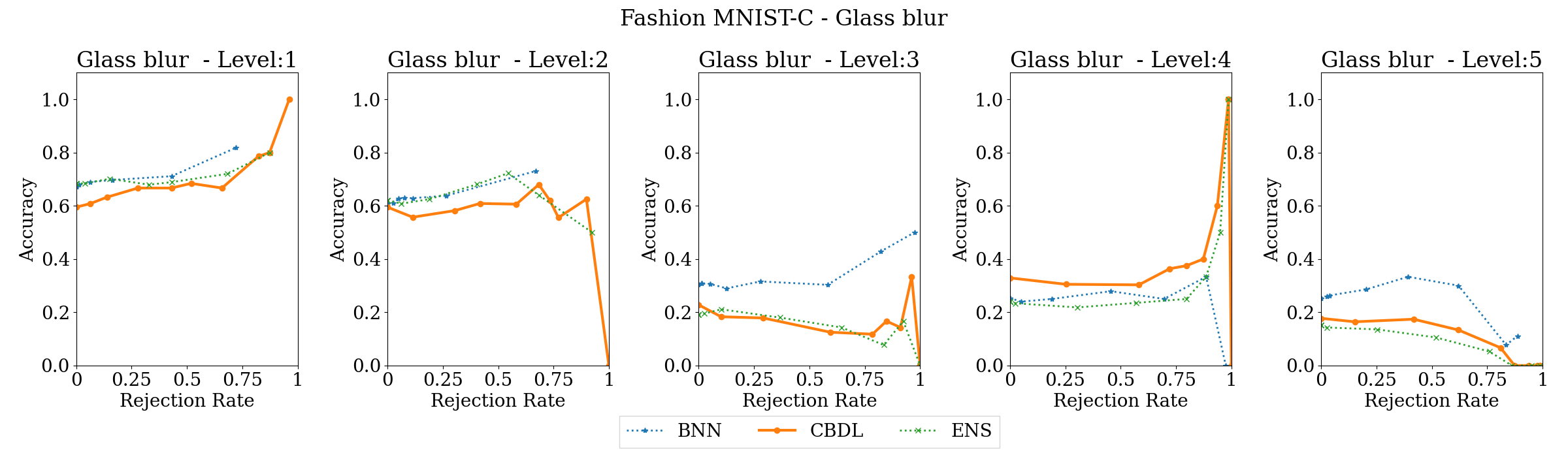}
\caption{\small Accuracy vs Rejection Rate - FashionMNISTC glass blur.}
\label{fig:acc_vs_rej-FashionMNISTC glass_blur}
\end{figure}

\begin{figure}[h] 
\centering 
\includegraphics[width=1.0\textwidth]{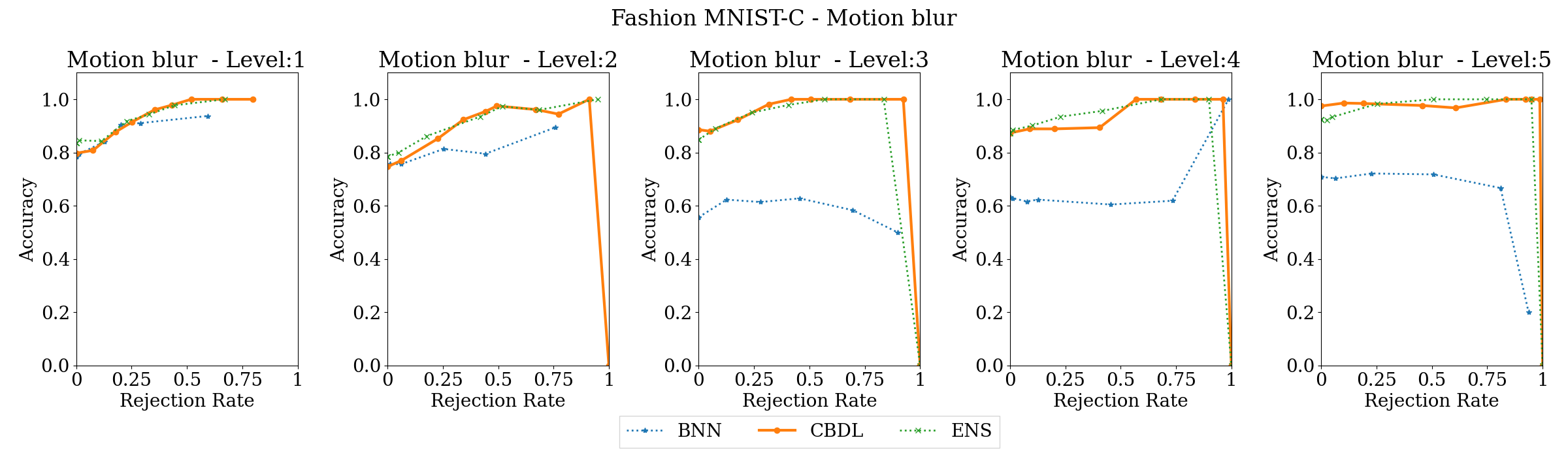}
\caption{\small Accuracy vs Rejection Rate - FashionMNISTC motion blur.}
\label{fig:acc_vs_rej-FashionMNISTC motion_blur}
\end{figure}

\begin{figure}[h] 
\centering 
\includegraphics[width=1.0\textwidth]{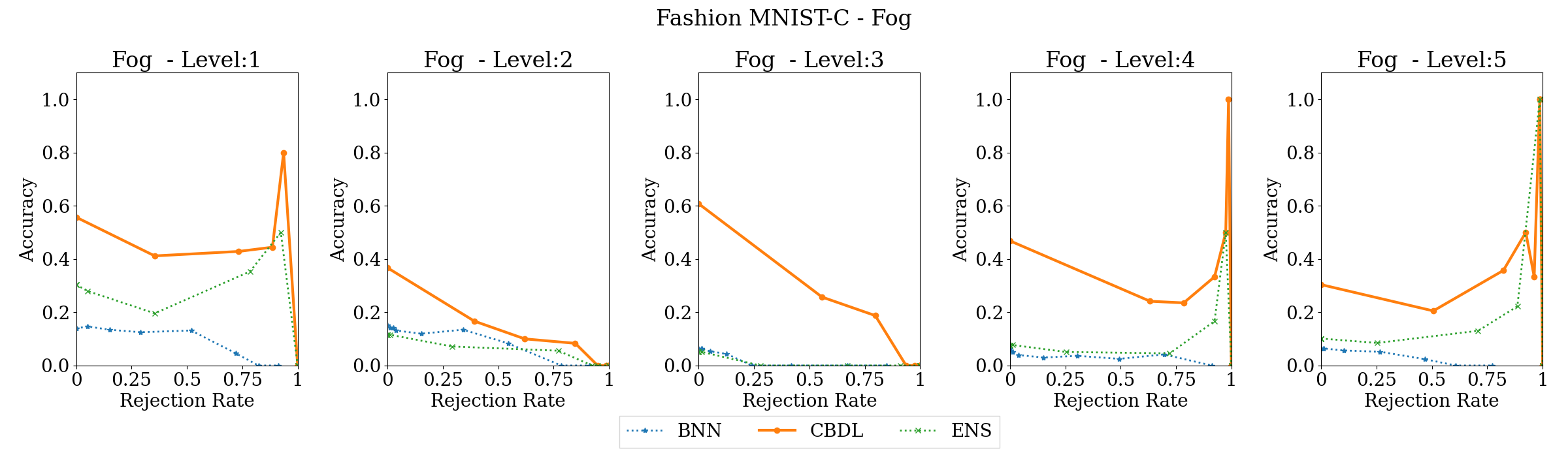}
\caption{\small Accuracy vs Rejection Rate - FashionMNISTC fog.}
\label{fig:acc_vs_rej-FashionMNISTC fog}
\end{figure}

\begin{figure}[h] 
\centering 
\includegraphics[width=1.0\textwidth]{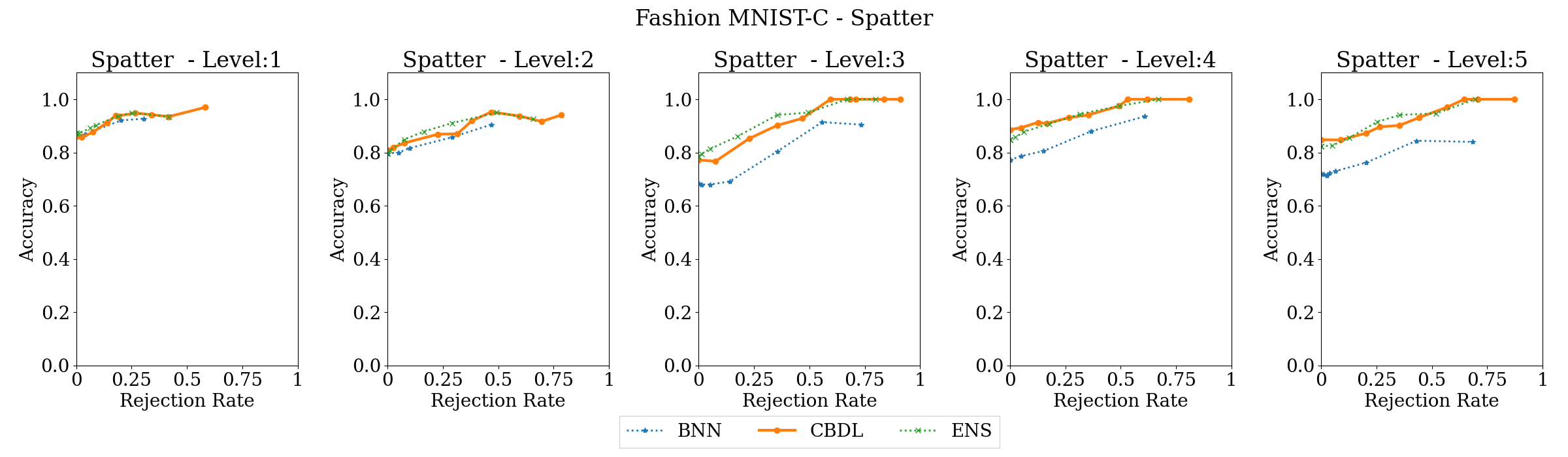}
\caption{\small Accuracy vs Rejection Rate - FashionMNISTC spatter.}
\label{fig:acc_vs_rej-FashionMNISTC spatter}
\end{figure}

\begin{figure}[h] 
\centering 
\includegraphics[width=1.0\textwidth]{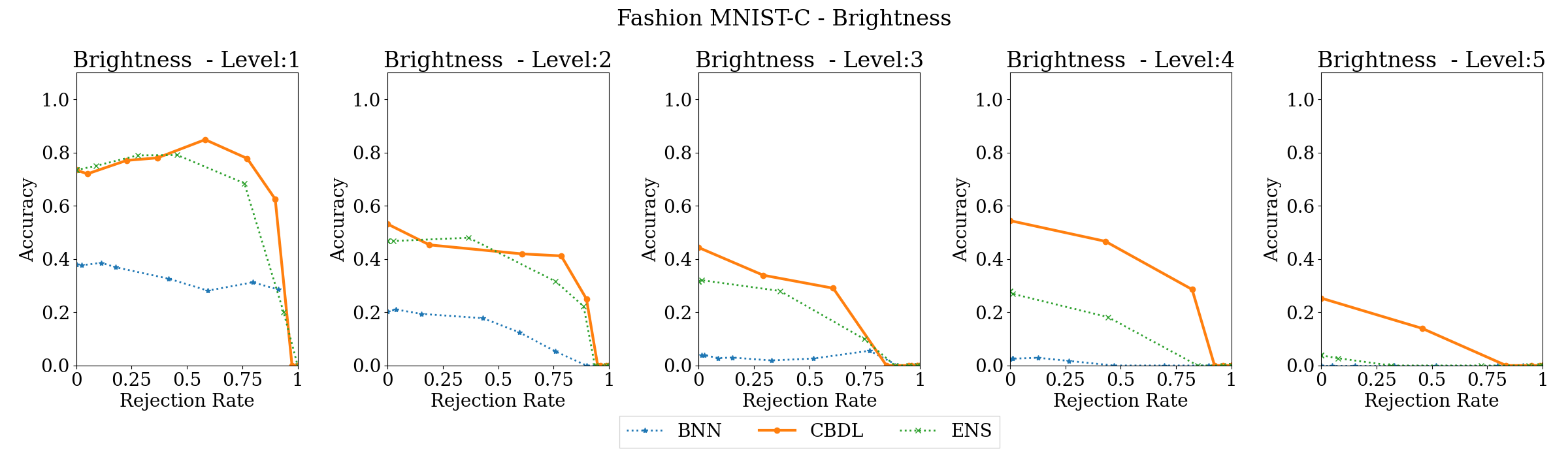}
\caption{\small Accuracy vs Rejection Rate - FashionMNISTC brightness.}
\label{fig:acc_vs_rej-FashionMNISTC brightness}
\end{figure}

\begin{figure}[h] 
\centering 
\includegraphics[width=1.0\textwidth]{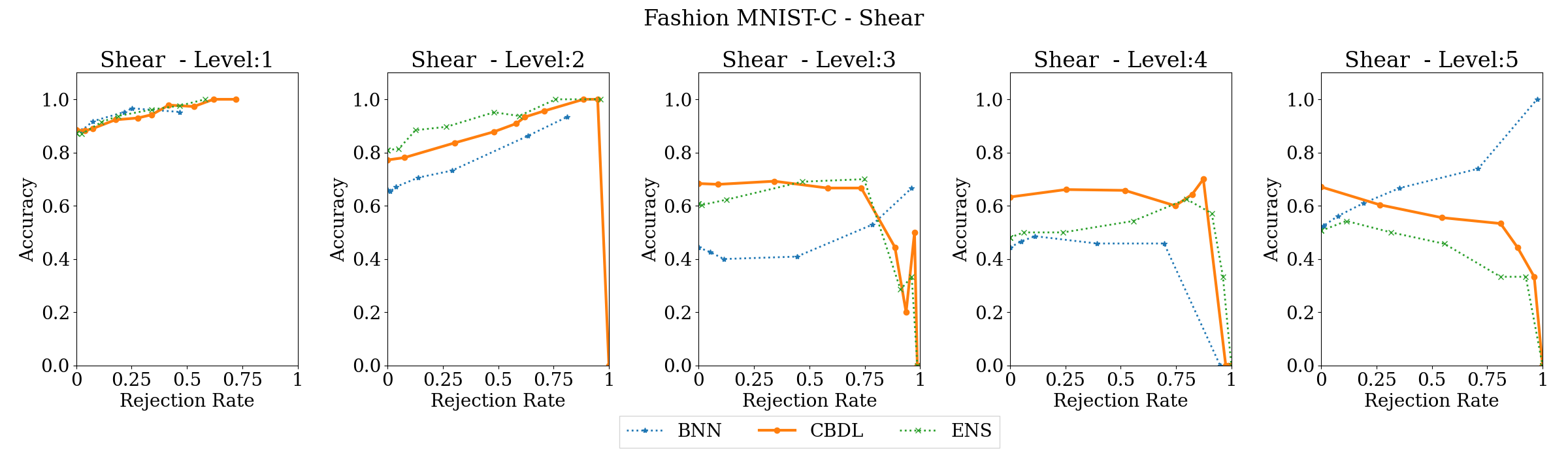}
\caption{\small Accuracy vs Rejection Rate - FashionMNISTC shear.}
\label{fig:acc_vs_rej-FashionMNISTC shear}
\end{figure}

\begin{figure}[h] 
\centering 
\includegraphics[width=1.0\textwidth]{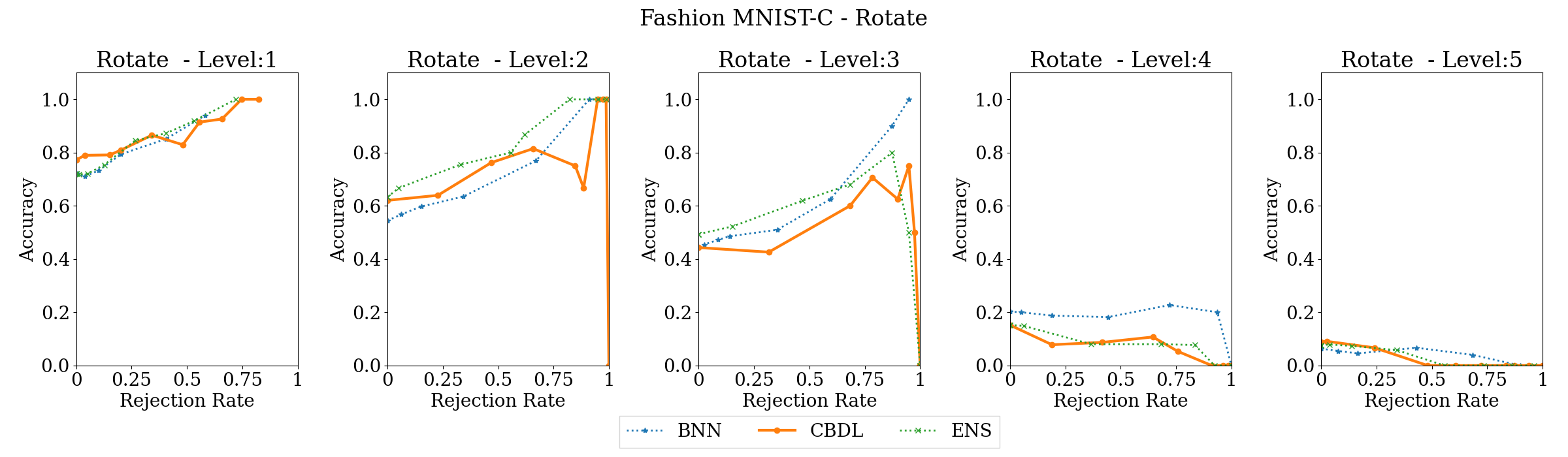}
\caption{\small Accuracy vs Rejection Rate - FashionMNISTC rotate.}
\label{fig:acc_vs_rej-FashionMNISTC rotate}
\end{figure}

\begin{figure}[h] 
\centering 
\includegraphics[width=1.0\textwidth]{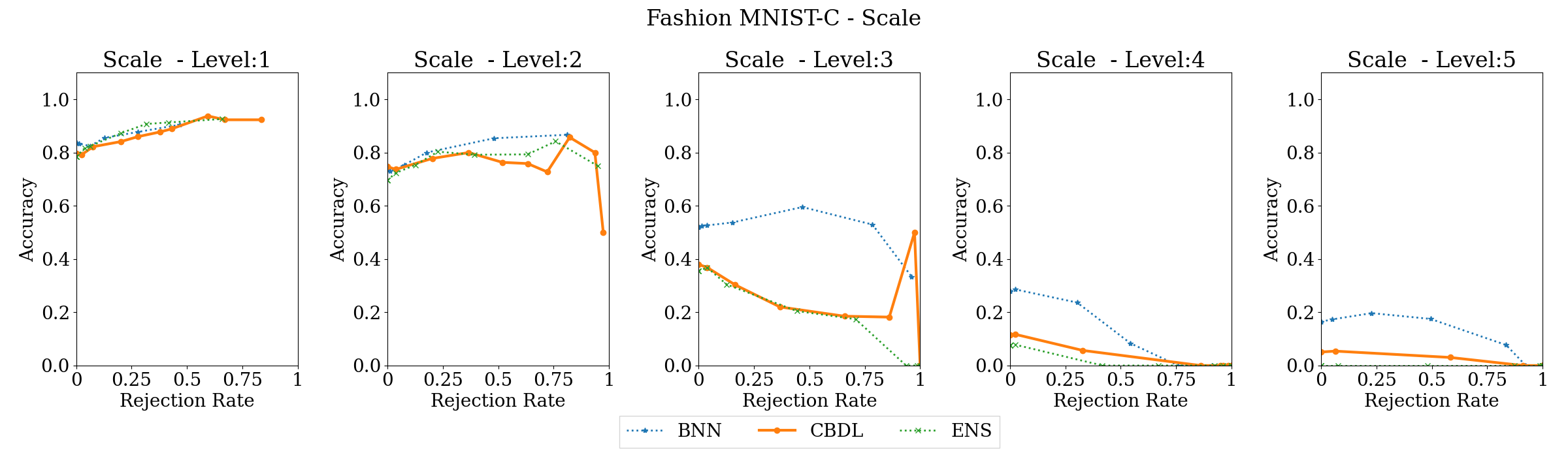}
\caption{\small Accuracy vs Rejection Rate - FashionMNISTC scale.}
\label{fig:acc_vs_rej-FashionMNISTC scale}
\end{figure}

\begin{figure}[h] 
\centering 
\includegraphics[width=1.0\textwidth]{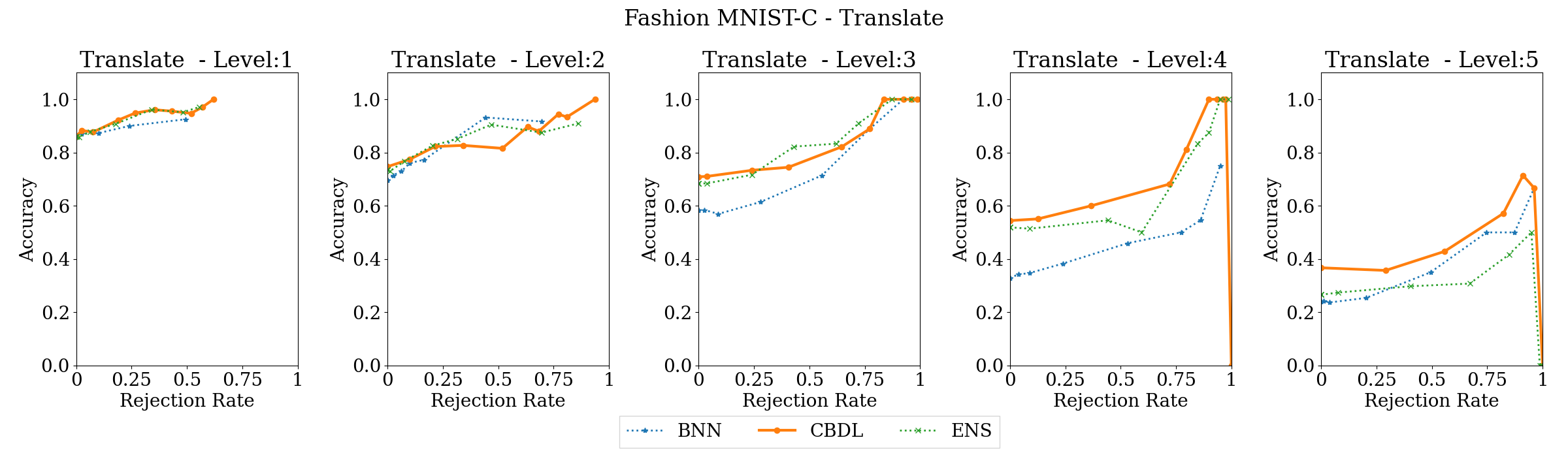}
\caption{\small Accuracy vs Rejection Rate - FashionMNISTC translate.}
\label{fig:acc_vs_rej-FashionMNISTC translate}
\end{figure}

\begin{figure}[h] 
\centering 
\includegraphics[width=1.0\textwidth]{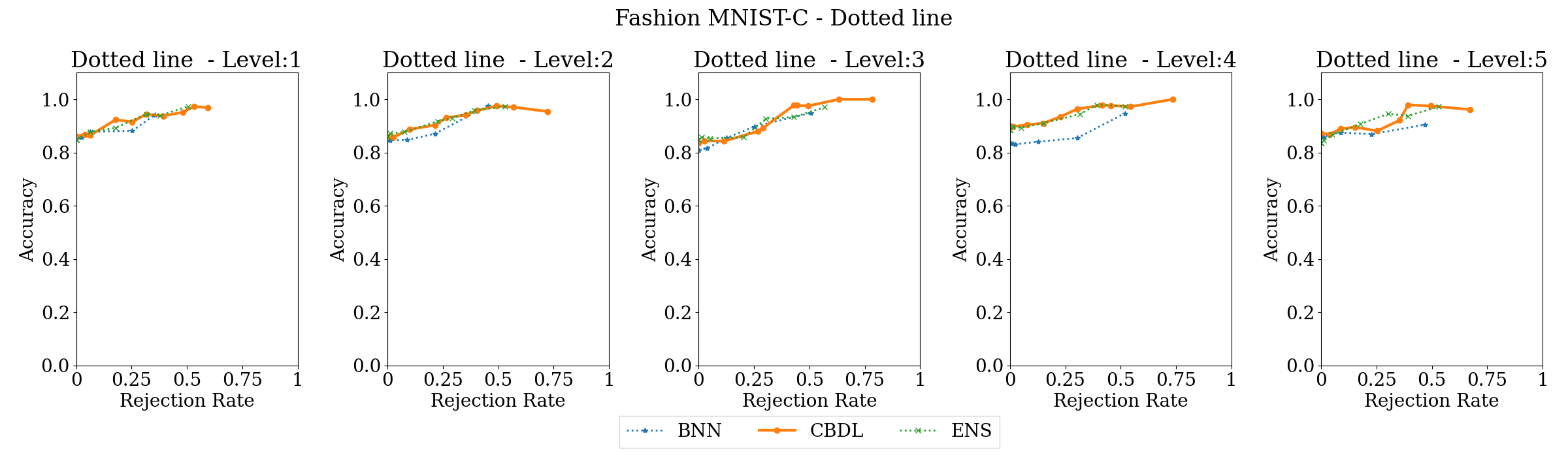}
\caption{\small Accuracy vs Rejection Rate - FashionMNISTC dotted line.}
\label{fig:acc_vs_rej-FashionMNISTC dotted_line}
\end{figure}

\begin{figure}[h] 
\centering 
\includegraphics[width=1.0\textwidth]{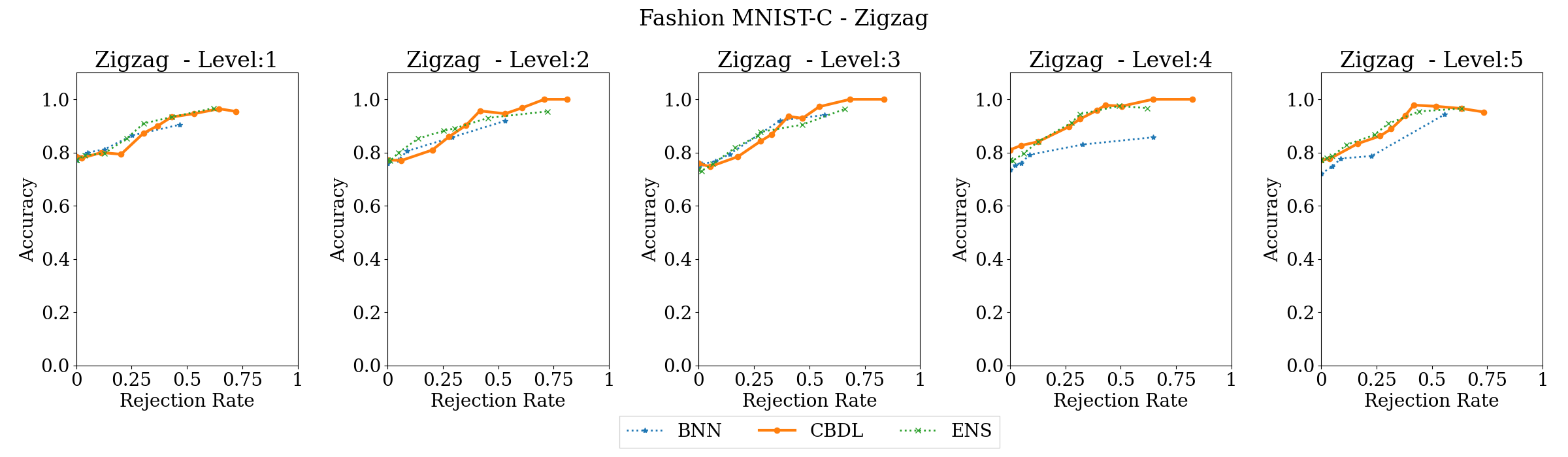}
\caption{\small Accuracy vs Rejection Rate - FashionMNISTC zigzag.}
\label{fig:acc_vs_rej-FashionMNISTC zigzag}
\end{figure}

\begin{figure}[h] 
\centering 
\includegraphics[width=1.0\textwidth]{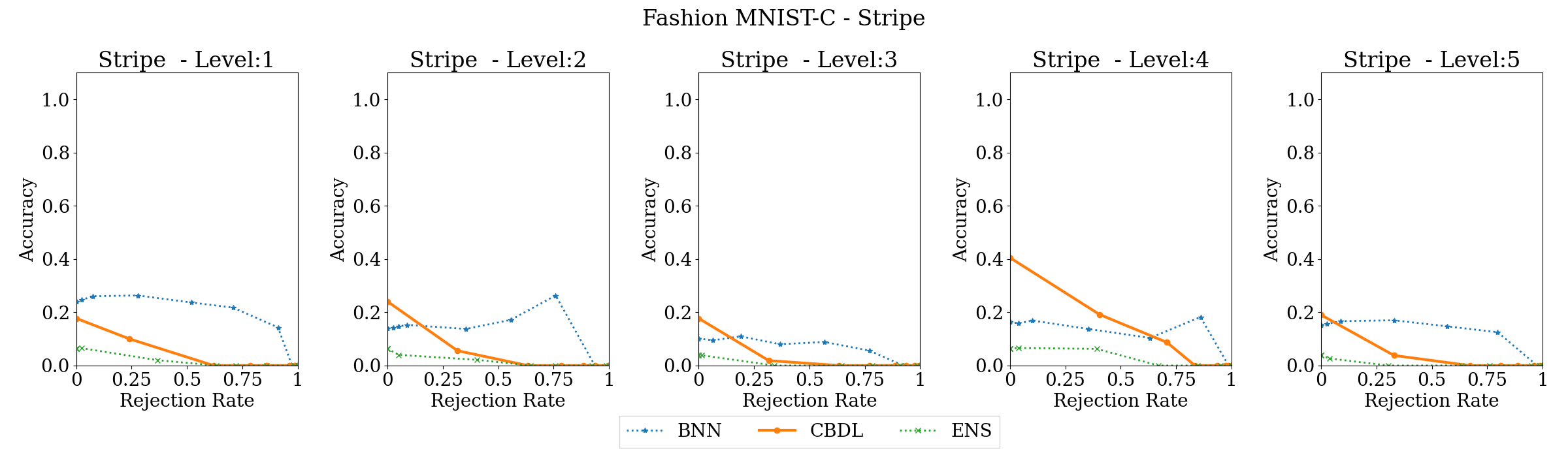}
\caption{\small Accuracy vs Rejection Rate - FashionMNISTC stripe.}
\label{fig:acc_vs_rej-FashionMNISTC stripe}
\end{figure}

\begin{figure}[h] 
\centering 
\includegraphics[width=1.0\textwidth]{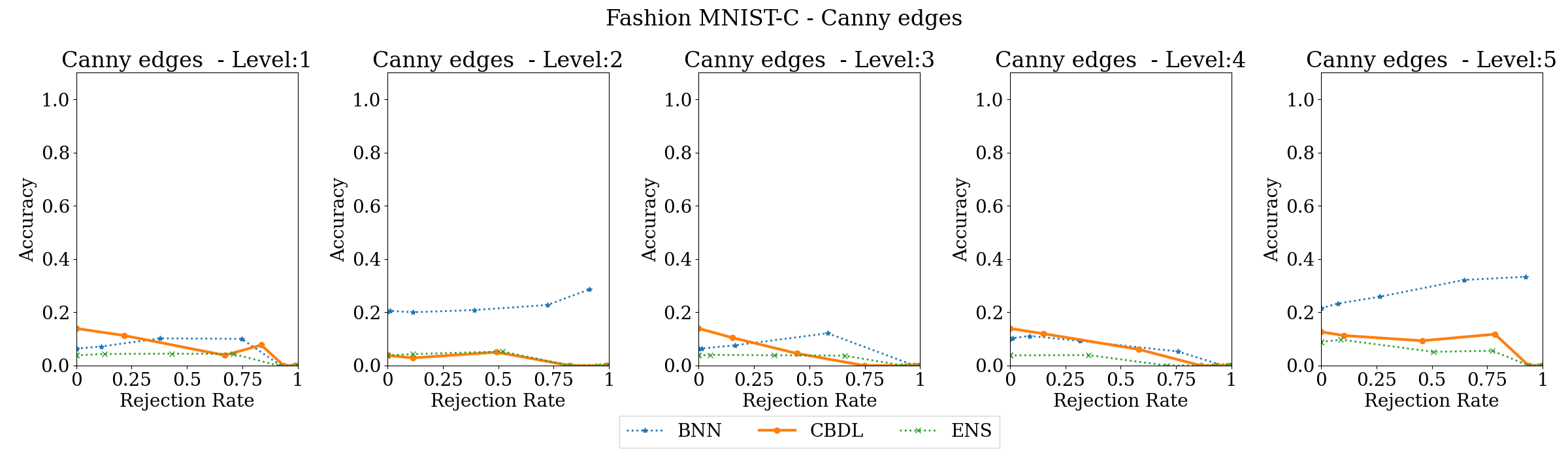}
\caption{\small Accuracy vs Rejection Rate - FashionMNISTC canny edges.}
\label{fig:acc_vs_rej-FashionMNISTC canny_edges}
\end{figure}

\clearpage

\subsection{SVHN-C}

\begin{figure}[h] 
\centering 
\includegraphics[width=1.0\textwidth]{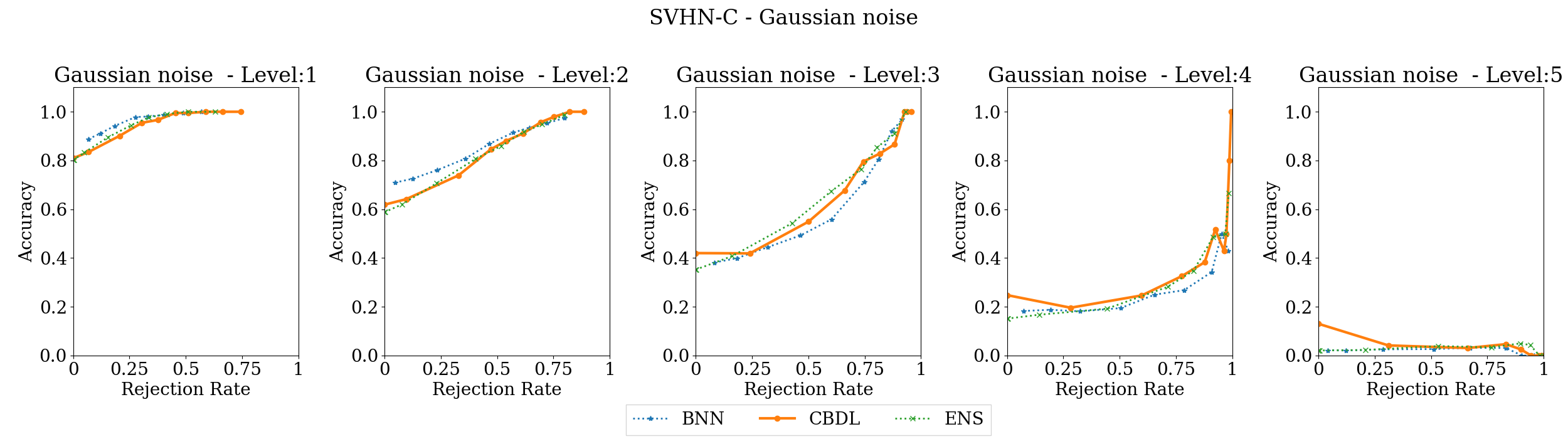}
\caption{\small Accuracy vs Rejection Rate - SVHNC Gaussian Noise.}
\label{fig:acc_vs_rej-SVHNC Gaussian Noise}
\end{figure}

\begin{figure}[h] 
\centering 
\includegraphics[width=1.0\textwidth]{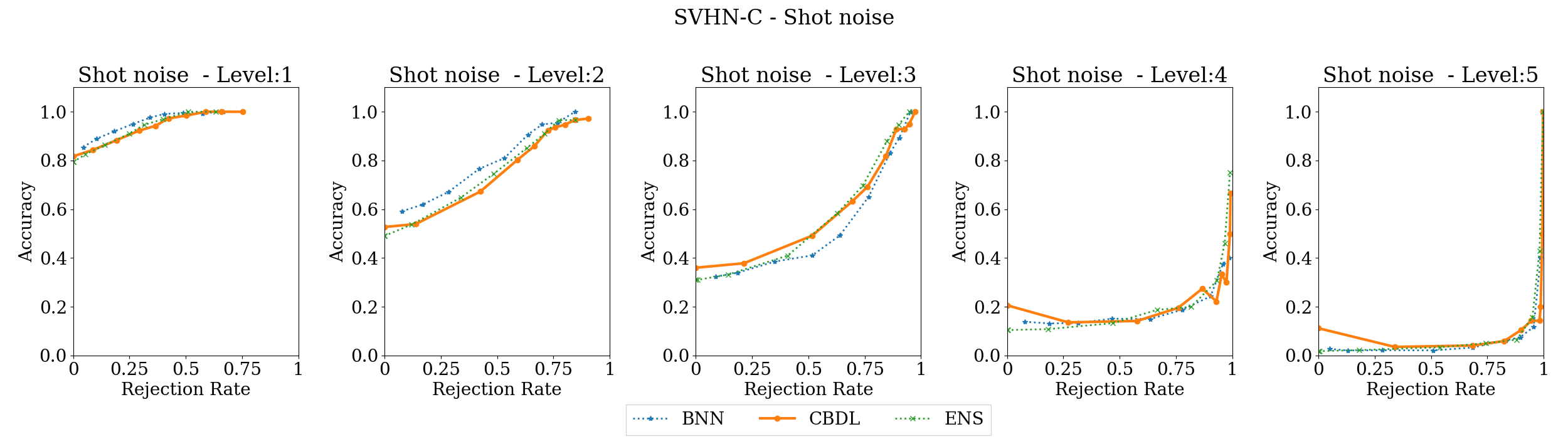}
\caption{\small Accuracy vs Rejection Rate - SVHNC Shot Noise.}
\label{fig:acc_vs_rej-SVHNC Shot Noise}
\end{figure}

\begin{figure}[h] 
\centering 
\includegraphics[width=1.0\textwidth]{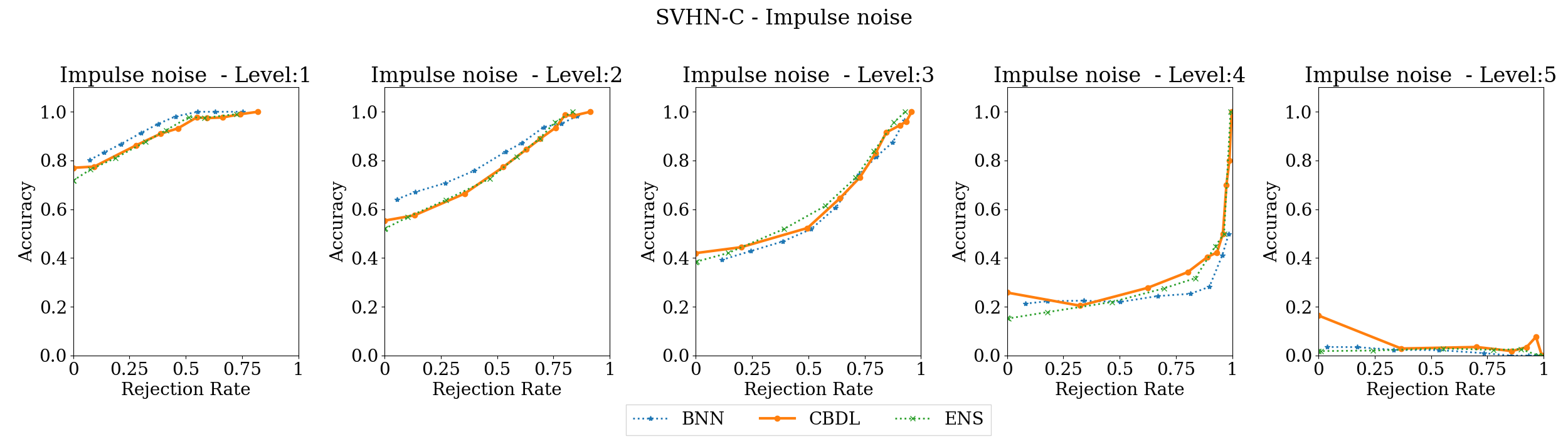}
\caption{\small Accuracy vs Rejection Rate - SVHNC Impulse Noise.}
\label{fig:acc_vs_rej-SVHNC Impulse Noise}
\end{figure}

\begin{figure}[h] 
\centering 
\includegraphics[width=1.0\textwidth]{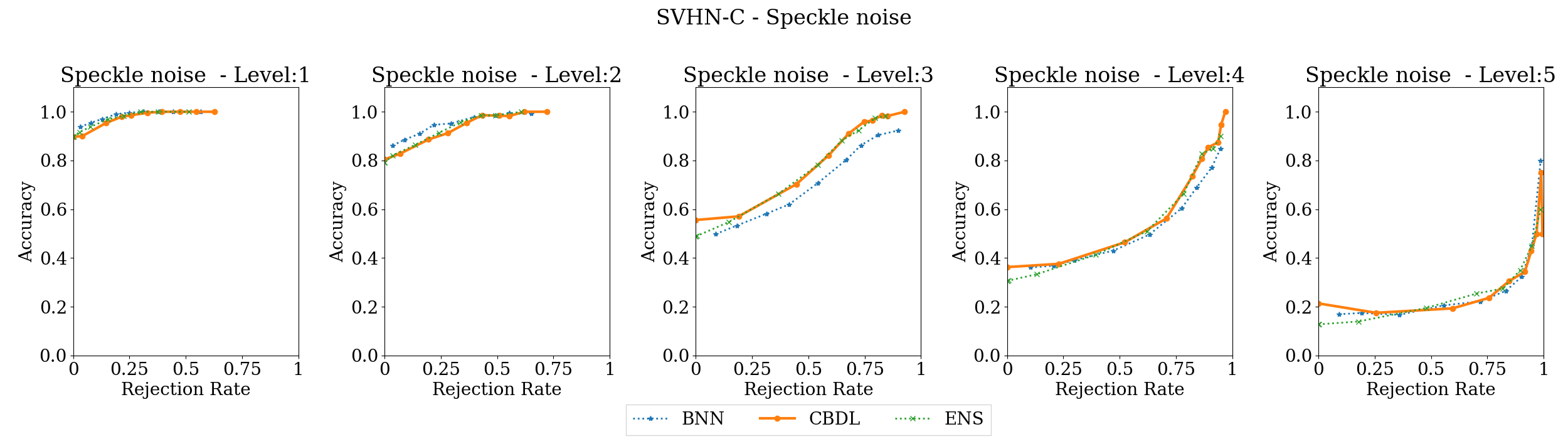}
\caption{\small Accuracy vs Rejection Rate - SVHNC Speckle Noise.}
\label{fig:acc_vs_rej-SVHNC Speckle Noise}
\end{figure}

\begin{figure}[h] 
\centering 
\includegraphics[width=1.0\textwidth]{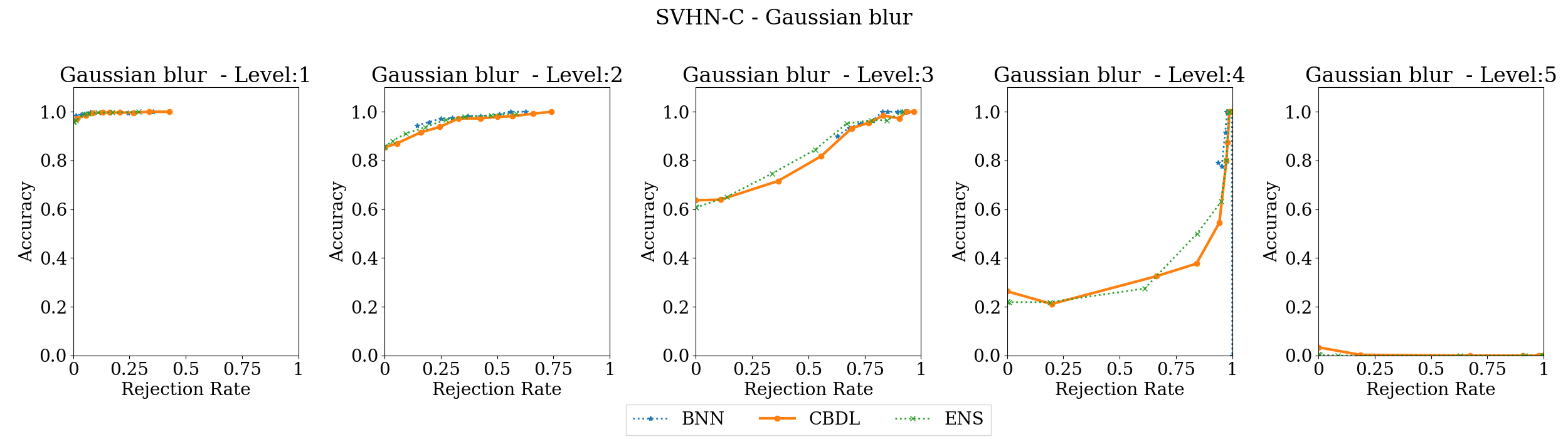}
\caption{\small Accuracy vs Rejection Rate - SVHNC Gaussian Blur.}
\label{fig:acc_vs_rej-SVHNC Gaussian Blur}
\end{figure}

\begin{figure}[h] 
\centering 
\includegraphics[width=1.0\textwidth]{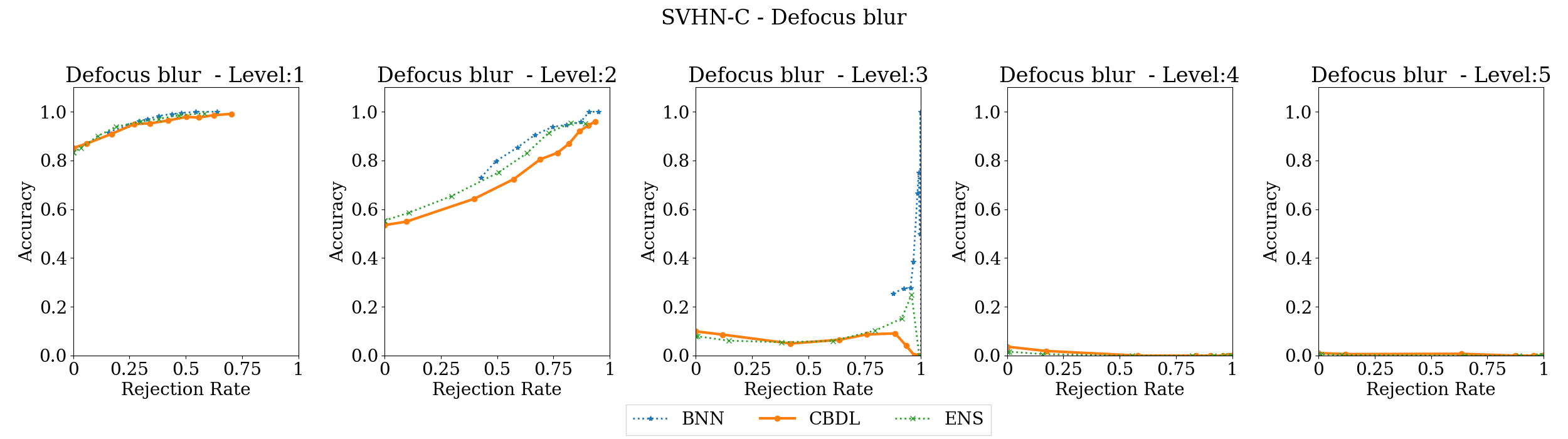}
\caption{\small Accuracy vs Rejection Rate - SVHNC Defocus Blur.}
\label{fig:acc_vs_rej-SVHNC Defocus Blur}
\end{figure}

\begin{figure}[h] 
\centering 
\includegraphics[width=1.0\textwidth]{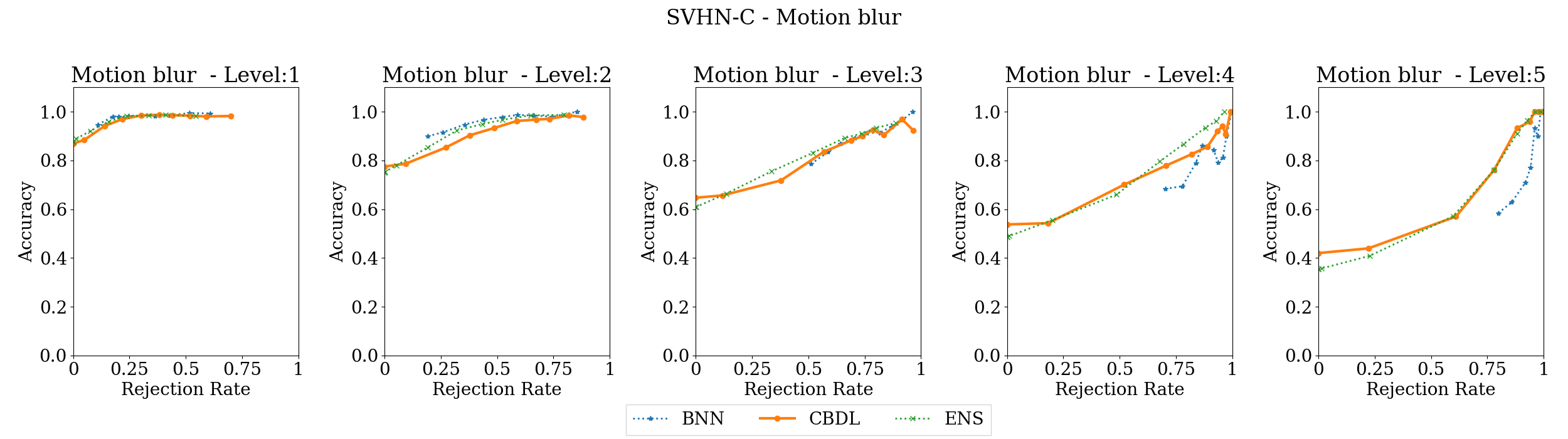}
\caption{\small Accuracy vs Rejection Rate - SVHNC Motion Blur.}
\label{fig:acc_vs_rej-SVHNC Motion Blur}
\end{figure}

\begin{figure}[h] 
\centering 
\includegraphics[width=1.0\textwidth]{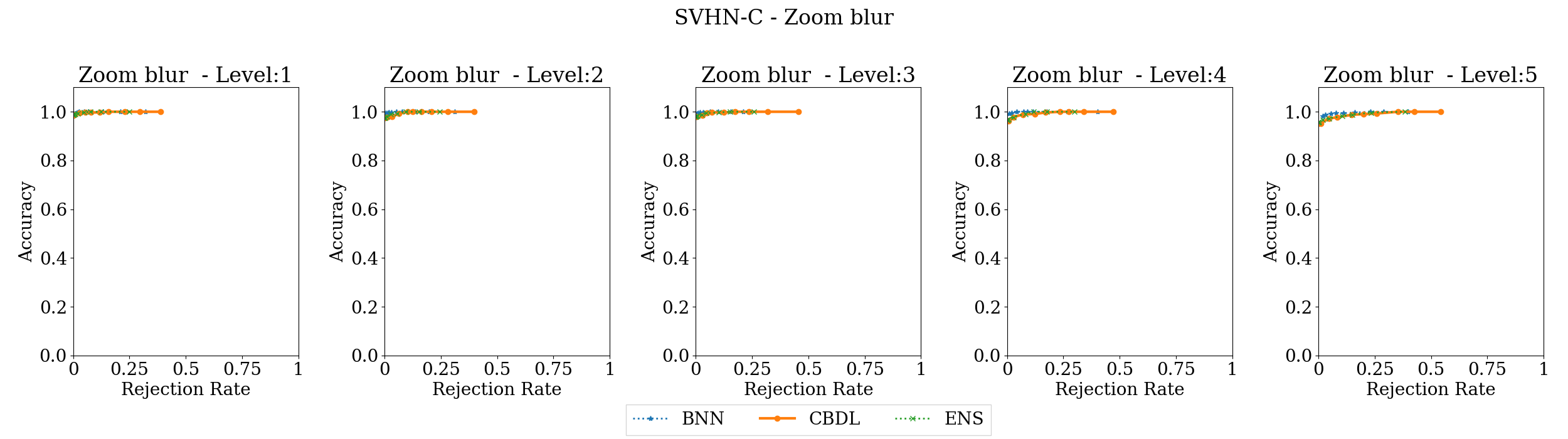}
\caption{\small Accuracy vs Rejection Rate - SVHNC Zoom Blur.}
\label{fig:acc_vs_rej-SVHNC Zoom Blur}
\end{figure}

\begin{figure}[h] 
\centering 
\includegraphics[width=1.0\textwidth]{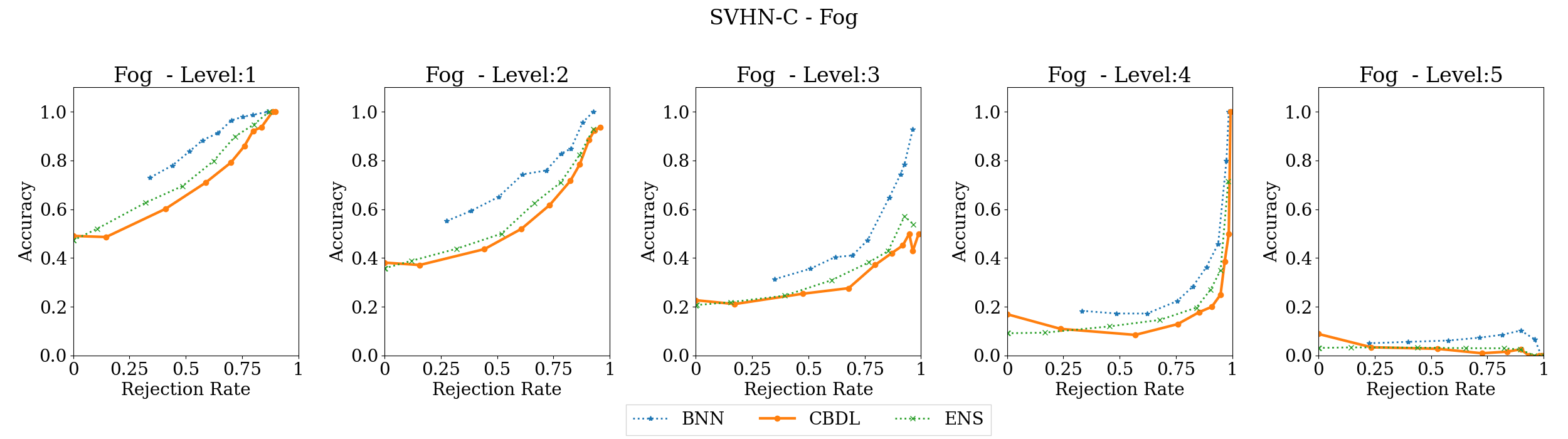}
\caption{\small Accuracy vs Rejection Rate - SVHNC Fog.}
\label{fig:acc_vs_rej-SVHNC Fog}
\end{figure}

\begin{figure}[h] 
\centering 
\includegraphics[width=1.0\textwidth]{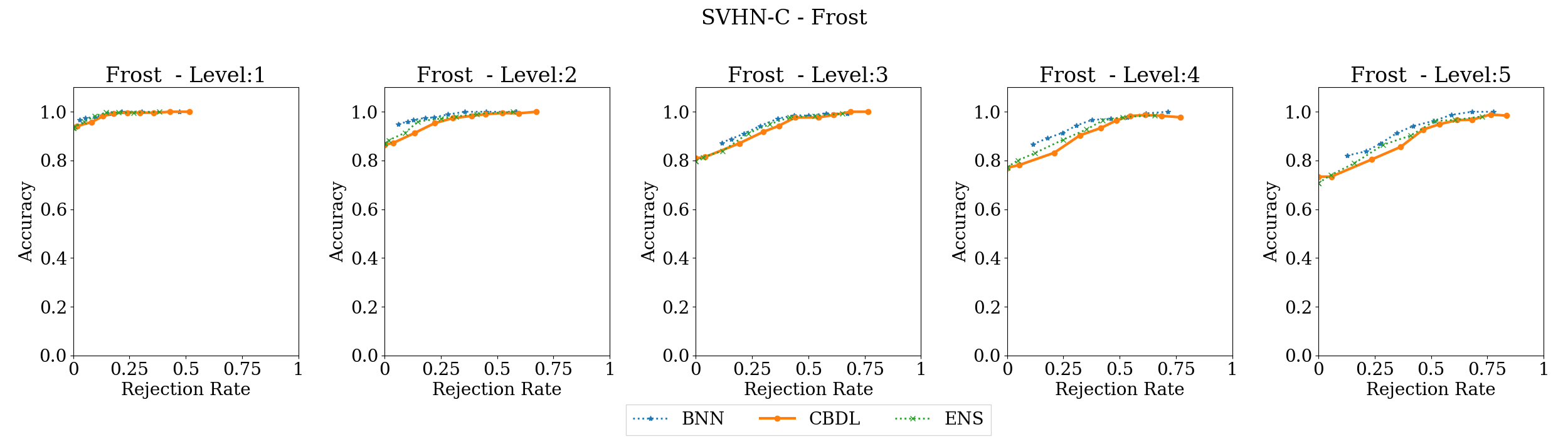}
\caption{\small Accuracy vs Rejection Rate - SVHNC Frost.}
\label{fig:acc_vs_rej-SVHNC Frost}
\end{figure}

\begin{figure}[h] 
\centering 
\includegraphics[width=1.0\textwidth]{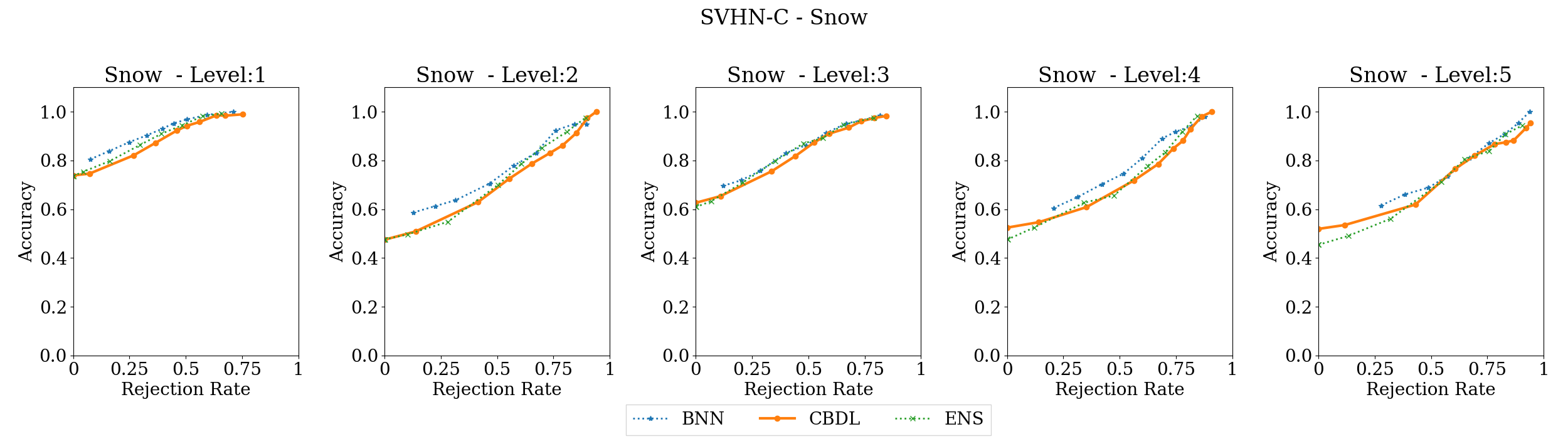}
\caption{\small Accuracy vs Rejection Rate - SVHNC Snow.}
\label{fig:acc_vs_rej-SVHNC Snow}
\end{figure}

\begin{figure}[h] 
\centering 
\includegraphics[width=1.0\textwidth]{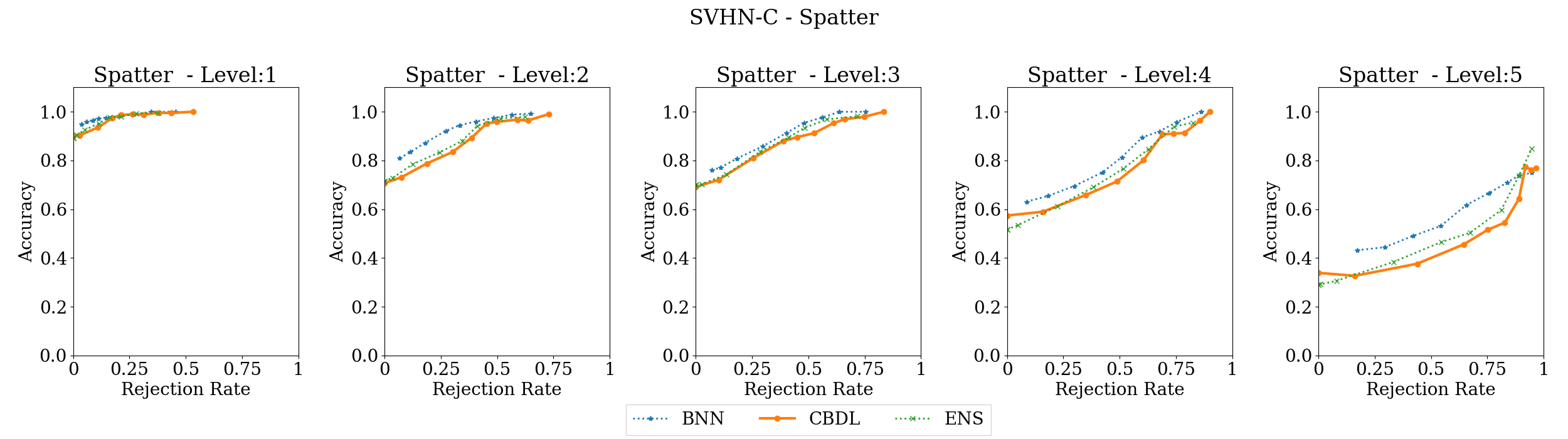}
\caption{\small Accuracy vs Rejection Rate - SVHNC Spatter.}
\label{fig:acc_vs_rej-SVHNC Spatter}
\end{figure}

\begin{figure}[h] 
\centering 
\includegraphics[width=1.0\textwidth]{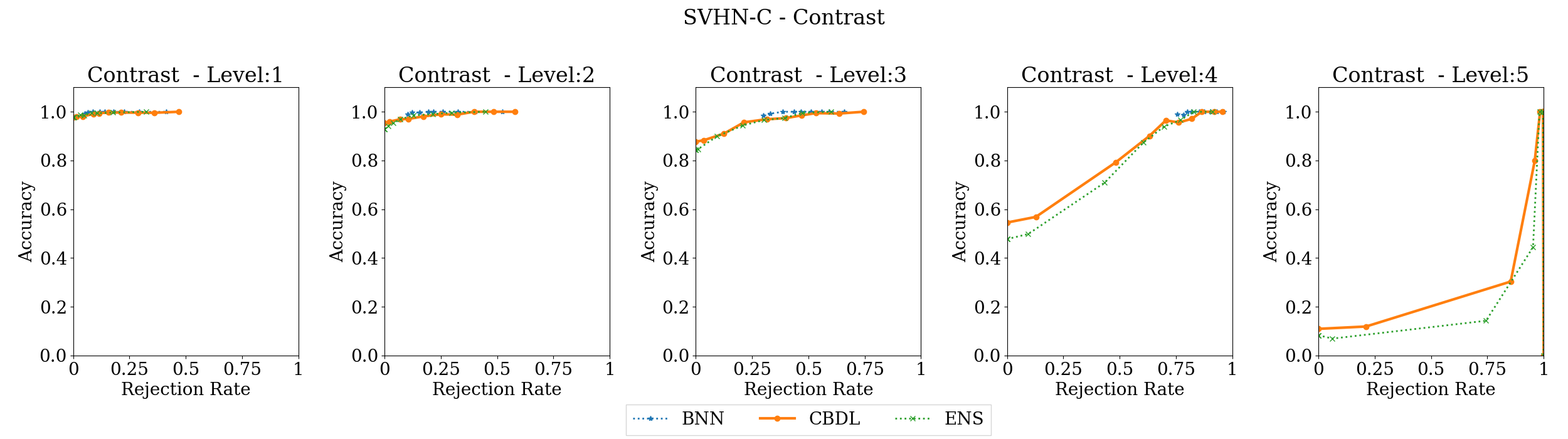}
\caption{\small Accuracy vs Rejection Rate - SVHNC Contrast.}
\label{fig:acc_vs_rej-SVHNC Contrast}
\end{figure}

\begin{figure}[h] 
\centering 
\includegraphics[width=1.0\textwidth]{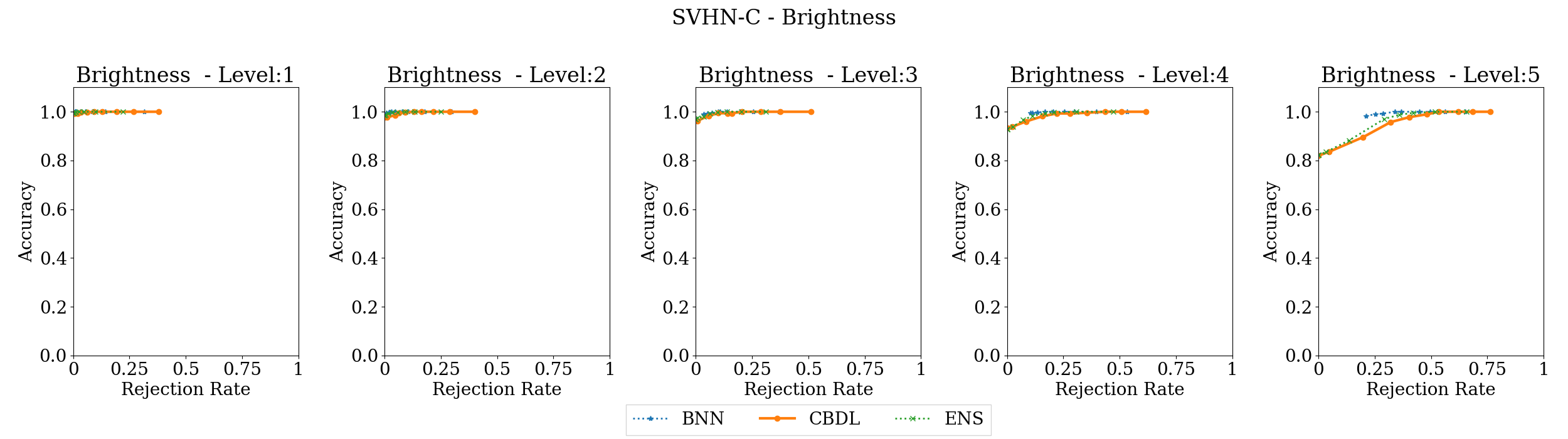}
\caption{\small Accuracy vs Rejection Rate - SVHNC Brightness.}
\label{fig:acc_vs_rej-SVHNC Brightness}
\end{figure}

\begin{figure}[h] 
\centering 
\includegraphics[width=1.0\textwidth]{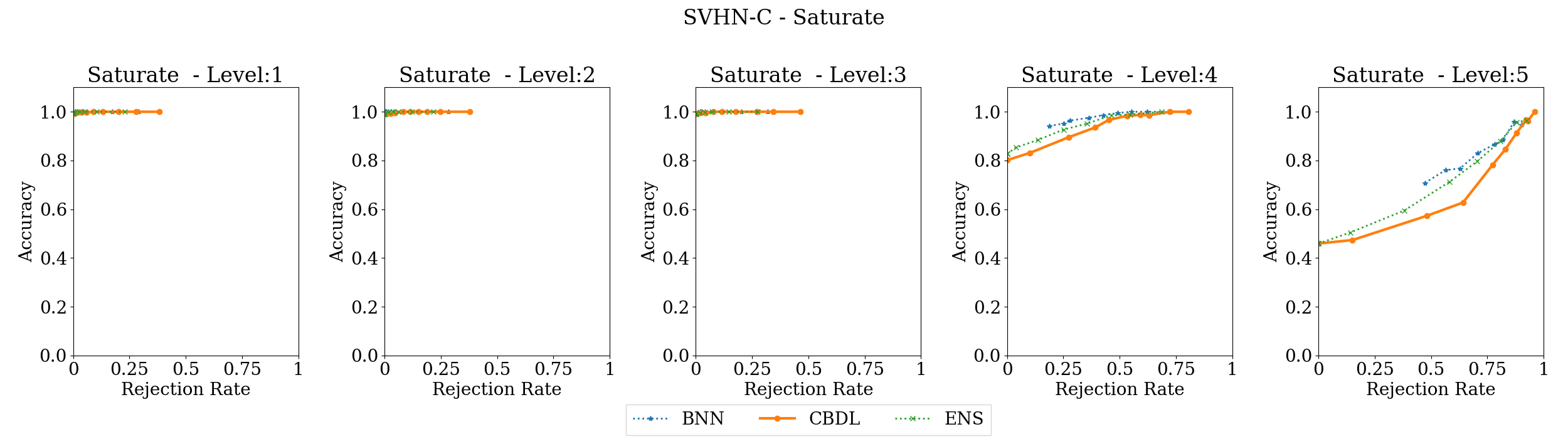}
\caption{\small Accuracy vs Rejection Rate - SVHNC Saturate.}
\label{fig:acc_vs_rej-SVHNC Saturate}
\end{figure}

\begin{figure}[h] 
\centering 
\includegraphics[width=1.0\textwidth]{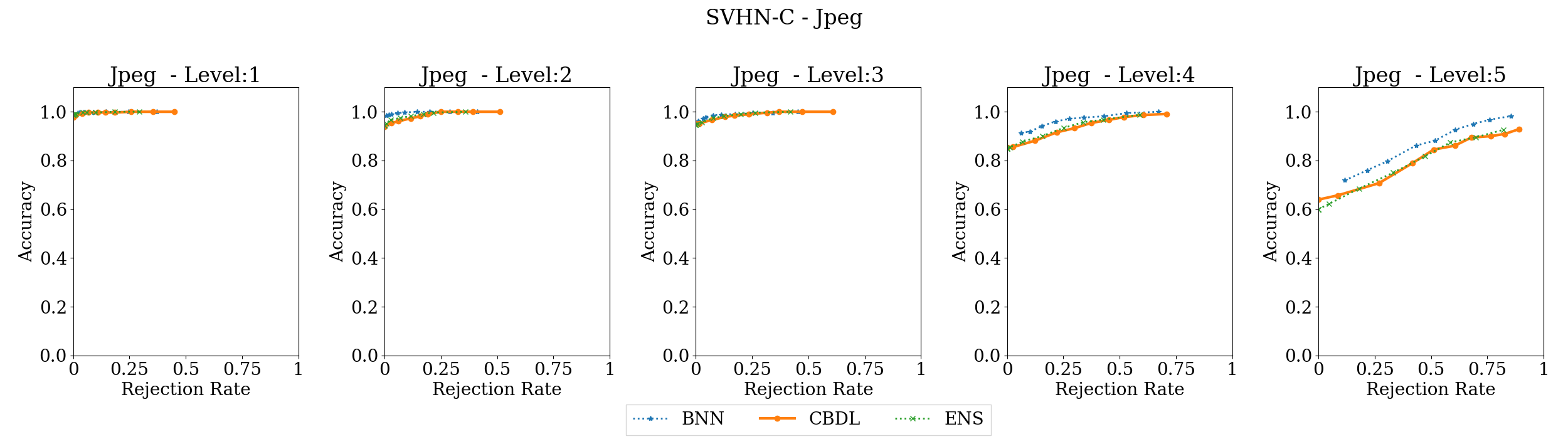}
\caption{\small Accuracy vs Rejection Rate - SVHNC JPEG.}
\label{fig:acc_vs_rej-SVHNC JPEG}
\end{figure}

\begin{figure}[h] 
\centering 
\includegraphics[width=1.0\textwidth]{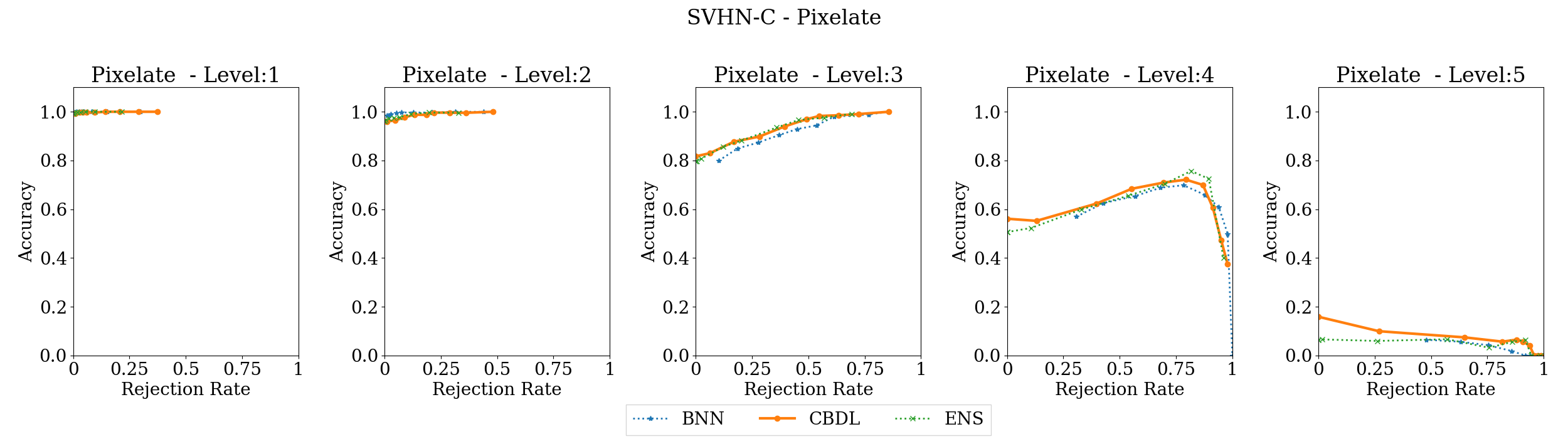}
\caption{\small Accuracy vs Rejection Rate - SVHNC Pixelate.}
\label{fig:acc_vs_rej-SVHNC Pixelate}
\end{figure}

\begin{figure}[h] 
\centering 
\includegraphics[width=1.0\textwidth]{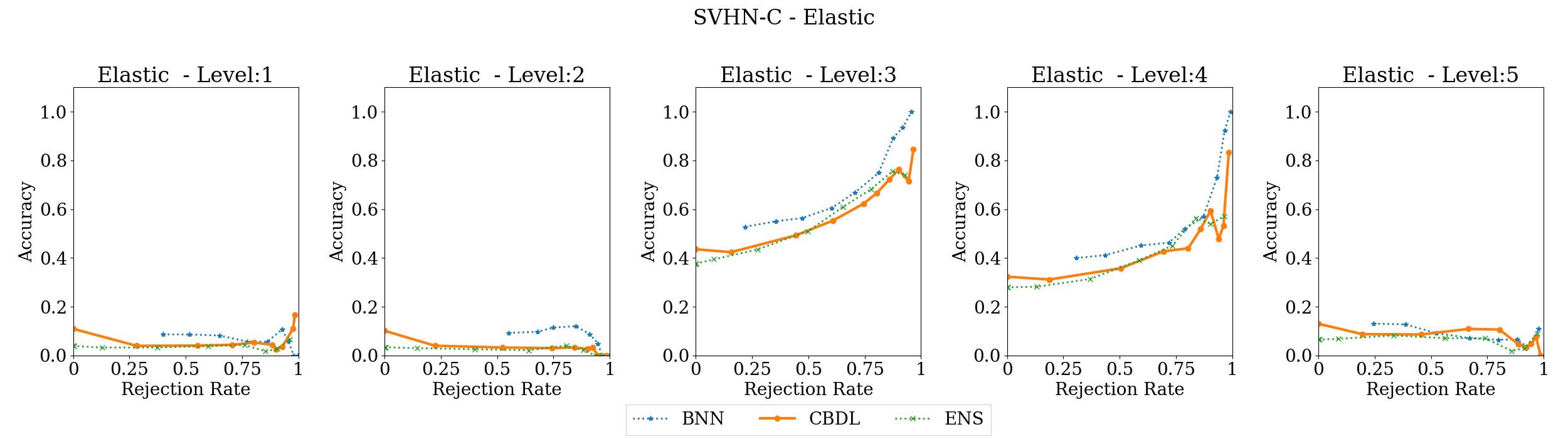}
\caption{\small Accuracy vs Rejection Rate - SVHNC Elastic.}
\label{fig:acc_vs_rej-SVHNC Elastic}
\end{figure}

\clearpage

\section{Code and Dataset Availability}
Code and datasets for implementation and replication of results will be available at \url{https://github.com/PRECISE/credal-bayesian-deep-learning.git}.

\end{document}